\definecolor{myblue}{RGB}{0, 119, 182}
\definecolor{mygrey}{RGB}{51, 65, 92}
\theoremstyle{plain}
\newtheorem{thm}{Theorem}
\newtheorem{property}{Property}
\newtheorem{lem}{Lemma}
\newtheorem{corol}[thm]{Corollary}
\theoremstyle{definition}
\newtheorem{defi}{Definition}
\newtheorem{assumption}{Assumption}
\theoremstyle{remark}
\newtheorem{rem}{\textit{\textbf{Remark}}}
\newcommand{\x}{\bm{x}}
\newcommand{\y}{\bm{y}}
\newcommand{\xdot}{\dot{\bm{x}}}
\newcommand{\ydot}{\dot{\bm{y}}}
\newcommand{\X}{\bm{X}}
\newcommand{\Y}{\bm{Y}}
\newcommand{\xset}{\Theta_{\bm{\x}}}
\newcommand{\yset}{\Theta_{\bm{\y}}}
\newcommand{\E}{\mathbb{E}}
\newcommand{\D}{\mathcal{D}}
\newcommand{\W}{\bm{W}}
\newcommand{\nablaf}{\bm{\nabla\! f}}
\newcommand{\dx}{\nabla_{\!\bm{x}}}
\newcommand{\dy}{\nabla_{\!\bm{y}}}
\newcommand{\A}{\mathcal{A}}
\newcommand{\R}{\mathbb{R}}
\renewcommand{\S}{\mathcal{S}}
\newcommand{\FS}{F_{\mathcal{S}}}
\newcommand{\I}{\bm{I}}
\newcommand{\1}{\bm{1}}
\newcommand{\Pm}{\bm{P}_{\!m}}
\newcommand{\Tr}{\mathrm{Tr}}
\newcommand{\T}{\mathrm{T}}
\newcommand{\F}{\mathrm{F}}
\newcommand{\m}{\interleave}
\newcommand{\lambdabm}{\boldsymbol{\lambda}}
\begin{document}

\title{Stability and Generalization for Distributed SGDA}

\author{\name Miaoxi Zhu \email zhumx@whu.edu.cn \\
       \addr School of Computer Science\\
       Wuhan University, Wuhan, China
       \AND
       \name Yan Sun \email ysun9899@uni.sydney.edu,au \\
       \addr School of Computer Science\\
       The University of Sydney, Sydney, Australia
       \AND
       \name Li Shen \email mathshenli@gmail.com\\
       \addr School of Cyber Science and Technology\\
       Shenzhen Campus of Sun Yat-sen University, Shenzhen, China
       \AND
       \name Bo Du \email dubo@whu.edu.cn\\
       \addr School of Computer Science\\
       Wuhan University, Wuhan, China
       \AND
       \name Dacheng Tao \email dacheng.tao@ntu.edu.sg\\
       \addr College of Computing $\&$ Data Science \\
       Nanyang Technological University, Singpore
       }

\editor{}

\maketitle

\begin{abstract}

Minimax optimization is gaining increasing attention in modern machine learning applications. Driven by large-scale models and massive volumes of data collected from edge devices, as well as the concern to preserve client privacy, communication-efficient distributed minimax optimization algorithms become popular, such as Local Stochastic Gradient Descent Ascent (Local-SGDA), and Local Decentralized SGDA (Local-DSGDA). While most existing research on distributed minimax algorithms focuses on convergence rates, computation complexity, and communication efficiency, the generalization performance remains underdeveloped, whereas generalization ability is a pivotal indicator for evaluating the holistic performance of a model when fed with unknown data. In this paper, we propose the stability-based generalization analytical framework for Distributed-SGDA, which unifies two popular distributed minimax algorithms including Local-SGDA and Local-DSGDA, and conduct a comprehensive analysis of stability error, generalization gap, and population risk across different metrics under various settings, e.g., (S)C-(S)C, PL-SC, and NC-NC cases. Our theoretical results reveal the trade-off between the generalization gap and optimization error and suggest hyperparameters choice to obtain the optimal population risk. Numerical experiments for Local-SGDA and Local-DSGDA validate the theoretical results.

\end{abstract}

\begin{keywords}
  minimax, stability, generalization, optimization error, distributed learning
\end{keywords}

\section{Introduction}\label{sec:intro}

\captionsetup[table]{labelfont={bf}}
\begin{table*}[t]
\caption{Summary on theoretical results for Local-SGDA in comparison with SGDA.}
\vspace{-0.5cm}
\label{table:results:localsgda}
\renewcommand\arraystretch{1.8}
\begin{center}
\scalebox{0.95}{\begin{tabular}{c|c|c|l}
\hline
Assumption                        & Measure                                       & Algorithm   & Bound         \\ \hline
\multirow{4}{*}{$\mu$-SC-SC,$G$,$L$ \textcolor{mygrey}{$^\text{a}$}}       & \multirow{2}{*}{\makecell[c]{Argument \\ Stability}}             & Local-SGDA & $\mathcal{O}(\frac{\sqrt{K}}{T}+\frac{1}{n})$ \textcolor{mygrey}{$^\text{b}$} \textcolor{myblue}{(Thm. \ref{thm:local:sc-sc-stability})} \textcolor{mygrey}{$^\text{e}$}  \\ \cline{3-4} 
    & & SGDA  & $\mathcal{O}(\frac{\sqrt{T}}{n}+\frac{1}{\sqrt{n}})$  (\cite{lei2021stability}) \\ \cline{2-4} 
    & \multirow{2}{*}{\makecell[c]{Weak PD\\Population Risk}}       & Local-SGDA & $\widetilde{\mathcal{O}}(\frac{\sqrt{K}}{T}+\frac{1}{\sqrt{TK}}+\frac{1}{n})$ \textcolor{mygrey}{$^\text{c}$} \textcolor{myblue}{(Thm. \ref{thm:sc-sc-population})}   \\ \cline{3-4} 
     &    & SGDA  & $\mathcal{O}(\frac{\ln n}{n\mu})$ (\cite{lei2021stability}) \\ \hline
\multirow{2}{*}{\makecell[c]{$\rho$PL-$\mu$SC,$G$,$L$\\$F(\x,\cdot)$ is $\mu$-SC}} & \makecell[c]{Primal\\ Stability}             & Local-SGDA & $\mathcal{O}(\frac{K^{\frac{1+\alpha}{2}}}{\sqrt{T}}+\frac{1}{n}),\frac{1}{2}<\alpha<1$ \textcolor{myblue}{(Thm. \ref{thm:primal})}   \\ \cline{2-4} 
    & \makecell[c]{Excess Primal \\Population Risk} & Local-SGDA & $\mathcal{O}(\frac{K^{\frac{1+\alpha}{2}}}{\sqrt{T}}+\frac{1}{mn}),\frac{1}{2}<\alpha<1$ \textcolor{myblue}{(Thm. \ref{thm:ep:population})}  \\  \hline
\multirow{2}{*}{NC-NC,$G$,$L$}            & \multirow{2}{*}{Weak Stability}                &  Local-SGDA & $\widetilde{\mathcal{O}}(\frac{m^{\frac{4}{5}}}{n}+\frac{m^{\frac{4}{5}}}{n^{\frac{4}{5}}}T^{\frac{1}{5}}K^{\frac{3}{5}})$ \textcolor{myblue}{(Thm. \ref{thm:weak})}  \\ \cline{3-4} 
    &   & SGDA  & $\mathcal{O}(\frac{T^{\frac{2c\rho}{2c\rho+3}}}{n^{\frac{2c\rho+1}{2c\rho+3}}})$ (\cite{lei2021stability}) \\ \hline
\end{tabular}}
\end{center}
\vspace{-0.5cm}
\end{table*}

\begin{table*}[t]
\caption{Summary on theoretical results for Local-DSGDA in comparison with DSGDA.}
\vspace{-0.5cm}
\label{table:results:localdsgda}
\renewcommand\arraystretch{1.8}
\begin{center}
\scalebox{0.85}{\begin{tabular}{c|c|c|l}
\hline
Assumption                        & Measure                                                        & Algorithm   & Bound         \\ \hline
\multirow{4}{*}{$\mu$-SC-SC,$G$,$L$}      & \multirow{2}{*}{\makecell[c]{Argument \\ Stability}}            &  Local-DSGDA & $\mathcal{O}(\frac{(\sqrt{\lambdabm_1}+\sqrt{\lambdabm_2})\sqrt{K}}{T}+\frac{1}{n})$ \textcolor{mygrey}{$^\text{d}$} \textcolor{myblue}{(Thm. \ref{thm:local:sc-sc-stability})} \\ \cline{3-4} 
    &     & DSGDA  & $\mathcal{O}(\frac{\eta}{1-\lambdabm}+\frac{1}{n})$ (\cite{zhu2023stability}) \\ \cline{2-4} 
    & \multirow{2}{*}{\makecell[c]{Weak PD\\Population Risk}}       &  Local-DSGDA & $\widetilde{\mathcal{O}}((\sqrt{\lambdabm_1}+\sqrt{\lambdabm_2})\frac{\sqrt{K}}{T}+\frac{1}{\sqrt{KT}}+\frac{1}{n})$ \textcolor{myblue}{(Thm. \ref{thm:sc-sc-population})}  \\ \cline{3-4} 
    &     & DSGDA  & $\mathcal{O}(\frac{1}{n}+\frac{1}{(1-\lambdabm)\sqrt{T}})$ (\cite{zhu2023stability})  \\ \hline
\multirow{2}{*}{\makecell[c]{$\rho$PL-$\mu$SC,$G$,$L$\\$F(\x,\cdot)$ is $\mu$-SC}} & \makecell[c]{Primal\\ Stability}               & Local-DSGDA & $\mathcal{O}(\frac{(\sqrt{\lambdabm_1}+\sqrt{\lambdabm_2})K^{\frac{1+\alpha}{2}}}{\sqrt{T}}+\frac{1}{n}),\frac{1}{2}<\alpha<1$ \textcolor{myblue}{(Thm. \ref{thm:primal})} \\ \cline{2-4} 
    & \makecell[c]{Excess Primal \\Population Risk} & Local-DSGDA & $\mathcal{O}(\frac{(\sqrt{\lambdabm_1}+\sqrt{\lambdabm_2})K^{\frac{1+\alpha}{2}}}{\sqrt{T}}+\frac{1}{mn}),\frac{1}{2}<\alpha<1$ \textcolor{myblue}{(Thm. \ref{thm:ep:population})} \\ \hline
\multirow{2}{*}{NC-NC,$G$,$L$}            & \multirow{2}{*}{Weak Stability}                 & Local-DSGDA & $\widetilde{\mathcal{O}}((\frac{1}{n}+\sqrt{\lambdabm_1}+\sqrt{\lambdabm_2}K^{\frac{1}{2}})^{\frac{1}{5}}(\frac{m}{n})^{\frac{4}{5}}T^{\frac{1}{5}}K^{\frac{3}{5}})$ \textcolor{myblue}{(Thm. \ref{thm:weak})} \\ \cline{3-4} 
     &    & DSGDA  & $\mathcal{O}(\frac{\eta T}{n}+\frac{\eta^2T}{1-\lambdabm})$ (\cite{zhu2023stability}) \\ \hline
\end{tabular}}
\end{center}
\footnotesize{\textcolor{mygrey}{$^\text{a}$} $G$ and $L$ represents for Lipschitz continuity and smoothness on local function $f_i(\x,\y;\xi_{i,l})$;\\
\textcolor{mygrey}{$^\text{b}$} $T$ and $K$ denote the number of communication round and local update iteration, $m$ denotes the number of agents and $n$ represents the sample size of local dataset;\\
\textcolor{mygrey}{$^\text{c}$} $\widetilde{\mathcal{O}}$ means neglect of a logarithmic term;\\
\textcolor{mygrey}{$^\text{d}$} $\lambdabm_1$ and $\lambdabm_2$ are $\lambdabm$-relevant constants which is speified in Remark \ref{remark:lambda};\\
\textcolor{mygrey}{$^\text{e}$} the \textcolor{myblue}{blue} text refers to results presented in this paper.}
\vspace{-0.5cm}
\end{table*}

Minimax problems have been widely investigated in machine learning community, including generative adversarial networks (GANs) \citep{goodfellow2014generative,arjovsky2017wasserstein,cohen2019certified}, generative adversarial imitation learning \citep{ho2016generative}, reinforcement learning \citep{cheng2023prescribed}, robustness \citep{madry2018towards}, algorithmic fairness \citep{song2019learning}, multi-agent games \citep{cao2002internet}, domain adaptation \citep{zhao2018adversarial}, federated learning \citep{sharma2022federated}, etc.

Large amounts of data obtained from individual devices pose challenges to the training process in large-scale machine learning scenarios, which calls for communication-efficient distributed minimax algorithms \citep{liu2020decentralized,chen2021proximal,beznosikov2022decentralized}. Considering $m$ agents with local loss function $F_i(\x,\y)$ and local data $\xi_i$ adhere to the local data distribution $\D_i$, the objective function can be formulated as follows:
\begin{equation}\label{eq:obj}
            \min_{\x\in\xset}\max_{\y\in\yset}F(\x,\y)
            :=\frac{1}{m}\sum_{i=1}^m F_i(\x,\y) :=\frac{1}{m}\sum_{i=1}^m\E_{\xi_i\sim\D_i}[f_i(\x,\y;\xi_i)].
\end{equation}
Existing works on distributed minimax problems \eqref{eq:obj} mainly focus on faster convergence rate \citep{xian2021faster,sharma2022federated,shen2023stochastic} and lower communication overhead \citep{beznosikov2022distributed,yang2022sagda}, in various minimax settings including convex-convex (C-C) \citep{liao2022local}, nonconvex-convex (NC-C) \citep{deng2020distributionally}, and nonconvex-nonconcave with Polyak-{\L}ojasiewicz condition (NC-PL) \citep{reisizadeh2020robust}.  However, limited research investigates the generalization performance of distributed minimax algorithms which is an important pivot indicating the overall performance.

Optimization behavior is not the only factor deciding the success of a model obtained from a stochastic distributed minimax algorithm. In practice, data distribution $\D_i$ is unknown and we have no access to the absolute or gradient of the expectation value. As a compromise, we have to sample from local datasets and estimate the expectation by the averaged loss on dataset $\S=\{\S_1,...,\S_m\}$ with local dataset denoted as $\S_i=\left\{\xi_{i,1},...,\xi_{i,l_i}\right\}$:
    \begin{equation}\label{eq:emp_obj}
        \min_{\x\in\xset}\max_{\y\in\yset}F_{\S}(\x,\y)
         :=\frac{1}{m}\sum_{i=1}^mF_{\S_i}(\x,\y)
        :=\!\frac{1}{m}\sum_{i=1}^m\frac{1}{n}\sum_{l_i=1}^nf_i(\x,\y;\xi_{i,l_i}).
\end{equation}
Consequently, the model learned from the empirical loss (see Eq.\eqref{eq:emp_obj}) on the finite training dataset may not exhibit comparable performance, let alone an improvement, on the unknown data distribution, i.e., can not optimize the objective function (see Eq.\eqref{eq:obj}). The discrepancy between Eq.\eqref{eq:obj} and Eq.\eqref{eq:emp_obj} reflects the generalization ability of models $(\x,\y)$, which is of vital importance to predict how the model behaves on the overall distribution.

However, it is not trivial to measure the generalization performance by directly subtracting Eq.\eqref{eq:obj} from Eq.\eqref{eq:emp_obj} owing to the complex structure of minimax problems, unlike standard learning theory where there is only one variable to minimize. Besides, the communication between agents and the accumulated local models increases the difficulty in both the parameter-server and peer-to-peer manner. Thus a natural question arises: 
\begin{center}
\vspace{-0.1cm}
    \textit{\textbf{What factors influence the generalization performance for the general distributed minimax problems?}}
 \vspace{-0.1cm}
\end{center}

In this paper, we propose a generalized analyzing \textit{Distributed-SGDA} framework, which unifies two popular distributed minimax optimization algorithms as special cases, including both the Local-SGDA and Local-DSGDA in the centralized and decentralized setting, respectively. Based on the \textit{Distributed-SGDA} framework, we develop the comprehensive theoretical analysis for analyzing the stability-based generalization and population risk for Local-SGDA and Local-DSGDA and the theoretical results for Local-SGDA and Local-DSGDA are summarized in Table \ref{table:results:localsgda} and Table \ref{table:results:localdsgda} in comparison with corresponding SGDA and DSGDA. Preliminary experiments are in accordance with our theoretical findings.

\textbf{Contribution.} Our main contributions are specified as:
\begin{itemize}[leftmargin=0pt,itemindent=\labelwidth+\labelsep,labelindent=0pt,itemsep=2pt,topsep=0pt,parsep=0pt]
    \item \textit{First work on the stability-based generalization analysis on Local-SGDA and Local-DSGA.} We unify the analyzing framework of Distributed-SGDA on algorithmic stability, generalization gap and population risk, which haven't been explored to our best knowledge.
    \item \textit{Theoretical results.} For generalized distributed minimax algorithms (not limited to \textit{Distributed-SGDA}), we establish the connections \textbf{(\textit{i})} between argument stability and weak PD generalization gap in (strongly) convex-(strongly) concave \big((S)C-(S)C\big) case; \textbf{(\textit{ii})} between primal stability and excess primal generalization gap in nonconvex with Polyak-{\L}ojasiewicz condition-strong concave (PL-SC) case; \textbf{(\textit{iii})} between weak stability and weak PD generalization gap in nonconvex-nonconcave (NC-NC) case. 
    We provide corresponding error bounds for \textit{Distributed-SGDA}, and evaluate the optimization error in (S)C-(S)C and PL-SC cases, demonstrating the trade-off between generalization gap and optimization error, which hints at the parameter choice for a better balance and the optimal population risk. 
    \item \textit{Experiments.} We conduct experiments of Local-SGDA and Local-DSGDA on vanilla GAN to simulate NC-NC case and on AUC Maximization to simulate C-C case, and vary factors to explore isolated influence. The experimental results validate theoretical fingdings.
\end{itemize}

\section{Related Work}

\textbf{Distributed Minimax Optimization.} \citet{deng2021local} provides convergence analysis for Local-SGDA in both homogeneous and heterogeneous data under SC-SC and NC-SC settings and proposes Local SGDA+ for NC-NC. \citet{hou2021efficient} derives convergence rates for SCAFFOLD-S and FedAvg-S in the strongly convex-strongly concave case. \citet{beznosikov2023unified} estimates the convergence rates, the number of local computations, and communications for the local extra step method in the (strongly) monotone case. \citet{tarzanagh2022fednest} establishes new convergence guarantees for nonconvex FedSVRG. \citet{xie2023cdma} proves convergence rates and communication cost for CDMA under a class of nonconvex-nonconcave settings. \citet{zhang2023communication} establishes convergence rates of their proposed novel communication-efficient algorithms for distributed minimax optimization problems. \citet{sharma2023federated} provides gradient complexity and communication rounds for Local-SGDA under NC-PL, NC-C, and NC-1PL conditions. \citet{zhang2024federated} establishes the theoretical convergence rate of their proposed federated minimax optimization algorithm. \citet{wu2023solving} presents convergence rates for their proposed FedSGDA+ and FedSGDA-M under NC-C, NC-PL and NC-SC settings. For more distributed minimax optimization algorithms, \citet{razaviyayn2020nonconvex} makes a survey on the literature.

\textbf{Generalization for Minimax Optimization.} Stability-based generalization analysis is pioneered by \citet{bousquet2002stability}, and then developed by \citet{elisseeff2005stability,rakhlin2005stability,shalev2010learnability,hardt2016train}. For general minimax optimization, comprehensive investigation has been explored. \cite{farnia2021train} analyzes generalization properties of GDA and PPM algorithms in C-C case, as well as the generalization performance of SGDA and SGDmax in NC-NC case. \cite{lei2021stability} establishes the connection between stability and generalization for minimax optimization algorithms, establishes stability and generalization bounds for SGDA under C-C condition, and extends their analysis to nonsmooth loss functions. \cite{xing2021algorithmic} proposes noise-injected method to enhance the stability performance of adversarial learning. \cite{zhang2021generalization} establishes a uniform stability argument for the ESP solution and further provides its generalization bounds. \cite{ozdaglar2022good} proposes a new metric that outperforms the traditional one in indicating the generalization performance in stochastic minimax optimization. \cite{yang2022differentially} analyzes the population risk of Differentially Private SGDA under C-C and NC-SC settings. While in distributed minimax scenario, generalization performance is rarely explored. \cite{zhu2023stability} analyzes the stability and generalization of decentralized SGDA. \citet{zhao2018adversarial} presents generalization bound in specific scenarios of multiple-source domain adaptation. \citet{mohri2019agnostic} presents data-dependent Rademacher complexity for agnostic federated learning. \citet{sun2022communication} generalize the results and provide the generalization bounds for distributed minimax learning.  
\section{Problem Formulation}

In this section, we first present the unified distributed-SGDA framework, which includes Local-SDGA and Local-DSGDA as special cases. Then, we introduce several terminologies for characterizing the generalization for distributed-SGDA.

\subsection{Distributed-SGDA Framework}

We specify that there are $m$ workers in the distributed setting with local dataset $\S_i=\{\xi_{i,1},\xi_{i,2},...\xi_{i,n}\}$ and the local models are parameterized by $(\x_i,\y_i)$. We use $\A(T, K, \W)$ to denote the distributed SGDA algorithms, where $T$ and $K$ denote the number of the communication round and the local update iteration respectively. At the same time, $\W$ represents the way every node communicates with each other. There are mainly two phases in the update schemes: local update and global communication (see Algorithm~\ref{alg:distributedsgda}).

\begin{wrapfigure}{l}{0.55\textwidth}
\begin{minipage}{0.55\textwidth}
    \vspace{-0.4cm}
    \begin{algorithm}[H]
    \renewcommand{\algorithmicrequire}{\textbf{Initialize:}}
    \renewcommand{\algorithmicensure}{\textbf{Output:}}
    \caption{Distributed-SGDA ($\A(T,K,\W)$) }
    \label{alg:distributedsgda}
    \begin{algorithmic}[1]
        \REQUIRE $\x^0_i=0$; $\y^0_i=0$, $i=1,...,m$
        \FOR{$t = 1, 2, \cdots, T$}
        \STATE $\x^t_{i,0}=\x^{t-1}_i$, $\y^t_{i,0}=\y^{t-1}_i$; $\forall i=1,\cdots,m$
        \FOR{$k=0, 1, \cdots, K-1$}
        \STATE $\x^t_{i,k+1}=\x^t_{i,k}-\eta^t_k\dx f_i(\x^t_{i,k},\y^t_{i,k};\xi_{i,j_k^t(i)})$
        \STATE $\y^t_{i,k+1}=\y^t_{i,k}+\eta^t_k\dy f_i(\x^t_{i,k},\y^t_{i,k};\xi_{i,j_k^t(i)})$
        \ENDFOR
        \STATE $\x^t_i=\sum_h \omega_{ih}\x^t_{h,K}$; $\y^t_i=\sum_h \omega_{ih}\y^t_{h,K}$
        \ENDFOR
        \ENSURE $\x^T=\frac{1}{m}\sum_{i=1}^m\x^T_i,\y^T=\frac{1}{m}\sum_{i=1}^m\y^T_i$.
    \end{algorithmic}
    \end{algorithm}
\end{minipage}
\end{wrapfigure}

\vspace{4pt}
\noindent\textbf{(1). Local Update.}
At the $k$-th local iteration, every worker $(\x^t_{i.k},\y^t_{i,k})$ samples $\xi_{i,j^t_k(i)}$ from local dataset $\S_i$ and conducts stochastic gradient descent ascent to update local model parameters to $(\x^t_{i,k+1},\y^t_{i,k+1})$.

\vspace{3pt}
\noindent\textbf{(2). Global Communication.}
During the $t$-th communication round, after $K$ iterations of local update, every worker $(\x^t_{i,K},\y^t_{i,K}\!)$ communicates with each other through the mixing matrix $\W\!\!=\!\!(\omega_{ij})\!\!\in\!\!\R^{m\times m\!}$. The updated models $(\x^t_i,\y^t_i)$ act as the initialization of the next communication round.

\begin{assumption}[\textbf{Mixing Matrix}]\label{as:mixing}
We define the mixing matrix $\W=(\omega_{ij})\in\R^{m\times m}$ to prescribe how different nodes communicate with each other, where $0\leq\omega_{ij}\leq1$ signifies the weight assigned to worker $j$ when receiving information from worker $i$. In our analysis, we assume the mixing matrix to be symmetric doubly stochastic that $\W=\W^{\T}$ and $\W\cdot\1_m=\1_m$, where $\1_m\in\R^m$ denotes the all-one vector.
\end{assumption}

\begin{defi}\label{def:crucial}
    For a mixing matrix satisfying Assumption \ref{as:mixing}, we define a critical figure associated with the mixing matrix: $\lambdabm\triangleq\max\{\lambda_2(\W),...\lambda_m(\W)\}$, where $\lambda_i(\W)$ represents the $i$-th largest eigenvalue. We assume $\lambdabm<1$ in the analyzing literature of this paper.
\end{defi}

\begin{rem} 
Assumption~\ref{as:mixing} is common for undirected connected graphs. It is trivial that $\lambda_1(\W)=1$ since $\1_m$ acts as the eigenvector. As $\lambdabm$ approaches $1$, the matrix tends to sparsity, while $\lambdabm=0$ denotes a fully-connected graph. More precise relations between the value of $\lambdabm$ and the topology have been discussed in \citep{nedic2018network,ying2021exponential}.
\end{rem}

\begin{rem}
In Algorithm \ref{alg:distributedsgda}, the learning rates for variables $\x$ and $\y$ are set to be the same without loss of generalization. While \citet{zhu2023stability} focuses on decentralized SGDA and presents a precise analysis of the case when variables $\x$ and $\y$ choose different learning rates. 
\end{rem}

\begin{rem}
By setting different values of mixing matrix $\W$ for the Distributed SGDA, we can find the correspondence with existing distributed algorithms. 
\textbf{(i)} For the mixing matrix $\W$ satisfying Assumption \ref{as:mixing}, Generalized $\A(T,K,\W)$ is Local-DSGDA. 
\textbf{(ii)} When we choose the mixing matrix as $\Pm\in\R^{m\times m}$ whose elements are all $\frac{1}{m}$, $\A(T,K,\Pm)$ becomes Local-SGDA. The communication stage turns into the process of averaging local models that $(\x^t_i,\y^t_i)=(\frac{1}{m}\sum_{i=1}^m\x^t_{i,K},\frac{1}{m}\sum_{i=1}^m\y^t_{i,K})$. 
\textbf{(iii)} When $K=1$, $\A(T,K,\W)$ degenerates to Decentralized-SGDA, where each node conducts a single step of SGDA and communicates with their neighbors to update models.
\end{rem}

\subsection{Generalization Gap}
Recall the objective function and the empirical loss function in Section \ref{sec:intro}, the optimal solution of problem \eqref{eq:emp_obj} cannot be optimal for problem \eqref{eq:obj}, and their gap is evaluated as generalization performance. Unlike the minimization structure where the generalization gap can be directly calculated as the difference between the population loss and the empirical loss, in the distributed minimax setting, the population risk, the empirical risk, and the generalization gap have to be defined in different manners, according to the duality-gap.

\begin{defi}[\textbf{Weak Primal-Dual(PD) Generalization Gap}]
For a randomized model $(\x,\y)$, the weak PD population risk and weak PD empirical risk are defined as:
    \begin{gather*}
    \Delta^{\!\mathrm{w}}(\x,\y)\triangleq\sup_{\y'\in\yset}\E[F(\x,\y')]-\inf_{\x'\in\xset}\E[F(\x',\y)]\\
    \Delta^{\!\mathrm{w}}_{\S}(\x,\y)\triangleq\sup_{\y'\in\yset}\E[F_{\S}(\x,\y')]-\inf_{\x'\in\xset}\E[F_{\S}(\x',\y)].   
    \end{gather*}
    For model $(\A_{\x}(\S),\A_{\y}(\S))$ which is the output of the algorithm $\A$ trained on the dataset $\S$, the corresponding weak PD generalization gap is defined by:
    \[\zeta^{\mathrm{w}}_{\mathrm{gen}}\!(\A,\S)\triangleq\Delta^{\!\mathrm{w}}\!(\A_{\x}(\S),\A_{\y}(\S))\!-\!\Delta^{\!\mathrm{w}}_{\!\S}\!(\A_{\x}(\S),\A_{\y}(\S)).\]
\end{defi}

Similarly, we further define the \textbf{Strong PD Generalization Gap} analogously as the discrepancy between  $\Delta^{\mathrm{s}}(\x,\y)\!\triangleq\!\E[\sup_{\y'\in\yset}F(\x,\y')\!-\!\inf_{\x'\in\xset}F(\x',\y)]$ and $\Delta^{\mathrm{s}}_{\S}(\x,\y)\!\triangleq\E[\sup_{\y'\in\yset}\!F_{\S}(\x,\y')\!-\!\inf_{\x'\in\xset}\!F_{\S}(\x',\y)]$, i.e.,
\[
\zeta^{\mathrm{s}}_{\mathrm{gen}}\!(\A,\S)\!\triangleq\!\Delta^{\!\mathrm{s}}(\A_{\x}(\S),\A_{\y}(\S))\!-\!\Delta^{\!\mathrm{s}}_{\!\S}(\A_{\x}(\S),\A_{\y}(\S)).
\]
We have $\zeta^{\mathrm{w}}_{\mathrm{gen}}\leq\zeta^{\mathrm{s}}_{\mathrm{gen}}$ owing to the order of the expectation and operation of $\sup,\inf$. The strong PD generalization gap is not our main concern since the conduction usually requires SC-SC condition, while the weak one can suit more cases even in NC-NC condition. Also, notice that the expectation here is taken on the randomness of $\A,\S$; we may omit the subscripts if no misunderstanding occurs.

\begin{defi}[\textbf{Excess Primal Generalization Gap}]
For a randomized model $\x$, firstly we define the primal population risk and the primal empirical risk as $R(\x)=\sup_{\y'\in\yset}F(\x,\y')$, $R_{\S}(\x)=\sup_{\y'\in\yset}F_{\S}(\x,\y')$ respectively. The excess primal population risk and the excess primal empirical risk are defined below:
    \begin{gather*}
    \Delta^{\!\mathrm{ep}}(\x)\triangleq \E[R(\x)-\inf_{\x'\in\xset}R(\x')],\\
    \Delta^{\!\mathrm{ep}}_{\S}(\x)\triangleq \E[R_{\S}(\x)-\inf_{\x'\in\xset}R_{\S}(\x')]. 
       \end{gather*}
    For model $\A_{\x}(\S)$ which is the output of the algorithm $\A$ trained on the dataset $\S$, the corresponding excess primal generalization gap is defined by:
    \[\zeta^{\mathrm{ep}}_{\mathrm{gen}}\!(\A,\S)\triangleq\Delta^{\!\mathrm{ep}}(\A_{\x}(\S))\!-\!\Delta^{\!\mathrm{ep}}_{\S}(\A_{\x}(\S)).\]
\end{defi}
As validated in \citet{ozdaglar2022good}, the excess primal generalization gap (named generalization error for primal gap) can indicate the generalization ability better than conventional primal generalization error defined as $R(\x)-R_{\S}(\x)$, especially in the nonconvex-convex (NC-C) setting. The fact that the optimal solution of $R_{\S}(\cdot)$ cannot minimize $R(\cdot)$ comparably well may account for the shortage.

\subsection{Algorithmic Stability }
We introduce various metrics for distributed algorithmic stability based on the complex structure of the minimax problem, such as SC-SC, NC-SC, and NC-NC. Firstly, we present the concept of the distributed neighboring dataset, which degenerates into the vanilla definitions of the neighboring dataset (\citet{lei2021stability},\citet{ozdaglar2022good}) when $i=1$. Subsequently, algorithmic stability can be assessed by evaluating the discrepancy between output models trained on the distributed neighboring dataset.

\begin{defi}[\textbf{Distributed neighboring dataset}]\label{def:neighbouring:dataset}
    We call $\S=\{\S_1,...,\S_m\}$ and $\S'=\{\S'_1,...,\S'_m\}$ the distributed neighboring dataset when there exists at most one different sample between each local dataset $\S_i$ and $\S'_i$, $\forall i=1,...,m$.
\end{defi}

\begin{defi}[\textbf{Distributed algorithmic stability}]\label{def:stability}
For a randomized distributed algorithm $\A$, we say it is $\epsilon$-stable if for any distributed neighboring dataset $\S,\S'$, it holds that:
\begin{enumerate}[label=(\textit{\roman*}),wide=\parindent,labelindent=0pt,itemsep=2pt,topsep=0pt,parsep=0pt]
        \item \textbf{argument stability:}
        \begin{equation*}
            \E_{\A}\left\|\left(\begin{array}{c}
           \A_{\x}(\S)-\A_{\x}(\S')  \\
           \A_{\y}(\S)-\A_{\y}(\S')
        \end{array}\right)\right\|_2\leq\epsilon;
        \end{equation*}
        \item \textbf{primal stability:}
    $\E_{\A}\|\A_{\x}(\S)-\A_{\x}(\S')\|_2\leq\epsilon$;  
        \item \textbf{weak stability:}
            \begin{equation*}
                \begin{aligned}
                &\sup_{\xi_{i,l_{i}}}\big[\sup_{\y'\in\yset}\E_{\A}[\frac{1}{m}\sum_{i=1}^m\big(f_i(\A_{\x}(\S),\y';\xi_{i,l_i})-f_i(\A_{\x}(\S'),\y';\xi_{i,l_i})\big)]\\
                    &+\sup_{\x'\in\xset}\E_{\A}[\frac{1}{m}\sum_{i=1}^m \big(f_i(\x',\A_{\y}(\S);\xi_{i,l_i})-f_i(\x',\A_{\y}(\S');\xi_{i,l_i})\big)\big]\leq\epsilon.
                \end{aligned}
            \end{equation*}
    \end{enumerate}
    \vspace{0.1cm}
\end{defi}
\begin{rem}
    Under Assumption \ref{as:Lip:con}, $\epsilon$-argument stability suggests $\sqrt{2}G\epsilon$-weak stability. It is important to highlight that we propose a novel stability measure of primal stability for minimax problems. Although $\epsilon$-argument stability implies $\epsilon$-primal stability, adopting primal stability is more intuitive in some cases, especially when the loss function is strongly concave on the second argument $\y$. Besides, primal stability focuses on the behavior of the minimizer parameter, which better fits into models mainly concerned with the minimizer, such as GAN and robust adversarial training.
\end{rem}

\subsection{Assumptions and Notations}
Below, we introduce several useful assumptions that are common in the literature for analyzing distributed minimax problems (\citet{beznosikovnear,deng2021local}).

\begin{assumption}[\textbf{Lipschitz continuity}]\label{as:Lip:con}
    Let $G>0$, assume for any $\x\in\xset,\y\in\yset$, each local loss function is $G$-Lipschitz continuous on any given sample $\xi_i\in\S_i$, i.e.,
    \[|f_i(\x,\y;\xi_i)-f_i(\x',\y';\xi_i)|\leq G\left\|\left(\begin{array}{c}
        \x-\x'   \\
        \y-\y'
    \end{array}\right)\right\|_2.\]
    It also implies the bounded gradient  $\|\nabla f_i(\x,\y;\xi_i)\|_2\leq G.$
\end{assumption}
    
\begin{assumption}[\textbf{Lipschitz smoothness}]\label{as:Lip:smmoth}
    Let $L>0$, assume for any $\x\in\xset,\y\in\yset$, each local loss function is $L$-Lipschitz smooth on any given sample $\xi_i\in\S_i$, i.e.,
    \[\left\|\left(\!\!\!\begin{array}{c}
        \dx f_i(\x,\y;\xi_i)\!-\!\dx f_i(\x',\y';\xi_i)\\
        \dy f_i(\x,\y;\xi_i)\!-\!\dy f_i(\x',\y';\xi_i)
    \end{array}\!\!\!\!\right)\right\|_2\!\!\leq L \left\|\left(\!\!\!\!\begin{array}{c}
        \x\!-\!\x'   \\
        \y\!-\!\y'
    \end{array}\!\!\!\right)\right\|_2.\]
\end{assumption}

\begin{defi}[\textbf{Convexity-Concavity}]\label{def:convexconcave}
    We say $g(\x,\y)$ is $\mu$-SC-SC, if and only if function $g$ is $\mu$-strongly-convex on argument $\x$ and $\mu$-strongly-concave on argument $\y$, i.e.,
    \begin{align*}
        g(\x'\!,\y)\!\geq\! g(\x,\y)+\!<\!\dx g(\x,\y),\x'\!-\!\x\!>\!+\frac{\mu}{2}\|\x'\!-\x\|^2_2;\\
        g(\x\!,\y')\!\leq\! g(\x,\y)+\!<\!\dy g(\x,\y),\y'\!-\!\y\!>\!-\frac{\mu}{2}\|\y'\!-\y\|^2_2.
    \end{align*}
    And $\mu=0$ refers to C-C case, NC-$\mu$-SC means nonconvex on $\x$ and $\mu$-strongly-concave on $\y$.
\end{defi}

\begin{defi}[\textbf{PL-Condition}]\label{def:pl}
    A differentiable function $g(\x)$ satisfies $\rho$-PL (Polyak-{\L}ojasiewicz) condition \cite{polyak1963gradient,lojasiewicz1963topological,karimi2016linear} when:
    \begin{equation*}
        \frac{1}{2}\|\nabla g(\x)\|_2^2\geq\rho(g(\x)-\inf_{\x'\in\xset}g(\x')).
    \end{equation*}
\end{defi}


\section{Theoretical Results}
In this section, we provide the main stability and generalization results for the generalized \textit{Distributed-SGDA}. 
We provide a generalized connection between algorithmic stability and generalization gap in distributed minimax setting, \textit{not limited to our Distributed-SGDA}, in Sec.\ref{subsec:connection} with proof in Appendix \ref{sec:connection}. By the connection, once we have the stability bounds, the generalization gap can be guaranteed. So what we do under case of SC-SC (Sec.\ref{subsec:scsc}), NC-SC (Sec. \ref{subsec:ncnc}) is to prove the stability bound (Thm.\ref{thm:local:sc-sc-stability}, \ref{thm:primal}) first, then combined with the optimization error proved in Thm. \ref{thm:weakpdempirical} in Appendix \ref{sec:sc-sc} and Thm.\ref{thm:ep:empirical} in Appendix \ref{sec:NC-SC} respectively to balance the optimal population risk (Thm.\ref{thm:sc-sc-population}, \ref{thm:ep:population}). While under NC-NC case (Sec.\ref{subsec:ncsc}), we can only provide the stability bound (Thm. \ref{thm:weak}) which implies corresponding generalization gap through the connection. 
Among the conduction, there exists difficulty dealing with the accumulated local gradients arising from $K$ iterations of local update from $m$ different agents. We address this issue by virtually introducing the global average $\bar{\x}^t_k\triangleq\frac{1}{m}\sum_{i=1}^m\x_{i,k}^t,\bar{\y}^t_k\triangleq\frac{1}{m}\sum_{i=1}^m\y^t_{i,k}$. By Lipschitz property (Assumption \ref{as:Lip:con},\ref{as:Lip:smmoth}), the error can be bounded by a critical consensus term $\Delta^t_k\triangleq\frac{1}{m}\sum_{i=1}^m\left\|\left(\begin{array}{cc}
        \x^t_{i,k}-\bar{\x}^t_k  \\
        \y^t_{i,k}-\bar{\y}^t_k 
    \end{array}\right)\right\|_2$.

\subsection{Connection in Distributed Setting}\label{subsec:connection}
We first provide a comprehensive theorem to demonstrate the connection between algorithmic stability and different measures of generalization gap in the distributed minimax setting. 

\begin{thm}[\textbf{Connection}]\label{thm:connection}
    For a randomized $\epsilon$-stable distributed minimax algorithm $\A$, we have the following generalization gap for model $(\A_{\x}(\S),\!\A_{\y}(\S))$ training on the dataset $\S$,
    \begin{enumerate}[wide=\parindent,label=(\roman*),ref=(\textit{\roman*}),labelindent=0pt,itemsep=2pt,topsep=0pt,parsep=0pt]
    \item\label{connection:weak} under Assumption \ref{as:Lip:con}, the following relationship holds: $\epsilon$-\textbf{argument stability} $\rightarrow$ $\sqrt{2}G\epsilon$-\textbf{weak stability} $\rightarrow$ \textbf{weak PD generalization gap}:        $\zeta^{\mathrm{w}}_{\mathrm{gen}}(\A,\S)\leq\sqrt{2}G\epsilon$;
    \item\label{connection:ex} under Assumptions \ref{as:Lip:con} and \ref{as:Lip:smmoth}, when the local functions $f_i(\x,\cdot)$ and $F(\x,\cdot)$ are $\mu$-strongly-concave, we have $\epsilon$-\textbf{primal stability} $\rightarrow$ \textbf{excess primal generalization gap}:
        $$\zeta^{\mathrm{ep}}_{\mathrm{gen}}(\A,\S)\leq G\sqrt{1+\frac{L^2}{\mu^2}}\epsilon+\frac{4G^2}{\mu mn};$$
    \item\label{connection:strong} under Assumptions \ref{as:Lip:con} and \ref{as:Lip:smmoth}, when $F(\cdot,\cdot)$ is $\mu$-SC-SC, we have $\epsilon$-\textbf{argument stability} $\rightarrow$ \textbf{strong PD generalization gap}: $\zeta^{\mathrm{s}}_{\mathrm{gen}}(\A,\S)\leq G\sqrt{2+\frac{2L^2}{\mu^2}}\epsilon$.
    \end{enumerate}
\end{thm}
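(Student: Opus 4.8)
The plan is to handle all three items by the same template: write each generalization gap as the sum of a \emph{primal-side} term (involving only $\A_{\x}(\S)$ and a supremum over $\y'$) and a \emph{dual-side} term (involving only $\A_{\y}(\S)$ and an infimum over $\x'$), and then bound each side by the relevant stability notion via a ghost-sample symmetrization. For item \ref{connection:weak} I would first dispatch the implication $\epsilon$-argument stability $\to\sqrt2 G\epsilon$-weak stability: by $G$-Lipschitz continuity (Assumption \ref{as:Lip:con}), for every evaluation sample and every $\y'$ one has $f_i(\A_{\x}(\S),\y';\xi)-f_i(\A_{\x}(\S'),\y';\xi)\le G\|\A_{\x}(\S)-\A_{\x}(\S')\|_2$ and symmetrically for the $\y$-block; averaging over $i$, taking $\sup_{\y'}$ and $\sup_{\x'}$, adding the two sides, and using $a+b\le\sqrt2\sqrt{a^2+b^2}$ with $\E_{\A}\|(\cdot,\cdot)\|_2\le\epsilon$ yields the $\sqrt2 G\epsilon$ constant.

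To pass from weak stability to the weak PD gap I would first bound, pointwise in $\S$, $\zeta^{\mathrm{w}}_{\mathrm{gen}}\le\sup_{\y'}\E_{\A}[F(\A_{\x}(\S),\y')-F_{\S}(\A_{\x}(\S),\y')]+\sup_{\x'}\E_{\A}[F_{\S}(\x',\A_{\y}(\S))-F(\x',\A_{\y}(\S))]$ using $\sup a-\sup b\le\sup(a-b)$ and $\inf a-\inf b\le\sup(a-b)$. The device that makes the distributed structure tractable is to regard the data as $n$ i.i.d.\ \emph{super-samples} (columns) $c_l=(\xi_{1,l},\dots,\xi_{m,l})$, so that $F_{\S}=\frac1n\sum_l\frac1m\sum_i f_i(\cdot;\xi_{i,l})$ is an empirical mean of $n$ i.i.d.\ terms with mean $F$; a single column replacement is then exactly a distributed neighboring dataset, and the per-column loss $\frac1m\sum_i f_i$ is precisely the quantity averaged in the weak-stability definition. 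A standard exchangeability swap converts each summand of $\E_{\S}[F-F_{\S}]$ into a neighbor difference, so the primal term is bounded by the primal part and the dual term by the dual part of the $\sqrt2 G\epsilon$-weak-stability sum, giving $\zeta^{\mathrm{w}}_{\mathrm{gen}}\le\sqrt2 G\epsilon$. I expect \emph{this} step to be the main obstacle: the interchange of $\sup_{\y'}$ with $\E_{\S}$, and the alignment of the $\tfrac1m\sum_i$ averaging, are legitimate only because weak stability is a worst-case (supremum) bound over both $\y'$ and the differing sample, so it dominates the symmetrized terms even after the supremum is pulled outside the data expectation.

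For item \ref{connection:strong} ($\mu$-SC-SC, strong PD) the suprema now sit inside $\E_{\A}$, so I would pin the evaluation point. Writing $\y^{\ast}(\x)=\arg\max_{\y'}F(\x,\y')$ gives $\sup_{\y'}F(\A_{\x},\y')-\sup_{\y'}F_{\S}(\A_{\x},\y')\le F(\A_{\x},\y^{\ast}(\A_{\x}))-F_{\S}(\A_{\x},\y^{\ast}(\A_{\x}))$, an ordinary pinned-$\y$ generalization term for the augmented map $\S\mapsto(\A_{\x}(\S),\y^{\ast}(\A_{\x}(\S)))$. Strong concavity plus $L$-smoothness (Assumptions \ref{as:Lip:con},\ref{as:Lip:smmoth}) make $\y^{\ast}$ an $\tfrac{L}{\mu}$-Lipschitz function of $\x$, so this map has argument stability $\sqrt{1+L^2/\mu^2}\,\|\A_{\x}(\S)-\A_{\x}(\S')\|_2$; the same super-sample symmetrization together with $G$-Lipschitzness bounds the primal half by $G\sqrt{1+L^2/\mu^2}\,a$ with $a=\|\A_{\x}(\S)-\A_{\x}(\S')\|_2$, and the dual half symmetrically by $G\sqrt{1+L^2/\mu^2}\,b$ with $b=\|\A_{\y}(\S)-\A_{\y}(\S')\|_2$. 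Finally $a+b\le\sqrt2\sqrt{a^2+b^2}$ combined with $\E_{\A}\sqrt{a^2+b^2}\le\epsilon$ gives $\zeta^{\mathrm{s}}_{\mathrm{gen}}\le G\sqrt{2+2L^2/\mu^2}\,\epsilon$; note that pinning $\y^{\ast}$ removes the residual $\sup_{\y'}$, so the symmetrization is cleaner here than in item \ref{connection:weak}.

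Item \ref{connection:ex} (excess primal) reuses the primal-half computation: $\E[R(\A_{\x})-R_{\S}(\A_{\x})]=\E[\sup_{\y'}F(\A_{\x},\y')-\sup_{\y'}F_{\S}(\A_{\x},\y')]\le G\sqrt{1+L^2/\mu^2}\,\epsilon$, now via \emph{primal} stability. The only new ingredient is the infimum term $\E_{\S}[\inf_{\x'}R_{\S}(\x')-\inf_{\x'}R(\x')]$, which I would control by evaluating at the population minimizer $\x^{\ast}=\arg\inf_{\x'}R(\x')$, yielding $\le\E_{\S}[R_{\S}(\x^{\ast})-R(\x^{\ast})]=\E_{\S}[\sup_{\y'}F_{\S}(\x^{\ast},\y')-\sup_{\y'}F(\x^{\ast},\y')]$. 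Since $F_{\S}(\x^{\ast},\cdot)$ and $F(\x^{\ast},\cdot)$ are $\mu$-strongly concave with $G$-bounded gradients over $mn$ total samples, the empirical inner maximizer has uniform stability $\tfrac{2G}{\mu mn}$, and the strongly-concave ERM generalization estimate gives the bias bound $\tfrac{4G^2}{\mu mn}$; adding the two contributions produces $\zeta^{\mathrm{ep}}_{\mathrm{gen}}\le G\sqrt{1+L^2/\mu^2}\,\epsilon+\tfrac{4G^2}{\mu mn}$. I anticipate the super-sample symmetrization of item \ref{connection:weak} to be the technical crux, with the strongly-concave argmax-stability estimate used for the infimum term in item \ref{connection:ex} the second most delicate point.
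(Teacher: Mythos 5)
Your proposal is correct and follows essentially the same route as the paper's proof: the same primal/dual splitting of each gap, the same ghost-sample symmetrization (your same-index ``column'' replacement is the paper's index vector $\bm{r}$ specialized to $r_1=\cdots=r_m$), the same pinning of the argmax $\y^*(\x)$ with its $L/\mu$-Lipschitzness for the primal halves of items (ii) and (iii), and the same evaluation at the population minimizer combined with strongly-concave ERM stability to obtain the $\frac{4G^2}{\mu mn}$ term. The only visible difference is cosmetic: the paper re-derives the strongly-concave ERM stability estimate explicitly (its Eqs.~(46)--(48)) where you invoke it as a standard result.
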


\begin{rem}
For \ref{connection:weak}, weak stability is sufficient to indicate the weak PD generalization gap, which can be applicable in the NC-NC case. Conclusion in \ref{connection:ex} is special due to the extra isolated term $\frac{4G^2}{\mu mn}$, which implies that primal stability combined with a considerable amount of $mn$ can guarantee the generalization gap. It can be attributed to the fact that the excess primal generalization gap also assesses the influence of argument $\y$ not encompassed in primal stability but can be bounded by leveraging the property of strong concavity.
\end{rem}

\subsection{(Strongly) Convex-(Strongly) Concave Case}\label{subsec:scsc}

\begin{thm}[\textbf{Argument stability}]\label{thm:local:sc-sc-stability}
For Distributed-SGDA $\A(T,K,\W)$, under Assumptions \ref{as:mixing}, \ref{as:Lip:con}, and \ref{as:Lip:smmoth},  local function $f_i$ being $\mu$-SC-SC, we have its $\epsilon$-argument stability:
    \begin{align*}
    \epsilon\leq2\sum_{t=1}^T\prod_{t'=t+1}^T\prod_{k_1=0}^{K-1}(1-\eta^{t'}_{k_1}\frac{L\mu}{L+\mu})\cdot\sum_{k=0}^{K-1}\eta^t_k(L\E_{\A}[\Delta^t_k]+\frac{G}{n})\prod_{k'=k+1}^{K-1}(1-\eta^t_{k'}\frac{L\mu}{L+\mu}).
    \end{align*}
\begin{enumerate}[wide=\parindent,label=(\roman*),ref=(\textit{\roman*}),labelindent=0pt,itemsep=2pt,topsep=0pt,parsep=0pt]
        \item\label{argument:fixed} Choosing a fixed learning rate $\eta$ yields the stability:
            \begin{equation*}
            \epsilon\leq\frac{2G(L+\mu)}{L\mu}\Big(\eta L\sqrt{\frac{1}{1-\lambdabm}}\frac{K^2}{2}+\eta L\sqrt{\frac{2\lambdabm}{(1-\lambdabm)(1-\lambdabm^2)}}K^2+\frac{K}{n}\Big);
            \end{equation*}
        \item\label{argument:decaying} For decaying learning rates $\eta^t_k\!=\!\frac{1}{(k+1)^\alpha t}$ with $\frac{1}{2}\!<\!\alpha\!<\!1$, and $C_{\lambdabm}\triangleq\frac{(\alpha/e)^\alpha}{\lambdabm(-\ln{\lambdabm})^\alpha}+\frac{2}{e\lambdabm(-\ln{\lambdabm})}+\frac{2^\alpha}{\lambdabm(-\ln{\lambdabm})}$, it holds:
        \begin{small}
            \begin{align*}
                \epsilon\leq\frac{1}{T\!+\!1}\bigg(\frac{(K^{\frac{3}{2}-\alpha}\!+\!\frac{3}{2}\!-\!\alpha)\sqrt{\frac{\frac{2\alpha}{2\alpha-1}}{1-\lambdabm}}\frac{2GL(1-\alpha)}{(\frac{3}{2}-\alpha)}}{\frac{L\mu}{L+\mu}(K^{1-\alpha}+1-\alpha)+\alpha-1}\!+\!\frac{(K^{1-\alpha}\!+\!1\!-\!\alpha)2GL\sqrt{(C_{\lambdabm^2}+\frac{C_{\lambdabm}}{1-\lambdabm})\frac{2\alpha}{2\alpha-1}K}}{\frac{L\mu}{L+\mu}(K^{1-\alpha}+1-\alpha)+\alpha-1}\bigg)\!+\!\frac{G}{n}\frac{L\!+\!\mu}{L\mu}.
            \end{align*}
        \end{small}
    \end{enumerate}
\end{thm}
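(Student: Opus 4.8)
The plan is to run a coupling argument: fix the sampling indices $j^t_k(i)$ and the mixing matrix $\W$ to be identical for the executions on $\S$ and on the neighboring set $\S'$, so that the two trajectories differ only through the single perturbed sample in each local dataset. Writing $z^t_{i,k}=(\x^t_{i,k},\y^t_{i,k})$ and stacking the descent/ascent directions into one operator $G_i(\cdot;\xi)$ (the gradient field with the sign flipped on $\y$), I would track the distance between the \emph{global averages} $\bar z^t_k(\S)-\bar z^t_k(\S')$ rather than the per-worker distances, because the reported output $(\A_{\x}(\S),\A_{\y}(\S))=(\bar\x^T,\bar\y^T)$ is exactly this average and, by Jensen, the average distance dominates the distance of averages. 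The first fact I would record is that, since $\W$ is doubly stochastic (Assumption \ref{as:mixing}), the communication step preserves the global average, $\frac1m\sum_i\x^t_i=\bar\x^t_{K}$; hence $\|\bar z^t_k(\S)-\bar z^t_k(\S')\|_2$ passes continuously across communication rounds, and the analysis reduces to controlling its growth during the local updates.

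For the one-step recursion I would add and subtract the averaged field evaluated at the averages $\bar z^t_k$, splitting the averaged stochastic-gradient difference into three pieces. The dominant piece is $\bar z^t_k(\S)-\bar z^t_k(\S')-\eta^t_k\big(\bar G(\bar z^t_k(\S))-\bar G(\bar z^t_k(\S'))\big)$; since an average of $\mu$-SC-SC, $L$-smooth functions is $\mu$-strongly-monotone and $L$-Lipschitz as an operator, the co-coercivity inequality for such operators gives, for step sizes $\eta^t_k\le \tfrac{2}{L+\mu}$, exactly the contraction factor $(1-\eta^t_k\frac{L\mu}{L+\mu})$ appearing throughout the statement. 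The two remaining pieces are the \emph{consensus corrections} $\frac1m\sum_i[G_i(z^t_{i,k})-G_i(\bar z^t_k)]$ for each dataset, bounded by $L\,\Delta^t_k$ using $L$-smoothness (Assumption \ref{as:Lip:smmoth}); and the \emph{perturbation term}, present only when the sampled index hits the perturbed coordinate, which occurs with probability $\tfrac1n$ and contributes at most $\tfrac{2G}{n}\eta^t_k$ in expectation by the bounded-gradient consequence of Assumption \ref{as:Lip:con}. Collecting these yields $\E\|\bar z^t_{k+1}(\S)-\bar z^t_{k+1}(\S')\|_2\le(1-\eta^t_k\tfrac{L\mu}{L+\mu})\,\E\|\bar z^t_{k}(\S)-\bar z^t_{k}(\S')\|_2+\eta^t_k\big(2L\,\E[\Delta^t_k]+\tfrac{2G}{n}\big)$, and unrolling over $k=0,\dots,K-1$ and over $t=1,\dots,T$ (chaining rounds via average-preservation) produces the master bound with its double product of contraction factors and the overall factor $2$.

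The main obstacle is the separate \emph{consensus sub-lemma} bounding $\E[\Delta^t_k]$ with explicit $\lambdabm$-dependence. Here I would use the spectral gap of $\W$: mixing contracts the worker-disagreement vector by $\lambdabm$ (Definition \ref{def:crucial}), while each local step inflates it by at most $\mathcal{O}(\eta^t_k G)$ because of bounded gradients. Summing the within-round linear drift (contributing the $\sqrt{1/(1-\lambdabm)}$ factor, growing like $k$) together with the geometric accumulation of residual disagreement across rounds (ratio $\lambdabm$ per round, which after an $\ell_2$-over-workers computation yields the $\sqrt{2\lambdabm/((1-\lambdabm)(1-\lambdabm^2))}$ factor scaled by $K$) gives $\E[\Delta^t_k]\lesssim \eta G\big(\sqrt{\tfrac{1}{1-\lambdabm}}\,k+\sqrt{\tfrac{2\lambdabm}{(1-\lambdabm)(1-\lambdabm^2)}}\,K\big)$. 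For the fixed-rate claim \ref{argument:fixed} I would substitute this into the master bound, bound each within-round factor by $1$, and use $(1-\eta\frac{L\mu}{L+\mu})^K\le 1-\eta\frac{L\mu}{L+\mu}$ so the outer geometric sum over rounds contributes the prefactor $\frac{1}{\eta}\cdot\frac{L+\mu}{L\mu}$; the $\eta$'s cancel, $\sum_{k=0}^{K-1}\E[\Delta^t_k]\sim\tfrac{K^2}{2}$ produces the two $K^2$ terms, and the perturbation part produces the $\tfrac{K}{n}$ term.

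For the decaying-rate claim \ref{argument:decaying} with $\eta^t_k=\tfrac1{(k+1)^\alpha t}$, the work is entirely in the bookkeeping of the products and sums of $(1-\eta^{t'}_{k_1}\frac{L\mu}{L+\mu})$. I would pass to logarithms and compare $\prod(1-\eta^{t'}_{k_1}\frac{L\mu}{L+\mu})$ with integrals to convert these products into the $K^{1-\alpha}$- and $K^{3/2-\alpha}$-type exponents and the $\tfrac{1}{T+1}$ prefactor coming from the $\tfrac1t$ schedule, and I would re-derive the consensus bound under the decaying schedule: the cross-round sums now take the form $\sum_j j^\alpha\lambdabm^{j}$, whose maximizing term $\max_k k^\alpha\lambdabm^k=(\alpha/e)^\alpha/(-\ln\lambdabm)^\alpha$ is precisely the leading piece of $C_{\lambdabm}$, the remaining pieces arising from boundary terms and the $\lambdabm$-normalization of the geometric tail. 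I expect this decaying-rate accounting — keeping the $\alpha$-dependent exponents consistent between the contraction products and the consensus estimates — to be the most delicate part, whereas the co-coercivity contraction and the average-preservation step are routine once the coupling is set up.
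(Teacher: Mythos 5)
Your proposal follows essentially the same route as the paper's proof: the paper likewise tracks the coupled global averages (preserved across communication because $\W$ is doubly stochastic), establishes the identical one-step recursion with contraction factor $(1-\eta^t_k\frac{L\mu}{L+\mu})$ plus error $2\eta^t_kL\E_{\A}[\Delta^t_k]+\frac{2\eta^t_kG}{n}$ (Lemma \ref{le:iteration}, including the same binomial/probability-$\frac{1}{n}$ accounting of the perturbed sample), bounds the consensus term via the spectral gap with the same $\sqrt{1/(1-\lambdabm)}\,k$ and $\sqrt{2\lambdabm/((1-\lambdabm)(1-\lambdabm^2))}\,K$ structure (Lemma \ref{le:delta:localdecentra}), and does the same fixed/decaying-rate bookkeeping, including the step $1-(1-\eta\frac{L\mu}{L+\mu})^K\geq\eta\frac{L\mu}{L+\mu}$ that yields the $\frac{L+\mu}{L\mu}$ prefactor. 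The only cosmetic difference is that you invoke the contraction at the level of the averaged operator, while the paper applies the per-sample expansiveness lemma (Lemma \ref{le:nonexpan}) agent-by-agent and then averages; the two are interchangeable since averaging preserves $\mu$-SC-SC and $L$-smoothness.
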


\begin{rem}\label{remark:lambda}
Define $\lambdabm$-relevant constants $\mathcal{O}(\sqrt{\lambdabm_1})$ and $\mathcal{O}(\sqrt{\lambdabm_2})$ that $\lambdabm_1\sim\frac{1}{1-\lambdabm}$, $\lambdabm_2\sim C_{\lambdabm^2}+\frac{C_{\lambdabm}}{1-\lambdabm}$ (or $\lambdabm_2\sim\frac{2\lambdabm}{(1-\lambdabm)(1-\lambdabm^2)}$) respectively where $\lambdabm_1=1$ while $\lambdabm_2=0$ when $\lambdabm=0$ (see Def.\ref{def:crucial}) which represents the fully-connected graph. Then, for fixed learning rates \ref{argument:fixed}, argument stability is bounded by $\mathcal{O}((\sqrt{\lambdabm_1}+\sqrt{\lambdabm_2})\eta K^2+\frac{K}{n})$ which can achieve optimal of $\mathcal{O}(\frac{(\sqrt{\lambdabm_1}+\sqrt{\lambdabm_2})K}{T}+\frac{K}{n})$ when choosing $\eta\sim\frac{1}{TK}$. For decaying learning rates \ref{argument:decaying}, argument stability is bounded by $\mathcal{O}((\sqrt{\lambdabm_1}+\sqrt{\lambdabm_2})\frac{K^{\frac{1}{2}}}{T}+\frac{1}{n})$.
   
\end{rem}

\noindent\textbf{\textit{Local-DSGDA}} is just vanilla $\A(T,K,\W)$, and the analysis is the same as above;

\vspace{1.5pt}
\noindent\textbf{\textit{Local-SGDA}} ($\W=\Pm$, $\lambdabm=0$ and $\lambdabm_1=1,\lambdabm_2=0$): we have the argument stability of $\mathcal{O}(\eta K^2+\frac{K}{n})$ under fixed learning rates and $\mathcal{O}(\frac{K^{\frac{1}{2}}}{T}+\frac{1}{n})$ under decaying learning rates;

\vspace{1.5pt}
\noindent\textbf{\textit{Decentralized-SGDA}}($K=1$): the argument stability is bounded by $\mathcal{O}((\sqrt{\lambdabm_1}+\sqrt{\lambdabm_2})\eta+\frac{1}{n})$ under fixed learning rates and bounded by $\mathcal{O}(\frac{\sqrt{\lambdabm_1}+\sqrt{\lambdabm_2}}{T}+\frac{1}{n})$, which matches the result in \citep{zhu2023stability} of $\mathcal{O}(\frac{\eta}{1-\lambdabm}+\frac{1}{n})$.

\vspace{1.5pt}
According to Theorem \ref{thm:connection}, \textbf{\textit{weak and strong PD generalization gap}} can also be guaranteed by $\sqrt{2}G\epsilon$ and $G\sqrt{2\!+\!\frac{2L^2}{\mu^2}}\epsilon$, where the latter one further requires $\mu$-strong concavity on the function $F(\x,\cdot)$. So the discussion about the generalization gap is equivalent to the above discussions.

\begin{corol}
    When $\mu=0$, i.e., each local function $f_i$ is C-C, under Assumption \ref{as:mixing}-\ref{as:Lip:smmoth}, we have the $\epsilon$-argument stability $\epsilon\leq2\sum_{t=1}^T\sum_{k=0}^{K-1}\eta^t_k(L\E_{\A}[\Delta^t_k]+\frac{G}{n})$.
\end{corol}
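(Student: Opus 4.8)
The plan is to obtain this corollary as the degenerate $\mu\to 0$ case of Theorem~\ref{thm:local:sc-sc-stability}, but rather than blindly substituting $\mu=0$ into the final bound I would re-trace the one-step recursion underlying that theorem, since its contraction coefficient $1-\eta^t_k\frac{L\mu}{L+\mu}$ is \emph{produced} by the strong convexity--concavity of $f_i$ and must be re-justified once $\mu=0$. Running the algorithm on a distributed neighboring pair $\S,\S'$, I track the expected $\ell_2$ distance $\delta^t_k$ between the global averages $(\bar{\x}^t_k,\bar{\y}^t_k)$ of the two coupled trajectories run on $\S$ and on $\S'$. By the doubly-stochastic, average-preserving property of $\W$ (Assumption~\ref{as:mixing}, together with $\W\cdot\1_m=\1_m$), the communication step satisfies $\delta^{t+1}_0=\delta^t_K$, so it contributes no growth and it suffices to control the local-update recursion.

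For a single local step I would split each local gradient $\dx f_i(\x^t_{i,k},\y^t_{i,k};\cdot)$ into the gradient evaluated at the average plus a consensus error; Lipschitz smoothness (Assumption~\ref{as:Lip:smmoth}) bounds the latter by $L\Delta^t_k$, which is the origin of the $L\,\E_{\A}[\Delta^t_k]$ term. For the sampling, the two trajectories draw the same index with probability $1-\tfrac{1}{n}$ and the differing index with probability $\tfrac1n$; in the latter case the bounded-gradient/$G$-Lipschitz property (Assumption~\ref{as:Lip:con}) caps the extra displacement, producing the $\tfrac{G}{n}$ term after taking expectation over the draw. The remaining piece is the effect of the descent--ascent map on the averaged iterate: in the SC-SC proof this is exactly where the coercivity of the strongly-monotone saddle operator yields the factor $1-\eta^t_k\frac{L\mu}{L+\mu}$, whereas for $\mu=0$ I would invoke only monotonicity of the convex--concave saddle operator, so the map contributes a unit (non-expansive) factor. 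Assembling these pieces gives the per-step inequality $\delta^t_{k+1}\le \delta^t_k+\eta^t_k\big(L\,\E_{\A}[\Delta^t_k]+\tfrac{G}{n}\big)$.

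Unrolling this inequality over $k=0,\dots,K-1$ and over $t=1,\dots,T$, and using $\delta^{t+1}_0=\delta^t_K$, the absence of any contraction coefficient makes the weighted telescoping sum of Theorem~\ref{thm:local:sc-sc-stability} collapse: every product $\prod(1-\eta\tfrac{L\mu}{L+\mu})$ equals $1$, leaving the plain double sum $2\sum_{t=1}^T\sum_{k=0}^{K-1}\eta^t_k\big(L\,\E_{\A}[\Delta^t_k]+\tfrac{G}{n}\big)$, with the prefactor $2$ carried over unchanged from the main theorem's construction. The main obstacle is precisely this degenerate map step: losing strong monotonicity means one can no longer appeal to strict contraction, so one must verify that in the convex--concave regime the descent--ascent update still contributes no growth (a unit factor) rather than an expansion, so that the recursion accumulates only the $L\,\E_{\A}[\Delta^t_k]$ and $\tfrac{G}{n}$ perturbations. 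This is the single point at which the $\mu=0$ limit of Theorem~\ref{thm:local:sc-sc-stability} must be checked rather than merely read off, and it is what justifies the clean, contraction-free form of the stated bound.
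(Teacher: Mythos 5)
Your route is the same as the paper's: the corollary is read off from Theorem~\ref{thm:local:sc-sc-stability} (equivalently, from the unrolled recursion of Lemma~\ref{le:iteration}) by setting $\mu=0$, so every factor $1-\eta^t_k\frac{L\mu}{L+\mu}$ collapses to $1$ and only the perturbations $\eta^t_k(L\E_{\A}[\Delta^t_k]+\frac{G}{n})$ accumulate; your treatment of the communication step, the consensus error, and the $1/n$ sampling term all mirror Lemma~\ref{le:iteration}. The genuine gap is at the step you yourself single out as the crux: you claim that for $\mu=0$, monotonicity of the saddle operator makes the one-step descent--ascent map non-expansive. That is false. Writing $F(z)=(\dx g(z),-\dy g(z))$ and $G_{g,\eta}(z)=z-\eta F(z)$,
\begin{equation*}
\|G_{g,\eta}(z)-G_{g,\eta}(z')\|^2=\|z-z'\|^2-2\eta\,\langle F(z)-F(z'),z-z'\rangle+\eta^2\|F(z)-F(z')\|^2,
\end{equation*}
and monotonicity only discards the cross term; the quadratic term survives, so all you obtain is $\sqrt{1+\eta^2L^2}$-expansiveness. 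This is tight: for the bilinear, convex-concave, $L$-smooth function $g(x,y)=Lxy$, the map $G_{g,\eta}$ is $\sqrt{1+\eta^2L^2}$ times a rotation, hence strictly expansive for \emph{every} $\eta>0$; no step-size restriction helps, because the saddle operator of a merely convex-concave function is monotone but not co-coercive (co-coercivity is what saves the analogous argument for gradient descent in pure minimization, which is where your intuition comes from). Consequently your per-step inequality $\delta^t_{k+1}\le\delta^t_k+\eta^t_k(L\E_{\A}[\Delta^t_k]+\frac{G}{n})$ is unjustified; the honest version is $\delta^t_{k+1}\le\sqrt{1+(\eta^t_k)^2L^2}\,\delta^t_k+\eta^t_k(L\E_{\A}[\Delta^t_k]+\frac{G}{n})$, whose unrolling reintroduces products of order $e^{\frac{L^2}{2}\sum_{t,k}(\eta^t_k)^2}$ --- exactly the exponential-type factors that the C-C stability bounds of \citet{lei2021stability} carry and that the corollary's clean, contraction-free form avoids.

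Note that the paper does not attempt the derivation you sketch: it obtains the unit factor by invoking Lemma~\ref{le:nonexpan}(ii) (quoted from \citet{zhu2023stability}) formally at $\mu=0$, i.e., it treats non-expansiveness of the C-C gradient map as a black box rather than proving it. So your instinct that this limiting step ``must be checked rather than merely read off'' is exactly right; the problem is that the check you propose cannot succeed, and the bilinear example shows it cannot be rescued by any argument that uses only monotonicity and $L$-smoothness. To close the gap one must either carry the $\sqrt{1+\eta^2L^2}$ factor and accept a weaker, step-size-dependent bound, or replace the update with one that genuinely is non-expansive for C-C problems (e.g.\ a proximal-point step); neither yields the corollary as stated.
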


\begin{rem}\label{cc}
    It is interesting to see that argument stability can be bounded only for fixed learning rates with $\mathcal{O}((\sqrt{\lambdabm_1}+\sqrt{\lambdabm_2})\eta^2TK^2+\frac{\eta TK}{n})$, slightly looser than results in SC-SC. While choosing $\eta\sim\frac{1}{TK}$ it can achieve optimal of $\mathcal{O}(\frac{\sqrt{\lambdabm_1}+\sqrt{\lambdabm_2}}{T}+\frac{1}{n})$. Other results align with what we discussed about SC-SC case.
\end{rem}

Next, we present the weak PD population risk, which can be divided into generalization gap and empirical risk:
    \begin{equation*}
        \Delta^{\mathrm{w}}(\widetilde{\x}^T_K,\widetilde{\y}^T_K)=\underbrace{\Delta^{\mathrm{w}}(\widetilde{\x}^T_K,\widetilde{\y}^T_K)-\Delta^{\mathrm{w}}_{\S}(\widetilde{\x}^T_K,\widetilde{\y}^T_K)}_{\text{weak PD generalization gap}}+\underbrace{\Delta^{\mathrm{w}}_{\S}(\widetilde{\x}^T_K,\widetilde{\y}^T_K)}_{\text{weak PD empirical risk}},
\vspace{-2pt}
\end{equation*}
where we use the averaged model defined as $(\widetilde{\x}^T_K,\widetilde{\y}^T_K\!)\!\triangleq\!(\frac{1}{TK}\!\sum_{t=1}^T\!\sum_{k=0}^{K-1}\!\bar{\x}^t_k,\frac{1}{TK}\!\sum_{t=1}^T\!\sum_{k=0}^{K-1}\!\bar{\y}^t_k)$ instead of the last iterate since the last-iterate optimization is hard to solve, while the generalization gap holds for any ($t$,$k$)-th iteration.

\begin{thm}[\textbf{Weak PD population risk}]\label{thm:sc-sc-population}
    For Distributed-SGDA $\A(T,K,\W)$, under Assumptions \ref{as:mixing}-\ref{as:Lip:smmoth}, when each local function $f_i$ is $\mu$-SC-SC on any given sample, further assuming $\sup_{\x\in\xset}\|\x\|_2\leq B_{\x},\sup_{\y\in\yset}\|\y\|_2\leq B_{\y}$, 
    \begin{enumerate}[wide=\parindent,label=(\roman*),ref=(\textit{\roman*}),labelindent=0pt,itemsep=2pt,topsep=0pt,parsep=0pt]
    \vspace{2pt}
        \item\label{wp:fixed} for fixed learning rates,  weak PD population risk holds:
            \begin{align*}
                \Delta^{\!\mathrm{w}}(\widetilde{\x}^T_K,\widetilde{\y}^T_K)\!&\leq\!\underbrace{\frac{(1\!-\!\eta)(B_{\x}^2\!+\!B_{\y}^2)}{2\eta TK}\!+\!\frac{2G(B_{\x}\!+\!B_{\y})}{\sqrt{TK}}\!+\!\frac{\eta GKL(B_{\x}\!+\!B_{\y})(\mathcal{O}(\!\sqrt{\lambdabm_1})\!+\!\mathcal{O}(\!\sqrt{\lambdabm_2}))}{2}\!+\!\eta G^2}_{\text{optimization error}}\\
                &+\underbrace{\frac{2\sqrt{2}G^2(L+\mu)}{L\mu}\bigg(\frac{\mathcal{O}(\sqrt{\lambdabm_1})L}{2}\eta K^2+\mathcal{O}(\sqrt{\lambdabm_2})L\eta K^2+\frac{K}{n}\bigg);}_{\text{generalization gap}}
            \end{align*}
        \item\label{wp:decaying} for decaying learning rates $\eta^t_k=\frac{1}{(k+1)^\alpha t}$ with $\frac{1}{2}<\alpha<1$, weak PD population risk holds:

        \vspace{-0.5cm}
            \begin{align*}
                \Delta^{\!\mathrm{w}}(\widetilde{\x}^T_K,\widetilde{\y}^T_K)&\leq\underbrace{\frac{2G(B_{\x}\!+\!B_{\y})}{\sqrt{TK}}\!+\frac{1\!+\!\ln{T}}{T}\!\bigg(\!\frac{G^2}{(1\!-\!\alpha)K^\alpha}\!+\!(B_{\x}\!+\!B_{\y})GL(\mathcal{O}(\!\sqrt{\lambdabm_1})\!+\!\mathcal{O}(\!\sqrt{\lambdabm_2}))K^{\frac{1}{2}}\!\bigg)}_{\text{optimization error}}\\
                &+\underbrace{\frac{\sqrt{2}G(L\!+\!\mu)}{L\mu}\frac{K^\frac{1}{2}}{T\!+\!1}(\mathcal{O}(\sqrt{\lambdabm_1})\!+\!GL\mathcal{O}(\sqrt{\lambdabm_2}))\!+\!\frac{\sqrt{2}G^2}{n}\frac{L\!+\!\mu}{L\mu}.}_{\text{generalization gap}}
            \end{align*}
    \end{enumerate}
\end{thm}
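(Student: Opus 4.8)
The plan is to start from the decomposition already displayed just before the statement, $\Delta^{\mathrm{w}}(\widetilde{\x}^T_K,\widetilde{\y}^T_K)=\zeta^{\mathrm{w}}_{\mathrm{gen}}(\A,\S)+\Delta^{\mathrm{w}}_{\S}(\widetilde{\x}^T_K,\widetilde{\y}^T_K)$, i.e.\ population risk equals generalization gap plus empirical risk, and to bound the two pieces separately. The generalization-gap piece is essentially free: by Theorem \ref{thm:connection}\ref{connection:weak}, $\epsilon$-argument stability gives $\zeta^{\mathrm{w}}_{\mathrm{gen}}(\A,\S)\leq\sqrt{2}G\epsilon$, and Theorem \ref{thm:local:sc-sc-stability} already supplies the explicit $\epsilon$ in both step-size regimes. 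So I would just substitute the fixed-rate bound $\epsilon\leq\frac{2G(L+\mu)}{L\mu}(\eta L\sqrt{\tfrac{1}{1-\lambdabm}}\tfrac{K^2}{2}+\eta L\sqrt{\tfrac{2\lambdabm}{(1-\lambdabm)(1-\lambdabm^2)}}K^2+\tfrac{K}{n})$ and the decaying-rate bound into $\sqrt{2}G\epsilon$; since $\sqrt{2}G\cdot\frac{2G(L+\mu)}{L\mu}=\frac{2\sqrt{2}G^2(L+\mu)}{L\mu}$ and $\sqrt{\tfrac{1}{1-\lambdabm}}\sim\sqrt{\lambdabm_1}$, $\sqrt{\tfrac{2\lambdabm}{(1-\lambdabm)(1-\lambdabm^2)}}\sim\sqrt{\lambdabm_2}$ by Remark \ref{remark:lambda}, this reproduces verbatim the bracketed generalization-gap summands in both \ref{wp:fixed} and \ref{wp:decaying}.

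The real work is the optimization-error / empirical-risk term $\Delta^{\mathrm{w}}_{\S}(\widetilde{\x}^T_K,\widetilde{\y}^T_K)$, which I would establish as a separate duality-gap estimate (the content of the referenced Thm.\ref{thm:weakpdempirical}). The approach is the standard primal-dual potential argument, run on the \emph{global averages} $\bar{\x}^t_k,\bar{\y}^t_k$: for arbitrary comparators $\x'\in\xset,\y'\in\yset$ I track the one-step recursion of $\|\bar{\x}^t_k-\x'\|_2^2+\|\bar{\y}^t_k-\y'\|_2^2$. Expanding the square and taking expectation over the sampled $\xi_{i,j^t_k(i)}$ isolates (a) a cross term $\langle\frac1m\sum_i\nabla f_i(\x^t_{i,k},\y^t_{i,k};\cdot),(\bar{\x}^t_k-\x',\bar{\y}^t_k-\y')\rangle$, (b) a squared-gradient term $\leq G^2$ via Assumption \ref{as:Lip:con} (the source of the $\eta G^2$ summand), and (c) a stochastic-noise inner product. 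For (a) I would invoke $\mu$-strong convexity in $\x$ and $\mu$-strong concavity in $\y$ (Definition \ref{def:convexconcave}) to lower-bound the inner product by the value gap $F_{\S}(\bar{\x}^t_k,\y')-F_{\S}(\x',\bar{\y}^t_k)$ plus $\frac{\mu}{2}$-distance terms; but because the gradients are evaluated at the local iterates $\x^t_{i,k}$ rather than at $\bar{\x}^t_k$, each such step incurs a correction controlled by $L\,\Delta^t_k$ through Assumption \ref{as:Lip:smmoth}. Summing over $k=0,\dots,K-1$ and $t=1,\dots,T$, telescoping the potential, dividing by $TK$, and taking $\sup_{\y'}$/$\inf_{\x'}$ over the bounded sets (radii $B_{\x},B_{\y}$) then yields the duality gap at the averaged iterate, producing the $\frac{(1-\eta)(B_{\x}^2+B_{\y}^2)}{2\eta TK}$ telescoped-potential term (resp.\ the $\frac{1+\ln T}{T}$ factor in the decaying case via $\sum_t 1/t\approx\ln T$).

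The hard part will be bounding the accumulated consensus error $\sum_{t,k}\eta^t_k\,\E_{\A}[\Delta^t_k]$, which is exactly what distinguishes Local-/Decentralized-SGDA from vanilla SGDA and is the origin of the $\lambdabm_1,\lambdabm_2$ dependence. I would bound $\E_{\A}[\Delta^t_k]$ by unrolling the $K$-step local drift within a round and exploiting the spectral contraction of $\W$ at the communication step: because $\1_m$ is the top eigenvector and the remaining spectrum is bounded by $\lambdabm<1$ (Definition \ref{def:crucial}), the deviation of local models from their average contracts geometrically across rounds while growing at most linearly (in $\eta G$) inside a round, giving $\E_{\A}[\Delta^t_k]=\mathcal{O}(\eta GK(\sqrt{\lambdabm_1}+\sqrt{\lambdabm_2}))$ for fixed rates and the corresponding $\sqrt{K}$-type bound for decaying rates; feeding this back yields the $\eta GKL(B_{\x}+B_{\y})(\mathcal{O}(\sqrt{\lambdabm_1})+\mathcal{O}(\sqrt{\lambdabm_2}))$ optimization summand. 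Finally, the uniform $\frac{2G(B_{\x}+B_{\y})}{\sqrt{TK}}$ term (present identically in both regimes, hence learning-rate-free) comes from controlling the noise inner product (c) after the $\sup$/$\inf$ by a martingale/Cauchy--Schwarz concentration estimate $\lesssim B\,\|\sum_{t,k}\text{noise}\|/TK\lesssim GB/\sqrt{TK}$, which is the dimension-free rate one expects from the online-to-batch structure.
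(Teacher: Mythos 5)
Your proposal is correct and follows essentially the same route as the paper's own proof: the identical decomposition into generalization gap plus empirical risk, the same substitution of the Theorem~\ref{thm:local:sc-sc-stability} stability bounds into the $\sqrt{2}G\epsilon$ connection of Theorem~\ref{thm:connection}\ref{connection:weak}, and the same potential-function duality-gap argument on the averaged iterates (with the noise term handled by a Cauchy--Schwarz/martingale expansion and the consensus error $\Delta^t_k$ controlled via the spectral gap of $\W$), which the paper carries out as Theorem~\ref{thm:weakpdempirical} and Lemma~\ref{le:delta:localdecentra}. No gaps worth flagging.
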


\begin{rem}
For fixed learning rates \ref{wp:fixed}, the weak PD population risk is bounded by $\mathcal{O}((\sqrt{\lambdabm_1}\!+\!\sqrt{\lambdabm_2})\eta K^2\!+\!\frac{K}{n}\!+\!\frac{B^2_{\x}\!+\!B^2_{\y}}{\eta TK}\!+\!\frac{B_{\x}\!+\!B_{\y}}{\sqrt{TK}})$, which attains optimal of $\mathcal{O}((\sqrt{\lambdabm_1}\!+\!\sqrt{\lambdabm_2})\frac{K}{\sqrt{T}}\!+\!\frac{K}{n})$ when we choose $\eta\sim\frac{1}{\sqrt{T}K}$. For decaying learning rates \ref{wp:decaying}, the population risk is bounded by $\widetilde{\mathcal{O}}((\sqrt{\lambdabm_1}\!+\!\sqrt{\lambdabm_2})\frac{K^{\frac{1}{2}}}{T}+\frac{B_{\x}+B_{\y}}{\sqrt{TK}}+\frac{1}{n})$ which can obtain optimal when $K=\sqrt{T}$.
\end{rem}

There exists a \textbf{\textit{trade-off}} between the generalization gap and optimization error with fixed learning rates. We derive that $\eta\sim\frac{1}{TK}$ optimizes the generalization gap while this type of learning rate leads to the divergence of optimization error.

\noindent\textbf{\textit{Local-SGDA}}($\W=\Pm$,$\lambdabm_1=1$,$\lambdabm_2=0$): the weak PD population risk is bounded by $\mathcal{O}(\eta K^2+\frac{K}{n}+\frac{1}{\eta TK}+\frac{1}{\sqrt{TK}})$ under fixed learning rates and $\widetilde{\mathcal{O}}(\frac{K^{\frac{1}{2}}}{T}+\frac{1}{\sqrt{TK}}+\frac{1}{n})$ under decaying learning rates. And it matches the convergence results of $\widetilde{\mathcal{O}}(\frac{1}{\sqrt{NT'}})$ in \citep{deng2021local} under SC-SC homogeneous setting, where $N=mn$, $T'=TK$ and the square root is due to the measure of quadratic term there.

\vspace{1.5pt}
\noindent\textbf{\textit{Decentralized-SGDA}}($K=1$): under fixed learning rates, the population risk is bounded by $\mathcal{O}((\sqrt{\lambdabm_1}\!+\!\sqrt{\lambdabm_2})\eta+\frac{1}{\eta T}+\frac{1}{\sqrt{T}}+\frac{1}{n})$ which achieves optimal of $\mathcal{O}(\frac{(\sqrt{\lambdabm_1}\!+\!\sqrt{\lambdabm_2})}{\sqrt{T}}+\frac{1}{n})$ when $\eta\sim\frac{1}{\sqrt{T}}$ by the above theorem. Under decaying learning rates, we have the weak PD population risk bounded by $\widetilde{\mathcal{O}}(\frac{(\sqrt{\lambdabm_1}\!+\!\sqrt{\lambdabm_2})}{\sqrt{T}}+\frac{1}{n})$. The results matches the result of $\mathcal{O}(\frac{1}{n}+\frac{1}{(1-\lambdabm)\sqrt{T}})$ in \citep{zhu2023stability}.

\begin{corol}
    When $\mu=0$, i.e., each local function $f_i$ is C-C, under Assumption \ref{as:mixing}-\ref{as:Lip:smmoth}, further assuming $\sup_{\x\in\xset}\|\x\|_2\leq B_{\x},\sup_{\y\in\yset}\|\y\|_2\leq B_{\y}$, weak PD population risk $\Delta^{\!\mathrm{w}}(\widetilde{\x}^T_K,\widetilde{\y}^T_K)$ is bounded by $\mathcal{O}((\sqrt{\lambdabm_1}\!+\!\sqrt{\lambdabm_2})\eta^2TK^2+\frac{\eta TK}{n}+\frac{B^2_{\x}+B^2_{\y}}{\eta TK}+\frac{B_{\x}+B_{\y}}{\sqrt{TK}})$.
\end{corol}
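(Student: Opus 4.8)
The plan is to reuse the decomposition that precedes Theorem~\ref{thm:sc-sc-population}, splitting the weak PD population risk of the averaged iterate into a generalization gap and an empirical (optimization) risk,
\[
\Delta^{\mathrm{w}}(\widetilde{\x}^T_K,\widetilde{\y}^T_K)=\underbrace{\Delta^{\mathrm{w}}(\widetilde{\x}^T_K,\widetilde{\y}^T_K)-\Delta^{\mathrm{w}}_{\S}(\widetilde{\x}^T_K,\widetilde{\y}^T_K)}_{\text{generalization gap}}+\underbrace{\Delta^{\mathrm{w}}_{\S}(\widetilde{\x}^T_K,\widetilde{\y}^T_K)}_{\text{empirical risk}},
\]
and then bounding each summand separately under the specialization $\mu=0$. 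Since Remark~\ref{cc} shows the C-C argument stability is controllable only for a fixed step size, the corollary is understood for fixed $\eta$.

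For the generalization gap I would invoke the weak-PD branch of Theorem~\ref{thm:connection}, which gives $\zeta^{\mathrm{w}}_{\mathrm{gen}}\leq\sqrt{2}G\epsilon$ directly from Assumption~\ref{as:Lip:con} (this branch never uses strong concavity, so it remains valid in the C-C regime), and then substitute the C-C argument-stability bound established in the corollary immediately above, namely $\epsilon\leq2\sum_{t=1}^T\sum_{k=0}^{K-1}\eta^t_k\big(L\,\E_{\A}[\Delta^t_k]+\tfrac{G}{n}\big)$. The remaining work is to control the consensus term $\Delta^t_k$: using the spectral gap $1-\lambdabm$ of Definition~\ref{def:crucial} together with the bounded-gradient consequence of Assumption~\ref{as:Lip:con}, the disagreement accumulated over $K$ local steps and contracted by the mixing step $\W$ is of order $\mathcal{O}((\sqrt{\lambdabm_1}+\sqrt{\lambdabm_2})\eta K)$. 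Plugging this in and summing the constant weights $\eta^t_k=\eta$ over the $TK$ iterations yields the generalization-gap contribution $\mathcal{O}\big((\sqrt{\lambdabm_1}+\sqrt{\lambdabm_2})\eta^2 TK^2+\tfrac{\eta TK}{n}\big)$.

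For the empirical risk I would appeal to the optimization-error analysis of Theorem~\ref{thm:weakpdempirical} (Appendix~\ref{sec:sc-sc}) evaluated at $\mu=0$. The crucial observation is that none of the optimization-error terms in the SC-SC population-risk bound degenerate as $\mu\to0$; the duality-gap argument for the averaged iterate relies only on convexity-concavity, the bounded domains $\sup_{\x\in\xset}\|\x\|_2\leq B_{\x}$ and $\sup_{\y\in\yset}\|\y\|_2\leq B_{\y}$, and the bounded gradients. This produces the initial-distance term $\tfrac{B_{\x}^2+B_{\y}^2}{\eta TK}$ and the averaging term $\tfrac{B_{\x}+B_{\y}}{\sqrt{TK}}$, with the residual consensus and $\eta G^2$ contributions being of strictly lower order and absorbed into the stated $\mathcal{O}$. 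Combining the two parts gives the claimed bound $\mathcal{O}\big((\sqrt{\lambdabm_1}+\sqrt{\lambdabm_2})\eta^2 TK^2+\tfrac{\eta TK}{n}+\tfrac{B_{\x}^2+B_{\y}^2}{\eta TK}+\tfrac{B_{\x}+B_{\y}}{\sqrt{TK}}\big)$.

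The main obstacle is the loss of the contraction factor $1-\eta^{t'}_{k_1}\tfrac{L\mu}{L+\mu}$ that drove the SC-SC stability recursion of Theorem~\ref{thm:local:sc-sc-stability}. At $\mu=0$ this factor becomes $1$, so the per-step stability increments no longer decay geometrically but accumulate additively across both the $K$ local updates and the $T$ communication rounds; this is exactly what inflates the $\mathcal{O}(\eta K^2)$ SC-SC rate to the $\mathcal{O}(\eta^2 TK^2)$ C-C rate and forces the restriction to fixed step sizes, since a decaying schedule $\eta^t_k=\tfrac{1}{(k+1)^\alpha t}$ would leave the undamped sum $\sum_{t,k}\eta^t_k\,\E_{\A}[\Delta^t_k]$ without a finite closed-form control. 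Keeping the consensus bound on $\Delta^t_k$ sharp in terms of $\lambdabm_1,\lambdabm_2$ while respecting this additive accumulation is the delicate part of the argument.
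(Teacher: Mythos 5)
Your proposal is correct and follows essentially the same route as the paper: the paper establishes this corollary implicitly by combining the C-C argument-stability corollary (Remark~\ref{cc}) with Theorem~\ref{thm:connection}\ref{connection:weak} for the generalization gap, and Theorem~\ref{thm:weakpdempirical} at $\mu=0$ (whose proof only uses convexity-concavity, bounded domains, and bounded gradients, exactly as you observe) for the optimization error, absorbing the lower-order $\eta G^2$ and $\eta K(\sqrt{\lambdabm_1}+\sqrt{\lambdabm_2})$ terms into the stated rate. Your identification of the lost contraction factor $1-\eta\frac{L\mu}{L+\mu}$ as the reason the SC-SC rate $\mathcal{O}(\eta K^2)$ inflates to $\mathcal{O}(\eta^2TK^2)$ and the analysis is restricted to fixed step sizes also matches the paper's reasoning.
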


\begin{rem}
    Since the weak PD generalization gap can only be bounded for fixed learning rates under C-C (Remark. \ref{cc}), the corresponding population bounded can be barely guaranteed for fixed learning rates. Notice that if we continue choose $\eta\sim\frac{1}{TK}$, the optimization error will diverge. Thus, choosing $\eta\sim\frac{1}{T\sqrt{K}}$ can help the population risk achieve the optimal.
\end{rem}


\subsection{Nonconvex-Strongly-Concave Case}\label{subsec:ncsc}


\begin{thm}[\textbf{Primal stability}]\label{thm:primal}
    For Distributed-SGDA $\A(T,K,\W)$, under Assumption \ref{as:Lip:con}, when each local function $f_i$ satisfies $\rho$PL-$\mu$SC condition on any given sample, the primal stability is bounded by $\E_{\A}\|\A_{\x}(\S)-\A_{\x}(\S')\|\leq\frac{2G}{n}\sqrt{\frac{1}{4\rho}+\frac{1}{\mu}}+2\sqrt{\frac{2}{\rho}\Delta^{\!\mathrm{ep}}_{\S}}$,
    where $\Delta^{\!\mathrm{ep}}_{\S}$ is defined as $\Delta^{\!\mathrm{ep}}_{\S}:=\sup_{\S}\Delta^{\!\mathrm{ep}}_{\S}(\A,\S)$.
    
\end{thm}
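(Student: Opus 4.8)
The plan is to reduce the minimax primal stability to a \emph{minimization} stability problem for the primal empirical risk $R_{\S}(\x)=\sup_{\y'\in\yset}F_{\S}(\x,\y')$, and then exploit the Polyak--{\L}ojasiewicz geometry. First I would establish the key structural fact that, when each $f_i$ is $\rho$-PL in $\x$ and $\mu$-strongly-concave in $\y$, the primal function $R_{\S}$ inherits the $\rho$-PL property. The $\mu$-strong concavity guarantees that the inner maximizer $\y^{*}_{\S}(\x)=\arg\max_{\y'\in\yset}F_{\S}(\x,\y')$ is unique and, by Danskin's theorem, $\nabla R_{\S}(\x)=\dx F_{\S}(\x,\y^{*}_{\S}(\x))$; combined with the PL bound on $\dx F_{\S}$ this yields $\tfrac12\|\nabla R_{\S}(\x)\|_2^2\ge\rho(R_{\S}(\x)-\inf_{\x'}R_{\S}(\x'))$. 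The immediate and repeatedly used consequence is the quadratic-growth (error-bound) inequality $\tfrac{\rho}{2}\,\mathrm{dist}(\x,\mathcal{X}^{*}_{\S})^2\le R_{\S}(\x)-\inf_{\x'}R_{\S}(\x')$, where $\mathcal{X}^{*}_{\S}=\arg\min_{\x'\in\xset} R_{\S}(\x')$.

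Second, I would turn the optimization error into a distance-to-optimum. Applying quadratic growth at $\A_{\x}(\S)$, taking expectation over $\A$ and using Jensen's inequality gives $\E_{\A}\,\mathrm{dist}(\A_{\x}(\S),\mathcal{X}^{*}_{\S})\le\sqrt{\tfrac{2}{\rho}\,\Delta^{\mathrm{ep}}_{\S}(\A,\S)}\le\sqrt{\tfrac{2}{\rho}\Delta^{\mathrm{ep}}_{\S}}$, and identically for $\A_{\x}(\S')$ on $R_{\S'}$; the supremum in the definition of $\Delta^{\mathrm{ep}}_{\S}$ lets me use the same constant for both datasets. Summing the two contributions produces the $2\sqrt{\tfrac{2}{\rho}\Delta^{\mathrm{ep}}_{\S}}$ term of the claimed bound.

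Third --- and this is the heart of the argument --- I would bound the perturbation of the minimizer sets across a distributed neighboring pair $\S,\S'$. Choosing a minimizer $p\in\mathcal{X}^{*}_{\S}$ and its projection $q=\Pi_{\mathcal{X}^{*}_{\S'}}(p)$, I would control $\|p-q\|_2$ by first estimating $\|\nabla R_{\S'}(p)\|_2$. Since $\nabla R_{\S}(p)=0$, this gradient equals the discrepancy between the two primal gradients at $p$, which I split into a pure sample-swap part $\dx F_{\S}(p,\y)-\dx F_{\S'}(p,\y)=\tfrac{1}{mn}\sum_i\big(\dx f_i(p,\y;\xi_{i,l_i})-\dx f_i(p,\y;\xi'_{i,l_i})\big)$, bounded by $\tfrac{2G}{n}$ via the bounded-gradient consequence of Assumption~\ref{as:Lip:con} (crucially through gradients, since raw function-value differences between distinct samples are not controllable under Lipschitz continuity alone), and a dual-shift part stemming from $\y^{*}_{\S}(p)\neq\y^{*}_{\S'}(p)$, whose displacement is governed by the $\mu$-strong concavity via $\|\y^{*}_{\S}(p)-\y^{*}_{\S'}(p)\|_2\le\tfrac{2G}{\mu n}$. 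Passing the resulting gradient estimate through the PL inequality for $R_{\S'}$ and then through quadratic growth converts it back into a distance, and the primal ($\rho$) and dual ($\mu$) contributions should combine in quadrature to give $\|p-q\|_2\le\tfrac{2G}{n}\sqrt{\tfrac{1}{4\rho}+\tfrac{1}{\mu}}$.

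Finally I would assemble everything through $\|\A_{\x}(\S)-\A_{\x}(\S')\|_2\le\mathrm{dist}(\A_{\x}(\S),\mathcal{X}^{*}_{\S})+\|p-q\|_2+\mathrm{dist}(\A_{\x}(\S'),\mathcal{X}^{*}_{\S'})$ with $p,q$ the matched minimizers, take expectations, and invoke the two previous steps. I expect the main obstacle to be precisely the minimizer-set perturbation in the PL regime: because PL does not force a unique minimizer, the three triangle-inequality pieces must be aligned by a deliberate choice of matched minimizers ($p$ and its projection $q$), and the entire estimate has to be carried out through bounded gradients rather than function values, with the dual variable's displacement propagated correctly via strong concavity. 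The delicate bookkeeping is to verify that the $\rho$- and $\mu$-dependent pieces decouple into the clean factor $\sqrt{\tfrac{1}{4\rho}+\tfrac{1}{\mu}}$ without introducing spurious smoothness ($L$) factors from the inner maximization.
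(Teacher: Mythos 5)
Your skeleton --- quadratic growth to turn $\Delta^{\!\mathrm{ep}}_{\S}$ into output-to-optimum distances, a perturbation bound between the optima of $R_{\S}$ and $R_{\S'}$, and a final triangle inequality --- is the same as the paper's, and your first two steps (PL inheritance of $R_{\S}$ and the $2\sqrt{\tfrac{2}{\rho}\Delta^{\!\mathrm{ep}}_{\S}}$ term) coincide with the paper's Lemmas. The gap is in your third step, and it is not the "delicate bookkeeping" you hope to settle later: the gradient route provably cannot give the claimed constant. With $\nabla R_{\S}(p)=0$ you get $\|\nabla R_{\S'}(p)\|\leq\frac{2G}{n}+L\,\|\y^{*}_{\S}(p)-\y^{*}_{\S'}(p)\|\leq\frac{2G}{n}\bigl(1+\frac{L}{\mu}\bigr)$, where the smoothness constant enters unavoidably when the dual shift is pushed through $\dx F_{\S}(p,\cdot)$; then PL plus quadratic growth gives $\mathrm{dist}(p,\mathcal{X}^{*}_{\S'})\leq\frac{1}{\rho}\|\nabla R_{\S'}(p)\|\leq\frac{2G}{\rho n}(1+\kappa)$ with $\kappa=L/\mu$. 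This carries a spurious $\kappa$ and scales like $1/\rho$ rather than $1/\sqrt{\rho}$, so it is not $\frac{2G}{n}\sqrt{\frac{1}{4\rho}+\frac{1}{\mu}}$ (and it drags smoothness into a theorem the paper states under Assumption \ref{as:Lip:con} alone). The paper never differentiates: it invokes a saddle-point perturbation inequality (Lemma 14 of \citet{yang2022differentially}) whose left side is the weighted quadratic form $4\rho\|\pi_{\S}(\A_{\x}(\S))-\pi_{\S'}(\A_{\x}(\S'))\|^2+\mu\|\hat{\y}_{\S}-\hat{\y}_{\S'}\|^2$ and whose right side is a function-value difference across $\S,\S'$. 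Your stated reason for working with gradients ("raw function-value differences between distinct samples are not controllable") is circumvented there by regrouping: each comparison is between \emph{different arguments on the same sample}, so $G$-Lipschitzness in the arguments applies and bounds the right side by $\frac{2G}{n}$ times the joint norm of the differences; a single Cauchy--Schwarz against the weights $(4\rho,\mu)$ then yields exactly the factor $\sqrt{\frac{1}{4\rho}+\frac{1}{\mu}}$ with no $L$.

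Second, your assembly does not close, precisely because of the non-uniqueness you flagged. Your perturbation bound controls $\|p-q\|$ with $q=\Pi_{\mathcal{X}^{*}_{\S'}}(p)$ the projection of $p$; the third leg of your triangle inequality is then $\|q-\A_{\x}(\S')\|$, but quadratic growth only controls the distance from $\A_{\x}(\S')$ to \emph{its own} nearest minimizer, which may be far from $q$ when $\mathcal{X}^{*}_{\S'}$ is a continuum. Swapping the roles of $\S$ and $\S'$ breaks the first leg instead, so no choice of matched pair repairs a one-sided, set-distance bound. The paper's inequality is stated directly for the specific pair $\pi_{\S}(\A_{\x}(\S))$ and $\pi_{\S'}(\A_{\x}(\S'))$ --- the saddle-point components nearest to the two algorithm outputs --- so its middle term is exactly the quantity the triangle inequality needs, and the three legs align.
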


\begin{rem}
    The result of primal stability is different from other stability measures since it can be bounded by excess primal empirical risk. It is reasonable that the optimization performance can influence the stability as well as the generalization behavior. On the other hand, we use an optimization algorithm to minimize the empirical risk which can be a rather small scale. This phenomenon has been investigated in the literature \cite{lei2023stability}.
\end{rem} 

According to Theorem \ref{thm:connection}\ref{connection:weak}, the \textbf{\textit{excess primal generalization gap}} can also be guaranteed by $\zeta^{\mathrm{ep}}_{\mathrm{gen}}\leq\mathcal{O}(\frac{1}{mn}+\sqrt{\Delta^{\!\mathrm{ep}}_{\S}})$ under further requirement on the strong concavity of the function $F(\x,\cdot)$, where the convergence of the excess primal empirical risk is presented in Theorem \ref{thm:ep:empirical} in Appendix \ref{sec:NC-NC}. In addition, the excess primal population risk which can be decomposed into generalization gap and empirical risk in the form of the last iterate:
    \begin{align*}
        \Delta^{\!\mathrm{ep}}({\x}^T)&=\underbrace{\Delta^{\!\mathrm{ep}}({\x}^T)-\Delta^{\!\mathrm{w}}_{\S}({\x}^T)}_{\text{excess primal generalization gap}}+\underbrace{\Delta^{\!\mathrm{ep}}_{\S}({\x}^T)}_{\text{excess primal empirical risk}}\\
        &\leq\mathcal{O}(\frac{1}{mn}+\sqrt{\Delta^{\!\mathrm{ep}}_{\S}})+\mathcal{O}(\Delta^{\!\mathrm{ep}}_{\S}). 
    \end{align*}
It can be observed that $\mathcal{O}(\sqrt{\Delta^{\!\mathrm{ep}}_{\S}})$ dominates the error bound and therefore the excess primal generalization gap and population risk can be upper bounded by the same rate.

\begin{thm}[\textbf{Excess primal generalization gap / population risk}]\label{thm:ep:population}
    For Distributed-SGDA $\A(T,K,\W)$, under Assumptions \ref{as:mixing}-\ref{as:Lip:smmoth}, when each local function $f_i$ satisfies $\rho$PL-$\mu$SC condition and further requiring function $F(\x,\cdot)$ being $\mu$-SC, choosing decaying learning rates $\eta^t_k=\frac{4}{\mu(k+1)^\alpha t}$ with $\frac{1}{2}\!<\!\alpha\!<1$, the excess primal population risk holds: 
    \begin{align*}
        \Delta^{\!\mathrm{ep}}\!(\!\A_{\x}(\S))\!&\leq\!\!\frac{2G^2}{n}\!\sqrt{\!1\!+\!\kappa^2}\!\sqrt{\!\frac{1}{4\rho}\!\!+\!\frac{1}{\mu}}\!+\!\!\frac{4G^2}{\mu mn}\!+\!\frac{2\sqrt{2}G^2}{\sqrt{T}}\!\sqrt{\!\frac{1}{\rho}\!+\!\kappa^2}\!\bigg(\!\!\sqrt{\frac{64B_{\y}\kappa^3}{\mu^2}}\!\sqrt{\!\mathcal{O}(\!\sqrt{\lambdabm_1})\!+\!\mathcal{O}(\!\sqrt{\lambdabm_2})}K^{\frac{1}{4}}\\
        &+\!(\mathcal{O}(\sqrt{\lambdabm_1})\!+\!\mathcal{O}(\sqrt{\lambdabm_2}))\sqrt{\frac{128\kappa^2}{\mu^3}}K^{\frac{1+\alpha}{2}}+\sqrt{\frac{32\rho\kappa^2}{\mu^2}}\frac{1}{K^{\frac{1-\alpha}{2}}}\bigg).
    \end{align*}
    \end{thm}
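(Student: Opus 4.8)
\noindent The plan is to bound the excess primal population risk through the decomposition preceding the statement, splitting it into the excess primal generalization gap and the excess primal empirical risk,
\[
\Delta^{\!\mathrm{ep}}(\A_{\x}(\S)) = \zeta^{\mathrm{ep}}_{\mathrm{gen}}(\A,\S) + \Delta^{\!\mathrm{ep}}_{\S}(\A_{\x}(\S)),
\]
and controlling the two pieces separately: the first by the connection theorem together with primal stability, the second by the empirical-risk convergence result. The two isolated $\mathcal{O}(1/n)$ and $\mathcal{O}(1/mn)$ terms in the target come entirely from the generalization gap, whereas the $T$- and $K$-dependent group comes from feeding the empirical-risk rate into the $\sqrt{\Delta^{\!\mathrm{ep}}_{\S}}$ contribution.

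First I would treat the generalization gap. Because $f_i(\x,\cdot)$ and $F(\x,\cdot)$ are $\mu$-SC, Theorem~\ref{thm:connection}\ref{connection:ex} applies and gives $\zeta^{\mathrm{ep}}_{\mathrm{gen}}(\A,\S)\le G\sqrt{1+\kappa^2}\,\epsilon+\frac{4G^2}{\mu mn}$ with $\kappa=L/\mu$ and $\epsilon$ the primal stability. Substituting the primal-stability estimate of Theorem~\ref{thm:primal}, namely $\epsilon\le\frac{2G}{n}\sqrt{\frac{1}{4\rho}+\frac{1}{\mu}}+2\sqrt{\frac{2}{\rho}\Delta^{\!\mathrm{ep}}_{\S}}$, yields
\[
\zeta^{\mathrm{ep}}_{\mathrm{gen}}(\A,\S)\le\frac{2G^2}{n}\sqrt{1+\kappa^2}\sqrt{\frac{1}{4\rho}+\frac{1}{\mu}}+\frac{4G^2}{\mu mn}+2G\sqrt{1+\kappa^2}\sqrt{\frac{2}{\rho}\,\Delta^{\!\mathrm{ep}}_{\S}},
\]
which already reproduces the first two terms of the claim and isolates a residual proportional to $\sqrt{\Delta^{\!\mathrm{ep}}_{\S}}$.

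It then remains to insert the convergence rate of the empirical risk. I would invoke the empirical-risk theorem (see Theorem~\ref{thm:ep:empirical}) for the chosen schedule $\eta^t_k=\frac{4}{\mu(k+1)^\alpha t}$, which supplies a bound of the form $\Delta^{\!\mathrm{ep}}_{\S}\le\frac{G^2}{T}\big(c_1(\sqrt{\lambdabm_1}+\sqrt{\lambdabm_2})K^{1/2}+c_2(\sqrt{\lambdabm_1}+\sqrt{\lambdabm_2})^2K^{1+\alpha}+c_3K^{-(1-\alpha)}\big)$ for explicit constants $c_1,c_2,c_3$ depending on $B_{\y},\kappa,\mu,\rho$. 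Taking the square root and using subadditivity $\sqrt{a+b+c}\le\sqrt a+\sqrt b+\sqrt c$ distributes the $1/\sqrt T$ factor over the three terms and produces exactly the $K^{1/4}$, $K^{(1+\alpha)/2}$, and $K^{-(1-\alpha)/2}$ contributions inside the parentheses of the statement; multiplying by the prefactor $2G\sqrt{1+\kappa^2}\sqrt{2/\rho}$ and absorbing the residual $\Delta^{\!\mathrm{ep}}_{\S}$ (dominated by its own square root since $\Delta^{\!\mathrm{ep}}_{\S}\to0$) assembles the third group of terms and closes the argument.

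The genuinely hard step is not this assembly but the empirical-risk convergence it relies on. Analyzing the descent of $R_{\S}(\bar{\x}^t_k)-\inf_{\x'\in\xset}R_{\S}(\x')$ under the distributed local-update scheme requires controlling the consensus error $\Delta^t_k$ accumulated across the $m$ agents and $K$ local steps, while simultaneously exploiting the $\rho$-PL condition on the primal variable and $\mu$-strong concavity on the dual variable under the decaying step size. Propagating the $\y$-error through the inner $\mu$-SC maximization while running the PL recursion on $\x$ is where the $K$-exponents and the $\lambdabm$-dependence are generated, and is the part demanding the most care; once that rate is secured, the present theorem follows by the substitution and square-root bookkeeping sketched above.
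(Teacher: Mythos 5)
Your proposal is correct and follows essentially the same route as the paper: the identical decomposition into excess primal generalization gap plus excess primal empirical risk, bounding the gap via Theorem~\ref{thm:connection}\ref{connection:ex} combined with the primal stability of Theorem~\ref{thm:primal} (producing the $\mathcal{O}(1/n)$, $\mathcal{O}(1/mn)$, and $\sqrt{\Delta^{\!\mathrm{ep}}_{\S}}$ pieces), then substituting the rate of Theorem~\ref{thm:ep:empirical} and distributing the square root by subadditivity, with the leftover $\Delta^{\!\mathrm{ep}}_{\S}$ absorbed since $\sqrt{\Delta^{\!\mathrm{ep}}_{\S}}$ dominates. Your reading of which structural features of the empirical-risk bound generate the $K^{1/4}$, $K^{(1+\alpha)/2}$, and $K^{-(1-\alpha)/2}$ terms also matches the paper's bookkeeping.
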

 
 \begin{rem}
     The excess primal generalization gap $\zeta^{\!\mathrm{ep}}$ and the excess primal population risk $\Delta^{\!\mathrm{ep}}$ can be bounded by $\mathcal{O}(\frac{1}{mn}+\frac{(\sqrt{\lambdabm_1}+\sqrt{\lambdabm_2})K^{\frac{1+\alpha}{2}}}{\sqrt{T}})$. From the discussion before, the optimal generalization gap can not guarantee the optimal optimization error since $\sqrt{\Delta^{\!\mathrm{ep}}_{\S}}$ is dominant.
\end{rem}

\vspace{-0.3cm}
\noindent\textit{\textbf{Local-SGDA}}($\W=\Pm$,$\lambdabm_1=1,\lambdabm_2=0$): $\zeta^{\!\mathrm{ep}}_{\mathrm{gen}}$ and $\Delta^{\!\mathrm{ep}}$ can be bounded by $\mathcal{O}(\frac{1}{mn}+\frac{K^{\frac{1+\alpha}{2}}}{\sqrt{T}})$. Comparable with the convergence results of $\mathcal{O}(\frac{1}{(NT')^{1/3}})$ in \citep{deng2021local} under NC-SC heterogeneous setting.

\vspace{1.8pt}
\noindent\textit{\textbf{Decentralized-SGDA}} ($K\!=\!1$): we have the excess primal generalization gap and population risk for D-SGDA as $\mathcal{O}(\frac{1}{mn}\!+\!\frac{(\sqrt{\lambdabm_1}+\sqrt{\lambdabm_2})}{\sqrt{T}})$, which is new under NC-SC setting.


\subsection{Nonconvex-Nonconcave Case}\label{subsec:ncnc}

\begin{thm}[\textbf{Weak Stability and Weak PD Generalization Gap}]\label{thm:weak}
    For Distributed-SGDA $\A(T,K,\W)$, under Assumptions \ref{as:mixing}-\ref{as:Lip:smmoth}, we say $\A$ is $\epsilon$-weakly stable considering:
    \begin{enumerate}[wide=\parindent,label=(\roman*),ref=(\textit{\roman*}),labelindent=0pt,itemsep=2pt,topsep=0pt,parsep=0pt]
        \vspace{2.5pt}
        \item\label{weak:fixed} fixed learning rates, $\epsilon\leq2\sqrt{2}\eta G^2\bigg(\eta KL\big(\mathcal{O}(\sqrt{\lambdabm_1})+\mathcal{O}(\sqrt{\lambdabm_2})\big)+\frac{1}{n}\bigg)K(T+1)$;

        \item\label{weak:decaying} for decaying learning rates of $\eta^t_k=\frac{1}{L(k+1)t}$,
        \vspace{-3pt}
        \begin{equation*}
            \epsilon\leq2\big(\frac{2G}{L}\frac{T^{1+\ln{K}}}{1+\ln{K}}(\frac{K}{n}+\mathcal{O}(\sqrt{\lambdabm_1})K+\mathcal{O}(\sqrt{\lambdabm_2})K^{\frac{3}{2}})\big)^{\frac{1}{5}}\frac{(2Mm)^{\frac{4}{5}}K^{\frac{2}{5}}}{n^{\frac{4}{5}}}.
        \end{equation*}       
    \end{enumerate}
\end{thm}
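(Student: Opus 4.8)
The plan is to establish the weak stability of Definition~\ref{def:stability}(iii) and then read off the weak PD generalization gap for free. I would first reduce the stability quantity to the expected parameter discrepancy of two coupled runs of $\A(T,K,\W)$ on a distributed neighboring pair $\S,\S'$ (Definition~\ref{def:neighbouring:dataset}) that share all sampling indices $j^t_k(i)$. By Lipschitz continuity (Assumption~\ref{as:Lip:con}), for every $\y'\in\yset$ we have $f_i(\A_{\x}(\S),\y';\xi_{i,l_i})-f_i(\A_{\x}(\S'),\y';\xi_{i,l_i})\le G\|\A_{\x}(\S)-\A_{\x}(\S')\|_2$ uniformly in $\y'$, and symmetrically for the dual term, so the $\sup_{\x'},\sup_{\y'}$ and the agent-averaging collapse and it suffices to control $\E_{\A}\|(\A_{\x}(\S)-\A_{\x}(\S');\,\A_{\y}(\S)-\A_{\y}(\S'))\|_2$, which contributes the overall factor $\sqrt{2}G$ (consistent with the remark after Definition~\ref{def:stability}).

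Next I would set up the coupled growth recursion for $\delta^t_k\triangleq\E_{\A}\|(\x^t_{i,k}(\S)-\x^t_{i,k}(\S');\,\y^t_{i,k}(\S)-\y^t_{i,k}(\S'))\|_2$ averaged over agents. At a local step, with probability $\ge 1-\frac1n$ the sampled index is common to $\S_i$ and $\S_i'$, so by Lipschitz smoothness (Assumption~\ref{as:Lip:smmoth}) the descent–ascent map is $(1+\eta^t_k L)$-expansive; with probability $\le\frac1n$ the differing sample is drawn and the two gradients differ by at most $2G$, injecting an additive $\frac{2\eta^t_k G}{n}$. The communication step mixes through the doubly-stochastic $\W$ (Assumption~\ref{as:mixing}), which is non-expansive in $\|\cdot\|_2$, and the deviation of local iterates from their average is absorbed into the consensus term $\Delta^t_k$ already estimated in the SC-SC analysis. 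This yields $\delta^t_{k+1}\le(1+\eta^t_k L)\delta^t_k+2\eta^t_k\big(L\,\E_{\A}[\Delta^t_k]+\frac{G}{n}\big)$, the exact NC analogue of the recursion in Theorem~\ref{thm:local:sc-sc-stability} with the contraction factor replaced by expansion.

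For the fixed learning rate in (i) I would unroll this recursion over the $K(T+1)$ local steps, substitute the consensus estimate (whose $\lambdabm$- and $K$-dependence produces the $\mathcal{O}(\sqrt{\lambdabm_1})+\mathcal{O}(\sqrt{\lambdabm_2})$ factors), and keep the resulting polynomial bound, giving after the $\sqrt2 G$ reduction exactly $\epsilon\le 2\sqrt2\,\eta G^2\big(\eta KL(\mathcal{O}(\sqrt{\lambdabm_1})+\mathcal{O}(\sqrt{\lambdabm_2}))+\frac1n\big)K(T+1)$. For the decaying rate in (ii) the naive unrolling blows up multiplicatively, so I would instead invoke the escape-time argument of \citet{hardt2016train}: fix a cutoff $t_0$ and bound the function gap by $\frac{(\text{pre-}t_0\text{ steps})}{n}\cdot 2Mm+\sqrt2 G\,\E_{\A}[\delta_T\mid\text{no differing sample drawn before }t_0]$, where $M$ is the uniform bound on $|f_i|$. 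Under $\eta^t_k=\frac{1}{L(k+1)t}$ the conditional growth factor is $\prod_{t}\prod_{k}\big(1+\frac{1}{(k+1)t}\big)\approx K^{\sum_t 1/t}\approx T^{\ln K}$, which is the origin of the $T^{1+\ln K}$ term, while the additive consensus contribution collapses to $\frac{K}{n}+\mathcal{O}(\sqrt{\lambdabm_1})K+\mathcal{O}(\sqrt{\lambdabm_2})K^{3/2}$; optimizing the two competing terms over $t_0$ produces the $\frac15$–$\frac45$ split and the factor $\frac{(2Mm)^{4/5}K^{2/5}}{n^{4/5}}$.

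The hard part will be carrying the distributed consensus term $\Delta^t_k$ coherently through the escape-time optimization: because it enters the \emph{additive} part of the recursion it couples to the multiplicative growth factor, so I must bound $\sum_{t,k}\big(\prod_{>(t,k)}(1+\eta L)\big)\eta^t_k\E_{\A}[\Delta^t_k]$ and then balance against the $\frac{t_0}{n}\,2Mm$ escape term while keeping the $\lambdabm$-dependence clean. Getting these exponents to balance correctly in the final minimization over $t_0$ is the delicate step; once weak stability is in hand, the weak PD generalization gap follows immediately from Theorem~\ref{thm:connection}\ref{connection:weak}.
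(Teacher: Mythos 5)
Your proposal follows essentially the same route as the paper's proof: the same expansive coupled recursion $\delta^t_{k+1}\le(1+\eta^t_k L)\delta^t_k+2\eta^t_k\big(L\E_{\A}[\Delta^t_k]+\tfrac{G}{n}\big)$ fed by the consensus bounds of Lemma~\ref{le:delta:localdecentra}, the same $\sqrt{2}G$ Lipschitz reduction, and the same Hardt-style escape-time decomposition (which the paper invokes as Lemma~5 of \citet{zhu2023stability}, yielding the additive $\tfrac{Mm(Kt_0+k_0)}{n}$ term) balanced over the cutoff to produce the $\tfrac{1}{5}$--$\tfrac{4}{5}$ exponents. Your direct unrolling in part (i) coincides with the paper's observation that the optimal escape point there is $t_0=k_0=0$; the only cosmetic difference is that the paper optimizes jointly over $(t_0,k_0)$ while you balance over $t_0$ alone.
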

\vspace{0.15cm}

\begin{rem}
    
For fixed learning rates \ref{weak:fixed}, weak stability is bounded by $\mathcal{O}(\eta^2 KT(\sqrt{\lambdabm_1}+\sqrt{\lambdabm_2})+\frac{\eta KT}{n}))$. When we choose $\eta\sim\frac{1}{KT}$, it can reach $\mathcal{O}(\frac{\sqrt{\lambdabm_1}+\sqrt{\lambdabm_2}}{KT}+\frac{1}{n})$. For decaying learning rates \ref{weak:decaying}, we have the weak stability bounded by $\widetilde{\mathcal{O}}((\frac{1}{n}+\sqrt{\lambdabm_1}+\sqrt{\lambdabm_2}K^{\frac{1}{2}})^{\frac{1}{5}}(\frac{m}{n})^{\frac{4}{5}}T^{\frac{1}{5}}K^{\frac{3}{5}})$.
\end{rem}

\noindent\textit{\textbf{Local-SGDA}}($\W=\Pm$,$\lambdabm_1=1$,$\lambdabm_2=0$): the weak stability is bounded by $\mathcal{O}(\eta^2KT+\frac{\eta KT}{n})$ for fixed learning rates and $\widetilde{\mathcal{O}}(\frac{m^{\frac{4}{5}}}{n}+\frac{m^{\frac{4}{5}}}{n^{\frac{4}{5}}}T^{\frac{1}{5}}K^{\frac{3}{5}})$ for decaying learning rates.

\noindent\textit{\textbf{Decentralized-SGDA}}($K=1$): we have the weak stability of $\mathcal{O}(\eta^2T(\sqrt{\lambdabm_1}+\sqrt{\lambdabm_2})+\frac{\eta T}{n}))$ under fixed learning rates, which is consistent with the result in \citet{zhu2023stability} of $\mathcal{O}(\frac{\eta T}{n}+\frac{\eta^2T}{1-\lambdabm})$. While the weak stability is bounded by $\widetilde{\mathcal{O}}((\frac{1}{n}+\sqrt{\lambdabm_1}+\sqrt{\lambdabm_2})^{\frac{1}{5}}(\frac{m}{n})^{\frac{4}{5}}T^{\frac{1}{5}})$.

\vspace{3pt}
By Theorem \ref{thm:connection}\ref{connection:weak}, the \textbf{\textit{weak PD generalization gap}} can be guaranteed by weak stability. Since there exist challenges in the optimization analysis of the NC-NC minimax problems, we do not provide the population risk under NC-NC case.

\captionsetup[figure]{labelfont={bf}}

\begin{figure*}[t]
\centering
\includegraphics[width=0.22\textwidth]{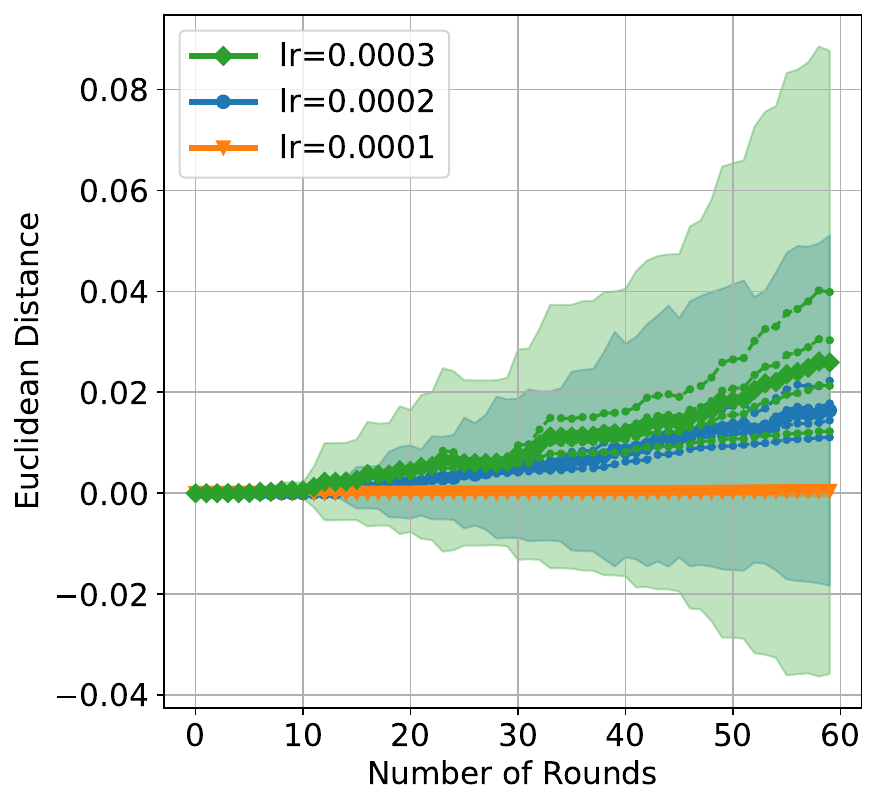}
\includegraphics[width=0.22\textwidth]{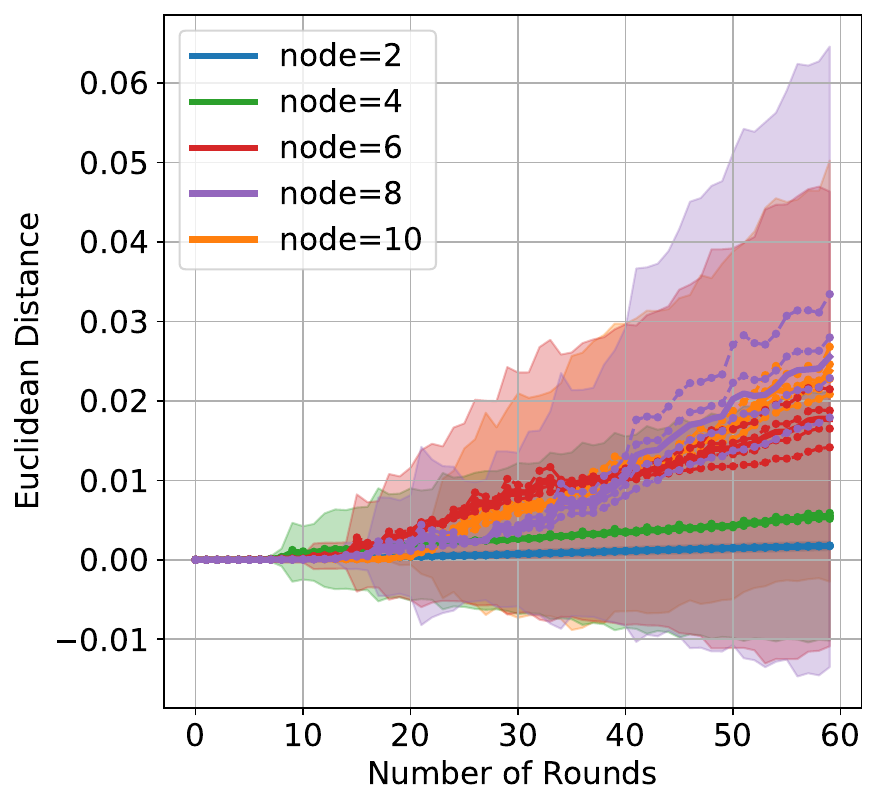}
\includegraphics[width=0.215\textwidth]{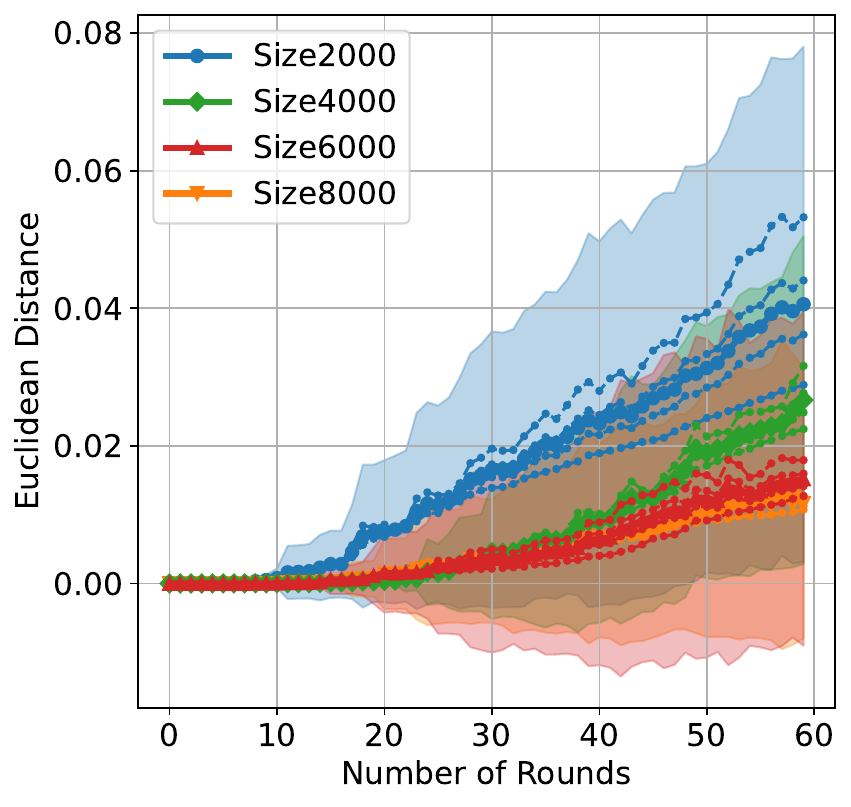}
\includegraphics[width=0.215\textwidth]{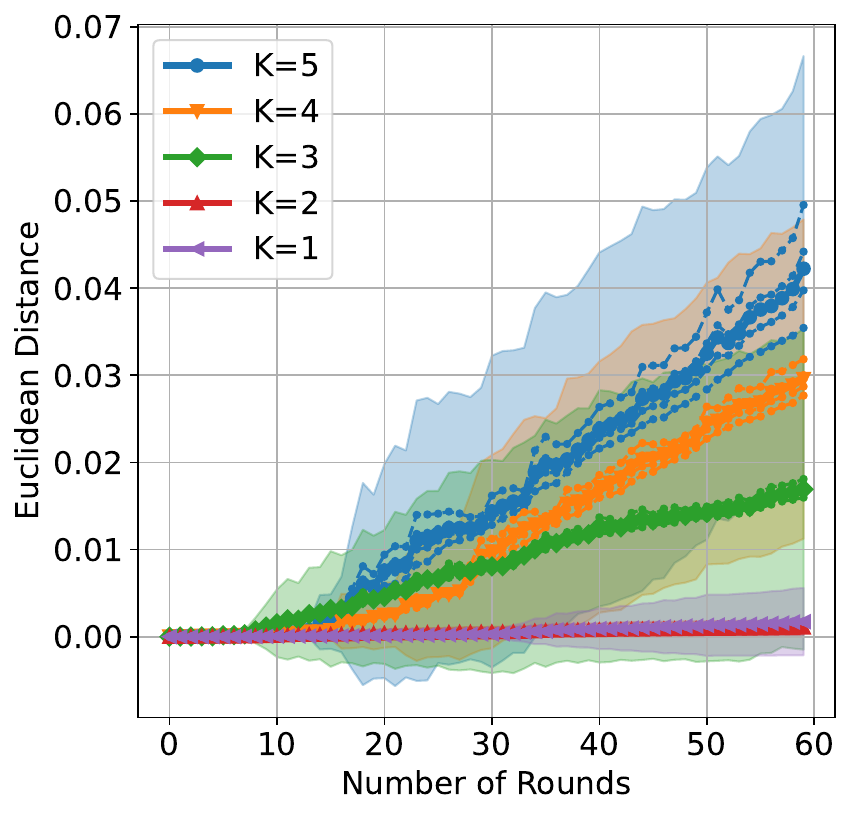} 
\vspace{-0.5em}
\includegraphics[width=0.22\textwidth]{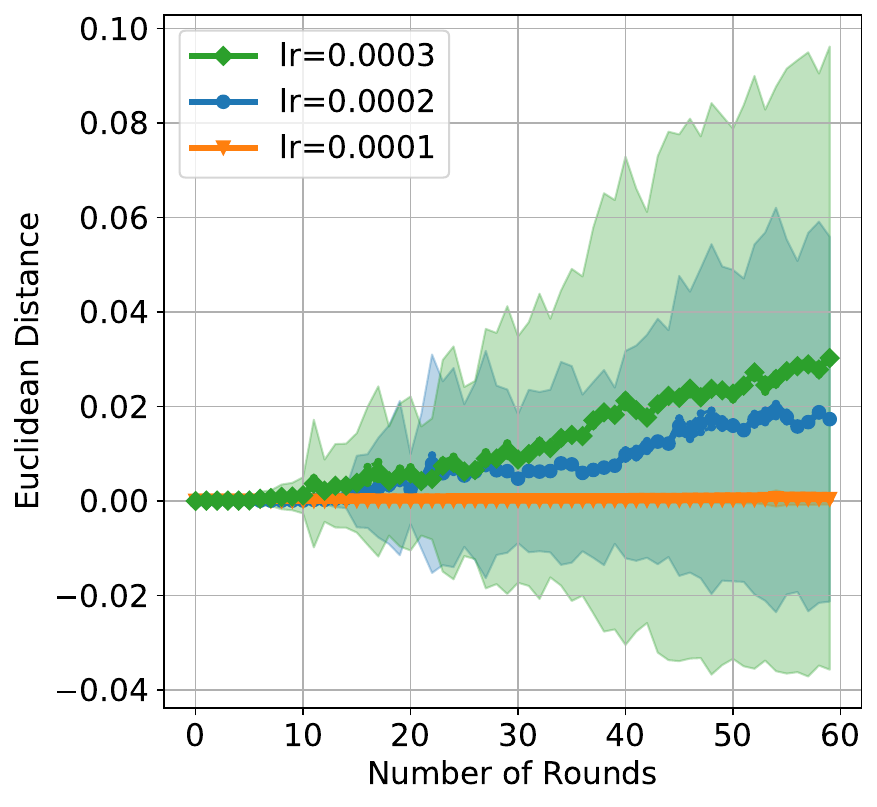}
\includegraphics[width=0.22\textwidth]{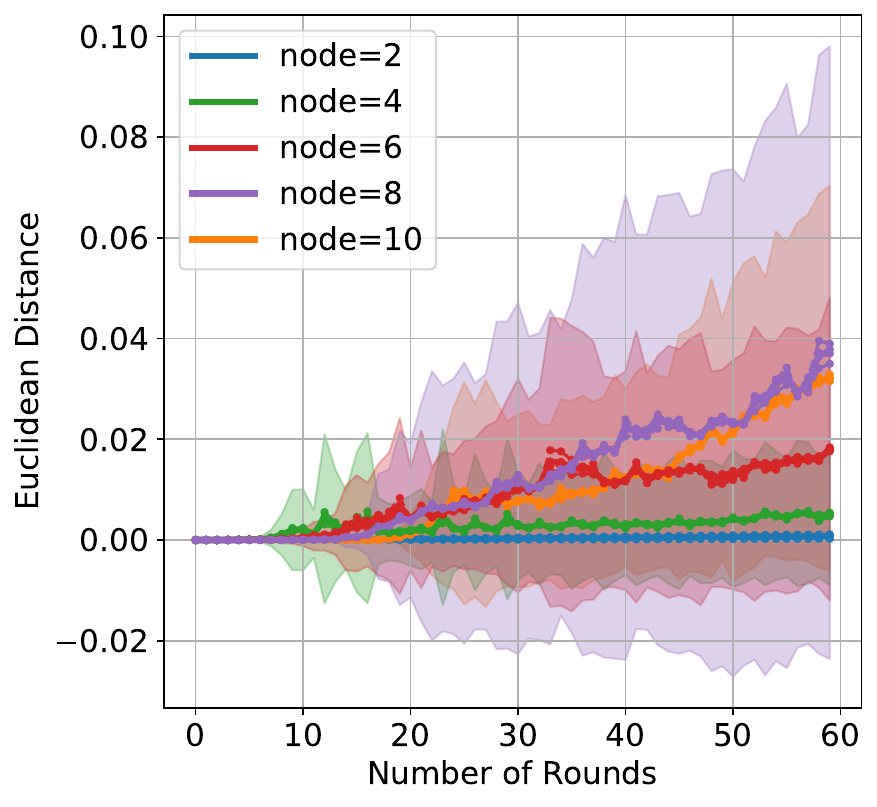}
\includegraphics[width=0.22\textwidth]{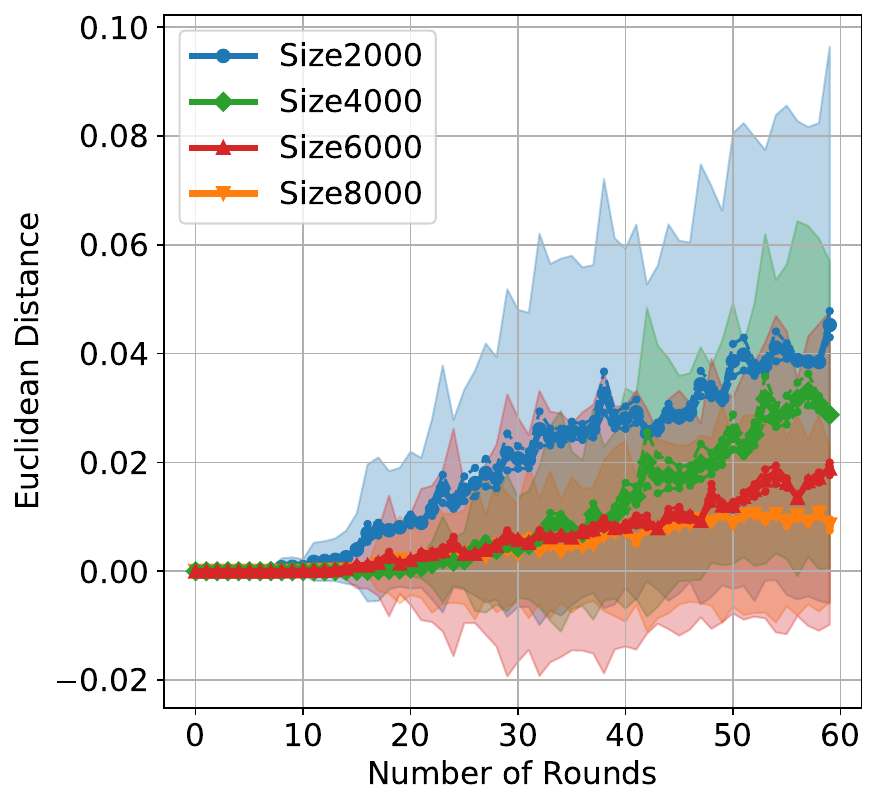}
\includegraphics[width=0.215\textwidth]{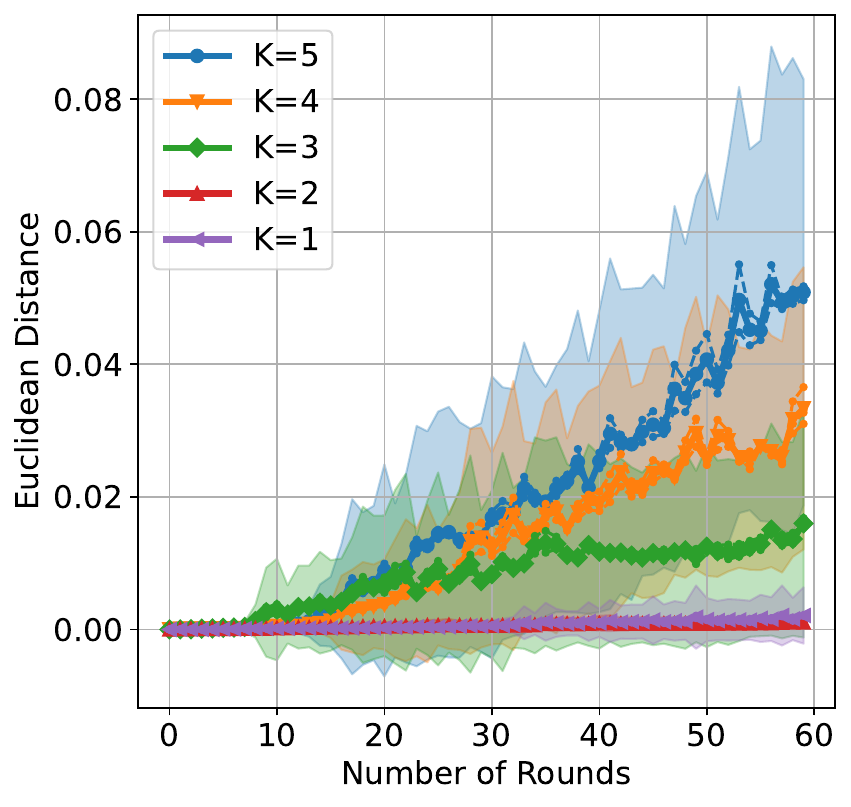}
\vspace{-0.3em}
\caption{The first row shows the stability of the generator model using Local-SGDA method, the second row shows the discriminator model. From left to right, the figures correspond to the varying learning rates, the number of nodes, the local dataset size, and the number of local steps. Each layer is independently assessed and shown as the dashed lines.}
\label{fig:local:discriminator} 
\centering
\includegraphics[width=0.193\textwidth]{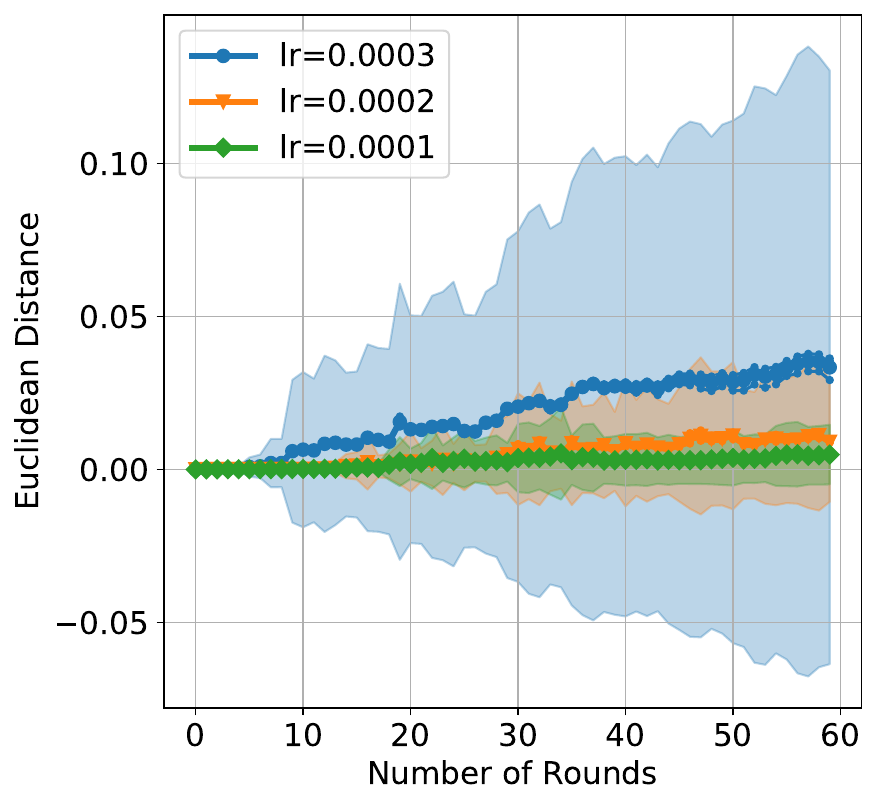}
\includegraphics[width=0.193\textwidth]{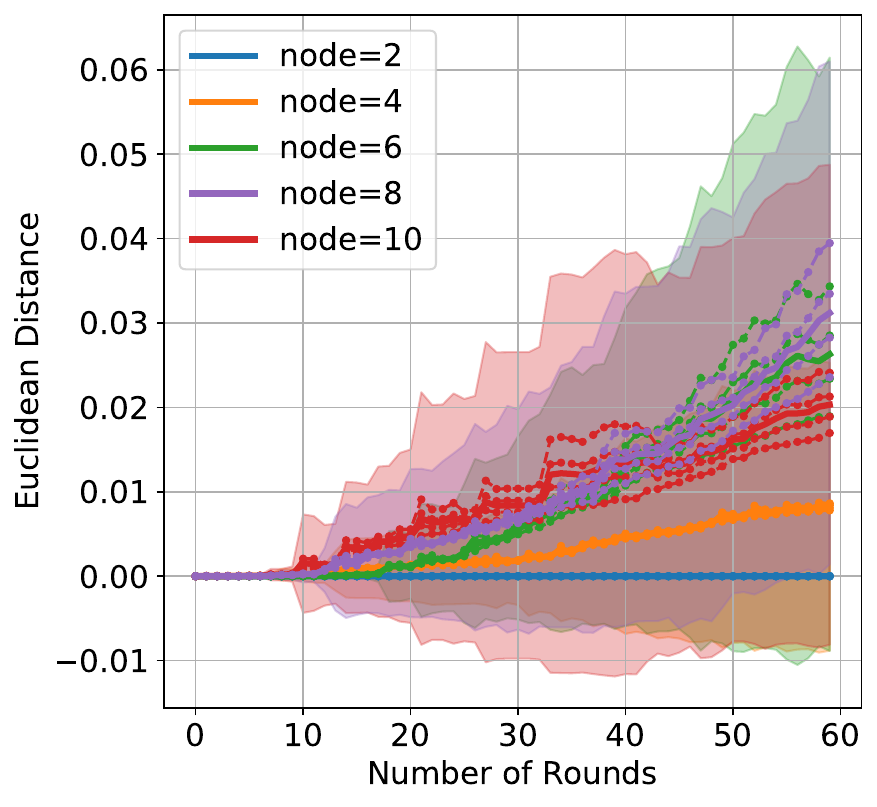}
\includegraphics[width=0.189\textwidth]{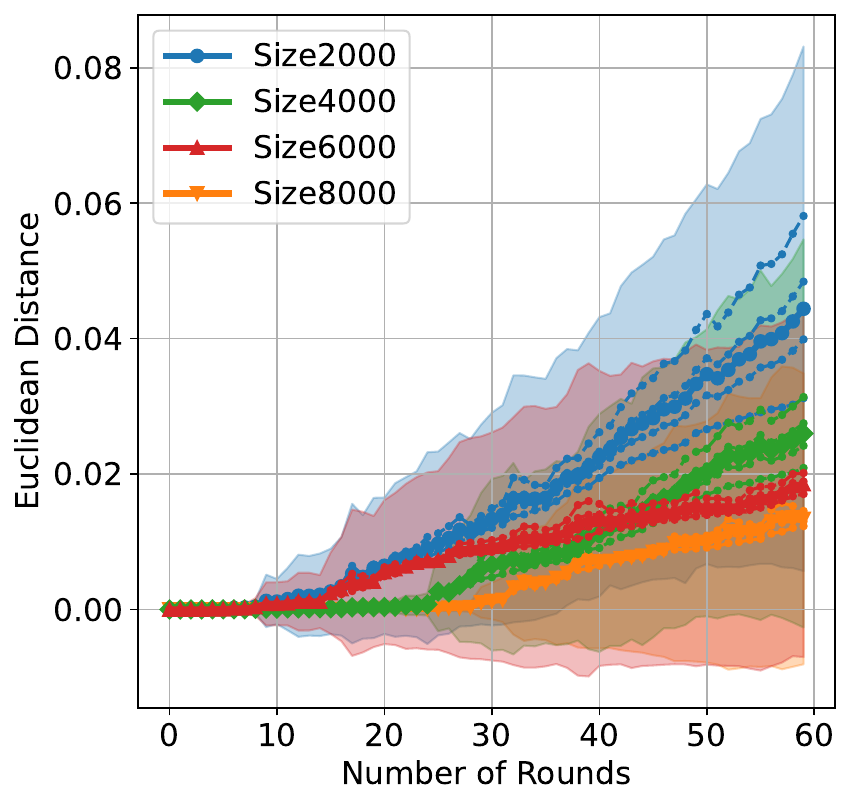}
\includegraphics[width=0.194\textwidth]{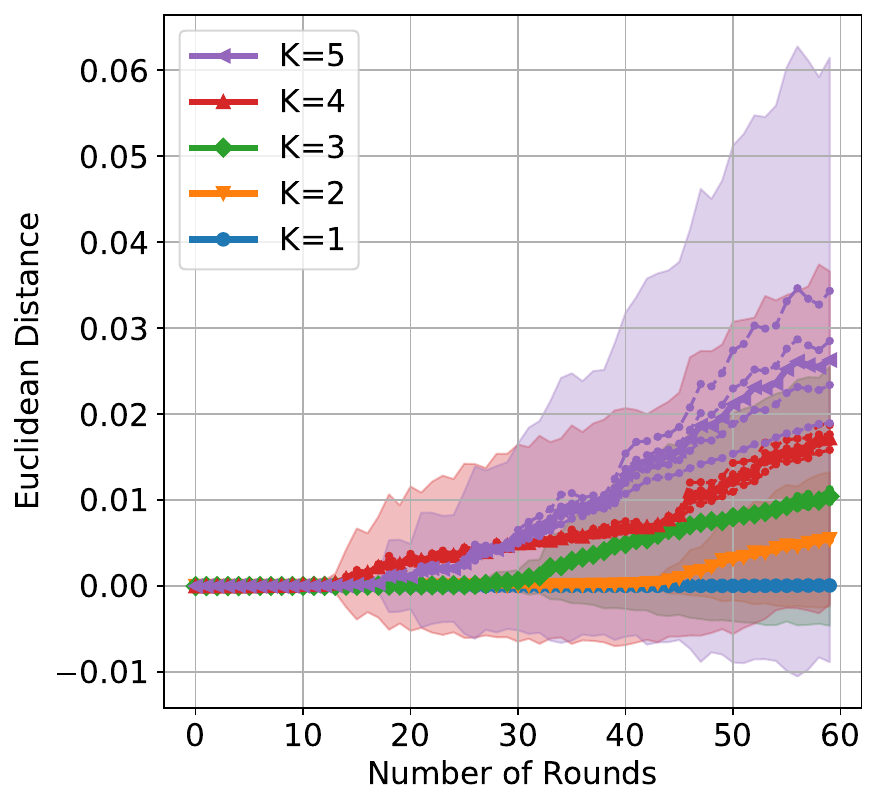}
\includegraphics[width=0.194\textwidth]{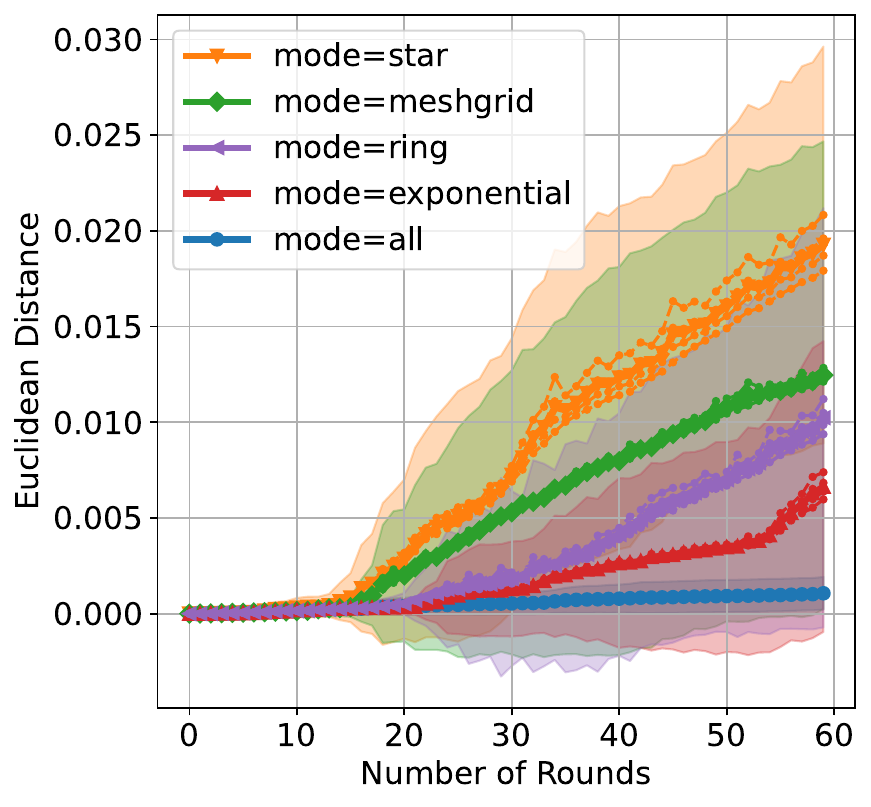}
\vspace{-0.5em}
\includegraphics[width=0.193\textwidth]{Figures/localdecentralized/different_lr/discriminator.pdf}
\includegraphics[width=0.193\textwidth]{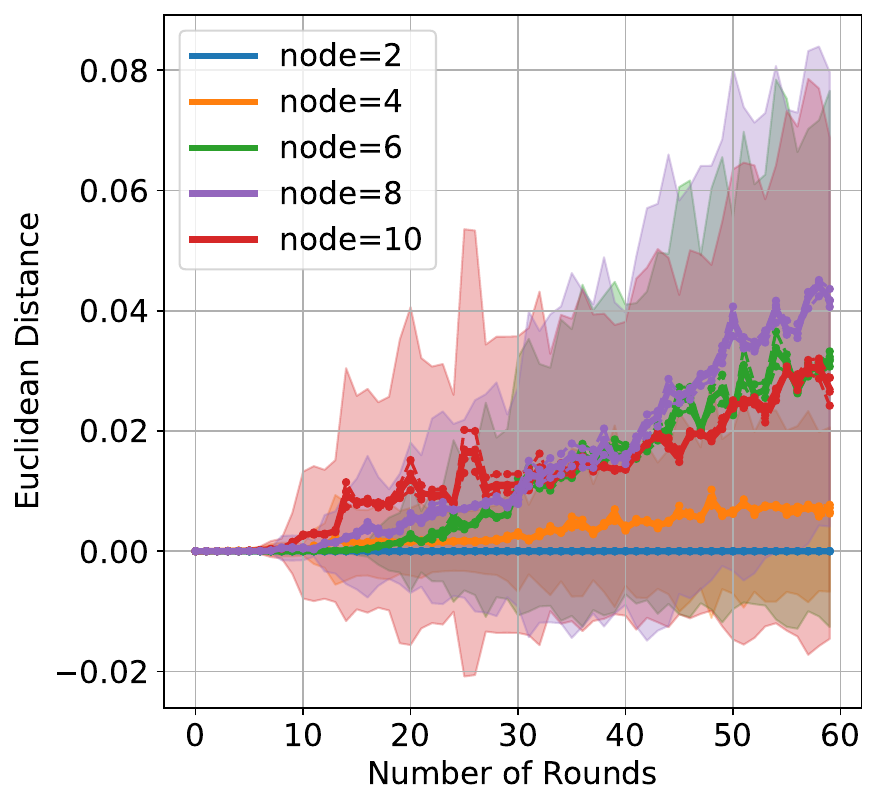}
\includegraphics[width=0.189\textwidth]{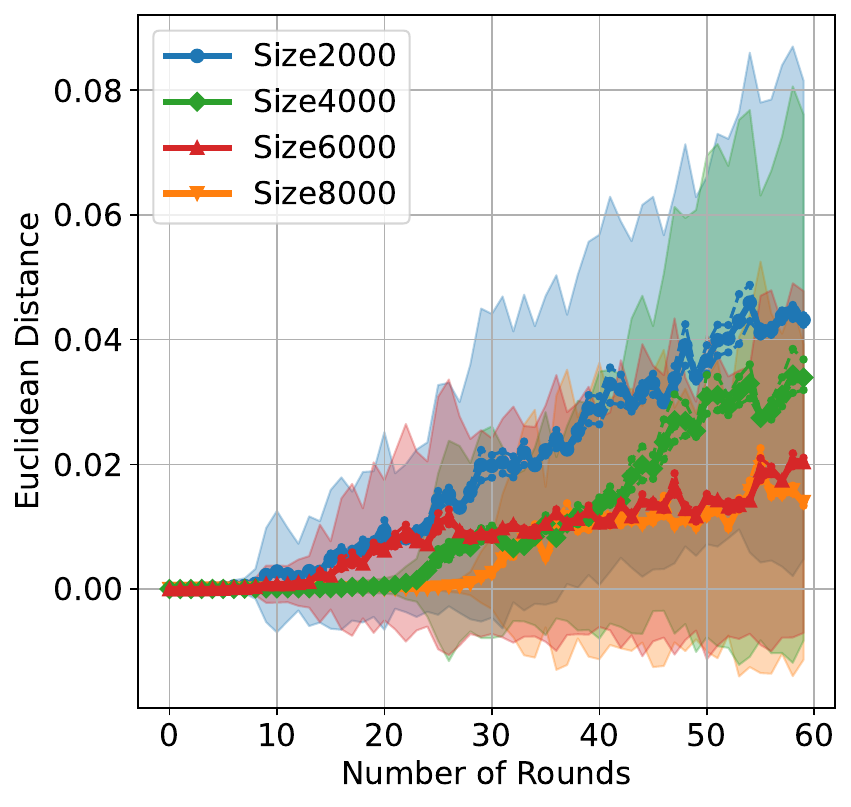}
\includegraphics[width=0.194\textwidth]{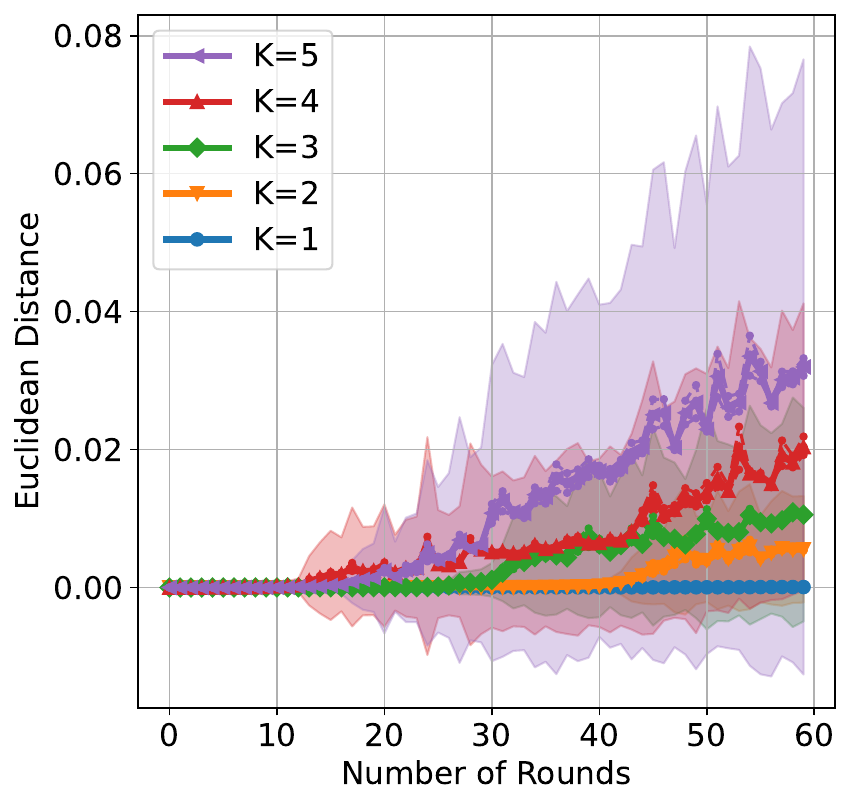}
\includegraphics[width=0.194\textwidth]{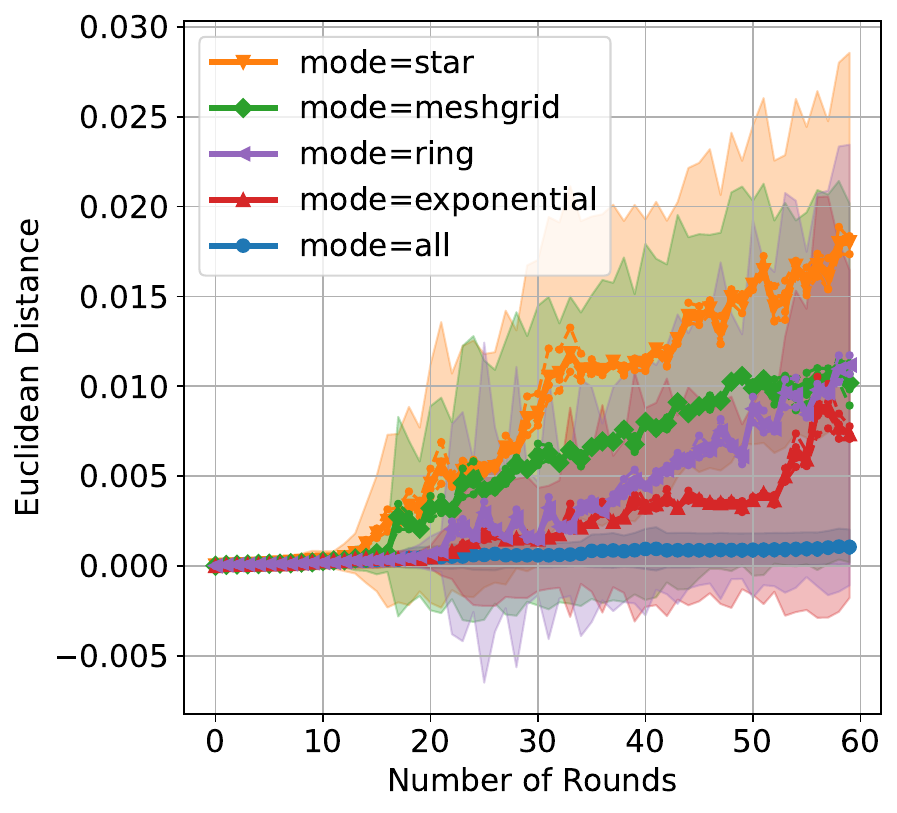}
\vspace{-0.9em}
\caption{Stability of generator model and discriminator model using Local-DSGDA method respectively. From left to right, the figures correspond to the varying learning rates, the number of nodes, the local dataset size, the number of local steps, and the topologies. Each layer is independently assessed and shown as the dashed lines.}
\label{fig:localdecentralized:dis}    
\vspace{-0.4cm}
\end{figure*}

\section{Experiments}

In this section, we mainly demonstrate the preliminary experiments to validate the theoretical analysis above. 
We first introduce the implementation details in section~\ref{tx:implementation} and then show the experiments and further discussions in section~\ref{tx:result}.

\begin{figure*}[t]
\centering
\includegraphics[width=0.24\textwidth]{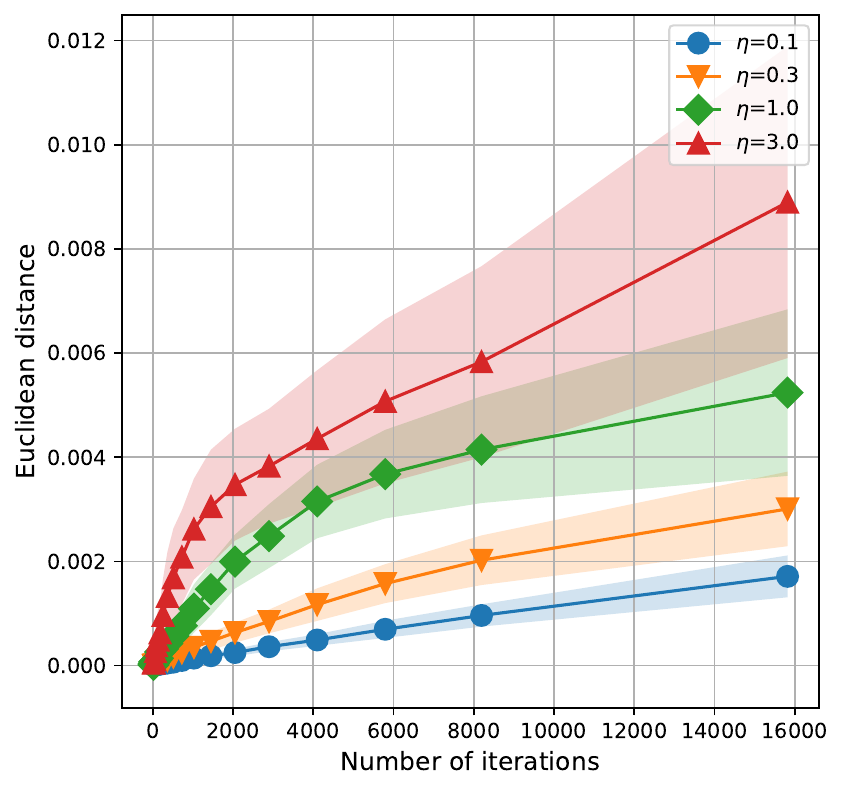}
\includegraphics[width=0.24\textwidth]{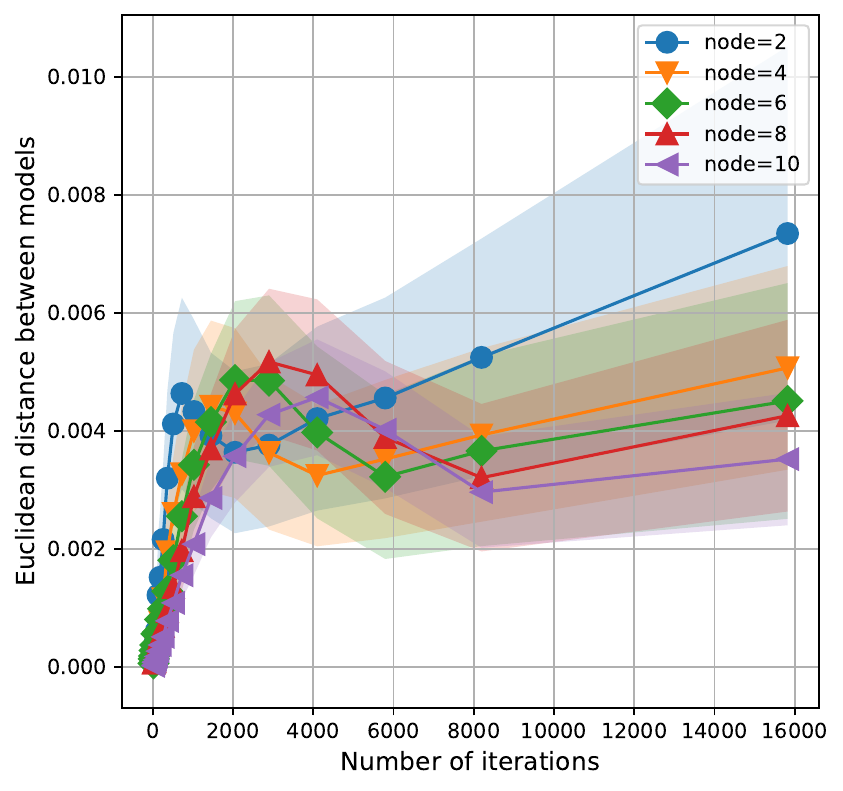}
\includegraphics[width=0.24\textwidth]{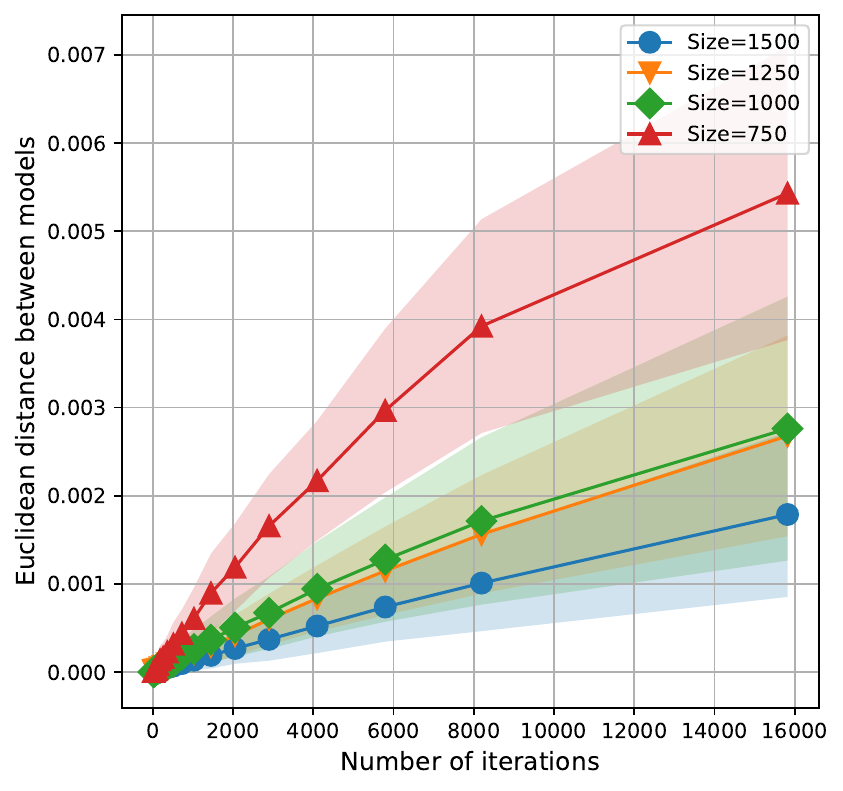}
\includegraphics[width=0.24\textwidth]{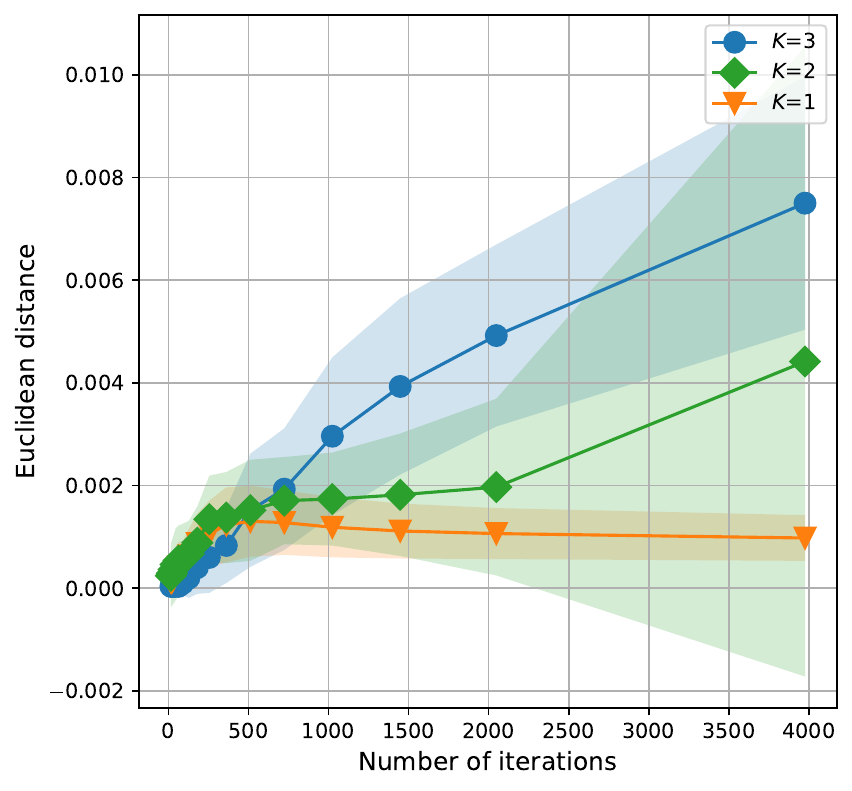} 
\vspace{-0.2em}
\includegraphics[width=0.24\textwidth]{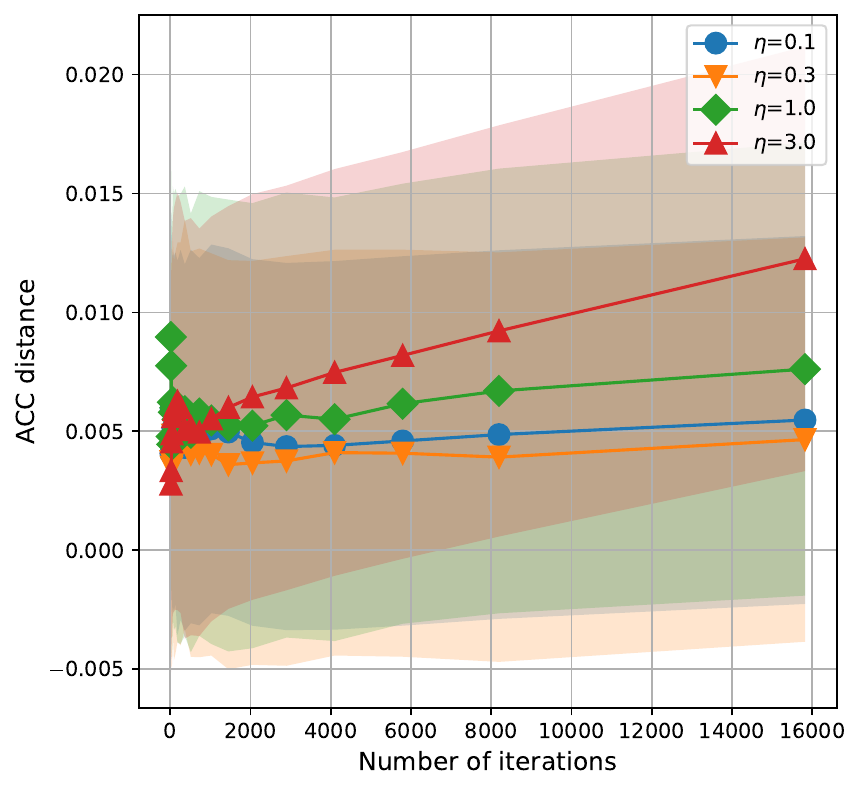}
\includegraphics[width=0.24\textwidth]{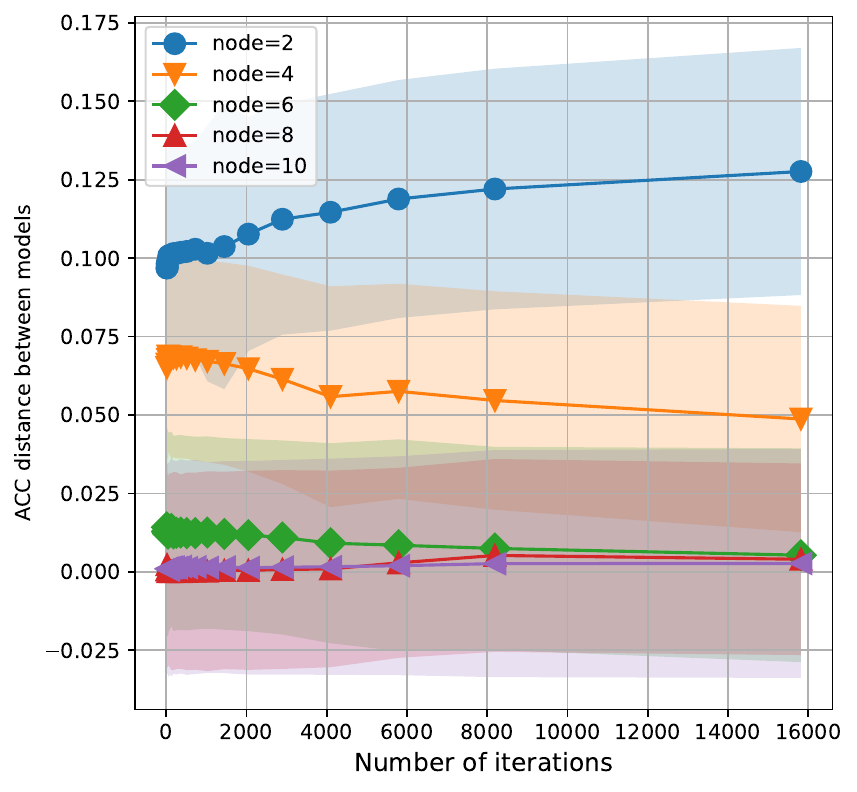}
\includegraphics[width=0.24\textwidth]{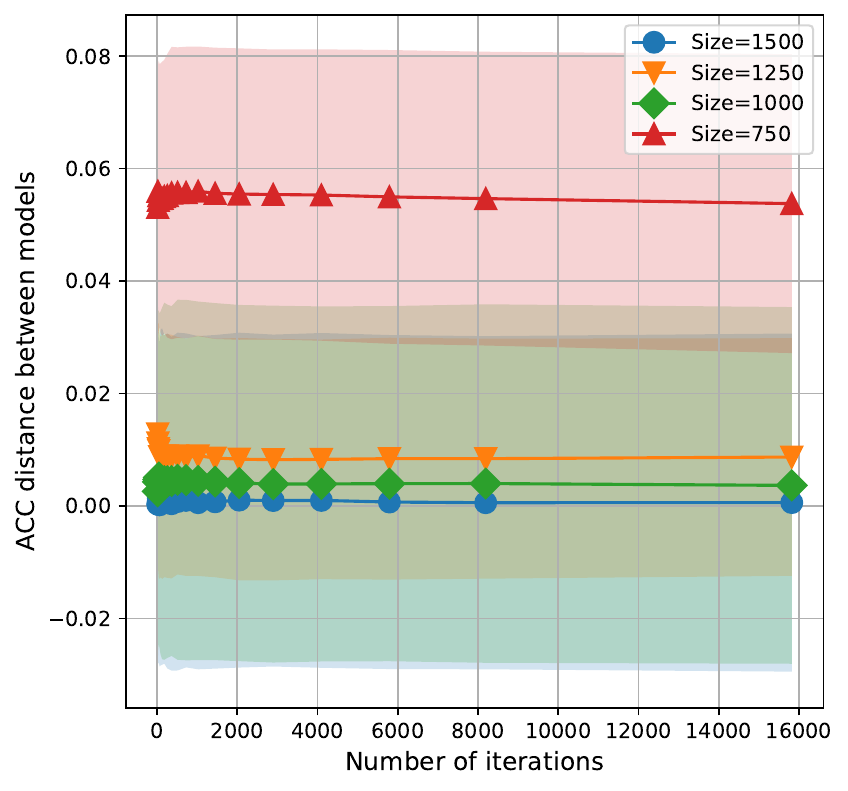}
\includegraphics[width=0.24\textwidth]{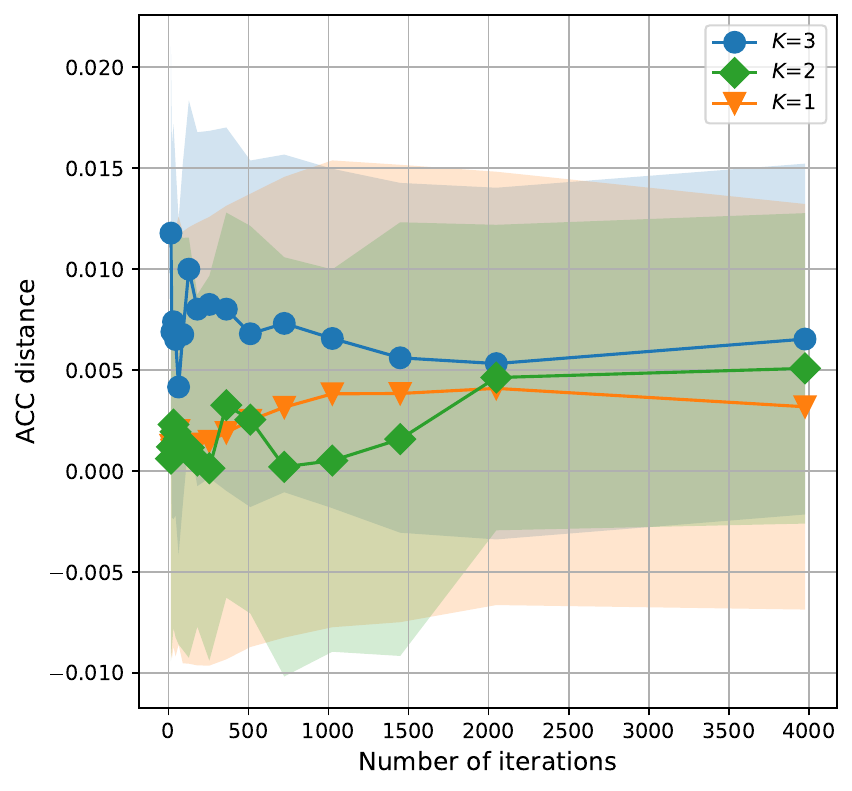} 
\vspace{-0.2em}
\caption{The first row shows the stability of Local-SGDA on AUC Maximization task, which is evaluated by the Euclidean distance between outputs of models trained on neighbouring dataset. The second row shows the generalization performance, evaluated by abs(training loss - test loss). From left to right, the figures correspond to the varying learning rates, the number of nodes, the local dataset size, and the number of local updates.}
\label{fig:local}   
\vspace{-0.2cm}
\end{figure*}

\subsection{Implementation}
\label{tx:implementation}

In experiments, we mainly investigate the stability of the distributed SGDA algorithm under the Nonconvex-Nonconcave setting and the stability and generalization of Local-SGDA under Convex-Concave setting. To evaluate our theoretical analysis, we follow \citet{lei2021stability,zhu2023stability} and solve a generative adversarial task on the MNIST dataset with a vanilla GAN structure to simulate the NC-NC setting, of which the generator and the discriminator comprise $4$ fully connected layers respectively. Each output layer connects to a leaky-ReLU activation. The same as \citet{lei2021stability}, we ignore all regularizations including weight decay, dropout, and data augmentations. We conduct the Local-SGDA on the task of AUC Maximization on dataset w5a to evaluate the stability and generalization under C-C setting. 

According to Definition~\ref{def:neighbouring:dataset}, we construct distributed neighboring dataset $\S=\{\S_1,...,\S_m\}$ and $\S'=\{\S'_1,...,\S'_m\}$, where each corresponding local dataset pair $\left(\S_i,\S'_i\right)$ only differs on one randomly selected data sample. Then we deploy the same initial model~$(\x,\y)$ with its local dataset pair $\left(\S_i,\S'_i\right)$ to the local client $i$. Each model will be trained $K$ iterations locally and communicate with its neighbors on a specific topology for exchanging and aggregating information. To trace the model gaps on the distributed neighboring dataset, after each communication, we aggregate all local models $(\x_i,\y_i)$ trained on $\S$ and perturbing models $(\x_i',\y_i')$ trained on $\S'$ respectively as the virtual global average $(\frac{1}{m}\sum_i\x_i,\frac{1}{m}\sum_i\y_i)$ and $(\frac{1}{m}\sum_i\x_i',\frac{1}{m}\sum_i\y_i')$, and then measure the Euclidean distance $\Vert\frac{1}{m}\sum_i(\x_i - \x_i')\Vert_2^2+\Vert\frac{1}{m}\sum_i(\y_i - \y_i')\Vert_2^2$ as the stability.


We mainly focus on the impacts of five factors, i) learning rates; ii) topologies; iii) node sizes; iv) local steps; and v) sample sizes. In each single study, we freeze selections of the other factors for fairness.
All the results are repeated on 4 different dataset constructions with 4 random seeds.

\subsection{Results}
\label{tx:result}

We validate the stability on both the generator model and discriminator model of Local-SGDA method (see Fig. \ref{fig:local:discriminator}) and of Local-DSGDA method (see Fig. \ref{fig:localdecentralized:dis}) respectively. We also validate the stability and generalization of Local-SGDA on AUC Maximization task (see Fig. \ref{fig:local}).
We can observe the consistent experiments that i) the larger learning rates always lead to a more unstable training process with larger gaps; ii) increasing the training scales will significantly increase instability; iii) increasing the training sample size helps maintain stability to enhance generalization ability; iv) the local training process introduces additional bias which significantly affects the stability of the model. And we can observe that the larger node sizes and sample sizes can help achieve better generalization performance. While increasing the learning rates and adding the number of local iterations will destroy generalization performance.
In addition, the topologies have a considerable impact on decentralized training. The stability ranking of training for different topologies is all $>$ exponential $>$ ring $>$ meshgrid $>$ star, which is closely related to the sparsity and spectral norm of the connections in the topology.

\section{Conclusion}

We propose a unified model \textit{Distributed-SGDA} for distributed minimax problems, analyzing the stability-based generalization gap and population risk for specific Local-SGDA and Local-DSGDA in various metrics across (S)C-(S)C, NC-SC, and NC-NC cases. Our novel theoretical results reveal the trade-off between generalization ability and optimization efficiency and suggest the hyperparameter choice for better performance on overall population risk, which is our \textit{real} target. Specifically, our theoretical results suggest:

\noindent\textbf{Decaying or fixed learning rate?} Choosing fixed learning rates outperforms the decaying one on perspectives of the argument stability ((S)C-(S)C) and the weak stability (NC-NC) and their corresponding generalization gap. Meanwhile, decaying learning rates excel the fixed learning rates in the empirical risks under all settings.

\noindent\textbf{How to balance?} Considering the population risk decomposed of generalization gap and empirical risk, generalization ability has to make a compromise for optimization in weak PD population risk under (S)C-(S)C case, while excess primal generalization gap is the dominant error in the excess primal population risk under NC-SC case. Overall, choosing hyperparameters to prioritize optimizing the dominant error will help lower population risk.

The preliminary experiments validate our theoretical findings that the smaller learning rates, the larger training scales and sample sizes and the less local training process will result in better stability and generalization.

\newpage
\appendix

The notations, basic technical tools, important lemmas including the bound analysis for the consensus term (Lemma \ref{le:delta:local},\ref{le:delta:localdecentra}) are put in Appendix \ref{sec:pre}. The proof of the connection between algorithmic stability and generalization gap (Theorem \ref{thm:connection}) is put in Appendix \ref{sec:connection}. Proof of the argument stability (Theorem \ref{thm:local:sc-sc-stability}), weak PD empirical risk (Theorem \ref{thm:weakpdempirical}) and the weak PD population risk (Theorem \ref{thm:sc-sc-population}) in (S)C-(S)C case are placed in Appendix \ref{sec:sc-sc}. Proof of the primal stability (Theorem \ref{thm:primal}), excess primal empirical risk (Theorem \ref{thm:ep:empirical}) and the excess primal population risk under NC-SC case are put in Appendix \ref{sec:NC-SC}. Proof of the weak stability (Theorem \ref{thm:weak}) under NC-NC case are placed in Appendix \ref{sec:NC-NC}.

\section{Preliminary}\label{sec:pre}
\subsection{Notation}\label{sub:denote}
To facilitate our proof, we first introduce important notations we will use in the proof.

\begin{itemize}
\setlength{\parindent}{0pt}
    \item All the vectors in our paper are column vectors by default, i.e., $\x^t_{i,k}\in\R^{d_{\x}},\y^t_{i,k}\in\R^{d_{\y}}$.
    \item We use $[\X,\Y]^t_k\triangleq\left[\left(\begin{array}{c}
     \x^t_{1,k}  \\
     \y^t_{1,k}
\end{array}\right),\left(\begin{array}{c}
     \x^t_{2,k}  \\
     \y^t_{2,k}
\end{array}\right),...,\left(\begin{array}{c}
     \x^t_{m,k}  \\
     \y^t_{m,k}
\end{array}\right)\right]^{\T}\in\R^{m\times (d_{\x}+d_{\y})}$ to denote the concatenation of all the local variables on the $k$-th local update of the $t$-th communication round and use $[\X,\Y]^t\triangleq\left[\left(\begin{array}{c}
     \x^t_1  \\
     \y^t_1
\end{array}\right),\left(\begin{array}{c}
     \x^t_2  \\
     \y^t_2
\end{array}\right),...,\left(\begin{array}{c}
     \x^t_m  \\
     \y^t_m
\end{array}\right)\right]^{\T}\in\R^{m\times (d_{\x}+d_{\y})}$ to denote the concatenation of all the "centered"-parameter in the t-th round after communication.
    \item Define $\nablaf^t_k\!\triangleq\!\left[\left(\!\!\begin{array}{c}
        \dx f_1(\x^t_{1,k},\y^t_{1,k};\xi_{1,j^t_k(1)})   \\
        -\dy f_1(\x^t_{1,k},\y^t_{1,k};\xi_{1,j^t_k(1)})
    \end{array}\!\!\right),...,\left(\!\!\begin{array}{c}
        \dx f_m(\x^t_{m,k},\y^t_{m,k};\xi_{m,j^t_k(m)})   \\
        -\dy f_m(\x^t_{m,k},\y^t_{m,k};\xi_{m,j^t_k(m)})
    \end{array}\!\!\right)\right]^{\T}\!\!\in\R^{m\times (d_{\x}+d_{\y})}$ to concatenate all the local stochastic gradient on the $t$-$k$-th iteration given on the sample $\bm{\xi}\triangleq\{\xi_{1,j^t_k(1)},...,\xi_{m,j^t_k(m)}\}$.
    \item $\I_m\in\R^{m\times m}$ denotes the $m$-dimension identity matrix and $\1_m\in\R^m$ denotes the all-one $m$-dimension vector. We define $\Pm\in\R^{m\times m}$ the matrix with elements uniformly set to $\frac{1}{m}$.
\end{itemize}

\subsection{Technical Tools}

\begin{property}\label{pro:smooth+}
    For a $L$-Lipschitz smooth function $g(x)$ (see Assumption~\ref{as:Lip:smmoth}) combined with convexity, we have:
    \[g(\y)\geq g(\x)+<\nabla g(\x),\y-\x>+\frac{1}{2L}\|\nabla g(\y)-\nabla g(\x)\|^2, \forall\x,\y.\]
\end{property}

\begin{defi}[\textbf{Frobenius inner product}]\label{def:inner}
    For any real-valued matrices $\bm{A}=(a_{ij}),\bm{B}=(b_{ij})\in\R^{m\times n}$, the Frobenius inner product is defined as:
    $$\langle\bm{A},\bm{B}\rangle_{\F}=\sum_{i=1}^m\sum_{j=1}^na_{ij}b_{ij}=\Tr(\bm{A}^{\T}\bm{B})=\Tr(\bm{B}^{\T}\bm{A}).$$
\end{defi}
\begin{defi}[\textbf{Matrix norm}]
    For real-valued matrix $\bm{A}=(a_{ij})\in\R^{m\times n}$, 
    \begin{enumerate}[wide=\parindent,label=(\roman*)]
        \item the Frobenius norm is defined as: $\|\bm{A}\|_{\F}=\sqrt{\sum_{i=1}^m\sum_{j=1}^na_{ij}^2}=\sqrt{\Tr(\bm{A}^{\T}\bm{A})};$
        \item the $\ell_2$-induced matrix norm is defined as: $\m\bm{A}\m_2=\max_{\|\bm{x}\|_2\neq0}\frac{\|\bm{A}\bm{x}\|_2}{\|\bm{x}\|_2}$, and it is exactly the spectral norm that $\m\bm{A}\m_2=\sqrt{\lambda_{\max}(\bm{A}^{\T}\bm{A})}$.
    \end{enumerate}
\end{defi}

\begin{rem}
    For simplicity, we omit the subscripts of the $\ell_2$ norm for vectors and the Frobenius norm for matrices when it does not lead to misunderstanding, i.e., $\|\x\|$ denotes $\|\x\|_2$ and $\|\bm{A}\|$ means $\|\bm{A}\|_{\F}$. The detailed proof of the equivalence is provided in Example 5.6.6. in~\citep{horn2012matrix}.
\end{rem}

\begin{property}[\citealt{horn2012matrix}]\label{pro:matrix}
    For any real-valued matrices $\bm{A}\in\R^{m\times n}$, $\bm{B}\in\R^{n\times d}$,
    \begin{enumerate}[wide=\parindent,label=(\roman*)]
        \item $|\langle\bm{A},\bm{B}\rangle_{\F}|\leq\|\bm{A}\|_{\F}\|\bm{B}\|_{\F};$
        \item $\|\bm{A}\|_{\F}\leq\m\bm{A}\m_2$, and $\|\bm{A}\bm{B}\|_{\F}\leq\m\bm{A}\m_2\|\bm{B}\|_{\F}\leq\|\bm{A}\|_{\F}\|\bm{B}\|_{\F}$.
    \end{enumerate}
\end{property}

\begin{lem}[\citealt{wang2021cooperative}]\label{le:spect}
    For the mixing matrix $\W$ satisfying the Assumption~\ref{as:mixing}, there holds:
    $$\m\W^k-\Pm\m_2=\lambdabm^k, \forall k\in\mathbb{N}.$$
\end{lem}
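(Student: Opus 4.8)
The plan is to exploit the spectral theorem, which applies precisely because Assumption~\ref{as:mixing} makes $\W$ real symmetric. First I would diagonalize $\W=\sum_{i=1}^m\lambda_i v_iv_i^{\T}$ for an orthonormal eigenbasis $\{v_i\}$ with real eigenvalues ordered $\lambda_1\ge\cdots\ge\lambda_m$. The doubly-stochastic condition $\W\1_m=\1_m$ pins down the leading eigenpair, namely $\lambda_1=1$ with $v_1=\1_m/\sqrt{m}$. The crucial observation is that $\Pm=\frac{1}{m}\1_m\1_m^{\T}=v_1v_1^{\T}$ is exactly the orthogonal projector onto $\mathrm{span}(\1_m)$, i.e.\ the rank-one spectral component of $\W$ attached to $\lambda_1$.

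Second, I would raise $\W$ to the $k$-th power. Since the $v_i$ are orthonormal, the projectors $v_iv_i^{\T}$ are mutually orthogonal idempotents, so $\W^k=\sum_{i=1}^m\lambda_i^k v_iv_i^{\T}$. Using $\lambda_1^k=1$ together with $\Pm=v_1v_1^{\T}$ and subtracting the leading term yields the clean identity $\W^k-\Pm=\sum_{i=2}^m\lambda_i^k v_iv_i^{\T}$, a symmetric matrix supported on the orthogonal complement of $\1_m$.

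Third, I would evaluate the spectral norm through its definition $\m\bm{A}\m_2=\sqrt{\lambda_{\max}(\bm{A}^{\T}\bm{A})}$. Because $\W^k-\Pm$ is symmetric, orthonormality gives $(\W^k-\Pm)^{\T}(\W^k-\Pm)=\sum_{i=2}^m\lambda_i^{2k}v_iv_i^{\T}$, whose largest eigenvalue is $\max_{2\le i\le m}\lambda_i^{2k}$; taking the square root produces $\m\W^k-\Pm\m_2=\max_{2\le i\le m}|\lambda_i|^k$. It then remains only to match this quantity with $\lambdabm^k$ from Definition~\ref{def:crucial}.

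The single point requiring genuine care — and the only real obstacle — is the sign/magnitude bookkeeping in this last step: the spectral norm selects the eigenvalue largest in absolute value, whereas Definition~\ref{def:crucial} writes $\lambdabm$ as the maximum among $\lambda_2,\dots,\lambda_m$. To conclude I would use that $\lambdabm$ denotes the dominant non-principal eigenvalue in magnitude (equivalently the second-largest singular value of the symmetric, doubly-stochastic $\W$, all of whose eigenvalues lie in $[-1,1]$), so that $\max_{2\le i\le m}|\lambda_i|=\lambdabm$ and hence $\m\W^k-\Pm\m_2=\lambdabm^k$. The base case $k=1$ simultaneously recovers the interpretation in the Remark following Definition~\ref{def:crucial} that $\lambdabm$ controls the contraction of a single communication step.
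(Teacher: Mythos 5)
Your proposal is correct, but there is no internal proof to compare it against: the paper imports this lemma from \citet{wang2021cooperative} without proof. Your argument --- spectral theorem for the symmetric $\W$, identification of $\Pm=\frac{1}{m}\1_m\1_m^{\T}$ with the projector $v_1v_1^{\T}$ onto $\mathrm{span}(\1_m)$, the expansion $\W^k-\Pm=\sum_{i=2}^m\lambda_i^kv_iv_i^{\T}$, and evaluation of the spectral norm of a symmetric matrix --- is the standard proof and essentially the one given in that reference.

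The subtlety you isolate is genuine, and your resolution is the only one that makes the statement true. As literally written, Definition~\ref{def:crucial} takes a maximum of \emph{signed} eigenvalues, so $\lambdabm$ can even be negative: the equal-weight complete graph on $m=3$ nodes, $\W=\frac{1}{2}(\1_3\1_3^{\T}-\I_3)$, is symmetric doubly stochastic with spectrum $\{1,-\frac{1}{2},-\frac{1}{2}\}$, giving $\lambdabm=-\frac{1}{2}$ under the literal definition, while $\m\W^k-\Pm\m_2=(\frac{1}{2})^k$. The lemma therefore requires the magnitude convention $\lambdabm=\max_{2\leq i\leq m}|\lambda_i(\W)|$ --- the one actually used in \citet{wang2021cooperative} --- and your proof is complete exactly under that reading; the mismatch is a defect of the paper's definition rather than of your argument. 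One minor caveat: calling $\Pm$ ``the rank-one spectral component attached to $\lambda_1$'' presumes that the eigenvalue $1$ is simple, which is guaranteed by the standing assumption $\lambdabm<1$; in any case your decomposition $\W^k-\Pm=\sum_{i=2}^m\lambda_i^kv_iv_i^{\T}$ is valid as soon as the orthonormal eigenbasis is chosen with $v_1=\1_m/\sqrt{m}$.
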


\begin{lem}[\citealt{zhu2023stability}]\label{le:nonexpan}
    For the gradient map $G_{g,\eta}$ where $g$ is $L$-Lipschitz smooth:
    $$G_{g,\eta}\left(\begin{array}{c}
        \x\\
        \y 
    \end{array}
    \right)=\left(\begin{array}{c}
        \x-\eta\nabla_{\x}g(\x,\y) \\
        \y+\eta\nabla_{\y}g(\x,\y)
    \end{array}\right),$$
    we have the following expansiveness under different conditions:
    \begin{enumerate}[wide=\parindent,label=(\roman*)]
        \item $G_{g,\eta}$ is $(1+\eta L)$-expansive, i.e., 
        $$\left\|G_{g,\eta}\left(\begin{array}{c}
            \x\\
            \y
        \end{array}\right)-G_{g,\eta}\left(\begin{array}{c}
            \x' \\
            \y' 
        \end{array}\right)\right\|\leq(1+\eta L)\left\|\left(\begin{array}{c}
            \x-\x' \\
            \y-\y'
        \end{array}\right)\right\|;$$
        \item When $g$ is $\mu$-strongly-convex-strongly-concave, and the learning rate satisfies $\eta\leq\frac{2}{L+\mu}$, then $G_{g,\eta}$ is $(1-\eta\frac{L\mu}{L+\mu})$-expansive, i.e., 
        $$\left\|G_{g,\eta}\left(\begin{array}{c}
            \x  \\
            \y 
        \end{array}\right)-G_{g,\eta}\left(\begin{array}{c}
            \x'  \\
            \y' 
        \end{array}\right)\right\|\leq(1-\eta\frac{L\mu}{L+\mu})\left\|\left(\begin{array}{c}
            \x-\x'  \\
            \y-\y'
        \end{array}\right)\right\|.$$
    \end{enumerate}
\end{lem}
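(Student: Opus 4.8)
The plan is to prove the two claims separately, since only the second invokes convexity–concavity. Throughout I would write the saddle operator as $\Phi(\x,\y)=\big(\dx g(\x,\y),-\dy g(\x,\y)\big)$, so that $G_{g,\eta}(\x,\y)=(\x,\y)-\eta\,\Phi(\x,\y)$, and abbreviate the differences as $\Delta\Phi\triangleq\Phi(\x,\y)-\Phi(\x',\y')$ and $\Delta z\triangleq(\x-\x',\y-\y')$. The common starting point is the observation that flipping the sign of the $\y$-block leaves the Euclidean norm unchanged, so Assumption~\ref{as:Lip:smmoth} is exactly the statement that $\Phi$ is $L$-Lipschitz, $\|\Delta\Phi\|\leq L\|\Delta z\|$. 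Claim (i) then follows by expanding the map and applying the triangle inequality,
\[
\|G_{g,\eta}(\x,\y)-G_{g,\eta}(\x',\y')\|=\|\Delta z-\eta\,\Delta\Phi\|\leq\|\Delta z\|+\eta\|\Delta\Phi\|\leq(1+\eta L)\|\Delta z\|,
\]
which uses $L$-Lipschitzness of $\Phi$ only; no convexity enters, matching the generality of the first claim.

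For claim (ii) I would square the map and expand,
\begin{equation*}
\|G_{g,\eta}(\x,\y)-G_{g,\eta}(\x',\y')\|^2=\|\Delta z\|^2-2\eta\langle\Delta\Phi,\Delta z\rangle+\eta^2\|\Delta\Phi\|^2,
\end{equation*}
and then lower-bound the cross term $\langle\Delta\Phi,\Delta z\rangle$ by a convex combination of $\|\Delta z\|^2$ and $\|\Delta\Phi\|^2$. Concretely, the aim is the co-coercivity-type inequality
\begin{equation*}
\langle\Delta\Phi,\Delta z\rangle\geq\frac{L\mu}{L+\mu}\|\Delta z\|^2+\frac{1}{L+\mu}\|\Delta\Phi\|^2,
\end{equation*}
the saddle analogue of the standard interpolation bound for $\mu$-strongly-convex $L$-smooth functions. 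Granting it, substitution gives
\begin{equation*}
\|G_{g,\eta}(\x,\y)-G_{g,\eta}(\x',\y')\|^2\leq\Big(1-\tfrac{2\eta L\mu}{L+\mu}\Big)\|\Delta z\|^2+\Big(\eta^2-\tfrac{2\eta}{L+\mu}\Big)\|\Delta\Phi\|^2,
\end{equation*}
and the hypothesis $\eta\leq\frac{2}{L+\mu}$ makes the second coefficient nonpositive, so that term is dropped. Completing the square through $1-\frac{2\eta L\mu}{L+\mu}\leq\big(1-\frac{\eta L\mu}{L+\mu}\big)^2$ and taking square roots — the factor is nonnegative since $\eta\leq\frac{2}{L+\mu}\leq\frac{L+\mu}{L\mu}$ — yields the claimed $(1-\eta\frac{L\mu}{L+\mu})$-expansiveness.

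The main obstacle is precisely the co-coercivity inequality for the saddle operator. To derive it I would apply Property~\ref{pro:smooth+} in the $\x$-block, where $g(\cdot,\y)$ is $\mu$-strongly convex and $L$-smooth (after peeling off the $\frac{\mu}{2}\|\cdot\|^2$ term to obtain a convex $L$-smooth surrogate), and symmetrically in the $\y$-block using strong concavity, then add the two resulting inequalities. The delicate step — where the saddle case genuinely departs from the pure minimization proof and which I expect to be the hardest — is the coupling: the partial gradients are evaluated at the two distinct points $(\x,\y)$ and $(\x',\y')$ rather than at a shared second argument, so the mixed terms $\dx g(\x',\y)-\dx g(\x',\y')$ and $\dy g(\x,\y)-\dy g(\x,\y')$ must be isolated and controlled by joint $L$-smoothness and a Young-type splitting without eroding the strong-monotonicity gain. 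I would therefore track these cross terms explicitly and verify that the constants recombine into the $\frac{L\mu}{L+\mu}$ and $\frac{1}{L+\mu}$ weights; this bookkeeping around the coupling, rather than any single estimate, is where the real work — and the restriction on $\eta$ — has to do its job.
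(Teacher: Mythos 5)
Your argument for part (i) is correct and standard: flipping the sign of the $\y$-block is an isometry, so Assumption~\ref{as:Lip:smmoth} says exactly that $\Phi$ is $L$-Lipschitz, and the triangle inequality gives $(1+\eta L)$-expansiveness.

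Part (ii) contains a genuine gap, located precisely at the step you flagged as the main obstacle. The co-coercivity inequality you plan to establish, $\langle\Delta\Phi,\Delta z\rangle\geq\frac{L\mu}{L+\mu}\|\Delta z\|^2+\frac{1}{L+\mu}\|\Delta\Phi\|^2$ (in your notation), is a theorem about gradients of $\mu$-strongly convex, $L$-smooth functions; its proof relies essentially on the operator being a gradient field with symmetric Jacobian, e.g.\ through Property~\ref{pro:smooth+} applied to $g-\frac{\mu}{2}\|\cdot\|^2$. A saddle operator carries a skew-symmetric coupling block, and for such operators the inequality is simply false, so no Young-type control of the cross terms can recover it. Concretely, take $g(\x,\y)=\frac{\mu}{2}\|\x\|^2-\frac{\mu}{2}\|\y\|^2+c\,\x^{\T}\y$ with $c=\sqrt{L^2-\mu^2}$; this is $\mu$-SC-SC and $L$-smooth, and its saddle operator is the linear map $M=\mu I+cJ$ with $J^{\T}=-J$ and $J^2=-I$. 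Then $\langle M\Delta z,\Delta z\rangle=\mu\|\Delta z\|^2$ while $\|M\Delta z\|^2=L^2\|\Delta z\|^2$, so your proposed lower bound on the cross term would force $\mu\geq L$.

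Moreover, the failure is not one of constants or bookkeeping. In the same example one computes exactly $\|G_{g,\eta}(z)-G_{g,\eta}(z')\|^2=(1-2\eta\mu+\eta^2L^2)\|z-z'\|^2$, so with $\mu=0.1$, $L=1$, $\eta=2/(L+\mu)$ the GDA map expands distances by a factor of about $1.99$, whereas the target statement promises contraction by about $0.83$. Simultaneous GDA on an SC-SC problem is a contraction only under a stepsize restriction of the form $\eta<2\mu/L^2$, and then with factor $\sqrt{1-2\eta\mu+\eta^2L^2}$; the minimization-style rate $1-\eta\frac{L\mu}{L+\mu}$ under $\eta\leq\frac{2}{L+\mu}$ does not transfer to the saddle setting. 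Note also that the paper offers no proof to compare against --- the lemma is imported verbatim from the cited prior work --- so you should treat part (ii) as something your route (or any route through saddle co-coercivity) cannot deliver, rather than as a gap that more careful cross-term estimates would close.
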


\subsection{Important Lemmas}
\begin{lem}[\citet{lin2020gradient}]\label{le:RSlipschitz}
    Defining $\hat{\y}_{\S}(\x)\triangleq\arg\sup_{\y'\in\yset}F_{\S}(\x,\y')$, under Assumption \ref{as:Lip:smmoth}, when each local function $f_i(\x,\cdot)$ is $\mu$-strongly-concave, i.e., $F_{\S}(\x,\cdot)$ is $\mu$-strongly-concave as well, we have the conclusion that $\nabla R_{\S}(\x)=\dx F_{\S}(\x,\hat{\y}_{\S}(\x))$ and $R_{\S}(\x)$ is Lipschitz smooth with $L+\kappa L$. Besides, $\hat{\y}_{\S}(\cdot)$ is Lipschitz continuous with $\kappa$, where $\kappa=\frac{L}{\mu}$.
\end{lem}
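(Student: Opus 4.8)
The plan is to recognize this as the standard Danskin/envelope-theorem result for the primal (max) function of a nonconvex--strongly-concave problem, and to establish the three claims in sequence: (i) well-posedness together with the gradient formula, (ii) Lipschitz continuity of the argmax map $\hat{\y}_{\S}$, and (iii) smoothness of $R_{\S}$, where (iii) consumes (ii). First I would note that since $F_{\S}(\x,\cdot)$ is $\mu$-strongly concave on the convex set $\yset$, the supremum defining $R_{\S}(\x)=\sup_{\y'\in\yset}F_{\S}(\x,\y')$ is attained at a \emph{unique} maximizer $\hat{\y}_{\S}(\x)$, so the map is single-valued. With a unique maximizer, Danskin's theorem (equivalently the envelope theorem) applies and yields $\nabla R_{\S}(\x)=\dx F_{\S}(\x,\hat{\y}_{\S}(\x))$.

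The hard part will be step (ii), the $\kappa$-Lipschitz continuity of $\hat{\y}_{\S}$. Here I would fix two points $\x_1,\x_2$ and abbreviate $\hat{\y}_i=\hat{\y}_{\S}(\x_i)$. Since each $\hat{\y}_i$ maximizes $F_{\S}(\x_i,\cdot)$ over the convex set $\yset$, first-order optimality (the variational inequality) gives $\langle\dy F_{\S}(\x_i,\hat{\y}_i),\y'-\hat{\y}_i\rangle\leq0$ for every $\y'\in\yset$; taking $\y'=\hat{\y}_{3-i}$ in each of the two inequalities and adding them produces $\langle\dy F_{\S}(\x_1,\hat{\y}_1)-\dy F_{\S}(\x_2,\hat{\y}_2),\hat{\y}_1-\hat{\y}_2\rangle\geq0$. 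I would then split the left-hand side into a ``same-$\x$'' term $\langle\dy F_{\S}(\x_2,\hat{\y}_1)-\dy F_{\S}(\x_2,\hat{\y}_2),\hat{\y}_1-\hat{\y}_2\rangle$, which $\mu$-strong concavity bounds above by $-\mu\|\hat{\y}_1-\hat{\y}_2\|^2$, and a ``cross'' term $\langle\dy F_{\S}(\x_1,\hat{\y}_1)-\dy F_{\S}(\x_2,\hat{\y}_1),\hat{\y}_1-\hat{\y}_2\rangle$, which Cauchy--Schwarz together with $L$-smoothness (Assumption~\ref{as:Lip:smmoth}) bounds above by $L\|\x_1-\x_2\|\|\hat{\y}_1-\hat{\y}_2\|$. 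Combining gives $\mu\|\hat{\y}_1-\hat{\y}_2\|^2\leq L\|\x_1-\x_2\|\|\hat{\y}_1-\hat{\y}_2\|$, hence $\|\hat{\y}_1-\hat{\y}_2\|\leq(L/\mu)\|\x_1-\x_2\|=\kappa\|\x_1-\x_2\|$. The subtlety to watch is the constrained maximum: one must use the projected/variational-inequality form of stationarity rather than the condition $\dy F_{\S}=0$, which is valid only in the unconstrained case.

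Finally, for (iii) I would substitute the gradient formula into the joint smoothness estimate: for any $\x_1,\x_2$,
\begin{align*}
\|\nabla R_{\S}(\x_1)-\nabla R_{\S}(\x_2)\|
&=\|\dx F_{\S}(\x_1,\hat{\y}_1)-\dx F_{\S}(\x_2,\hat{\y}_2)\|\\
&\leq L\big(\|\x_1-\x_2\|+\|\hat{\y}_1-\hat{\y}_2\|\big),
\end{align*}
using joint $L$-smoothness of $F_{\S}$ and $\sqrt{a^2+b^2}\leq a+b$ on the concatenated displacement. Inserting the bound from (ii) gives $\|\nabla R_{\S}(\x_1)-\nabla R_{\S}(\x_2)\|\leq L(1+\kappa)\|\x_1-\x_2\|=(L+\kappa L)\|\x_1-\x_2\|$, the claimed constant. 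Throughout, the only ingredients are strong concavity in $\y$, joint $L$-smoothness, convexity of $\yset$, and uniqueness of the maximizer, so the single-machine argument of \citet{lin2020gradient} transfers verbatim to the empirical objective $F_{\S}$ treated here.
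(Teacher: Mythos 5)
Your proof is correct. Note that the paper itself gives no argument for this lemma—its ``proof'' is a one-line pointer to Lemma 4.3 of \citet{lin2020gradient}—and what you have written is precisely the standard argument used in that cited source: uniqueness of the maximizer from strong concavity, Danskin/envelope theorem for $\nabla R_{\S}(\x)=\dx F_{\S}(\x,\hat{\y}_{\S}(\x))$, the variational-inequality (not $\dy F_{\S}=0$) optimality conditions combined with strong monotonicity of $-\dy F_{\S}(\x,\cdot)$ and joint $L$-smoothness to get $\|\hat{\y}_{\S}(\x_1)-\hat{\y}_{\S}(\x_2)\|\leq\kappa\|\x_1-\x_2\|$, and finally $\|\nabla R_{\S}(\x_1)-\nabla R_{\S}(\x_2)\|\leq L(\|\x_1-\x_2\|+\|\hat{\y}_1-\hat{\y}_2\|)\leq (L+\kappa L)\|\x_1-\x_2\|$. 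So you have correctly supplied, in self-contained form, exactly the details the paper outsources.
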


\begin{proof}
    See Lemma 4.3 in \citet{lin2020gradient}.
\end{proof}

\begin{lem}\label{le:RSPL}
    Under Assumption \ref{as:Lip:smmoth}, when each local function $f_i(\cdot,\y)$ satisfies $\rho$-PL condition, i.e., $F_{\S}(\cdot,\y)$ satisfies $\rho$-PL condition, then the function $R_{\S}(\x)$ also satisfies $\rho$-PL condition. 
\end{lem}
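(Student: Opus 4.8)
The plan is to leverage the Danskin-type identity already established in Lemma~\ref{le:RSlipschitz} in order to reduce the PL inequality for the envelope $R_{\S}$ to the assumed PL inequality for $F_{\S}(\cdot,\y)$ evaluated at a single, well-chosen slice $\y=\hat{\y}_{\S}(\x)$. Since each $f_i(\x,\cdot)$ is $\mu$-strongly concave, $F_{\S}(\x,\cdot)$ attains its supremum at the unique maximizer $\hat{\y}_{\S}(\x)$, and Lemma~\ref{le:RSlipschitz} supplies both the identity $\nabla R_{\S}(\x)=\dx F_{\S}(\x,\hat{\y}_{\S}(\x))$ and the equality $R_{\S}(\x)=F_{\S}(\x,\hat{\y}_{\S}(\x))$, which are the two facts that will drive the argument.

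First I would substitute the gradient identity into the left-hand side of the target PL inequality, writing $\tfrac12\|\nabla R_{\S}(\x)\|^2=\tfrac12\|\dx F_{\S}(\x,\hat{\y}_{\S}(\x))\|^2$. Then I would invoke the hypothesis that $F_{\S}(\cdot,\y)$ is $\rho$-PL, applied to the function $\x'\mapsto F_{\S}(\x',\hat{\y}_{\S}(\x))$ with the second argument frozen at $\y=\hat{\y}_{\S}(\x)$. This yields $\tfrac12\|\dx F_{\S}(\x,\hat{\y}_{\S}(\x))\|^2\ge\rho\big(R_{\S}(\x)-\inf_{\x'\in\xset}F_{\S}(\x',\hat{\y}_{\S}(\x))\big)$, where the first term has already been rewritten as $R_{\S}(\x)$ using the envelope equality.

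The final step, which I expect to be the crux, is to replace the inner infimum $\inf_{\x'}F_{\S}(\x',\hat{\y}_{\S}(\x))$ by $\inf_{\x'}R_{\S}(\x')$ in the correct direction. Here I would use the elementary but essential observation that for every $\x'\in\xset$ the envelope dominates the slice, $R_{\S}(\x')=\sup_{\y'\in\yset}F_{\S}(\x',\y')\ge F_{\S}(\x',\hat{\y}_{\S}(\x))$, because $\hat{\y}_{\S}(\x)$ is merely one admissible choice of $\y'$. Taking the infimum over $\x'\in\xset$ on both sides preserves the inequality, giving $\inf_{\x'}R_{\S}(\x')\ge\inf_{\x'}F_{\S}(\x',\hat{\y}_{\S}(\x))$, hence $R_{\S}(\x)-\inf_{\x'}R_{\S}(\x')\le R_{\S}(\x)-\inf_{\x'}F_{\S}(\x',\hat{\y}_{\S}(\x))$. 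Chaining this with the previous display produces $\tfrac12\|\nabla R_{\S}(\x)\|^2\ge\rho\big(R_{\S}(\x)-\inf_{\x'}R_{\S}(\x')\big)$, which is exactly the $\rho$-PL condition for $R_{\S}$. The only subtlety worth verifying carefully is that the PL hypothesis must be applied to the fixed slice at $\hat{\y}_{\S}(\x)$, so that the gradient appearing in the PL inequality coincides with $\nabla R_{\S}(\x)$ through the Danskin identity; beyond that matching, the argument is a pure monotonicity comparison of infima.
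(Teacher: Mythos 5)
Your proposal is correct and follows essentially the same route as the paper's proof: invoke the Danskin identity $\nabla R_{\S}(\x)=\dx F_{\S}(\x,\hat{\y}_{\S}(\x))$ from Lemma~\ref{le:RSlipschitz}, apply the $\rho$-PL hypothesis to the slice $F_{\S}(\cdot,\hat{\y}_{\S}(\x))$, and then relax the inner infimum via $F_{\S}(\x',\hat{\y}_{\S}(\x))\leq\sup_{\y'\in\yset}F_{\S}(\x',\y')=R_{\S}(\x')$. The paper phrases this last step as $\inf_{\x'}F_{\S}(\x',\hat{\y}_{\S}(\x))\leq\inf_{\x'}\sup_{\y'}F_{\S}(\x',\y')$, which is exactly your monotonicity-of-infima comparison.
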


\begin{proof}

    \begin{align*}
        \|\nabla R_{\S}(\x)\|^2=\|\dx F_{\S}(\x,\hat{\y}_{\S})\|^2&\geq2\rho(F_{\S}(\x,\hat{\y}_{\S}(\x))-\inf_{\x'\in\xset}F_{\S}(\x',\hat{\y}_{\S}(\x)))\\
        &\geq2\rho(R_{\S}(\x)-\inf_{\x'\in\xset}\sup_{\y'\in\yset}F_{\S}(\x',\y'))\\
        &\geq2\rho(R_{\S}(\x)-\inf_{\x'\in\xset}R_{\S}(\x')).
    \end{align*}
\end{proof}

\begin{lem}\label{le:delta:local}
    For LocalSGDA as a special case of Distributed-SGDA, under Assumption \ref{as:Lip:con}, \ref{as:Lip:smmoth}, the consensus term can be bounded by:
    \begin{equation*}
        \E[\Delta_k^t]\leq2G\sum_{k'=0}^{k-1}\eta^t_{k'}\prod_{k''=k'+1}^{k-1}(1-\eta^t_{k''}\frac{L\mu}{L+\mu}).
    \end{equation*}
\end{lem}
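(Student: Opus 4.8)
The plan is to reduce the claim to a one-step recursion for the consensus term and then unroll it. Write $\bm{z}^t_{i,k}$ for the stacked local iterate $(\x^t_{i,k};\y^t_{i,k})$ and $\bar{\bm{z}}^t_k=(\bar{\x}^t_k;\bar{\y}^t_k)$ for the global average, so that $\Delta^t_k=\frac{1}{m}\sum_{i=1}^m\|\bm{z}^t_{i,k}-\bar{\bm{z}}^t_k\|$. Since Local-SGDA is the case $\W=\Pm$ (equivalently $\lambdabm=0$), the communication step resets every worker to the common average $\frac{1}{m}\sum_h\x^{t-1}_{h,K}$; hence each round begins with identical local models and the base case $\Delta^t_0=0$ holds.

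The core step is to bound the inflation of the consensus term caused by one local SGDA update. Each local step is exactly an application of the gradient map $G_{f_i,\eta^t_k}$ of Lemma~\ref{le:nonexpan}, evaluated on the sampled loss $f_i(\cdot;\xi_{i,j^t_k(i)})$, i.e. $\bm{z}^t_{i,k+1}=G_{f_i,\eta^t_k}(\bm{z}^t_{i,k})$. I would insert the virtual anchor $G_{f_i,\eta^t_k}(\bar{\bm{z}}^t_k)$, obtained by applying worker $i$'s own map to the shared average, and split by the triangle inequality:
\begin{equation*}
\|\bm{z}^t_{i,k+1}-\bar{\bm{z}}^t_{k+1}\|\leq\|G_{f_i,\eta^t_k}(\bm{z}^t_{i,k})-G_{f_i,\eta^t_k}(\bar{\bm{z}}^t_k)\|+\|G_{f_i,\eta^t_k}(\bar{\bm{z}}^t_k)-\bar{\bm{z}}^t_{k+1}\|.
\end{equation*}
The first term contracts by Lemma~\ref{le:nonexpan}(ii): because the sampled loss is $\mu$-SC-SC and $\eta^t_k\le\frac{2}{L+\mu}$, it is at most $(1-\eta^t_k\frac{L\mu}{L+\mu})\|\bm{z}^t_{i,k}-\bar{\bm{z}}^t_k\|$. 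The second term equals $\eta^t_k$ times the norm of the difference between the average of the workers' stacked stochastic gradients and worker $i$'s stacked stochastic gradient at $\bar{\bm{z}}^t_k$; since each such gradient has norm at most $G$ by the bounded-gradient consequence of Assumption~\ref{as:Lip:con}, this term is at most $2G\eta^t_k$.

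Averaging the two bounds over the $m$ workers and taking expectation gives the recursion $\E[\Delta^t_{k+1}]\leq(1-\eta^t_k\frac{L\mu}{L+\mu})\E[\Delta^t_k]+2G\eta^t_k$. Unrolling this linear recursion from $\Delta^t_0=0$ yields $\E[\Delta^t_k]\leq 2G\sum_{k'=0}^{k-1}\eta^t_{k'}\prod_{k''=k'+1}^{k-1}(1-\eta^t_{k''}\frac{L\mu}{L+\mu})$, which is exactly the stated bound.

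The main obstacle is the choice of the virtual anchor. The true averaged iterate $\bar{\bm{z}}^t_{k+1}$ is driven by a mixture of heterogeneous per-worker gradients and is not itself the image of any single gradient map, so the SC-SC contraction cannot be applied to it directly. Routing through $G_{f_i,\eta^t_k}(\bar{\bm{z}}^t_k)$ lets the contraction act on the genuine deviation $\bm{z}^t_{i,k}-\bar{\bm{z}}^t_k$, while confining all cross-worker heterogeneity to the additive $2G\eta^t_k$ term. One must also ensure that the same sampled function $f_i(\cdot;\xi_{i,j^t_k(i)})$ is used at both evaluation points so that Lemma~\ref{le:nonexpan}(ii) genuinely applies, and that the learning-rate condition $\eta^t_k\le\frac{2}{L+\mu}$ required for the contraction is in force.
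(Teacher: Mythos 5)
Your proof is correct, and it reaches the paper's bound through a genuinely different decomposition of the consensus error. The paper first reduces the deviation from the mean to pairwise deviations via $\x^t_{i,k}-\bar{\x}^t_k=\frac{1}{m}\sum_{h}(\x^t_{i,k}-\x^t_{h,k})$, derives the contraction-plus-$2\eta^t_kG$ recursion for each pair $(i,h)$ — isolating heterogeneity as the difference between $\nabla f_h$ and $\nabla f_i$ evaluated at worker $h$'s iterate — and then averages over pairs; you instead contract the deviation from the mean directly, inserting the anchor $G_{f_i,\eta^t_k}(\bar{\bm{z}}^t_k)$ so that all cross-worker heterogeneity collapses into a single additive term bounded by $2\eta^t_kG$. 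Both routes rest on Lemma~\ref{le:nonexpan}(ii), the bounded-gradient consequence of Assumption~\ref{as:Lip:con}, the base case $\Delta^t_0=0$ (identical models after averaging with $\Pm$), and the identical unrolling of the recursion, so the final bounds coincide. What your version buys: it tracks $m$ deviations rather than $m^2$ pairwise ones, and — more substantively — your anchor evaluates the \emph{same} sampled function $f_i(\cdot,\cdot;\xi_{i,j^t_k(i)})$ at both points, which is exactly the setting Lemma~\ref{le:nonexpan} licenses; the paper's pairwise split instead invokes the contraction between worker $i$'s update carrying sample $\xi_{i,j^t_k(i)}$ and an $f_i$-update at worker $h$'s iterate carrying sample $\xi_{h,j^t_k(h)}$, which mixes two different sampled functions and, to be fully rigorous, would need one more $\pm$ insertion costing an additional $2\eta^t_kG$. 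Your argument therefore delivers the stated constant $2G$ without that subtlety. You are also right to flag that the $\mu$-SC-SC property of the sampled losses and the step-size condition $\eta^t_k\le\frac{2}{L+\mu}$ must be in force: the lemma statement lists only Assumptions~\ref{as:Lip:con} and~\ref{as:Lip:smmoth}, but the appearance of $\mu$ in the bound makes clear that both proofs (the paper's included) rely on these additional hypotheses.
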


\begin{proof}[\textbf{Proof of Lemma~\ref{le:delta:local}}]
    According to the denotation of $(\bar{\x}^t_k,\bar{\y}^t_k)$, we have:
    \begin{equation*}
        \begin{aligned}
            \left(\begin{array}{c}
                \x_{i,k}^t-\bar{\x}_k^t   \\
                \y_{i,k}^t-\bar{\y}_k^t
            \end{array}\right)=\left(\begin{array}{c}
                \x_{i,k}^t-\frac{1}{m}\sum_{h=1}^m\x_{h,k}^t   \\
                \y_{i,k}^t-\frac{1}{m}\sum_{h=1}^m\y_{h,k}^t
            \end{array}\right)=\frac{1}{m}\sum_{h=1}^m\left(\begin{array}{c}
                \x_{i,k}^t-\x_{h,k}^t   \\
                \y_{i,k}^t-\y_{h,k}^t
            \end{array}\right).
        \end{aligned}
    \end{equation*}
    Then we concentrate on the update of $\left(\begin{array}{c}
        \x_{i,k}^t-\x_{h,k}^t   \\
        \y_{i,k}^t-\y_{h,k}^t 
    \end{array}\right)$ that:
    \begin{equation*}
        \begin{aligned}
            &\quad\;\left(\begin{array}{c}
                \x_{i,k+1}^t-\x_{h,k+1}^t \\
                \y_{i,k+1}^t-\y_{h,k+1}^t 
            \end{array}\right)\\
            &=\left(\begin{array}{c}
                \x_{i,k}^t-\eta^t_k\dx f_i(\x_{i,k}^t,\y_{i,k}^t;\xi_{i,j_k^t(i)})-\x_{h,k}^t+\eta^t_k\dx f_h(\x_{h,k}^t,\y_{h,k}^t;\xi_{h,j_k^t(h)})  \\
                \y_{i,k}^t+\eta^t_k\dy f_i(\x_{i,k}^t,\y_{i,k}^t;\xi_{i,j_k^t(i)})-\y_{h,k}^t-\eta^t_k\dy f_h(\x_{h,k}^t,\y_{h,k}^t;\xi_{h,j_k^t(h)})
            \end{array}\right)\\
            &=\left(\begin{array}{c}
                \x_{i,k}^t-\eta^t_k\dx f_i(\x_{i,k}^t,\y_{i,k}^t;\xi_{i,j_k^t(i)})-\x_{h,k}^t+\eta^t_k\dx f_i(\x_{h,k}^t,\y_{h,k}^t;\xi_{h,j_k^t(h)}) \\
                \y_{i,k}^t+\eta^t_k\dy f_i(\x_{i,k}^t,\y_{i,k}^t;\xi_{i,j_k^t(i)})-\y_{h,k}^t-\eta^t_k\dy f_i(\x_{h,k}^t,\y_{h,k}^t;\xi_{h,j_k^t(h)})
            \end{array}\right)\\
            &+\left(\begin{array}{c}
                \eta^t_k\left(\dx f_h(\x_{h,k}^t,\y_{h,k}^t;\xi_{h,j_k^t(h)})-\dx f_i(\x_{h,k}^t,\y_{h,k}^t;\xi_{h,j_k^t(h)})\right)  \\
                -\eta^t_k\left(\dy f_h(\x_{h,k}^t,\y_{h,k}^t;\xi_{h,j_k^t(h)})-\dy f_i(\x_{h,k}^t,\y_{h,k}^t;\xi_{h,j_k^t(h)})\right)
            \end{array}\right).
        \end{aligned}
    \end{equation*}
    
    Combining above equations and making use of Lemma \ref{le:nonexpan}, we can obtain:
    \begin{equation*}
        \begin{aligned}
            \E_{\A}\left\|\left(\begin{array}{c}
                \x_{i,k+1}^t-\x_{h,k+1}^t \\
                \y_{i,k+1}^t-\y_{h,k+1}^t 
            \end{array}\right)\right\|\leq(1-\eta^t_k\frac{L\mu}{L+\mu})\E_{\A}\left\|\left(\begin{array}{c}
                \x_{i,k}^t-\x_{h,k}^t \\
                \y_{i,k}^t-\y_{h,k}^t 
            \end{array}\right)\right\|+2\eta^t_kG.
        \end{aligned}
    \end{equation*}
    
    Recursively repeating above process from ($t$,$k+1$)-th iteration to ($t$,$0$)-th iteration, we can derive:
    \begin{equation*}
        \begin{aligned}
            &\quad\E_{\A}\left\|\left(\begin{array}{c}
                \x_{i,k+1}^t-\x_{h,k+1}^t \\
                \y_{i,k+1}^t-\y_{h,k+1}^t 
            \end{array}\right)\right\|\\
            &\leq(1-\eta^t_k\frac{L\mu}{L+\mu})\E_{\A}\left\|\left(\begin{array}{c}
                \x_{i,k}^t-\x_{h,k}^t \\
                \y_{i,k}^t-\y_{h,k}^t 
            \end{array}\right)\right\|+2\eta^t_kG\\
            &\leq\prod_{k'=k_0}^k\!(1-\eta^t_{k'}\frac{L\mu}{L+\mu})\E_{\A}\left\|\left(\begin{array}{c}
                \x_{i,k_0}^t\!-\x_{h,k_0}^t \\
                \y_{i,k_0}^t\!-\y_{h,k_0}^t \end{array}\right)\right\|+2G\sum_{k'=k_0}^k\eta^t_{k'}\prod_{k''=k'+1}^k\!(1-\eta^t_{k''}\frac{L\mu}{L+\mu})\\
            &\leq 2G\sum_{k'=0}^k\eta^t_{k'}\prod_{k''=k'+1}^k(1-\eta^t_{k''}\frac{L\mu}{L+\mu})
        \end{aligned}
    \end{equation*}
    where the last inequality is due to the fact that: $\x^t_{i,0}=\x^t_{h,0}=\x^t$ and $\y^t_{i,0}=\y^t_{h,0}=\y^t,\;\forall i,h$.
    
    Therefore we can bound the consensus term $\Delta^t_k$ for LocalSGDA as follows:
    \begin{equation*}
        \begin{aligned}
            \E_{\A}[\Delta^t_k]&\leq\frac{1}{m}\sum_{i=1}^m\frac{1}{m}\sum_{h=1}^m 2G\sum_{k'=0}^{k-1}\eta^t_{k'}\prod_{k''=k'+1}^{k-1}(1-\eta^t_{k''}\frac{L\mu}{L+\mu})\\
            &=2G\sum_{k'=0}^{k-1}\eta^t_{k'}\prod_{k''=k'+1}^{k-1}(1-\eta^t_{k''}\frac{L\mu}{L+\mu}).
        \end{aligned}
    \end{equation*}
\end{proof}

\begin{lem}\label{le:delta:localdecentra}
    For Distributed-SGDA $\A(T,K,\W)$, under Assumption \ref{as:mixing}, \ref{as:Lip:con}, \ref{as:Lip:smmoth}, we establish the following bound for the consensus term considering different values of the learning rates:
    \begin{enumerate}[wide=\parindent,label=(\roman*)]
        \item\label{fixed} for fixed learning rates:
        $$\E_{\A}[\Delta^t_k]\leq \eta G\sqrt{\frac{1}{1-\lambdabm}(k^2+\frac{2\lambdabm}{1-\lambdabm^2}K^2)};$$
        \item\label{decaying} for decaying learning rates that $\eta^t_k=\frac{1}{(k+1)^\alpha t^\beta}$:
        $$\E_{\A}[\Delta^t_k]\leq G\sqrt{\frac{1}{1-\lambdabm}\frac{k(1+\ln{k}\cdot\mathbf{1}_{\alpha=\frac{1}{2}}+\frac{1}{2\alpha-1}\cdot\mathbf{1}_{\alpha>\frac{1}{2}})}{t^{2\beta}}+(C_{\lambdabm^2}+\frac{C_{\lambdabm}}{1-\lambdabm})\frac{K(1+\ln{K}\cdot\mathbf{1}_{\alpha=\frac{1}{2}}+\frac{1}{2\alpha-1}\cdot\mathbf{1}_{\alpha>\frac{1}{2}})}{t^{2\beta}}}.$$
    \end{enumerate}
    
\end{lem}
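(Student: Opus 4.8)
The plan is to lift the per-worker consensus error into a single matrix inequality and propagate it through the two phases of one communication round. First I would record the elementary reduction $\Delta^t_k\le\frac{1}{\sqrt{m}}\m(\I_m-\Pm)[\X,\Y]^t_k\m$, which follows from Cauchy--Schwarz across the $m$ rows once one notices that row $i$ of $(\I_m-\Pm)[\X,\Y]^t_k$ is exactly $(\x^t_{i,k}-\bar{\x}^t_k,\y^t_{i,k}-\bar{\y}^t_k)^{\T}$ (here $\Pm[\X,\Y]^t_k$ replicates the average in every row). Thus it suffices to control the Frobenius norm of the deviation matrix $\bm{E}^t_k\triangleq(\I_m-\Pm)[\X,\Y]^t_k$.

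Next I would derive the evolution of $\bm{E}^t_k$. Within a round the local step $[\X,\Y]^t_{k+1}=[\X,\Y]^t_k-\eta^t_k\nablaf^t_k$ yields $\bm{E}^t_{k+1}=\bm{E}^t_k-\eta^t_k(\I_m-\Pm)\nablaf^t_k$, while the mixing step $[\X,\Y]^t=\W[\X,\Y]^t_K$, combined with $\W\Pm=\Pm\W=\Pm$ (doubly stochastic $\W$) and $(\W-\Pm)(\I_m-\Pm)=\W-\Pm$, gives $\bm{E}^t_0=(\W-\Pm)\bm{E}^{t-1}_K$. Since the workers start identically ($\bm{E}^1_0=0$), unrolling both recursions produces the single accumulated representation
\[
\bm{E}^t_k=-\sum_{s=1}^{t-1}(\W-\Pm)^{t-s}\bm{g}^s-\sum_{k'=0}^{k-1}\eta^t_{k'}(\I_m-\Pm)\nablaf^t_{k'},\qquad \bm{g}^s\triangleq\sum_{k'=0}^{K-1}\eta^s_{k'}(\I_m-\Pm)\nablaf^s_{k'}.
\]
The spectral fact $(\W-\Pm)^{j}=\W^{j}-\Pm$, hence $\m(\W-\Pm)^{j}\m_2=\lambdabm^{j}$ by Lemma~\ref{le:spect}, turns each past-round factor into a geometric weight $\lambdabm^{t-s}$.

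I would then split this into the current-round part $B\triangleq\sum_{k'=0}^{k-1}\eta^t_{k'}(\I_m-\Pm)\nablaf^t_{k'}$ and the past-round part $A\triangleq\sum_{s=1}^{t-1}(\W-\Pm)^{t-s}\bm{g}^s$, and apply Young's inequality with the $\lambdabm$-tuned parameter $\theta=\frac{1-\lambdabm}{\lambdabm}$, giving $\m\bm{E}^t_k\m^2\le\frac{1}{1-\lambdabm}\m B\m^2+\frac{1}{\lambdabm}\m A\m^2$; this is precisely the mechanism producing the $\frac{1}{1-\lambdabm}$ prefactors in the statement. Each piece is controlled by Cauchy--Schwarz and the uniform gradient bound $\m(\I_m-\Pm)\nablaf^s_{k'}\m\le\m\nablaf^s_{k'}\m\le\sqrt{m}G$ (Assumption~\ref{as:Lip:con}); crucially the resulting estimate is deterministic, so $\E_{\A}$ is cosmetic and no variance or independence argument is needed. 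For fixed rates this gives $\m B\m^2\le\eta^2k^2mG^2$ and, summing the geometric series in the expansion of $\m A\m^2$, $\m A\m^2\le\frac{\lambdabm^2}{(1-\lambdabm)^2}\eta^2K^2mG^2\le\frac{2\lambdabm^2}{(1-\lambdabm)(1-\lambdabm^2)}\eta^2K^2mG^2$; dividing by $\lambdabm$, adding, taking the square root and multiplying by $\frac{1}{\sqrt m}$ recovers case~\ref{fixed}.

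For the decaying case the learning rates carry the extra polynomial factors $(k'+1)^{-\alpha}$ and $s^{-\beta}$, so $\m B\m^2\le k\,mG^2t^{-2\beta}\sum_{j=1}^{k}j^{-2\alpha}$, and the elementary tail estimate $\sum_{j=1}^{k}j^{-2\alpha}\le 1+\ln k\cdot\mathbf{1}_{\alpha=\frac{1}{2}}+\frac{1}{2\alpha-1}\cdot\mathbf{1}_{\alpha>\frac{1}{2}}$ reproduces the within-round factor in the statement. The genuinely technical step, and the main obstacle, is $\m A\m^2$: it is a convolution of the geometric decay $\lambdabm^{t-s}$ with the polynomially decaying round masses $\m\bm{g}^s\m^2\lesssim K\,mG^2 s^{-2\beta}(\ldots)$, and extracting the exact constants $C_{\lambdabm}$, $C_{\lambdabm^2}$ forces one to bound sums of the type $\sum_{j\ge1}\lambdabm^{j}j^{\alpha}$ by isolating the maximizing term $\max_{x>0}x^{\alpha}\lambdabm^{x}=\frac{(\alpha/e)^\alpha}{(-\ln\lambdabm)^\alpha}$ from a residual geometric sum, which is exactly the origin of the three pieces in the definition of $C_{\lambdabm}$. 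Assembling these bounds and again applying Jensen through $\E_{\A}[\Delta^t_k]\le\frac{1}{\sqrt m}\sqrt{\E_{\A}\m\bm{E}^t_k\m^2}$ yields case~\ref{decaying}; setting $\W=\Pm$ (so $\lambdabm=0$ and $A$ vanishes) recovers Lemma~\ref{le:delta:local}.
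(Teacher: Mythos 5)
Your proposal is correct, and its skeleton is the paper's own: reduce $\Delta^t_k$ by Cauchy--Schwarz to $\tfrac{1}{\sqrt m}\|(\I_m-\Pm)[\X,\Y]^t_k\|_{\F}$, unroll the local-update and mixing recursions into one accumulated sum weighted by powers of $\W-\Pm$, and invoke $\m\W^\tau-\Pm\m_2=\lambdabm^\tau$ (Lemma~\ref{le:spect}) together with the deterministic bound $\|\nablaf^s_{k'}\|_{\F}\le\sqrt m G$. (The paper writes the same representation as $(\I_m-\Pm)\sum_{\tau\ge0}\W^\tau\Xi^{t-\tau}_{k^t_\tau}$ with $\Xi^t_k=\sum_h\eta^t_h\nablaf^t_h$, and uses $(\I_m-\Pm)\W^\tau=\W^\tau-\Pm$, keeping the current round as the $\tau=0$ term of a single sum.) Where you genuinely diverge is the treatment of the squared norm. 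The paper expands the full square via the Frobenius inner product and symmetrizes the cross terms with $2ab\le a^2+b^2$, arriving at the three-piece bound $\tfrac{1}{1-\lambdabm}\|\Xi^t_k\|^2_{\F}+\sum_{\tau\ge1}\lambdabm^{2\tau}\|\Xi^{t-\tau}_K\|^2_{\F}+\tfrac{1}{1-\lambdabm}\sum_{\tau\ge1}\lambdabm^{\tau}\|\Xi^{t-\tau}_K\|^2_{\F}$, which is precisely how \emph{both} convolution constants $C_{\lambdabm^2}$ and $C_{\lambdabm}$ enter the decaying-rate statement. You instead isolate the current round (the only block with no spectral decay) and apply Young's inequality with $\theta=\tfrac{1-\lambdabm}{\lambdabm}$, then control the past-round block $A$ by the triangle inequality plus geometric/convolution sums; carried out, this yields $\tfrac{1}{\lambdabm}\|A\|_{\F}^2\le\tfrac{\lambdabm}{(1-\lambdabm)^2}\eta^2K^2mG^2$ in the fixed case and $\tfrac{1}{\lambdabm}\|A\|_{\F}^2\le\tfrac{C_{\lambdabm}}{1-\lambdabm}\cdot\tfrac{KmG^2(\cdots)}{t^{2\beta}}$ in the decaying case, both dominated by the lemma's stated constants (e.g.\ $\tfrac{\lambdabm}{(1-\lambdabm)^2}\le\tfrac{2\lambdabm}{(1-\lambdabm)(1-\lambdabm^2)}$ since $1+\lambdabm\le2$), so the lemma follows, and in fact with marginally tighter constants. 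One inaccuracy in your sketch: under your split $C_{\lambdabm^2}$ never needs to appear — only $C_{\lambdabm}$ arises from the symmetrized convolution; had you expanded $\|A\|_{\F}^2$ with cross terms instead, the Young prefactor $\tfrac1\lambdabm$ would turn the diagonal piece into $C_{\lambdabm^2}/\lambdabm$, which is \emph{worse} than the stated $C_{\lambdabm^2}$, so you should stick with the triangle-inequality treatment of $A$. Two small caveats: your Young step degenerates at $\lambdabm=0$ (but then $\W=\Pm$ by Lemma~\ref{le:spect}, $A\equiv0$, and no split is needed, as you note), and your norm symbol $\m\cdot\m$ should throughout be read as the Frobenius norm $\|\cdot\|_{\F}$ rather than the spectral norm the paper denotes by $\m\cdot\m_2$.
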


\begin{proof}[\textbf{Proof of Lemma~\ref{le:delta:localdecentra}}]
    Employing the notations in matrix form as a concatenation of the varibles introduced in \ref{sub:denote}, we can conduct:
    \begin{equation*}
        \begin{aligned}
            &\quad\;\sum_{i=1}^m\left\|\left(\begin{array}{c}
                \x^t_{i,k}-\bar{\x}^t_k\\
                \y^t_{i,k}-\bar{\y}^t_k 
            \end{array}\right)\right\|^2\\
            &=\left\|[\X,\Y]^t_k-\Pm[\X,\Y]^t_k\right\|^2\\
            &=\left\|(\I_m-\Pm)[\X,\Y]^t_k\right\|^2\\
            &=\left\|(\I_m-\Pm)\left([\X,\Y]^t_{k-1}-\eta^t_{k-1}\nablaf^t_{k-1}\right)\right\|^2\\
            &=\left\|(\I_m-\Pm)\left([\X,\Y]^t_0-(\eta^t_{k-1}\nablaf^t_{k-1}+...+\eta^t_{k-1}\nablaf^t_0)\right)\right\|^2\\
            &=\left\|(\I_m-\Pm)\left([\X,\Y]^{t-1}-(\eta^t_{k-1}\nablaf^t_{k-1}+...+\eta^t_{k-1}\nablaf^t_0)\right)\right\|^2\\
            &=\left\|(\I_m-\Pm)\left(\W[\X,\Y]^{t-1}_K-(\eta^t_{k-1}\nablaf^t_{k-1}+...+\eta^t_{k-1}\nablaf^t_0)\right)\right\|^2\\
            &=\left\|(\I_m-\Pm)\left(\W\left([\X,\Y]^{t-1}_0-(\eta^{t-1}_{K-1}\nablaf^{t-1}_{K-1}+...+\eta^{t-1}_0\nablaf^{t-1}_0)\right)-(\eta^t_{k-1}\nablaf^t_{k-1}+...+\eta^t_0\nablaf^t_0)\right)\right\|^2\\
            &=\left\|(\I_m-\Pm)\left(\W[\X,\Y]^{t-2}-\W(\eta^{t-1}_{K-1}\nablaf^{t-1}_{K-1}+...+\eta^{t-1}_0\nablaf^{t-1}_0)-(\eta^t_{k-1}\nablaf^t_{k-1}+...+\eta^t_0\nablaf^t_0)\right)\right\|^2\\
            &=\left\|(\I_m-\Pm)\left(\W^{t-1}[\X,\Y]^0-\left((\eta^t_{k-1}\nablaf^t_{k-1}+...+\eta^t_0\nablaf^t_0)+\W(\eta^{t-1}_{K-1}\nablaf^{t-1}_{K-1}+...+\eta^{t-1}_0\nablaf^{t-1}_0)\right.\right.\right.\\
            &\left.\left.\left.\quad+...+\W^{t-1}(\eta^1_{K-1}\nablaf^1_{K-1}+...+\eta^1_0\nablaf^1_0)\right)\right)\right\|^2\\
            &=\left\|-(\I_m-\Pm)(\Xi^t_k+\W\Xi^{t-1}_K+...+\W^{t-1}\Xi^1_K)\right\|^2
        \end{aligned}
    \end{equation*}
    where we recursively use the update rule all above and define $\Xi^t_k\triangleq\sum_{h=0}^{k-1}\eta^t_h\nablaf^t_h$ in the last equation.

    Continuing with the evaluation, we can get:
    \begin{equation}\label{eq:1}
        \begin{aligned}
            &\quad\;\left\|(\I_m-\Pm)\sum_{\tau=0}^{t-1}\W^{\tau}\Xi^{t-\tau}_{k^t_{\tau}}\right\|^2\\
            &\overset{(a)}{=}\left\|\sum_{\tau=0}^{t-1}\W^{\tau}\Xi^{t-\tau}_{k^t_{\tau}}-\Pm\sum_{\tau=0}^{t-1}\Xi^{t-\tau}_{k^t_{\tau}}\right\|^2\\
            &=\left\|\sum_{\tau=0}^{t-1}(\W^\tau-\Pm)\Xi^{t-\tau}_{k^t_\tau}\right\|^2\\
            &\overset{(b)}{=}\sum_{\tau=0}^{t-1}\left\|(\W^\tau-\Pm)\Xi^{t-\tau}_{k^t_\tau}\right\|^2+\sum_{\tau\neq\tau'}^{t-1}\Tr\left(\left((\W^\tau-\Pm)\Xi^{t-\tau}_{k^t_\tau}\right)^{\T}\left((\W^{\tau'}-\Pm)\Xi^{t-\tau'}_{k^t_{\tau'}}\right)\right)
        \end{aligned}
    \end{equation}
    where $k^t_{\tau}=k\cdot\mathbf{1}_{\tau=0}+K\cdot\mathbf{1}_{\tau=1,...,t-1}$, $(a)$ follows from the property of the mixing matrix that $\Pm\W=\Pm=\W\Pm$. $(b)$ follows from the definition~\ref{def:inner} of Frobenius inner product.

    For the first term, we operate as follows:
    \begin{equation}\label{eq:2}
        \begin{aligned}
            \sum_{\tau=0}^{t-1}\left\|(\W^\tau-\Pm)\Xi^{t-\tau}_{k^t_\tau}\right\|^2\leq\sum_{\tau=0}^{t-1}\m\W^\tau-\Pm\m_2^2\|\Xi^{t-\tau}_{k^t_\tau}\|_{\F}^2=\sum_{\tau=0}^{t-1}\lambdabm^{2\tau}\|\Xi^{t-\tau}_{k^t_\tau}\|_{\F}^2
        \end{aligned}
    \end{equation}
    where the first inequality follows from Property~\ref{pro:matrix} and the second equality follows from Lemma~\ref{le:spect}.

    For the second term, we proceed as follows:
    \begin{equation}\label{eq:3}
        \begin{aligned}
            &\quad \sum_{\tau\neq\tau'}^{t-1}\Tr\left(\left((\W^\tau-\Pm)\Xi^{t-\tau}_{k^t_\tau}\right)^{\T}\left((\W^{\tau'}-\Pm)\Xi^{t-\tau'}_{k^t_{\tau'}}\right)\right)
            \\
            &\leq\sum_{\tau\neq\tau'}^{t-1}\|(\W^\tau-\Pm)\Xi^{t-\tau}_{k^t_\tau}\|_{\F}\|(\W^{\tau'}-\Pm)\Xi^{t-\tau'}_{k^t_{\tau'}}\|_{\F}\\
            &\leq\sum_{\tau\neq\tau'}^{t-1}\lambdabm^{\tau}\|\Xi^{t-\tau}_{k^t_\tau}\|_{\F}\cdot\lambdabm^{\tau'}\|\Xi^{t-\tau'}_{k^t_{\tau'}}\|_{\F}\\
            &\leq\frac{1}{2}\sum_{\tau\neq\tau'}^{t-1}\lambdabm^{\tau+\tau'}\left(\|\Xi^{t-\tau}_{k^t_\tau}\|_{\F}^2+\|\Xi^{t-\tau'}_{k^t_{\tau'}}\|_{\F}^2\right)\\
            &=\frac{1}{2}\sum_{\tau\neq\tau'}^{t-1}\lambdabm^{\tau+\tau'}\|\Xi^{t-\tau}_{k^t_\tau}\|_{\F}^2+\frac{1}{2}\sum_{\tau\neq\tau'}^{t-1}\lambdabm^{\tau+\tau'}\|\Xi^{t-\tau'}_{k^t_{\tau'}}\|_{\F}^2\\
            &=\sum_{\tau\neq\tau'}^{t-1}\lambdabm^{\tau+\tau'}\|\Xi^{t-\tau}_{k^t_\tau}\|_{\F}^2.
        \end{aligned}
    \end{equation}

    Substituting \eqref{eq:2} and \eqref{eq:3} back into inequality~\eqref{eq:1}, we can derive:
    \begin{equation*}
        \begin{aligned}
            &\quad\;\left\|(\I_m-\Pm)\sum_{\tau=0}^{t-1}\W^{\tau}\Xi^{t-\tau}_{k^t_{\tau}}\right\|^2\\
            &\leq\sum_{\tau=0}^{t-1}\lambdabm^{2\tau}\|\Xi^{t-\tau}_{k^t_\tau}\|_{\F}^2+\sum_{\tau=0}^{t-1}\lambdabm^\tau\|\Xi^{t-\tau}_{k^t_\tau}\|_{\F}^2\sum_{\tau'\neq\tau}^{t-1}\lambdabm^{\tau'}\\
            &=\|\Xi^t_k\|^2_{\F}+\sum_{\tau=1}^{t-1}\lambdabm^{2\tau}\|\Xi^{t-\tau}_K\|^2_{\F}+\|\Xi^t_k\|_{\F}^2\sum_{\tau'=1}^{t-1}\lambdabm^{\tau'}+\sum_{\tau=1}^{t-1}\lambdabm^\tau\|\Xi^{t-\tau}_K\|_{\F}^2\sum_{\tau'\neq\tau}^{t-1}\lambdabm^{\tau'}\\
            &\leq\frac{1}{1-\lambdabm}\|\Xi^t_k\|^2_{\F}+\sum_{\tau=1}^{t-1}\lambdabm^{2\tau}\|\Xi^{t-\tau}_K\|^2_{\F}+\frac{1}{1-\lambdabm}\sum_{\tau=1}^{t-1}\lambdabm^\tau\|\Xi^{t-\tau}_K\|_{\F}^2
        \end{aligned}
    \end{equation*}
    where the second last equality is due to the definition of $k^t_\tau$ below Eq~\eqref{eq:1}.

    Then, we will discuss different conditions when learning rates are fixed or decaying during the $t$-th communication round or the $k$-th local update iteration.
    
    \begin{enumerate}[wide=\parindent,label=(\roman*)]
        \item \textbf{Learning rates are fixed on the $t$ -th round and on the $k$ -th iteration:}\label{con1}
        \begin{gather*}
            \|\Xi^t_k\|^2_{\F}=\left\|\sum_{h=0}^{k-1}\eta^t_h\nablaf^t_h\right\|^2_{\F}\leq mk^2\eta^2G^2,\\
            \sum_{\tau=1}^{t-1}\lambdabm^{2\tau}\|\Xi^{t-\tau}_K\|^2_{\F}\leq \sum_{\tau=1}^{t-1}\lambdabm^{2\tau}mK^2\eta^2G^2\leq\frac{\lambdabm^2}{1-\lambdabm^2}\cdot mK^2\eta^2G^2,\\
            \sum_{\tau=1}^{t-1}\lambdabm^\tau\|\Xi^{t-\tau}_K\|_{\F}^2\leq\sum_{\tau=1}^{t-1}\lambdabm^\tau mK^2\eta^2G^2\leq\frac{\lambdabm}{1-\lambdabm}\cdot mK^2\eta^2G^2.
        \end{gather*}
        According to the Cauchy-Schwarz inequality, we have:
        \begin{equation*}
            \begin{aligned}
                \Delta^t_k=\frac{1}{m}\sum_{i=1}^m\left\|\left(\begin{array}{cc}
                    \x^t_{i,k}-\bar{\x}^t_k  \\
                    \y^t_{i,k}-\bar{\y}^t_k 
                \end{array}\right)\right\|&\leq\frac{1}{m}\sqrt{m}\sqrt{\sum_{i=1}^m\left\|\left(\begin{array}{c}
                    \x^t_{i,k}-\bar{\x}^t_k\\
                    \y^t_{i,k}-\bar{\y}^t_k 
                \end{array}\right)\right\|^2}\\&\leq\eta G\sqrt{\frac{1}{1-\lambdabm}(k^2+\frac{2\lambdabm}{1-\lambdabm^2}K^2)}.
            \end{aligned}
        \end{equation*}
        \item \textbf{Learning rates are fixed on the $t$-th round while decaying on the $k$-th iteration, i.e., $\eta^t_k=\frac{1}{(k+1)^\alpha}$:}\label{con2}
        \begin{gather*}
            \|\Xi^t_k\|^2_{\F}=\left\|\sum_{h=0}^{k-1}\eta^t_h\nablaf^t_h\right\|^2_{\F}\leq mkG^2\sum_{h=0}^{k-1}\frac{1}{(h+1)^{2\alpha}}\leq mkG^2\left(1+\ln{k}\cdot\mathbf{1}_{\alpha=\frac{1}{2}}+\frac{1}{2\alpha-1}\cdot\mathbf{1}_{\alpha>\frac{1}{2}}\right),\\
            \sum_{\tau=1}^{t-1}\lambdabm^{2\tau}\|\Xi^{t-\tau}_K\|^2_{\F}\leq\sum_{\tau=1}^{t-1}\lambdabm^{2\tau}mKG^2\sum_{h=0}^{K-1}\frac{1}{(h+1)^{2\alpha}}\leq\frac{\lambdabm^2}{1-\lambdabm^2}mKG^2\left(1+\ln{K}\cdot\mathbf{1}_{\alpha=\frac{1}{2}}+\frac{1}{2\alpha-1}\cdot\mathbf{1}_{\alpha>\frac{1}{2}}\right),\\
            \sum_{\tau=1}^{t-1}\lambdabm^\tau\|\Xi^{t-\tau}_K\|_{\F}^2\leq\sum_{\tau=1}^{t-1}\lambdabm^\tau mKG^2\sum_{h=0}^{K-1}\frac{1}{(h+1)^{2\alpha}}\leq\frac{\lambdabm}{1-\lambdabm}mKG^2\left(1+\ln{K}\cdot\mathbf{1}_{\alpha=\frac{1}{2}}+\frac{1}{2\alpha-1}\cdot\mathbf{1}_{\alpha>\frac{1}{2}}\right),\\
            \Delta^t_k\leq G\sqrt{\frac{1}{1-\lambdabm}\left(k(1+\ln{k}\cdot\mathbf{1}_{\alpha=\frac{1}{2}}+\frac{1}{2\alpha-1}\cdot\mathbf{1}_{\alpha>\frac{1}{2}})+\frac{2\lambdabm}{1-\lambdabm^2}K(1+\ln{K}\cdot\mathbf{1}_{\alpha=\frac{1}{2}}+\frac{1}{2\alpha-1}\cdot\mathbf{1}_{\alpha>\frac{1}{2}})\right)}.
        \end{gather*}
        \item \textbf{Learning rates are decaying on the $t$-th round while fixed on the $k$-th iteration, i.e., $\eta^t_k=\frac{1}{t^\beta}$:}\label{con3}
        \begin{gather*}
            \|\Xi^t_k\|^2_{\F}=\left\|\sum_{h=0}^{k-1}\eta^t_h\nablaf^t_h\right\|^2_{\F}\leq mk^2G^2\frac{1}{t^{2\beta}},\\
            \sum_{\tau=1}^{t-1}\lambdabm^{2\tau}\|\Xi^{t-\tau}_K\|^2_{\F}\leq\sum_{\tau=1}^{t-1}\lambdabm^{2\tau}mK^2G^2\frac{1}{(t-\tau)^{2\beta}}\leq mK^2G^2\frac{C_{\lambdabm^2}}{t^{2\beta}},\\
            \sum_{\tau=1}^{t-1}\lambdabm^\tau\|\Xi^{t-\tau}_K\|_{\F}^2\leq\sum_{\tau=1}^{t-1}\lambdabm^{\tau}mK^2G^2\frac{1}{(t-\tau)^{2\beta}}\leq mK^2G^2\frac{C_{\lambdabm}}{t^{2\beta}},\\
            \Delta^t_k\leq G\sqrt{\frac{1}{1-\lambdabm}\frac{k^2}{t^{2\beta}}+C_{\lambdabm^2}\frac{K^2}{t^{2\beta}}+\frac{C_{\lambdabm}}{1-\lambdabm}\frac{K^2}{t^{2\beta}}}.
        \end{gather*}
        \item \textbf{Learning rates are decaying on the $t$-th round and on the $k$-th iteration, i.e., $\eta^t_k=\frac{1}{(k+1)^\alpha t^\beta}$:}\label{con4}
        \begin{gather*}
            \|\Xi^t_k\|^2_{\F}=\left\|\sum_{h=0}^{k-1}\eta^t_h\nablaf^t_h\right\|^2_{\F}\leq mkG^2\frac{1}{t^{2\beta}}\left(1+\ln{k}\cdot\mathbf{1}_{\alpha=\frac{1}{2}}+\frac{1}{2\alpha-1}\cdot\mathbf{1}_{\alpha>\frac{1}{2}}\right),\\
            \sum_{\tau=1}^{t-1}\lambdabm^{2\tau}\|\Xi^{t-\tau}_K\|^2_{\F}\leq mKG^2\frac{C_{\lambdabm^2}}{t^{2\beta}}\left(1+\ln{K}\cdot\mathbf{1}_{\alpha=\frac{1}{2}}+\frac{1}{2\alpha-1}\cdot\mathbf{1}_{\alpha>\frac{1}{2}}\right),\\
            \sum_{\tau=1}^{t-1}\lambdabm^\tau\|\Xi^{t-\tau}_K\|_{\F}^2\leq mKG^2\frac{C_{\lambdabm}}{t^{2\beta}}\left(1+\ln{K}\cdot\mathbf{1}_{\alpha=\frac{1}{2}}+\frac{1}{2\alpha-1}\cdot\mathbf{1}_{\alpha>\frac{1}{2}}\right),\\
            \Delta^t_k\leq G\sqrt{\frac{1}{1-\lambdabm}\frac{k(1+\ln{k}\cdot\mathbf{1}_{\alpha=\frac{1}{2}}+\frac{1}{2\alpha-1}\cdot\mathbf{1}_{\alpha>\frac{1}{2}})}{t^{2\beta}}+(C_{\lambdabm^2}+\frac{C_{\lambdabm}}{1-\lambdabm})\frac{K(1+\ln{K}\cdot\mathbf{1}_{\alpha=\frac{1}{2}}+\frac{1}{2\alpha-1}\cdot\mathbf{1}_{\alpha>\frac{1}{2}})}{t^{2\beta}}}.
        \end{gather*}
    \end{enumerate}
    where the notation of $C_{\lambdabm^2}$ as well as $C_{\lambdabm}$ refers to Lemma 3 in~\cite{zhu2023stability}.

    Since condition~\ref{con4} involves both the decaying learning rates on the $t$-th round and on the $k$-th iteration which are discussed in condition~\ref{con2} and condition~\ref{con3} respectively, we will only consider two situations whether learning rates are fixed or decaying in the subsequent texts.

    Besides, we can set $\lambdabm=0, C_{\lambdabm}=C_{\lambdabm^2}=0$ which symbolizes the LocalSGDA, and our result here coincides with the counterpart of LocalSGDA as proved in Lemma \ref{le:delta:local}. 
\end{proof}

\section{Proof of the Connection}\label{sec:connection}
We make some adjustments to the proof process presented in \citep{zhu2023stability} for better understanding and extend existing results to the excess primal generalization gap.

\begin{proof}[\textbf{Proof of Theorem~\ref{thm:connection}}]\label{pf:connection}
    Here we set up the distributed neighbouring dataset $\S,\S'$ following these stages. For different datasets $\S=\{\S_1,...,\S_m\}$ and $\S'=\{\S'_1,...,\S'_m\}$ with local datasets $\S_i=\{\xi_{i,1},...,\xi_{i,n}\}$ and $\S'_i=\{\xi'_{i,1},...,\xi'_{i,n}\}$ respectively. Let $\bm{r}\triangleq\{r_1,...,r_m\}$ and $\S^{(\bm{r})}\triangleq\{\S^{(r_1)}_1,...,\S^{(r_m)}_m\}$, where $\S^{(r_i)}_i=\{\xi_{i,1},...,\xi_{i,r_i-1},\xi'_{i,r_i},\xi_{i,r_i+1},...,\xi_{i,n}\}$ and $1\leq r_i\leq n,\forall i=1,...,m$. Then the distributed neighbouring dataset $\S,\S^{(\bm{r})}$ has been constructed. And in the following sections, we adopt this method to construct the distributed neighbouring dataset without misunderstanding.

    \textit{\textbf{Weak PD generalization gap~\ref{connection:weak}.}} Rearrange the order of the weak PD generaliztaion gap:
    \begin{equation*}
        \begin{aligned}
            &\quad\zeta^{\mathrm{w}}_{\mathrm{gen}}(\A,\S)\\
            &=\!\!(\!\sup_{\y'\!\in\yset}\!\E[F(\A_{\x}(\S),\y')]\!\!-\!\!\!\inf_{\x'\!\in\xset}\!\!\E[F(\x'\!,\A_{\y}(\S))])\!\!-\!\!(\!\sup_{\y'\!\in\yset}\!\E[\FS(\A_{\x}(\S)),\y')]\!\!-\!\!\inf_{\x'\!\in\xset}\!\! 
           \E[\FS(\x'\!,\A_{\y}(\S))])\\
            &=\!\!\sup_{\y'\!\in\yset}\!\E[F(\A_{\x}(\S),\y')]\!-\!\sup_{\y'\!\in\yset}\!\E[\FS(\A_{\x}(\S)),\y')]\!+\!\inf_{\x'\!\in\xset}\!\!\E[\FS(\x'\!,\A_{\y}(\S))]\!-\!\inf_{\x'\!\in\xset}\!\!\E[F(\x'\!,\A_{\y}(\S))]\\
            &\leq\!\sup_{\y'\in\yset}\!\left(\E_{\A,\S}[F(\A_{\x}(\S),\y')\!-\!\FS(\A_{\x}(\S),\y')]\right)\!+\!\sup_{\x'\in\xset}\!\left(\E_{\A,\S}[\FS(\x'\!,\A_{\y}(\S))\!-\!F(\x'\!,\A_{\y}(\S))]\right).
        \end{aligned}
    \end{equation*}

    For the first term, we have:
    \begin{equation}\label{eq:00}
        \begin{aligned}
            &\quad\E_{\A,\S}[F(\A_{\x}(\S),\y')\!-\!\FS(\A_{\x}(\S),\y')]\\
            &=\!\frac{1}{n^m}\!\!\sum_{r_1=1}^n\!\!\cdots\!\!\!\sum_{r_m=1}^n\!\!\E_{\A,\S,\S'}[F(\A_{\x}(\S^{(\bm{r})}),\y')]-\E_{\A,\S}[\FS(\A_{\x}(\S),\y')]\\
            &=\!\frac{1}{n^m}\!\!\sum_{\bm{r}}\E_{\A,\S,\S'}[\frac{1}{m}\!\sum_{i=1}^m\! f_i(\A_{\x}\!(\S^{(\bm{r})}),\y';\xi_{i,r_i})\!-\!\frac{1}{m}\!\sum_{i=1}^m\!f_i(\A_{\x}\!(\S),\y';\xi_{i,r_i})]\\
            &\leq\!\frac{1}{n^m}\!\!\sum_{\bm{r}}\E_{\A,\S,\S'}[\frac{1}{m}\sum_{i=1}^m G\|\A_{\x}(\S^{(\bm{r})})-\A_{\x}(\S)\|]\\
            &=\!G\E_{\A,\S,\S'}\|\A_{\x}(\S^{(\bm{r})})-\A_{\x}(\S)\|
        \end{aligned}
    \end{equation}
    where the first equation is due to the symmetric data distribution of $\xi_{i,l_i}\in\S_i\sim\D_i$ and $\xi'_{i,l_i}\in\S'_i\sim\D_i$ and there are $n^m$ permutations of $\bm{r}=\{r_1,...,r_m\}$. The second equation follows from the fact that $\xi_{i,r_i}\notin\S^{(\bm{r})}$ so that $\xi_{i,r_i}$ is independent from $\S^{\bm{r}}$ and we abbreviate $\sum_{r_1=1}^n\cdots\sum_{r_m=1}^n$ as $\sum_{\bm{r}}$ for brevity. [Remark. The independence is shown in this way: $\E_{\A,\S}[f(\A(\S);\xi)]=\E_{\xi}\E_{\A,\S|\xi}[f(\A(\S);\xi)|\xi]=\E_{\xi}\E_{\A,\S}[f(\A(\S);\xi)]=\E_{\A,\S}[F(\A(\S))].$]

    The second term can be valued in the same way:
    \begin{equation*}
        \E_{\A,\S}[\FS(\x'\!,\A_{\y}(\S))\!-\!F(\x'\!,\A_{\y}(\S))]\leq G\E_{\A,\S,\S'}\|\A_{\y}(\S^{(\bm{r})})-\A_{\y}(\S)\|.
    \end{equation*}

    Therefore we can derive the weak PD generalization gap:
    \begin{equation*}
        \zeta^{\mathrm{w}}_{\mathrm{gen}}(\A,\S)\leq\sqrt{2}G\sup_{\S,\S'}\E_{\A}\left\|\left(\begin{array}{c}
        \A_{\x}(\S)-\A_{\x}(\S^{(\bm{r})})   \\
        \A_{\y}(\S)-\A_{\y}(\S^{(\bm{r})})
    \end{array}\right)\right\|\leq\sqrt{2}G\epsilon.
    \end{equation*}

    Besides, from the second equation in \eqref{eq:00}, we can conduct the weak PD generalization gap directly from weak stability:
    \begin{align*}
        \zeta^{\mathrm{w}}_{\mathrm{gen}}(\A,\S)&\leq\!\sup_{\y'\in\yset}\!\left(\E_{\A,\S}[F(\A_{\x}(\S),\y')\!-\!\FS(\A_{\x}(\S),\y')]\right)\!+\!\sup_{\x'\in\xset}\!\left(\E_{\A,\S}[\FS(\x'\!,\A_{\y}(\S))\!-\!F(\x'\!,\A_{\y}(\S))]\right)\\
        &=\!\sup_{\y'\in\yset}\!\left(\!\frac{1}{n^m}\!\!\sum_{\bm{r}}\E_{\A,\S,\S'}[\frac{1}{m}\!\sum_{i=1}^m\! f_i(\A_{\x}\!(\S^{(\bm{r})}),\y';\xi_{i,r_i})\!-\!\frac{1}{m}\!\sum_{i=1}^m\!f_i(\A_{\x}\!(\S),\y';\xi_{i,r_i})]\right)\!\\
        &\quad+\!\sup_{\x'\in\xset}\!\left(\!\frac{1}{n^m}\!\!\sum_{\bm{r}}\E_{\A,\S,\S'}[\frac{1}{m}\!\sum_{i=1}^m\! f_i(\x'\!,\A_{\y}(\S^{(\bm{r})});\xi_{i,r_i})\!-\!\frac{1}{m}\!\sum_{i=1}^m\!f_i(\x'\!,\A_{\y}(\S^{(\bm{r})});\xi_{i,r_i})]\right)\\
        &\leq\sup_{\xi_{i,l_{i}}}\big[\sup_{\y'\in\yset}\E_{\A}[\frac{1}{m}\sum_{i=1}^m\big(f_i(\A_{\x}(\S),\y';\xi_{i,l_i})-f_i(\A_{\x}(\S'),\y';\xi_{i,l_i})\\
        &\quad+f_i(\x',\A_{\y}(\S);\xi_{i,l_i})-f_i(\x',\A_{\y}(\S');\xi_{i,l_i})\big)]\big].
    \end{align*}

    \textit{\textbf{Excess primal generalization gap~\ref{connection:ex}.}}
    Observing the structure of the excess primal generalization gap:
    \begin{equation}\label{eq:41}
        \begin{aligned}
            \zeta^{\mathrm{ep}}_{\mathrm{gen}}(\A,\S)&=\Delta^{\!\mathrm{ep}}(\A_{\x}(\S))-\Delta^{\!\mathrm{ep}}_{\S}(\A_{\x}(\S))\\
            &=\E[R(\A_{\x}(\S))-\inf_{\x'\in\xset}R(\x')]-\left(\E[R_{\S}(\A_{\x}(\S))-\inf_{\x'\in\xset}R_{\S}(\x')]\right)\\
            &=\left(\E_{\A,\S}[R(\A_{\x}(\S))-R_{\S}(\A_{\x}(\S))]\right)+\left(\E_{\A,\S}[\inf_{\x'\in\xset}R_{\S}(\x')-\inf_{\x'\in\xset}R(\x')]\right).
        \end{aligned}
    \end{equation}

    Let $(\x^*,\y^*)$ be the saddle point of $F$, i.e., $\forall \x\in\xset,\y\in\yset$, we have: $F(\x^*,\y)\leq F(\x^*,\y^*)\leq F(\x,\y^*)$. Regarding the primal risk $R(\x)=\sup_{\y'\in\yset}F(\x,\y')$ and the primal empirical risk $R_{\S}(\x)=\sup_{\y'\in\yset}F_{\S}(\x,\y')$, let $\y^*_{\S}=\arg\sup_{\y'\in\yset}F(\A_{\x}(\S),\y')$ and $\hat{\y}^*_{\S}=\arg\sup_{\y'\in\yset} F_{\S}(\x^*,\y')$. Proceeding with the first term:
    \begin{equation}\label{eq:42}
        \begin{aligned}
            &\quad\;\E_{\A,\S}[R(\A_{\x}(\S))-R_{\S}(\A_{\x}(\S))]\\
            &=\E_{\A,\S}[\sup_{\y'\in\yset}F(\A_{\x}(\S),\y')-\sup_{\y'\in\yset}F_{\S}(\A_{\x}(\S),\y')]\\
            &\overset{(a)}{=}\frac{1}{n^m}\sum_{r_1=1}^n\cdots\sum_{r_m=1}^n\E_{\A,\S,\S'}[F(\A_{\x}(\S^{(\bm{r})}),\y^*_{\S^{(\bm{r})}})]-\E_{\A,\S}[\sup_{\y'\in\yset}F_{\S}(\A_{\x}(\S),\y')]\\
            &\overset{(b)}{\leq}\frac{1}{n^m}\sum_{\bm{r}}\E_{\A,\S,\S'}[\frac{1}{m}\sum_{i=1}^m f_i(\A_{\x}(\S^{(\bm{r})}),\y^*_{\S^{(\bm{r})}};\xi_{i,r_i})-\frac{1}{m}\sum_{i=1}^mf_i(\A_{\x}(\S),\y^*_{\S};\xi_{i,r_i})]\\
            &\leq G\E_{\A}[\sup_{\S,\S'}\left\|\left(\begin{array}{c}
                \A_{\x}(\S^{(\bm{r})})-\A_{\x}(\S) \\
                \y^*_{\S^{(\bm{r})}}-\y^*_{\S} 
            \end{array}\right)\right\|]\\
            &\overset{(c)}{\leq}G\sqrt{1+\frac{L^2}{\mu^2}}\E_{\A}[\sup_{\S,\S'}\|\A_{\x}(\S^{(\bm{r})})-\A_{\x}(\S)\|]\\
            &\leq G\sqrt{1+\frac{L^2}{\mu^2}}\epsilon
        \end{aligned}
    \end{equation}
    where equation $(a)$ owes to the symmetric data distribution of $\S$ and $\S^{(\bm{r})}$. And the inequality $(b)$ results from the fact that $\xi_{i,r_i}$ is independent from $\A_{\x}(\S^{(\bm{r})})$ and $F_{\S}(\A_{\x}(\S),\y^*_{\S})\leq\sup_{\y'\in\yset}F_{\S}(\A_{\x}(\S),\y')$. The inequality $(c)$ references the conclusion of Lemma 4.3 in \citet{lin2020gradient} that $\|\y^*_{\S^{(\bm{r})}}-\y^*_{\S}\|\leq\frac{L}{\mu}\|\A_{\x}(\S^{(\bm{r})})-\A_{\x}(\S)\|$ due to the strong concavity of $F(\x,\cdot)$. 

    Continuing with the second term, we have:
    \begin{equation}\label{eq:43}
        \E[\inf_{\x'\in\xset}R_{\S}(\x')-\inf_{\x'\in\xset}R(\x')]=\E[R_{\S}(\x^*)-R(\x^*)]+\E[\inf_{\x'\in\xset}R_{\S}(\x')-R_{\S}(\x^*)]\leq\E[R_{\S}(\x^*)-R(\x^*)]
    \end{equation}
    where the inequality owing to the fact that $\inf_{\x'\in\xset}R_{\S}(\x')\leq R_{\S}(\x^*)$.

    \begin{equation}\label{eq:44}
        \E[R_{\S}(\x^*)-R(\x^*)]=\E[F_{\S}(\x^*,\hat{\y}^*_{\S})-F(\x^*,\hat{\y}^*_{\S})]+\E[F(\x^*,\hat{\y}^*_{\S})-F(\x^*,\y^*)]\leq \E[F_{\S}(\x^*,\hat{\y}^*_{\S})-F(\x^*,\hat{\y}^*_{\S})]
    \end{equation}
    where the inequality is due to the property of saddle point $(\x^*,\y^*)$ that $F(\x^*,\hat{\y}^*_{\S})\leq F(\x^*,\y^*)$.

    Due to the symmetric data distribution of $\S,\S^{(\bm{r})}$ and the fact tha $\xi_{i,r_i}$ is independent from $\S^{\bm{r}}$, we have:
    \begin{equation}\label{eq:45}
        \begin{aligned}
            \E_{\A,\S}[F_{\S}(\x^*,\hat{\y}^*_{\S})-F(\x^*,\hat{\y}^*_{\S})]&=\frac{1}{n^m}\sum_{r_1=1}^n\cdots\sum_{r_m=1}^n \E_{\A,\S,\S'}[\frac{1}{m}\sum_{i=1}^mf_i(\x^*,\hat{\y}^*_{\S};\xi_{i,r_i})-f_i(\x^*,\hat{\y}^*_{\S^{(\bm{l})}};\xi_{i,r_i})]\\
            &\leq\frac{1}{n^m}\sum_{\bm{r}}\frac{1}{m}\sum_{i=1}^m G\E_{\A,\S,\S'}\|\hat{\y}^*_{\S}-\hat{\y}^*_{\S^{(\bm{r})}}\|.
        \end{aligned}
    \end{equation}

    Since $F_{\S}(\x,\cdot)$ is $\mu$-strongly concave, then for any given $\x$ we have:
    \begin{equation}\label{eq:46}
        \frac{\mu}{2}\|\hat{\y}^*_{\S}-\hat{\y}^*_{\S^{(\bm{r})}}\|^2\leq F_{\S}(\x^*,\hat{\y}^*_{\S})-F_{\S}(\x^*,\hat{\y}^*_{\S^{(\bm{r})}})+<\dy F_{\S}(\x^*,\hat{\y}^*_{\S}),\hat{\y}^*_{\S^{(\bm{r})}}-\hat{\y}^*_{\S}>\leq F_{\S}(\x^*,\hat{\y}^*_{\S})-F_{\S}(\x^*,\hat{\y}^*_{\S^{(\bm{r})}})
    \end{equation}
    where the last inequality follows from the fact that $\hat{\y}^*_{\S}$ is the maximizer of $F_{\S}(\x^*,\cdot)$ that $<\dy F_{\S}(\x^*,\hat{\y}^*_{\S}),\y-\hat{\y}^*_{\S}>\leq0,\forall \y$.

    On the other side, for a specific $\bm{r}=\{r_1,...,r_m\}$,
    \begin{equation}\label{eq:47}
        \begin{aligned}
            &\quad\;F_{\S}(\x^*,\hat{\y}^*_{\S})-F_{\S}(\x^*,\hat{\y}^*_{\S^{(\bm{r})}})\\
            &=\frac{1}{m}\sum_{i=1}^m\frac{1}{n}\sum_{l_i=1}^n\left(f_i(\x^*,\hat{\y}^*_{\S};\xi_{i,l_i})-f_i(\x^*,\hat{\y}^*_{\S^{(\bm{r})}};\xi_{i,l_i})\right)\\
            &=\frac{1}{mn}f_i(\x^*,\hat{\y}^*_{\S};\xi_{i,r_i})-f_i(\x^*,\hat{\y}^*_{\S^{(\bm{r})}};\xi_{i,r_i})+\frac{1}{m}\sum_{i=1}^m\frac{1}{n}\sum_{l_i\neq r_i}\left(f_i(\x^*,\hat{\y}^*_{\S};\xi_{i,l_i})-f_i(\x^*,\hat{\y}^*_{\S^{(\bm{r})}};\xi_{i,l_i})\right)\\
            &=\frac{1}{mn}\left(f_i(\x^*,\hat{\y}^*_{\S};\xi_{i,r_i})-f_i(\x^*,\hat{\y}^*_{\S^{(\bm{r})}};\xi_{i,r_i})\right)+\frac{1}{m}\sum_{i=1}^m\frac{1}{n}\sum_{l_i\neq r_i}\left(f_i(\x^*,\hat{\y}^*_{\S};\xi_{i,l_i})-f_i(\x^*,\hat{\y}^*_{\S^{(\bm{r})}};\xi_{i,l_i})\right)\\
            &\quad+\frac{1}{mn}\left(f_i(\x^*,\hat{\y}^*_{\S};\xi'_{i,r_i})-f_i(\x^*,\hat{\y}^*_{\S^{(\bm{r})}};\xi'_{i,r_i})\right)-\frac{1}{mn}\left(f_i(\x^*,\hat{\y}^*_{\S};\xi'_{i,r_i})-f_i(\x^*,\hat{\y}^*_{\S^{(\bm{r})}};\xi'_{i,r_i})\right)\\
            &=\frac{1}{mn}\!\left(f_i(\x^*\!\!,\hat{\y}^*_{\S};\xi_{i,r_i}\!)\!-\!f_i(\x^*\!\!,\hat{\y}^*_{\S^{(\bm{r})}};\xi_{i,r_i}\!)\right)\!+\!F_{\S^{(\bm{r})}}\!(\x^*\!\!,\hat{\y}^*_{\S})\!-\!F_{\S^{(\bm{r})}}(\x^*\!\!,\hat{\y}^*_{\S^{(\bm{r})}}\!)\\
            &\quad-\frac{1}{mn}\!\left(f_i(\x^*\!\!,\hat{\y}^*_{\S};\xi'_{i,r_i}\!)\!-\!f_i(\x^*\!\!,\hat{\y}^*_{\S^{(\bm{r})}};\xi'_{i,r_i})\right)\\
            &\leq\frac{2G}{mn}\|\hat{\y}^*_{\S}-\hat{\y}^*_{\S^{(\bm{r})}}\|
        \end{aligned}
    \end{equation}
    where the last inequality derives from the fact that $\hat{\y}^*_{\S^{(\bm{r})}}$ maximizes $F_{\S^{(\bm{r})}}(\x^*,\cdot)$.

    Combining inequality~\eqref{eq:46} and \eqref{eq:47}, we can obtain:
    \begin{equation}\label{eq:48}
        \|\hat{\y}^*_{\S}-\hat{\y}^*_{\S^{(\bm{r})}}\|\leq\frac{4G}{\mu mn}.
    \end{equation}

    Plugging inequality~\eqref{eq:48} into inequality~\eqref{eq:45}, we can derive:
    \begin{equation}\label{eq:49}
        \E_{\A,\S}[F_{\S}(\x^*,\hat{\y}^*_{\S})-F(\x^*,\hat{\y}^*_{\S})]\leq\frac{4G^2}{\mu mn}.
    \end{equation}

    Integrating inequalities~\eqref{eq:41},\eqref{eq:42},\eqref{eq:43},\eqref{eq:44},\eqref{eq:49}, we can acquire the excess primal generalization gap:
    \begin{equation*}
        \zeta^{\mathrm{ep}}_{\mathrm{gen}}(\A,\S)\leq G\sqrt{1+\frac{L^2}{\mu^2}}\epsilon+\frac{4G^2}{\mu mn}.
    \end{equation*}

    \textit{\textbf{Strong PD generalization gap~\ref{connection:strong}.}} Analysing the composition of the strong PD generalization gap:
    \begin{equation*}
        \begin{aligned}
            &\quad\zeta^{\mathrm{s}}_{\mathrm{gen}}(\A,\S)\\
            &=\!\!\E[(\!\sup_{\y'\!\in\yset}\!F(\A_{\x}(\S),\y')\!\!-\!\!\!\inf_{\x'\!\in\xset}\!\!F(\x'\!,\A_{\y}(\S)))]\!\!-\!\!\E[(\!\sup_{\y'\!\in\yset}\!\FS(\A_{\x}(\S)),\y')\!\!-\!\!\inf_{\x'\!\in\xset}\!\!\FS(\x'\!,\A_{\y}(\S)))]\\
            &=\!\!\E_{\A,\S}[\sup_{\y'\!\in\yset}\!F(\A_{\x}(\S),\y')\!-\!\sup_{\y'\!\in\yset}\!\FS(\A_{\x}(\S)),\y')]\!+\!\E_{\A,\S}[\inf_{\x'\!\in\xset}\!\!\FS(\x'\!,\A_{\y}(\S))\!-\!\inf_{\x'\!\in\xset}\!\!F(\x'\!,\A_{\y}(\S))].
        \end{aligned}
    \end{equation*}

    The first term has been proved in inequality~\eqref{eq:42} that:
    \begin{equation*}
        \E_{\A,\S}[\sup_{\y'\in\yset}F(\A_{\x}(\S),\y')-\sup_{\y'\in\yset}F_{\S}(\A_{\x}(\S),\y')]\leq G\sqrt{1+\frac{L^2}{\mu^2}}\E_{\A}[\sup_{\S,\S'}\|\A_{\x}(\S^{(\bm{r})})-\A_{\x}(\S)\|].
    \end{equation*}

    Analogously, we can get similar results on the other counterpart due to the strong convexity of $F$:
    \begin{equation*}
        \E_{\A,\S}[\inf_{\y'\in\yset}F(\A_{\x}(\S),\y')-\inf_{\y'\in\yset}F_{\S}(\A_{\x}(\S),\y')]\leq G\sqrt{1+\frac{L^2}{\mu^2}}\E_{\A}[\sup_{\S,\S'}\|\A_{\y}(\S^{(\bm{r})})-\A_{\y}(\S)\|].
    \end{equation*}

    Merge above inequalities, and we have the strong PD generalization gap as follows:
    \begin{equation*}
        \zeta^{\mathrm{s}}_{\mathrm{gen}}(\A,\S)\leq G\sqrt{2+\frac{2L^2}{\mu^2}}\epsilon.
    \end{equation*}

\end{proof}
\section{Proof in the (Strongly)Convex-(Strongly)Concave Case}\label{sec:sc-sc}
\subsection{Proof of Stability}
We use notations $(\x^t_{i,k},\y^t_{i,k})$ and $(\xdot^t_{i,k},\ydot^t_{i,k})$ to represent the $i$-th local model parameters of the distributed algorithm $\A(T,K,\W)$ traininig on the distributed neighboring dataset $\S$ and $\S'$ (see Definition~\ref{def:neighbouring:dataset}) on the $k$-th local update iteration during the $t$-th communication round respectively.

There are several steps to prove the stability of the distributed algorithm $\A(T,K,\W)$. First we concentrate on the single local update during the ($t$,$k$)-th iteration, as shown in Lemma~\ref{le:iteration}. Then we are going to bound the critical deviation term $\Delta^t_k$ as indicated in Lemma~\ref{le:delta:local} and Lemma~\ref{le:delta:localdecentra}. Finally we will evaluate the stability error where we use $(\A_{\S}(\x),\A_{\S}(\y))$ to represent the output model of algorithm $\A$ training on the dataset $\S$.

\begin{lem}\label{le:iteration}
    For distributed algorithm $\A(T,K,\W)$, under Assumption \ref{as:mixing}, \ref{as:Lip:con}, \ref{as:Lip:smmoth}, and each local function $f_i$ satisfies $\mu$-SC-SC. Then from the perspective of the averaged parameters, the iterative error from distributed neighboring datasets is shown as:
    $$\E_{\A}\left\|\left(\begin{array}{c}
                \bar{\x}^t_{k+1}-\bar{\xdot}^t_{k+1}   \\
                \bar{\y}^t_{k+1}-\bar{\ydot}^t_{k+1}
            \end{array}\right)\right\|\leq(1-\eta^t_k\frac{L\mu}{L+\mu})\E_{\A}\left\|\left(\begin{array}{c}
                \bar{\x}^t_k-\bar{\xdot}^t_k\\
                \bar{\y}^t_k-\bar{\ydot}^t_k
            \end{array}\right)\right\|+ 2\eta^t_kL\E_{\A}[\Delta^t_k]+\frac{2\eta^t_kG}{n}.$$
\end{lem}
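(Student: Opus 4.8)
The plan is to track the one-step evolution of the averaged iterates by rewriting the averaged SGDA update as a single gradient-map step plus controllable error terms. First I would pass to stacked form, writing $\nabla_i(\x,\y;\xi)\triangleq(\dx f_i(\x,\y;\xi),-\dy f_i(\x,\y;\xi))^{\T}$ for the descent--ascent gradient; averaging the per-worker rules (descent in $\x$, ascent in $\y$) then gives
\[
\left(\begin{array}{c}\bar{\x}^t_{k+1}\\ \bar{\y}^t_{k+1}\end{array}\right)=\left(\begin{array}{c}\bar{\x}^t_{k}\\ \bar{\y}^t_{k}\end{array}\right)-\frac{\eta^t_k}{m}\sum_{i=1}^m\nabla_i(\x^t_{i,k},\y^t_{i,k};\xi_{i,j^t_k(i)}),
\]
and analogously for the $\S'$-trajectory with $(\xdot,\ydot)$ and the samples $\xi'_{i,j^t_k(i)}$. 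The key observation is that $g_\S\triangleq\frac{1}{m}\sum_i f_i(\cdot,\cdot;\xi_{i,j^t_k(i)})$ is $\mu$-SC-SC (an average of $\mu$-SC-SC functions), so its gradient map $G_{g_\S,\eta^t_k}$ is amenable to Lemma~\ref{le:nonexpan}. I would therefore rewrite the averaged update as $G_{g_\S,\eta^t_k}$ evaluated at the averaged point $(\bar{\x}^t_k,\bar{\y}^t_k)$, minus a ``consensus'' correction $\frac{\eta^t_k}{m}\sum_i[\nabla_i(\x^t_{i,k},\y^t_{i,k};\cdot)-\nabla_i(\bar{\x}^t_k,\bar{\y}^t_k;\cdot)]$ that repositions each per-worker gradient onto the averaged point.

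Second, I would decompose the stacked difference of the two next-iterates through the telescoping chain
\[
G_{g_\S,\eta^t_k}(\bar{\x}^t_k,\bar{\y}^t_k)-G_{g_\S,\eta^t_k}(\bar{\xdot}^t_k,\bar{\ydot}^t_k)\;+\;G_{g_\S,\eta^t_k}(\bar{\xdot}^t_k,\bar{\ydot}^t_k)-G_{g_{\S'},\eta^t_k}(\bar{\xdot}^t_k,\bar{\ydot}^t_k),
\]
together with the two consensus corrections, and bound each piece by the triangle inequality. The first difference is contractive: Lemma~\ref{le:nonexpan}\,(ii) (using $\eta^t_k\leq\frac{2}{L+\mu}$) bounds it by $(1-\eta^t_k\frac{L\mu}{L+\mu})\|(\bar{\x}^t_k-\bar{\xdot}^t_k,\bar{\y}^t_k-\bar{\ydot}^t_k)\|$. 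Each consensus correction is controlled by Lipschitz smoothness (Assumption~\ref{as:Lip:smmoth}) and the definition of the consensus term, giving $\frac{\eta^t_k}{m}\sum_i L\|(\x^t_{i,k}-\bar{\x}^t_k,\y^t_{i,k}-\bar{\y}^t_k)\|=\eta^t_k L\Delta^t_k$ for the $\S$-trajectory and likewise $\eta^t_k L\dot{\Delta}^t_k$ for the $\S'$-trajectory. Since $\Delta^t_k$ and $\dot{\Delta}^t_k$ obey the identical bound of Lemma~\ref{le:delta:local}/\ref{le:delta:localdecentra}, their expectations can be merged into the single term $2\eta^t_k L\,\E_{\A}[\Delta^t_k]$.

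The step I expect to be the crux is the second difference, $G_{g_\S,\eta^t_k}(\bar{\xdot}^t_k,\bar{\ydot}^t_k)-G_{g_{\S'},\eta^t_k}(\bar{\xdot}^t_k,\bar{\ydot}^t_k)$, which isolates the effect of the single changed sample:
\[
G_{g_\S,\eta^t_k}(\bar{\xdot}^t_k,\bar{\ydot}^t_k)-G_{g_{\S'},\eta^t_k}(\bar{\xdot}^t_k,\bar{\ydot}^t_k)=-\frac{\eta^t_k}{m}\sum_{i=1}^m\big[\nabla_i(\bar{\xdot}^t_k,\bar{\ydot}^t_k;\xi_{i,j^t_k(i)})-\nabla_i(\bar{\xdot}^t_k,\bar{\ydot}^t_k;\xi'_{i,j^t_k(i)})\big].
\]
Here the $i$-th summand vanishes unless the freshly drawn index $j^t_k(i)$ hits the one perturbed index $r_i$ of $\S_i$; conditioning on the past and taking expectation over $j^t_k(i)$, the event $\{j^t_k(i)=r_i\}$ has probability $\frac1n$, while the gradient gap never exceeds $2G$ by the bounded-gradient consequence of Assumption~\ref{as:Lip:con}, so this piece contributes at most $\frac{\eta^t_k}{m}\sum_i\frac{2G}{n}=\frac{2\eta^t_k G}{n}$ in expectation. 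The care needed is that Lemma~\ref{le:nonexpan} can only compare one fixed function at two points, so the $\S'$-samples must be swapped in \emph{only after} the contraction step, i.e.\ at the already-displaced point $(\bar{\xdot}^t_k,\bar{\ydot}^t_k)$; and the $\frac1n$ must be extracted by a conditional-expectation argument over the current sampling, which is independent of the past iterates. Collecting the contraction factor, the two consensus terms, and this sampling term by the triangle inequality and linearity of $\E_{\A}$ then yields exactly the claimed recursion.
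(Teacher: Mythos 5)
Your proposal is correct and takes essentially the same route as the paper: the same decomposition into a contraction term handled by Lemma~\ref{le:nonexpan}(ii), consensus corrections bounded via $L$-smoothness as $2\eta^t_kL\E_{\A}[\Delta^t_k]$, and a sample-swap term contributing $\frac{2\eta^t_kG}{n}$ through the $\frac{1}{n}$ probability of drawing the perturbed index. The only differences are cosmetic: you contract with the gradient map of the averaged function $g_{\S}$ and extract the $\frac{1}{n}$ by per-agent indicators and linearity of expectation, whereas the paper averages per-agent gradient maps and takes the expectation over the binomial law of the number of agents hitting the perturbed sample, using $\binom{m}{m'}\frac{m'}{m}=\binom{m-1}{m'-1}$.
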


\begin{proof}[\textbf{Proof of Lemma~\ref{le:iteration}}]
    Recalling the local update rule, we have the corresponding update rule for the global average on both algorithms:
    \begin{equation*}
        \begin{aligned}
            \left(\begin{array}{c}
               \bar{\x}^t_{k+1}\\
               \bar{\y}^t_{k+1} 
            \end{array}\right)&=\frac{1}{m}\sum_{i=1}^m\left(\begin{array}{c}
                \x^t_{i,k+1}\\
                \y^t_{i,k+1}
            \end{array}\right)\\&=\frac{1}{m}\sum_{i=1}^m\left(\begin{array}{c}
                \x^t_{i,k}-\eta^t_k\dx f_i(\x^t_{i,k},\y^t_{i,k};\xi_{i,j_k^t(i)})\\
                \y^t_{i,k}+\eta^t_k\dy f_i(\x^t_{i,k},\y^t_{i,k};\xi_{i,j_k^t(i)})
            \end{array}\right)\\
            &=\left(\begin{array}{cc}
                \bar{\x}^t_k\\
                \bar{\y}^t_k 
            \end{array}\right)-\frac{1}{m}\sum_{i=1}^m\left(\begin{array}{c}
                \eta^t_k\dx f_i(\x^t_{i,k},\y^t_{i,k};\xi_{i,j^t_k(i)})\\
                -\eta^t_k\dy f_i(\x^t_{i,k},\y^t_{i,k};\xi_{i,j^t_k(i)})
            \end{array}\right).
        \end{aligned}
    \end{equation*}
    
    On the $k$-th iteration of the $t$-th round,
    \begin{equation*}
        \begin{aligned}
            &\quad\;\left(\begin{array}{c}
        \bar{\x}^t_{k+1}-\bar{\xdot}^t_{k+1}\\
        \bar{\y}^t_{k+1}-\bar{\ydot}^t_{k+1}
        \end{array}
        \right)\\
        &=\left(\begin{array}{c}
            \bar{\x}^t_k-\bar{\xdot}^t_k-\frac{\eta^t_k}{m}\sum_{i=1}^m\left(\dx f_i(\x^t_{i,k},\y^t_{i,k};\xi_{i,j^t_{k}(i)})-\dx f_i(\xdot^t_{i,k},\ydot^t_{i,k};\xi_{i,j^t_k(i)})\right)\\
            \bar{\y}^t_k-\bar{\ydot}^t_k+\frac{\eta^t_k}{m}\sum_{i=1}^m\left(\dy f_i(\x^t_{i,k},\y^t_{i,k};\xi_{i,j^t_{k}(i)})-\dy f_i(\xdot^t_{i,k},\ydot^t_{i,k};\xi_{i,j^t_k(i)})\right)
        \end{array}\right).
        \end{aligned}
    \end{equation*}
    
    Assuming there will be $m'$ agents encountering the \textit{different} sample among the whole $m$ agents in every single iteration, we can suppose they are the latter $m'$ agents whereas the former $m-m'$ agents select the same samples without loss of generality. Then we can proceed as follows:
    \begin{equation*}
        \begin{aligned}
            &\quad\;\left(\begin{array}{c}
        \bar{\x}^t_{k+1}-\bar{\xdot}^t_{k+1}\\
        \bar{\y}^t_{k+1}-\bar{\ydot}^t_{k+1}
        \end{array}
        \right)\\
        &=\frac{1}{m}\sum_{i=1}^{m-m'}\left(\begin{array}{c}
            \bar{\x}^t_k-\eta^t_k\dx f_i(\bar{\x}^t_k,\bar{\y}^t_k;\xi_{i,j^t_k(i)})-\left(\bar{\xdot}^t_k-\eta^t_k\dx f_i(\bar{\xdot}^t_k,\bar{\ydot}^t_k;\xi_{i,j^t_k(i)})\right)\\
            \bar{\y}^t_k+\eta^t_k\dy f_i(\bar{\x}^t_k,\bar{\y}^t_k;\xi_{i,j^t_k(i)})-\left(\bar{\ydot}^t_k+\eta^t_k\dy f_i(\bar{\xdot}^t,\bar{\ydot}^t;\xi_{i,j^t_k(i)})\right)
        \end{array}
        \right)\\
        &+\frac{1}{m}\sum_{i=m-m'+1}^m\left(\begin{array}{c}
            \bar{\x}^t_k-\eta^t_k\dx f_i(\bar{\x}^t_k,\bar{\y}^t_k;\xi_{i,n})-\left(\bar{\xdot}^t_k-\eta^t_k\dx f_i(\bar{\xdot}^t_k,\bar{\ydot}^t_k;\xi'_{i,n})\right)\\
            \bar{\y}^t_k+\eta^t_k\dy f_i(\bar{\x}^t_k,\bar{\y}^t_k;\xi_{i,n})-\left(\bar{\ydot}^t_k+\eta^t_k\dy f_i(\bar{\xdot}^t,\bar{\ydot}^t;\xi'_{i,n})\right)
        \end{array}
        \right)\\
        &+\frac{1}{m}\sum_{i=1}^{m-m'}\left(\begin{array}{c}
            -\eta^t_k\left(\dx f_i(\x^t_{i,k},\y^t_{i,k};\xi_{i,j^t_{k}(i)})-\dx f_i(\bar{\x}^t_k,\bar{\y}^t_k;\xi_{i,j^t_k(i)})\right)\\
            \eta^t_k\left(\dy f_i(\x^t_{i,k},\y^t_{i,k};\xi_{i,j^t_{k}(i)})-\dy f_i(\bar{\x}^t_k,\bar{\y}^t_k;\xi_{i,j^t_k(i)})\right)
        \end{array}
        \right)\\
        &+\frac{1}{m}\sum_{i=1}^{m-m'}\left(\begin{array}{c}
            \eta^t_k\left(\dx f_i(\xdot^t_{i,k},\ydot^t_{i,k};\xi_{i,j^t_{k}(i)})-\dx f_i(\bar{\xdot}^t_k,\bar{\ydot}^t_k;\xi_{i,j^t_k(i)})\right)\\
            -\eta^t_k\left(\dy f_i(\xdot^t_{i,k},\ydot^t_{i,k};\xi_{i,j^t_{k}(i)})-\dy f_i(\bar{\xdot}^t_k,\bar{\ydot}^t_k;\xi_{i,j^t_k(i)})\right)
        \end{array}
        \right)\\
        &+\frac{1}{m}\sum_{i=m-m'+1}^m\left(\begin{array}{c}
            -\eta^t_k\left(\dx f_i(\x^t_{i,k},\y^t_{i,k};\xi_{i,n})-\dx f_i(\bar{\x}^t_k,\bar{\y}^t_k;\xi_{i,n})\right)\\
            \eta^t_k\left(\dy f_i(\x^t_{i,k},\y^t_{i,k};\xi_{i,n})-\dy f_i(\bar{\x}^t_k,\bar{\y}^t_k;\xi_{i,n})\right)
        \end{array}
        \right)\\
        &+\frac{1}{m}\sum_{i=m-m'+1}^m\left(\begin{array}{c}
            \eta^t_k\left(\dx f_i(\xdot^t_{i,k},\ydot^t_{i,k};\xi'_{i,n})-\dx f_i(\bar{\xdot}^t_k,\bar{\ydot}^t_k;\xi'_{i,n})\right)\\
            -\eta^t_k\left(\dy f_i(\xdot^t_{i,k},\ydot^t_{i,k};\xi'_{i,n})-\dy f_i(\bar{\xdot}^t_k,\bar{\ydot}^t_k;\xi'_{i,n})\right)
        \end{array}
        \right).
        \end{aligned}
    \end{equation*}

    All the terms above can be bounded according to Lemma~\ref{le:nonexpan} along with the Lipschitz continuous assumption~\ref{as:Lip:con}, and it turns out to be:
    \begin{equation*}
        \begin{aligned}
            &\quad\;\left\|\left(\begin{array}{c}
        \bar{\x}^t_{k+1}-\bar{\xdot}^t_{k+1}\\
        \bar{\y}^t_{k+1}-\bar{\ydot}^t_{k+1}
        \end{array}
        \right)\right\|\\
            &\leq(1-\eta^t_k\frac{L\mu}{L+\mu})\left\|\left(\begin{array}{c}
                \bar{\x}^t_k-\bar{\xdot}^t_k\\
                \bar{\y}^t_k-\bar{\ydot}^t_k
            \end{array}\right)\right\|+\frac{2\eta^t_kGm'}{m}+ \frac{\eta^t_kL}{m}\sum_{i=1}^m\left(\left\|\left(\begin{array}{c}
                \x^t_{i,k}-\bar{\x}^t_k\\
                \y^t_{i,k}-\bar{\y}^t_k
            \end{array}\right)\right\|+\left\|\left(\begin{array}{c}
                \xdot^t_{i,k}-\bar{\xdot}^t_k\\
                \ydot^t_{i,k}-\bar{\ydot}^t_k
            \end{array}\right)\right\|\right).
        \end{aligned}
    \end{equation*}

    Taking expectation on the randomness of samples chosen during each iteration, we can derive:
    \begin{equation*}
        \begin{aligned}
            &\quad\;\E_{\A}\left\|\left(\begin{array}{c}
                \bar{\x}^t_{k+1}-\bar{\xdot}^t_{k+1}   \\
                \bar{\y}^t_{k+1}-\bar{\ydot}^t_{k+1}
            \end{array}\right)\right\|\\
            &\leq \sum_{m'=0}^m \binom{m}{m'}(\frac{1}{n})^{m'}(1-\frac{1}{n})^{m-m'}\left\{(1-\eta^t_k\frac{L\mu}{L+\mu})\E_{\A}\left\|\left(\begin{array}{c}
                \bar{\x}^t_k-\bar{\xdot}^t_k\\
                \bar{\y}^t_k-\bar{\ydot}^t_k
            \end{array}\right)\right\|+\frac{2\eta^t_kGm'}{m}+ 2\eta^t_kL\E_{\A}[\Delta^t_k]\right\}\\
            &= (1-\eta^t_k\frac{L\mu}{L+\mu})\E_{\A}\left\|\left(\begin{array}{c}
                \bar{\x}^t_k-\bar{\xdot}^t_k\\
                \bar{\y}^t_k-\bar{\ydot}^t_k
            \end{array}\right)\right\|+ 2\eta^t_kL\E_{\A}[\Delta^t_k]+\frac{2\eta^t_kG}{n}
        \end{aligned}
    \end{equation*}
    where each node encounters the \textit{different} sample with a probability of $\frac{1}{n}$ and the last equation is due to the fact that: $\binom{m}{m'}\cdot\frac{m'}{m}=\binom{m-1}{m'-1}$.
\end{proof}

\begin{proof}[\textbf{Proof of Theorem~\ref{thm:local:sc-sc-stability}}]
    For the output of the Distributed SGDA (see Algorithm~\ref{alg:distributedsgda}):
    $$\left(\begin{array}{c}
        \x^T \\
        \y^T 
    \end{array}\right)=\left(\begin{array}{c}
        \frac{1}{m}\sum_{i=1}^m\x_i^T \\
        \frac{1}{m}\sum_{i=1}^m\y_i^T
    \end{array}\right)=\left(\begin{array}{c}
        \frac{1}{m}\sum_{i=1}^m\sum_{h=1}^m\omega_{ih}\x^T_{h,K}   \\
        \frac{1}{m}\sum_{i=1}^m\sum_{h=1}^m\omega_{ih}\y^T_{h,K} 
    \end{array}\right)=\left(\begin{array}{c}
        \frac{1}{m}\sum_{h=1}^m\x^T_{h,K}  \\
        \frac{1}{m}\sum_{h=1}^m\y^T_{h,K}
    \end{array}\right)=\left(\begin{array}{c}
        \bar{\x}^T_K \\
        \bar{\y}^T_K
    \end{array}\right).$$
    
    The identification between the beginning of a communication round and the end of the last communication round is shown as:
    $$\left(\begin{array}{c}
        \bar{\x}^t_0 \\
        \bar{\y}^t_0
    \end{array}\right)=\left(\begin{array}{c}
        \frac{1}{m}\sum_{i=1}^m\x^t_{i,0} \\
        \frac{1}{m}\sum_{i=1}^m\y^t_{i,0}
    \end{array}\right)=\left(\begin{array}{c}
        \frac{1}{m}\sum_{i=1}^m\x^{t-1}_i \\
        \frac{1}{m}\sum_{i=1}^m\y^{t-1}_i 
    \end{array}\right)=\left(\begin{array}{c}
        \frac{1}{m}\sum_{i=1}^m\sum_{h=1}^m\omega_{ih}\x^{t-1}_{h,K}\\
        \frac{1}{m}\sum_{i=1}^m\sum_{h=1}^m\omega_{ih}\y^{t-1}_{h,K}
    \end{array}\right)=\left(\begin{array}{c}
        \bar{\x}^{t-1}_K \\
        \bar{\y}^{t-1}_K 
    \end{array}\right).$$
    
    Recursively repeating the inequality in Lemma~\ref{le:iteration} from $T$ to $0$:
    \begin{equation}\label{eq:11}
        \begin{aligned}
            &\quad\;\E_{\A}\left\|\left(\begin{array}{c}
                \bar{\x}^T-\bar{\xdot}^T \\
                \bar{\y}^T-\bar{\ydot}^T
            \end{array}\right)\right\|\\
            &=\E_{\A}\left\|\left(\begin{array}{c}
                \bar{\x}^T_K-\bar{\xdot}^T_K \\
                \bar{\y}^T_K-\bar{\ydot}^T_K
            \end{array}\right)\right\|\\
            &\leq (1-\eta^T_{K-1}\frac{L\mu}{L+\mu})\E_{\A}\left\|\left(\begin{array}{c}
                \bar{\x}^T_{K-1}-\bar{\xdot}^T_{K-1}\\
                \bar{\y}^T_{K-1}-\bar{\ydot}^T_{K-1}
            \end{array}\right)\right\|+ 2\eta^T_{K-1}L\E_{\A}[\Delta^T_{K-1}]+\frac{2\eta^T_{K-1}G}{n}\\
            &\leq\cdots\\
            &\leq\prod_{k=0}^{K-1}(1-\eta^T_k\frac{L\mu}{L+\mu})\E_{\A}\left\|\left(\begin{array}{c}
                \bar{\x}^T_0-\bar{\xdot}^T_0\\
                \bar{\y}^T_0-\bar{\ydot}^T_0
            \end{array}\right)\right\|+2\sum_{k=0}^{K-1}\eta^T_k(L\E_{\A}[\Delta^T_k]+\frac{G}{n})\prod_{k'=k+1}^{K-1}(1-\eta^T_{k'}\frac{L\mu}{L+\mu})\\
            &=\prod_{k=0}^{K-1}(1-\eta^T_k\frac{L\mu}{L+\mu})\E_{\A}\left\|\left(\begin{array}{c}
                \bar{\x}^{T-1}_K-\bar{\xdot}^{T-1}_K\\
                \bar{\y}^{T-1}_K-\bar{\ydot}^{T-1}_K
            \end{array}\right)\right\|+2\sum_{k=0}^{K-1}\eta^T_k(L\E_{\A}[\Delta^T_k]+\frac{G}{n})\prod_{k'=k+1}^{K-1}(1-\eta^T_{k'}\frac{L\mu}{L+\mu})\\
            &\leq\cdots\\
            &\leq\prod_{t=1}^T\prod_{k=0}^{K-1}(1-\eta^t_k\frac{L\mu}{L+\mu})\E_{\A}\left\|\left(\begin{array}{c}
                \bar{\x}^0_K-\bar{\xdot}^0_K\\
                \bar{\y}^0_K-\bar{\ydot}^0_K
            \end{array}\right)\right\|\\
            &\quad\;+2\sum_{t=1}^T\prod_{t'=t+1}^T\prod_{k_1=0}^{K-1}(1-\eta^{t'}_{k_1}\frac{L\mu}{L+\mu})\sum_{k=0}^{K-1}\eta^t_k(L\E_{\A}[\Delta^t_k]+\frac{G}{n})\prod_{k'=k+1}^{K-1}(1-\eta^t_{k'}\frac{L\mu}{L+\mu})\\
            &=2\sum_{t=1}^T\prod_{t'=t+1}^T\prod_{k_1=0}^{K-1}(1-\eta^{t'}_{k_1}\frac{L\mu}{L+\mu})\sum_{k=0}^{K-1}\eta^t_k(L\E_{\A}[\Delta^t_k]+\frac{G}{n})\prod_{k'=k+1}^{K-1}(1-\eta^t_{k'}\frac{L\mu}{L+\mu})
        \end{aligned}
    \end{equation}
    where the initialization of parameters accounts for the last equation.

    \begin{enumerate}[wide=\parindent,label=(\roman*)]
        \item \textbf{When learning rates are fixed,} from condition~\ref{fixed} in Lemma~\ref{le:delta:localdecentra}, it follows that:
        $$\E_{\A}[\Delta^t_k]\leq\eta G\left(\sqrt{\frac{1}{1-\lambdabm}}k+\sqrt{\frac{2\lambdabm}{(1-\lambdabm)(1-\lambdabm^2)}}K\right).$$
        
        Therefore we have the argument stability as follows:
        \begin{equation*}
            \begin{aligned}
                &\quad\;\E_{\A}\left\|\left(\begin{array}{c}
                    \A_{\x}(\S)-\A_{\x}(\S')  \\
                    \A_{\y}(\S)-\A_{\y}(\S') 
                \end{array}\right)\right\|\\
                &\leq2\sum_{t=1}^T(1-\eta\frac{L\mu}{L+\mu})^{K(T-t)}\sum_{k=0}^{K-1}\eta(L\E_{\A}[\Delta^t_k]+\frac{G}{n})\\
                &\leq 2\sum_{t=1}^T(1-\eta\frac{L\mu}{L+\mu})^{K(T-t)}\eta \sum_{k=0}^{K-1}\left(\eta GL\sqrt{\frac{1}{1-\lambdabm}}k+\eta GL\sqrt{\frac{2\lambdabm}{(1-\lambdabm)(1-\lambdabm^2)}}K+\frac{G}{n}\right)\\
                &\leq2\sum_{t=1}^T(1-\eta\frac{L\mu}{L+\mu})^{K(T-t)}\eta\left(\eta GL\sqrt{\frac{1}{1-\lambdabm}}\frac{K^2}{2}+\eta GL\sqrt{\frac{2\lambdabm}{(1-\lambdabm)(1-\lambdabm^2)}}K^2+\frac{G}{n}K\right)\\
                &\leq\frac{2}{1-(1-\eta\frac{L\mu}{L+\mu})^K}\eta\left(\eta GL\sqrt{\frac{1}{1-\lambdabm}}\frac{K^2}{2}+\eta GL\sqrt{\frac{2\lambdabm}{(1-\lambdabm)(1-\lambdabm^2)}}K^2+\frac{G}{n}K\right)\\
                &\leq\frac{2G(L+\mu)}{L\mu}\left(\eta L\sqrt{\frac{1}{1-\lambdabm}}\frac{K^2}{2}+\eta L\sqrt{\frac{2\lambdabm}{(1-\lambdabm)(1-\lambdabm^2)}}K^2+\frac{K}{n}\right).
            \end{aligned}
        \end{equation*}

        \item \textbf{When learning rates are decaying that $\eta^t_k=\frac{1}{(k+1)^\alpha t^\beta}$,} condition~\ref{decaying} implies that:
        $$\E_{\A}[\Delta^t_k]\leq\frac{G}{t^\beta}\left(\sqrt{\frac{k(1\!+\!\ln{k}\cdot\mathbf{1}_{\alpha=\frac{1}{2}}\!+\!\frac{1}{2\alpha-1}\cdot\mathbf{1}_{\alpha>\frac{1}{2}})}{1-\lambdabm}}\!+\!\sqrt{(C_{\lambdabm^2}\!+\!\frac{C_{\lambdabm}}{1-\lambdabm})K(1\!+\!\ln{K}\cdot\mathbf{1}_{\alpha=\frac{1}{2}}\!+\!\frac{1}{2\alpha-1}\cdot\mathbf{1}_{\alpha>\frac{1}{2}})}\right).$$
        
        The summation is shown as:
        \begin{equation*}
            \sum_{k=0}^{K-1}\frac{1}{(k+1)^\alpha}\leq1+\left\{
            \begin{array}{rc}
                \frac{K^{1-\alpha}}{1-\alpha} &  0<\alpha<1\\
                \ln{K}& \alpha=1\\
                \frac{1}{\alpha-1} & \alpha>1
            \end{array}\right. .
        \end{equation*}
        
        And the product is calculated as:
        \begin{equation*}
            \begin{aligned}
                \prod_{t'=t+1}^T\prod_{k_1=0}^{K-1}(1-\frac{1}{(k_1+1)^\alpha t'^\beta}\frac{L\mu}{L+\mu})&\leq\prod_{t'=t+1}^T\prod_{k_1=0}^{K-1}e^{-\frac{1}{(k_1+1)^\alpha t'^\beta}\frac{L\mu}{L+\mu}}\\
                &=e^{-\frac{L\mu}{L+\mu}\sum_{t'=t+1}^T\sum_{k_1=0}^{K-1}\frac{1}{(k_1+1)^\alpha t'^\beta}}\\
                &=e^{-\frac{L\mu}{L+\mu}\sum_{t'=t+1}^T\frac{1}{t'^\beta}\sum_{k_1=0}^{K-1}\frac{1}{(k_1+1)^\alpha}}\\
                &\overset{\beta=1}{\leq}e^{-\frac{L\mu}{L+\mu}\ln{\frac{T}{t}}\sum_{k_1=0}^{K-1}\frac{1}{(k_1+1)^\alpha}}\\
                &\leq(\frac{t}{T})^{\frac{L\mu}{L+\mu}\sum_{k_1=0}^{K-1}\frac{1}{(k_1+1)^\alpha}}.
            \end{aligned}
        \end{equation*}
        
        Substituting all above terms into inequality~\eqref{eq:11}, we can derive:
        \begin{equation*}
            \begin{aligned}
                &\quad\;\E_{\A}\left\|\left(\begin{array}{c}
                    \A_{\x}(\S)-\A_{\x}(\S')  \\
                    \A_{\y}(\S)-\A_{\y}(\S') 
                \end{array}\right)\right\|\\
                &\overset{\frac{1}{2}<\alpha<1}{\leq}2\sum_{t=1}^T(\frac{t}{T})^{\frac{L\mu}{L+\mu}(1+\frac{K^{1-\alpha}}{1-\alpha})}\sum_{k=0}^{K-1}\frac{1}{(k+1)^\alpha t}\left(\frac{GL}{t}\left(\sqrt{\frac{\frac{2\alpha}{2\alpha-1}k}{1-\lambdabm}}+\sqrt{(C_{\lambdabm^2}+\frac{C_{\lambdabm}}{1-\lambdabm})\frac{2\alpha}{2\alpha-1}K}\right)+\frac{G}{n}\right)\\
                &=\frac{2}{T^{\frac{L\mu}{L+\mu}(1+\frac{K^{1-\alpha}}{1-\alpha})}}\left\{\sum_{t=1}^T\sum_{k=0}^{K-1}\frac{t^{\frac{L\mu}{L+\mu}(1+\frac{K^{1-\alpha}}{1-\alpha})-2}}{(k+1)^{\alpha-\frac{1}{2}}}GL\sqrt{\frac{\frac{2\alpha}{2\alpha-1}}{1-\lambdabm}}\right.\\
                &\left.+\sum_{t=1}^T\sum_{k=0}^{K-1}\frac{t^{\frac{L\mu}{L+\mu}(1+\frac{K^{1-\alpha}}{1-\alpha})-2}}{(k+1)^\alpha}GL\sqrt{(C_{\lambdabm^2}+\frac{C_{\lambdabm}}{1-\lambdabm})\frac{2\alpha}{2\alpha-1}K}+\sum_{t=1}^T\sum_{k=0}^{K-1}\frac{t^{\frac{L\mu}{L+\mu}(1+\frac{K^{1-\alpha}}{1-\alpha})-1}}{(k+1)^\alpha}\frac{G}{n}\right\}.
            \end{aligned}
        \end{equation*}
        
        We further evaluate the summation terms when $\frac{L\mu}{L+\mu}(1+\frac{K^{1-\alpha}}{1-\alpha})>1$:
        \begin{gather*}
            \sum_{t=1}^T\sum_{k=0}^{K-1}\frac{t^{\frac{L\mu}{L+\mu}(1+\frac{K^{1-\alpha}}{1-\alpha})-2}}{(k+1)^{\alpha-\frac{1}{2}}}=\sum_{t=1}^T t^{\frac{L\mu}{L+\mu}(1+\frac{K^{1-\alpha}}{1-\alpha})-2}\sum_{k=0}^{K-1}\frac{1}{(k+1)^{\alpha-\frac{1}{2}}}\leq\frac{(T+1)^{\frac{L\mu}{L+\mu}(1+\frac{K^{1-\alpha}}{1-\alpha})-1}}{\frac{L\mu}{L+\mu}(1+\frac{K^{1-\alpha}}{1-\alpha})-1}\left(1+\frac{K^{\frac{3}{2}-\alpha}}{\frac{3}{2}-\alpha}\right),\\
            \sum_{t=1}^T\sum_{k=0}^{K-1}\frac{t^{\frac{L\mu}{L+\mu}(1+\frac{K^{1-\alpha}}{1-\alpha})-2}}{(k+1)^\alpha}\leq\frac{(T+1)^{\frac{L\mu}{L+\mu}(1+\frac{K^{1-\alpha}}{1-\alpha})-1}}{\frac{L\mu}{L+\mu}(1+\frac{K^{1-\alpha}}{1-\alpha})-1}\left(1+\frac{K^{1-\alpha}}{1-\alpha}\right),\\
            \sum_{t=1}^T\sum_{k=0}^{K-1}\frac{t^{\frac{L\mu}{L+\mu}(1+\frac{K^{1-\alpha}}{1-\alpha})-1}}{(k+1)^\alpha}\leq\frac{(T+1)^{\frac{L\mu}{L+\mu}(1+\frac{K^{1-\alpha}}{1-\alpha})}}{\frac{L\mu}{L+\mu}(1+\frac{K^{1-\alpha}}{1-\alpha})}\left(1+\frac{K^{1-\alpha}}{1-\alpha}\right)=\frac{L+\mu}{L\mu}(T+1)^{\frac{L\mu}{L+\mu}(1+\frac{K^{1-\alpha}}{1-\alpha})}.
        \end{gather*}
        
        Eventually, we can get:
        \begin{equation*}
            \begin{aligned}
                &\quad\;\E_{\A}\left\|\left(\begin{array}{c}
                    \A_{\x}(\S)-\A_{\x}(\S')  \\
                    \A_{\y}(\S)-\A_{\y}(\S') 
                \end{array}\right)\right\|\\
                &\leq\frac{1}{T+1}\bigg(\frac{K^{\frac{3}{2}-\alpha}+\frac{3}{2}-\alpha}{\frac{L\mu}{L+\mu}(K^{1-\alpha}+1-\alpha)+\alpha-1}\sqrt{\frac{\frac{2\alpha}{2\alpha-1}}{1-\lambdabm}}\frac{2GL(1-\alpha)}{(\frac{3}{2}-\alpha)}\\
                &+\frac{K^{1-\alpha}+1-\alpha}{\frac{L\mu}{L+\mu}(K^{1-\alpha}+1-\alpha)+\alpha-1}2GL\sqrt{(C_{\lambdabm^2}+\frac{C_{\lambdabm}}{1-\lambdabm})\frac{2\alpha}{2\alpha-1}K}\bigg)+\frac{G}{n}\frac{L+\mu}{L\mu}.
            \end{aligned}
        \end{equation*}
    \end{enumerate}
\end{proof}

\subsection{Proof of Weak PD Population Risk}
\subsubsection{Proof of Optimization Error}
\begin{thm}[\textbf{Weak PD empirical risk}]\label{thm:weakpdempirical}
    Assuming the model parameters satisfying that $\sup_{\x\in\xset}\|\x||\leq B_{\x},\sup_{\y\in\yset}\|\y\|\leq B_{\y}$, defining $(\widetilde{\x}^T_K,\widetilde{\y}^T_K)\triangleq(\frac{1}{TK}\sum_{t=1}^T\sum_{k=0}^{K-1}\bar{\x}^t_k,\frac{1}{TK}\sum_{t=1}^T\sum_{k=0}^{K-1}\bar{\y}^t_k)$ as the averaged output of the algorithm $\A(T,K,\W)$, under Assumption \ref{as:mixing}, \ref{as:Lip:con}, \ref{as:Lip:smmoth}, when each local function $f_i$ is $\mu$-SC-SC, then we have the following weak PD empirical risk $\Delta^{\!\mathrm{w}}_{\S}(\x,\y)=\sup_{y'}\E[F_{\S}(\x,\y')]-\inf_{\x'}\E[F_{\S}(\x',\y)]$ regarding different learning rates :
    \begin{enumerate}[wide=\parindent,label=(\roman*)]
        \item\label{op:fixed} for fixed learning rates,
        \begin{equation*}
            \begin{aligned}
                &\quad\;\Delta^{\!\mathrm{w}}_{\S}(\widetilde{\x}^T_K,\widetilde{\y}^T_K)\\
                &\leq\frac{1-\eta}{2\eta TK}(B_{\x}^2+B_{\y}^2)+\eta G^2+(B_{\x}+B_{\y})\frac{2G}{\sqrt{TK}}+\eta GK(B_{\x}+B_{\y})(1+\frac{L}{2})(\frac{1}{2}\sqrt{\frac{1}{1-\lambdabm}}+\sqrt{\frac{2\lambdabm}{(1-\lambdabm)(1-\lambdabm^2)}});
            \end{aligned}
        \end{equation*}
        \item\label{op:decaying} for decaying learning rates that $\eta^t_k=\frac{1}{(k+1)^\alpha t^\beta}$, requiring $\frac{1}{2}<\alpha\leq1,\beta\leq1$,
        \begin{equation*}
            \begin{aligned}
                &\quad\;\Delta^{\!\mathrm{w}}_{\S}(\widetilde{\x}^T_K,\widetilde{\y}^T_K)\\
                &\leq\frac{G^2}{TK}(\frac{T^{1-\beta}}{1-\beta}\1_{\beta<1}+(1+\ln{T})\1_{\beta=1})(\frac{K^{1-\alpha}}{1-\alpha}\1_{\alpha<1}+(1+\ln{K})\1_{\alpha=1})\\
                &+\frac{(B_{\x}\!+\!B_{\y})G(L+2)(\sqrt{\frac{1}{1-\lambdabm}\frac{2\alpha}{2\alpha-1}}\!+\!\sqrt{(C_{\lambdabm^2}\!+\!\frac{C_{\lambdabm}}{1-\lambdabm})\frac{2\alpha}{2\alpha-1}})}{2T}K^{\frac{1}{2}}(\frac{T^{1-\beta}}{1-\beta}\1_{\beta<1}+(1+\ln{T})\1_{\beta=1})+\frac{2G(B_{\x}\!+\!B_{\y})}{\sqrt{TK}}.
            \end{aligned}
        \end{equation*}
    \end{enumerate}
\end{thm}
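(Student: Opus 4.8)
The plan is to treat this as a stochastic saddle-point optimization analysis for the virtual averaged trajectory $(\bar{\x}^t_k,\bar{\y}^t_k)$, reducing the $\sup/\inf$ weak PD empirical gap to a telescoping potential argument and then absorbing all of the distributed overhead into the consensus term $\Delta^t_k$ already controlled by Lemma~\ref{le:delta:localdecentra}. First I would fix an arbitrary comparison pair $(\x',\y')$ and invoke Jensen together with the convexity of $F_\S(\cdot,\y')$ in $\x$ and the concavity of $F_\S(\x',\cdot)$ in $\y$ to obtain
\[F_\S(\widetilde{\x}^T_K,\y')-F_\S(\x',\widetilde{\y}^T_K)\le\frac{1}{TK}\sum_{t=1}^T\sum_{k=0}^{K-1}\big[F_\S(\bar{\x}^t_k,\y')-F_\S(\x',\bar{\y}^t_k)\big],\]
postponing $\sup_{\y'}$ and $-\inf_{\x'}$ until the comparison point has been fully isolated.

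Next I would set up the potential $V^t_k\triangleq\|\bar{\x}^t_k-\x'\|^2+\|\bar{\y}^t_k-\y'\|^2$ and expand one descent--ascent step using the averaged update derived in the proof of Lemma~\ref{le:iteration}. Expanding the squares yields a quadratic term bounded by $(\eta^t_k)^2G^2$ via Assumption~\ref{as:Lip:con}, plus a cross term $-2\eta^t_k\langle\bar{g}^t_k,(\bar{\x}^t_k-\x',\bar{\y}^t_k-\y')\rangle$ with $\bar{g}^t_k$ the averaged local stochastic gradient. I would then decompose this cross term: taking expectation over the sampling turns $\bar{g}^t_k$ into the averaged full-batch gradient evaluated at the \emph{local} models; Lipschitz smoothness (Assumption~\ref{as:Lip:smmoth}) lets me replace local-model gradients by $\nabla F_\S(\bar{\x}^t_k,\bar{\y}^t_k)$ at cost $L\,\Delta^t_k$ (the $G$-Lipschitz and $L$-smooth constants are precisely what produce the $(1+\tfrac{L}{2})$ coefficient in part~\ref{op:fixed}); and the $\mu$-SC-SC property (Definition~\ref{def:convexconcave}) of $F_\S$ converts $\langle\nabla F_\S(\bar{\x}^t_k,\bar{\y}^t_k),\cdot\rangle$ into the per-iteration gap $F_\S(\bar{\x}^t_k,\y')-F_\S(\x',\bar{\y}^t_k)$ together with negative quadratics $-\tfrac{\mu}{2}\|\x'-\bar{\x}^t_k\|^2-\tfrac{\mu}{2}\|\y'-\bar{\y}^t_k\|^2$. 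This gives a recursion $\E V^t_{k+1}\le\E V^t_k-2\eta^t_k\E[\text{gap}]+(\eta^t_k)^2G^2+2\eta^t_k L(B_{\x}+B_{\y})\E[\Delta^t_k]+(\text{noise})$.

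I would then telescope over $k=0,\dots,K-1$ and $t=1,\dots,T$, using that the symmetric doubly-stochastic mixing preserves the average, so $\bar{\x}^t_0=\bar{\x}^{t-1}_K$ and $\bar{\y}^t_0=\bar{\y}^{t-1}_K$ (established in the proof of Theorem~\ref{thm:local:sc-sc-stability}); the potential therefore telescopes across round boundaries and collapses to $V^1_0\le B_{\x}^2+B_{\y}^2$, giving the $\tfrac{1-\eta}{2\eta TK}(B_{\x}^2+B_{\y}^2)$ leading term (and its decaying-rate analogue after dividing by $\sum2\eta^t_k$). The residual martingale term $\tfrac{1}{TK}\sum_{t,k}\langle\zeta^t_k,(\x'-\bar{\x}^t_k,\y'-\bar{\y}^t_k)\rangle$ is handled \emph{after} $\sup_{\y'}/\inf_{\x'}$ by Cauchy--Schwarz against the domain diameter together with $\|\zeta^t_k\|\le2G$ and independence across iterations, so that $\E\|\sum\zeta^t_k\|\le2G\sqrt{TK}$, producing the $\tfrac{2G(B_{\x}+B_{\y})}{\sqrt{TK}}$ term. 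Substituting the explicit consensus bounds of Lemma~\ref{le:delta:localdecentra}\ref{fixed} and \ref{decaying} and carrying out the elementary sums $\sum_k(k+1)^{-\alpha}$ and $\sum_t t^{-\beta}$ then yields the two stated expressions, including the $C_{\lambdabm},C_{\lambdabm^2}$ and $\tfrac{1}{1-\lambdabm}$ factors.

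The main obstacle is the adversarial $\sup_{\y'}/\inf_{\x'}$ over comparison points: because the optimal $\y'$ (resp.\ $\x'$) depends on the entire random trajectory, one cannot simply pair unbiasedness against a fixed comparator, and the maximal-inequality step above is the crux of the argument. The $\mu$-strong convexity--concavity is what makes this workable, since the $-\tfrac{\mu}{2}\|\cdot\|^2$ terms effectively confine the comparator while the boundedness assumptions $\|\x\|\le B_{\x}$, $\|\y\|\le B_{\y}$ supply the diameter; reconciling these two controls so that the noise bound stays at the $\tfrac{1}{\sqrt{TK}}$ scale is the delicate point, and the accurate bookkeeping of $\Delta^t_k$ through both the smoothness step and the telescoping across communication rounds is the secondary source of difficulty.
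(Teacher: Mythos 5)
Your proposal is correct and takes essentially the same route as the paper's proof: fix a deterministic comparator, pass to per-iteration gaps via Jensen and convexity--concavity, run a potential/telescoping argument across local steps and communication rounds (using $\bar{\x}^t_0=\bar{\x}^{t-1}_K$ and $\bar{\y}^t_0=\bar{\y}^{t-1}_K$), split the averaged-gradient error into a consensus part absorbed by Lemma~\ref{le:delta:localdecentra} and a zero-mean sampling part whose accumulated sum is bounded by $2G\sqrt{TK}$ through Cauchy--Schwarz against the diameter plus martingale orthogonality, and finally evaluate the learning-rate sums for the fixed and decaying schedules. Two minor caveats: your per-step bookkeeping (e.g., the $2\eta^t_kL(B_{\x}+B_{\y})\E[\Delta^t_k]$ drift term from the joint potential) does not land exactly on the stated $(1+\tfrac{L}{2})$ coefficient, which in the paper arises from splitting $\langle\x-\bar{\x}^t_k,\cdot\rangle$ into a comparator piece (folded into the norm-of-sum bound) and a $\langle\bar{\x}^t_k,\cdot\rangle$ piece; and your claimed crux --- that the optimal comparator depends on the random trajectory --- is overstated for the \emph{weak} PD risk, whose $\sup/\inf$ sit outside the expectation so the comparators are deterministic, although your maximal-inequality treatment (identical to the paper's) is valid regardless and is exactly what produces the $\tfrac{2G(B_{\x}+B_{\y})}{\sqrt{TK}}$ term.
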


\begin{proof}[\textbf{Proof of Theorem~\ref{thm:weakpdempirical}}]
    For arbitrary $\x$, we have:
    \begin{equation*}
        \begin{aligned}
            &\quad\;\|\bar{\x}^t_{k+1}-\x\|^2\\
            &=\|\bar{\x}^t_k-\frac{\eta^t_k}{m}\sum_{i=1}^m\dx f_i(\x^t_{i,k},\y^t_{i,k};\xi_{i,j^t_k(i)})-\x\|^2\\
            &=\|\bar{\x}^t_k-\x\|^2+2<\x-\bar{\x}^t_k,\frac{\eta^t_k}{m}\sum_{i=1}^m\dx f_i(\x^t_{i,k},\y^t_{i,k};\xi_{i,j^t_k(i)})>+\eta^{t\,2}_k\|\frac{1}{m}\sum_{i=1}^m\dx f_i(\x^t_{i,k},\y^t_{i,k};\xi_{i,j^t_k(i)})\|^2\\
            &\leq\|\bar{\x}^t_k-\x\|^2+2\eta^t_k<\x-\bar{\x}^t_k,\dx\FS(\bar{\x}^t_k,\bar{\y}^t_k)>\\
            &\quad+2\eta^t_k<\x-\bar{\x}^t_k,\frac{1}{m}\sum_{i=1}^m\dx f_i(\x^t_{i,k},\y^t_{i,k};\xi_{i,j^t_k(i)})-\dx\FS(\bar{\x}^t_k,\bar{\y}^t_k)>+\eta^{t\,2}_kG^2\\
            &\leq\|\bar{\x}^t_k-\x\|^2+2\eta^t_k\left(\FS(\x,\bar{\y}^t_k)-\FS(\bar{\x}^t_k,\bar{\y}^t_k)-\frac{\mu}{2}\|\x-\bar{\x}^t_k\|^2\right)\\
            &\quad+2\eta^t_k<\x-\bar{\x}^t_k,\frac{1}{m}\sum_{i=1}^m\dx f_i(\x^t_{i,k},\y^t_{i,k};\xi_{i,j^t_k(i)})-\dx\FS(\bar{\x}^t_k,\bar{\y}^t_k)>+\eta^{t\,2}_kG^2
        \end{aligned}
    \end{equation*}
    where the second-to-last inequality is due to the Lipschitz continuity of $f_i$ (see Assumption~\ref{as:Lip:con}) and we make use of the property of convexity on the function $\FS(\cdot,\y)=\frac{1}{m}\sum_{i=1}^m\frac{1}{n}\sum_{l_i=1}^nf_i(\cdot,\y;\xi_{i,l_i})$ (see Definition~\ref{def:convexconcave}) in the last inequality.

    Then we rearrange the above inequality and sum up from $t=1,k=0$ to $t=T,k=K-1$, based on the fact that $\bar{\x}^t_0=\bar{\x}^{t-1}_K$ it follows that:
    \begin{equation*}
        \begin{aligned}
            &\quad\;\sum_{t=1}^T\sum_{k=0}^{K-1}2\eta^t_k(\FS(\bar{\x}^t_k,\bar{\y}^t_k)-\FS(\x,\bar{\y}^t_k))\\
            &\leq\|\bar{\x}^1_0-\x\|^2-\|\bar{\x}^T_K-\x\|^2+2\sum_{t=1}^T\sum_{k=0}^{K-1}\eta^t_k<\x-\bar{\x}^t_k,\frac{1}{m}\sum_{i=1}^m\dx f_i(\x^t_{i,k},\y^t_{i,k};\xi_{i,j^t_k(i)})-\dx\FS(\bar{\x}^t_k,\bar{\y}^t_k)>\\
            &\quad+\sum_{t=1}^T\sum_{k=0}^{K-1}\eta^{t\,2}_kG^2-\sum_{t=1}^T\sum_{k=0}^{K-1}\mu\eta^t_k\|\x-\bar{\x}^t_k\|^2.
        \end{aligned}
    \end{equation*}

    Taking expectation on the randomness of the algorithm $\A$, we can obtain:
    \begin{equation}\label{eq:21}
        \begin{aligned}
            &\quad\;2\sum_{t=1}^T\sum_{k=0}^{K-1}\eta^t_k\E_{\A}[\FS(\bar{\x}^t_k,\bar{\y}^t_k)-\FS(\x,\bar{\y}^t_k)]\\
            &\leq(1-\eta^1_0)B_{\x}^2+\sum_{t=1}^T\sum_{k=0}^{K-1}\eta^{t\,2}_kG^2+2B_{\x}\E_{\A}\|\sum_{t=1}^T\sum_{k=0}^{K-1} \eta^t_k\left(\frac{1}{m}\sum_{i=1}^m\dx f_i(\x^t_{i,k},\y^t_{i,k};\xi_{i,j^t_k(i)})-\dx\FS(\bar{\x}^t_k,\bar{\y}^t_k)\right)\|\\
            &\quad+\sum_{t=1}^T\sum_{k=0}^{K-1}\eta^t_k\E_{\A}<\bar{\x}^t_k,\dx\FS(\bar{\x}^t_k,\bar{\y}^t_k)-\frac{1}{m}\sum_{i=1}^m\dx f_i(\x^t_{i,k},\y^t_{i,k};\xi_{i,j^t_k(i)})>.
        \end{aligned}
    \end{equation}

    For the cross-product term, we evaluate it as:
    \begin{equation}\label{eq:22}
        \begin{aligned}
            &\quad\;\E_{\A}<\bar{\x}^t_k,\dx\FS(\bar{\x}^t_k,\bar{\y}^t_k)-\frac{1}{m}\sum_{i=1}^m\dx f_i(\x^t_{i,k},\y^t_{i,k};\xi_{i,j^t_k(i)})>\\
            &=\E_{\A}<\bar{\x}^t_k,\frac{1}{m}\sum_{i=1}^m\frac{1}{n}\sum_{l_i=1}^n\dx f_i(\bar{\x}^t_k,\bar{\y}^t_k;\xi_{i,l_i})-\frac{1}{m}\sum_{i=1}^m\dx f_i(\bar{\x}^t_k,\bar{\y}^t_k;\xi_{i,j^t_k(i)})>\\
            &\quad+\E_{\A}<\bar{\x}^t_k,\frac{1}{m}\sum_{i=1}^m\dx f_i(\bar{\x}^t_k,\bar{\y}^t_k;\xi_{i,j^t_k(i)})-\frac{1}{m}\sum_{i=1}^m\dx f_i(\x^t_{i,k},\y^t_{i,k};\xi_{i,j^t_k(i)})>\\
            &\leq0+B_{\x}\E_{\A}\|\frac{1}{m}\sum_{i=1}^m\left(\dx f_i(\bar{\x}^t_k,\bar{\y}^t_k;\xi_{i,j^t_k(i)})-\dx f_i(\x^t_{i,k},\y^t_{i,k};\xi_{i,j^t_k(i)})\right)\|\\
            &\leq B_{\x}\frac{L}{m}\sum_{i=1}^m\E_{\A}\left\|\left(\begin{array}{c}
                \bar{\x}^t_k-\x^t_{i,k} \\
                \bar{\y}^t_k-\y^t_{i,k}
            \end{array}\right)\right\|.
        \end{aligned}
    \end{equation}

    For the other term, we proceed with its quadratic one:
    \begin{equation}\label{eq:23}
        \begin{aligned}
        &\quad\;\E_{\A}\|\sum_{t=1}^T\sum_{k=0}^{K-1} \eta^t_k\left(\frac{1}{m}\sum_{i=1}^m\dx f_i(\x^t_{i,k},\y^t_{i,k};\xi_{i,j^t_k(i)})-\dx\FS(\bar{\x}^t_k,\bar{\y}^t_k)\right)\|^2\\
        &=\sum_{t=1}^T\sum_{k=0}^{K-1}\eta^{t\,2}_k\E_{\A}\|\frac{1}{m}\sum_{i=1}^m\dx f_i(\x^t_{i,k},\y^t_{i,k};\xi_{i,j^t_k(i)})-\dx\FS(\bar{\x}^t_k,\bar{\y}^t_k)\|^2+\sum_{(t,k)\neq(t',k')}\eta^t_k\eta^{t'}_{k'}\\
        &\E_{\A}\!\!<\!\!\frac{1}{m}\!\sum_{i=1}^m\dx f_i(\x^t_{i,k},\y^t_{i,k};\xi_{i,j^t_k(i)})\!-\!\dx\FS(\bar{\x}^t_k,\bar{\y}^t_k),\frac{1}{m}\sum_{i=1}^m\!\dx f_i(\x^{t'}_{i,k'},\y^{t'}_{i,k'};\xi_{i,j^{t'}_{k'}(i)})\!-\!\dx\FS(\bar{\x}^{t'}_{k'},\bar{\y}^{t'}_{k'})\!>\\
        &\leq 4G^2\sum_{t=1}^T\sum_{k=0}^{K-1}\eta^{t\,2}_k+\sum_{(t,k)\neq(t',k')}\eta^t_k\eta^{t'}_{k'}\\
        &\;\E_{\A}\!\!<\!\!\frac{1}{m}\!\!\sum_{i=1}^m\!\dx f_i(\!\x^t_{i,k},\!\y^t_{i,k};\!\xi_{i,j^t_k\!(i)}\!)\!\!-\!\!\frac{1}{m}\!\!\sum_{i=1}^m\!\dx f_i(\!\bar{\x}^t_k,\!\bar{\y}^t_k;\xi_{i,j^t_k\!(i)}\!)\!\!+\!\!\frac{1}{m}\!\!\sum_{i=1}^m\!\dx f_i(\!\bar{\x}^t_k,\bar{\y}^t_k;\xi_{i,j^t_k\!(i)}\!)\!\!-\!\!\frac{1}{m}\!\!\sum_{i=1}^m\!\!\frac{1}{n}\!\!\sum_{l_i=1}^n\!\dx\! f_i(\!\bar{\x}^t_k,\!\bar{\y}^t_k;\xi_{i,l_i}\!),\\
        &\frac{1}{m}\!\!\sum_{i=1}^m\!\dx f_i(\!\x^{t'}_{i,k'},\!\y^{t'}_{i,k'};\!\xi_{i,j^{t'}_{k'}\!(i)}\!)\!\!-\!\!\frac{1}{m}\!\!\sum_{i=1}^m\!\dx f_i(\!\bar{\x}^{t'}_{k'},\!\bar{\y}^{t'}_{k'};\!\xi_{i,j^{t'}_{k'}\!(i)}\!)\!\!+\!\!\frac{1}{m}\!\!\sum_{i=1}^m\!\dx f_i(\!\bar{\x}^{t'}_{k'},\bar{\y}^{t'}_{k'};\!\xi_{i,j^{t'}_{k'}\!(i)}\!)\!\!-\!\!\frac{1}{m}\!\!\sum_{i=1}^m\!\!\frac{1}{n}\!\!\sum_{l_i=1}^n\!\dx\! f_i(\!\bar{\x}^{t'}_{k'},\!\bar{\y}^{t'}_{k'};\!\xi_{i,l_i}\!)\!\!>\\
        &=4G^2\sum_{t=1}^T\sum_{k=0}^{K-1}\eta^{t\,2}_k+\sum_{(t,k)\neq(t',k')}\eta^t_k\eta^{t'}_{k'}\\
        &\;\E_{\!\A}\!\!<\!\!\!\frac{1}{m}\!\!\sum_{i=1}^m\!\dx f_i(\!\x^t_{i,k},\!\y^t_{i,k};\!\xi_{i,j^t_k\!(i)}\!)\!\!-\!\!\frac{1}{m}\!\!\sum_{i=1}^m\!\dx f_i(\!\bar{\x}^t_k,\!\bar{\y}^t_k;\xi_{i,j^t_k\!(i)}\!)\!,\!\frac{1}{m}\!\!\sum_{i=1}^m\!\dx f_i(\!\x^{t'}_{i,k'},\!\y^{t'}_{i,k'};\!\xi_{i,j^{t'}_{k'}\!(i)}\!)\!\!-\!\!\frac{1}{m}\!\!\sum_{i=1}^m\!\dx\! f_i\!(\!\bar{\x}^{t'}_{k'},\!\bar{\y}^{t'}_{k'};\!\xi_{i,j^{t'}_{k'}\!(i)}\!)\!\!>\\
        &\leq4G^2\sum_{t=1}^T\sum_{k=0}^{K-1}\eta^{t\,2}_k+\sum_{(t,k)\neq(t',k')}\eta^t_k\eta^{t'}_{k'}\\
        &\;\E_{\!\A}\!\|\!\frac{1}{m}\!\!\sum_{i=1}^m\!\dx\! f_i(\!\x^t_{i,k},\!\y^t_{i,k};\!\xi_{i,j^t_k\!(i)}\!)\!\!-\!\!\frac{1}{m}\!\!\sum_{i=1}^m\!\dx f_i(\!\bar{\x}^t_k,\!\bar{\y}^t_k;\xi_{i,j^t_k\!(i)}\!)\!\|\!\cdot\!\|\!\frac{1}{m}\!\!\sum_{i=1}^m\!\dx f_i(\!\x^{t'}_{i,k'},\!\y^{t'}_{i,k'};\!\xi_{i,j^{t'}_{k'}\!(i)}\!)\!\!-\!\!\frac{1}{m}\!\!\sum_{i=1}^m\!\dx\! f_i\!(\!\bar{\x}^{t'}_{k'},\!\bar{\y}^{t'}_{k'};\!\xi_{i,j^{t'}_{k'}\!(i)}\!)\!\|\\
        &\leq4G^2\sum_{t=1}^T\sum_{k=0}^{K-1}\eta^{t\,2}_k+\sum_{(t,k)\neq(t',k')}\eta^t_k\eta^{t'}_{k'}(\frac{1}{m}\sum_{i=1}^m\E_{\A}\left\|\left(\begin{array}{c}
            \x^t_{i,k}-\bar{\x}^t_k\\
            \y^t_{i,k}-\bar{\y}^t_k
        \end{array}\right)\right\|)(\frac{1}{m}\sum_{i=1}^m\E_{\A}\left\|\left(\begin{array}{c}
            \x^{t'}_{i,k'}-\bar{\x}^{t'}_{k'}\\
            \y^{t'}_{i,k'}-\bar{\y}^{t'}_{k'}
        \end{array}\right)\right\|).
        \end{aligned}
    \end{equation}

    For fixed learning rates \ref{op:fixed}, combining inequalities~\eqref{eq:22} and \eqref{eq:23} into \eqref{eq:21}, we can derive:
    \begin{equation*}
        \begin{aligned}
            &\quad\;2\eta\sum_{t=1}^T\sum_{k=0}^{K-1}\E_{\A}[\FS(\bar{\x}^t_k,\bar{\y}^t_k)-\FS(\x,\bar{\y}^t_k)]\\
            &\leq(1-\eta)B_{\x}^2+\eta^2G^2TK+2B_{\x}\sqrt{4\eta^2G^2TK+\eta^2\sum_{(t,k)\neq(t',k')}\E_{\A}[\Delta^t_k]\E_{\A}[\Delta^{t'}_{k'}]}+B_{\x}\eta L\sum_{t=1}^T\sum_{k=0}^{K-1}\E_{\A}[\Delta^t_k].
        \end{aligned}
    \end{equation*}

    Dividing both sides of the above inequality by $2\eta TK$ and exploiting the concavity of $\FS(\x,\cdot)$ that $\FS(\x,\frac{1}{TK}\sum_{t=1}^T\sum_{k=0}^{K-1}\bar{\y}^t_k)\geq\frac{1}{TK}\sum_{t=1}^T\sum_{k=0}^{K-1}\FS(\x,\bar{\y}^t_k)$, we arrive:
    \begin{equation}\label{eq:31}
        \begin{aligned}
            &\quad\;\frac{1}{TK}\sum_{t=1}^T\sum_{k=0}^{K-1}\E_{\A}[\FS(\bar{\x}^t_k,\bar{\y}^t_k)]-\inf_{\x}\E_{\A}[\FS(\x,\widetilde{\y}^T_K)]\\
            &\leq\frac{1}{TK}\sum_{t=1}^T\sum_{k=0}^{K-1}\E_{\A}[\FS(\bar{\x}^t_k,\bar{\y}^t_k)]-\E_{\A}[\FS(\x,\widetilde{\y}^T_K)]\\
            &\leq\frac{1-\eta}{2\eta TK}B_{\x}^2+\frac{\eta G^2}{2}+B_{\x}\frac{\sqrt{4G^2TK+\sum_{(t,k)\neq(t',k')}\E_{\A}[\Delta^t_k]\E_{\A}[\Delta^{t'}_{k'}]}}{TK}+B_{\x}L\frac{\sum_{t=1}^T\sum_{k=0}^{K-1}\E_{\A}[\Delta^t_k]}{2TK}.
        \end{aligned}
    \end{equation}

    On the other hand, we can obtain similar result that:
    \begin{equation}\label{eq:32}
        \begin{aligned}
            &\quad\;\sup_{\y}\E_{\A}[\FS(\widetilde{\x}^T_K,\y)]-\frac{1}{TK}\sum_{t=1}^T\sum_{k=0}^{K-1}\E_{\A}[\FS(\bar{\x}^t_k,\bar{\y}^t_k)]\\
            &\leq\frac{1-\eta}{2\eta TK}B_{\y}^2+\frac{\eta G^2}{2}+B_{\y}\frac{\sqrt{4G^2TK+\sum_{(t,k)\neq(t',k')}\E_{\A}[\Delta^t_k]\E_{\A}[\Delta^{t'}_{k'}]}}{TK}+B_{\y}L\frac{\sum_{t=1}^T\sum_{k=0}^{K-1}\E_{\A}[\Delta^t_k]}{2TK}.
        \end{aligned}
    \end{equation}

    Summing up inequalities~\eqref{eq:31} and \eqref{eq:32}, we can bound the weak PD empirical risk according to condition~\ref{fixed}:
    \begin{equation*}
        \begin{aligned}
            &\quad\;\Delta^{\!\mathrm{w}}_{\S}(\widetilde{\x}^T_K,\widetilde{\y}^T_K)\\
            &\leq\frac{1-\eta}{2\eta TK}(B_{\x}^2\!\!+\!\!B_{\y}^2)\!+\!\eta G^2\!\!+\!(B_{\x}\!\!+\!\!B_{\y})\frac{\sqrt{4G^2TK\!+\!\sum_{(t,k)\neq(t',k')}\E_{\A}[\Delta^t_k]\E_{\A}[\Delta^{t'}_{k'}]}}{TK}\\
            &\quad+(B_{\x}\!+\!B_{\y})L\frac{\sum_{t=1}^T\!\sum_{k=0}^{K-1}\E_{\A}[\Delta^t_k]}{2TK}\\
            &\leq\frac{1-\eta}{2\eta TK}(B_{\x}^2+B_{\y}^2)+\eta G^2+(B_{\x}+B_{\y})\frac{2G}{\sqrt{TK}}+\eta GK(B_{\x}+B_{\y})(1+\frac{L}{2})(\frac{1}{2}\sqrt{\frac{1}{1-\lambdabm}}+\sqrt{\frac{2\lambdabm}{(1-\lambdabm)(1-\lambdabm^2)}}).
        \end{aligned}
    \end{equation*}

    For decaying learning rates \ref{op:decaying}, plugging inequalities~\eqref{eq:22} and \eqref{eq:23} into \eqref{eq:21}, we have:
    \begin{equation*}
        \begin{aligned}
            &\quad\;\sum_{t=1}^T\sum_{k=0}^{K-1}\E_{\A}[\FS(\bar{\x}^t_k,\bar{\y}^t_k)-\FS(\x,\bar{\y}^t_k)]\\
            &\leq\frac{G^2}{2}\sum_{t=1}^T\sum_{k=0}^{K-1}\frac{1}{(k+1)^\alpha t^\beta}+\frac{B_{\x}L}{2}\sum_{t=1}^T\sum_{k=0}^{K-1}\E_{\A}[\Delta^t_k]+B_{\x}\sqrt{4G^2TK+\sum_{(t,k)\neq(t',k')}\E_{\A}[\Delta^t_k]\E_{\A}[\Delta^{t'}_{k'}]}.
        \end{aligned}
    \end{equation*}

    Analogously, dividing both sides by $TK$ and making use of the concavity of $\FS(\x,\cdot)$, we can get:
    \begin{equation*}
        \begin{aligned}
            &\quad\;\frac{1}{TK}\sum_{t=1}^T\sum_{k=0}^{K-1}\E_{\A}[\FS(\bar{\x}^t_k,\bar{\y}^t_k)]-\inf_{\x}\E_{\A}[\FS(\x,\widetilde{\y}^T_K)]\\
            &\leq\frac{G^2}{2TK}\sum_{t=1}^T\sum_{k=0}^{K-1}\frac{1}{(k+1)^\alpha t^\beta}+\frac{B_{\x}L}{2TK}\sum_{t=1}^T\sum_{k=0}^{K-1}\E_{\A}[\Delta^t_k]+\frac{2GB_{\x}}{\sqrt{TK}}+\frac{B_{\x}}{TK}\sqrt{\sum_{(t,k)\neq(t',k')}\E_{\A}[\Delta^t_k]\E_{\A}[\Delta^{t'}_{k'}]}.
        \end{aligned}
    \end{equation*}

    Combining with the symmetric result on the other parameter and referring to condition~\ref{decaying}, we have the following bound for the weak PD empirical risk:
    \begin{equation*}
        \begin{aligned}
            &\quad\;\Delta^{\mathrm{w}}_{\S}(\widetilde{\x}^T_K,\widetilde{\y}^T_K)\\
            &\leq\frac{G^2}{TK}\sum_{t=1}^T\sum_{k=0}^{K-1}\frac{1}{(k+1)^\alpha t^\beta}+\frac{(B_{\x}+B_{\y})L}{2TK}\sum_{t=1}^T\sum_{k=0}^{K-1}\E_{\A}[\Delta^t_k]\\
            &\quad+\frac{2G(B_{\x}+B_{\y})}{\sqrt{TK}}\frac{(B_{\x}+B_{\y})}{TK}\sqrt{\sum_{(t,k)\neq(t',k')}\E_{\A}[\Delta^t_k]\E_{\A}[\Delta^{t'}_{k'}]}\\
            &\overset{\alpha>\frac{1}{2}}{\leq} \frac{G^2}{TK}(\frac{T^{1-\beta}}{1-\beta}\1_{\beta<1}+(1+\ln{T})\1_{\beta=1})(\frac{K^{1-\alpha}}{1-\alpha}\1_{\alpha<1}+(1+\ln{K})\1_{\alpha=1})+\frac{2G(B_{\x}\!+\!B_{\y})}{\sqrt{TK}}\\
            &\quad+\frac{(B_{\x}\!+\!B_{\y})G(L+2)(\sqrt{\frac{1}{1-\lambdabm}\frac{2\alpha}{2\alpha-1}}+\sqrt{(C_{\lambdabm^2}+\frac{C_{\lambdabm}}{1-\lambdabm})\frac{2\alpha}{2\alpha-1}})}{2T}K^{\frac{1}{2}}(\frac{T^{1-\beta}}{1-\beta}\1_{\beta<1}+(1+\ln{T})\1_{\beta=1}).
        \end{aligned}
    \end{equation*}
\end{proof}

\subsubsection{Proof of Weak PD Population Risk}
\begin{proof}[\textbf{Proof of Theorem~\ref{thm:sc-sc-population}}]
    For the weak PD population risk, we have the following decomposition upon the corresponding generalization gap and empirical risk:
    \begin{equation*}
        \Delta^{\!\mathrm{w}}(\widetilde{\x}^T_K,\widetilde{\y}^T_K)=\underbrace{\Delta^{\!\mathrm{w}}(\widetilde{\x}^T_K,\widetilde{\y}^T_K)-\Delta^{\!\mathrm{w}}_{\S}(\widetilde{\x}^T_K,\widetilde{\y}^T_K)}_{\mathrm{weak\; PD\; generalization\; gap}}+\underbrace{\Delta^{\!\mathrm{w}}_{\S}(\widetilde{\x}^T_K,\widetilde{\y}^T_K)}_{\mathrm{weak\; PD\; empirical\; risk}}
    \end{equation*}

    We have the argument stability error from Theorem \ref{thm:local:sc-sc-stability}. According to Theorem \ref{thm:connection}$(i)$, which implies the weak PD generalization gap. Since the generalization error holds for any ($t$,$k$)-th model, it can also suit the averaged version. Combined with the weak PD empirical risk on the averaged output $(\widetilde{\x}^T_K,\widetilde{\y}^T_K)$, we can derive the weak PD population risk as follows:
    \begin{enumerate}[wide=\parindent,label=(\roman*)]
        \item for fixed learning rates,
        \begin{align*}
            &\quad\;\Delta^{\!\mathrm{w}}(\widetilde{\x}^T_K,\widetilde{\y}^T_K)\\
            &\leq\frac{2\sqrt{2}G^2(L+\mu)}{L\mu}\left(\eta L\sqrt{\frac{1}{1-\lambdabm}}\frac{K^2}{2}+\eta L\sqrt{\frac{2\lambdabm}{(1-\lambdabm)(1-\lambdabm^2)}}K^2+\frac{K}{n}\right)\\
            &+\frac{1-\eta}{2\eta TK}(B_{\x}^2+B_{\y}^2)+\eta G^2+(B_{\x}+B_{\y})\frac{2G}{\sqrt{TK}}+\eta GK(B_{\x}+B_{\y})(1+\frac{L}{2})(\frac{1}{2}\sqrt{\frac{1}{1-\lambdabm}}+\sqrt{\frac{2\lambdabm}{(1-\lambdabm)(1-\lambdabm^2)}})\\
            &\leq\frac{2\sqrt{2}G^2(L+\mu)}{L\mu}\bigg(\frac{\mathcal{O}(\sqrt{\lambdabm_1})L}{2}\eta K^2+\mathcal{O}(\sqrt{\lambdabm_2})L\eta K^2+\frac{K}{n}\bigg)+\eta G^2\\
            &+\frac{(1-\eta)(B_{\x}^2+B_{\y}^2\!)}{2\eta TK}+\frac{2G(B_{\x}+B_{\y})}{\sqrt{TK}}+\frac{\eta GKL(B_{\x}+B_{\y})(\mathcal{O}(\sqrt{\lambdabm_1})+\mathcal{O}(\sqrt{\lambdabm_2}))}{2}.
        \end{align*}

        \item for decaying learning rates $\eta^t_k=\frac{1}{(k+1)\alpha t}(\frac{1}{2}<\alpha<1)$,
        \begin{align*}
            &\quad\;\Delta^{\!\mathrm{w}}(\widetilde{\x}^T_K,\widetilde{\y}^T_K)\\
            &\leq\frac{\sqrt{2}G}{T+1}\bigg(\frac{(K^{\frac{3}{2}-\alpha}+\frac{3}{2}-\alpha)\sqrt{\frac{\frac{2\alpha}{2\alpha-1}}{1-\lambdabm}}\frac{2GL(1-\alpha)}{(\frac{3}{2}-\alpha)}}{\frac{L\mu}{L+\mu}(K^{1-\alpha}+1-\alpha)+\alpha-1}+\frac{(K^{1-\alpha}+1-\alpha)2GL\sqrt{(C_{\lambdabm^2}+\frac{C_{\lambdabm}}{1-\lambdabm})\frac{2\alpha}{2\alpha-1}K}}{\frac{L\mu}{L+\mu}(K^{1-\alpha}+1-\alpha)+\alpha-1}\bigg)\\
            &\quad+\frac{\sqrt{2}G^2}{n}\frac{L+\mu}{L\mu}+\frac{G^2(1+\ln{T})}{(1-\alpha)TK^\alpha}+\frac{(B_{\x}\!+\!B_{\y})G(L+2)(\sqrt{\frac{1}{1-\lambdabm}\frac{2\alpha}{2\alpha-1}}+\sqrt{(C_{\lambdabm^2}+\frac{C_{\lambdabm}}{1-\lambdabm})\frac{2\alpha}{2\alpha-1}})}{2T}K^{\frac{1}{2}}(1+\ln{T})\\
            &\quad+\frac{2G(B_{\x}+B_{\y})}{\sqrt{TK}}\\
            &\leq\frac{2G(B_{\x}\!+\!B_{\y})}{\sqrt{TK}}+\frac{\sqrt{2}G^2}{n}\frac{L+\mu}{L\mu}+\frac{\sqrt{2}G(L+\mu)}{L\mu}\frac{K^\frac{1}{2}}{T+1}(\mathcal{O}(\sqrt{\lambdabm_1})+GL\mathcal{O}(\sqrt{\lambdabm_2}))\\
            &+\frac{+\ln{T}}{T}\bigg(\frac{G^2}{(1-\alpha)K^\alpha}+(B_{\x}+B_{\y})GL(\mathcal{O}(\sqrt{\lambdabm_1})+\mathcal{O}(\sqrt{\lambdabm_2}))K^{\frac{1}{2}}\bigg).
        \end{align*}
    \end{enumerate}
    In both conditions, we define $\mathcal{O}(\sqrt{\lambdabm_1})$ and $\mathcal{O}(\sqrt{\lambdabm_2})$ as constants relevant to $\sqrt{\frac{1}{1-\lambdabm}}$ and $\sqrt{C_{\lambdabm^2}+\frac{C_{\lambdabm}}{1-\lambdabm}}$ (or $\sqrt{\frac{2\lambdabm}{(1-\lambdabm)(1-\lambdabm^2)}}$) respectively where the first figure equals to $1$ while the second figure equals to $0$ when $\lambdabm=0$.
    
\end{proof}
\section{Proof in Nonconvex-Strongly-Concave Case}\label{sec:NC-SC}
Firstly, we provide some technical lemmas to facilitate our subsequent proof process.

\subsection{Proof of Stability}

\begin{proof}[\textbf{Proof of Theorem~\ref{thm:primal}}]
    For distributed neighboring dataset $\S$ and $\S'$, where we assume the each local dataset $\S_i$ and $\S'_i$ differs at the last sample without loss of generality, i.e., $\xi_{i,n}$ and $\xi'_{i,n}$ respectively. We use $\A_{\x}(\S)$ and $\A_{\x}(\S')$ to represent the output of the algorithm $\A$ training on the dataset $\S$ and $\S'$, i.e., $\x^T$ and $\xdot^T$ respectively. According to Lemma 14 in \citet{yang2022differentially}, we hold the conclusion that:
    \begin{equation}\label{eq:71}
        \begin{aligned}
            &\quad\; 4\rho\|\pi_{\S}(A_{\x}(\S))-\pi_{\S'}(A_{\x}(\S'))\|^2+\mu\|\hat{\y}_{\S}-\hat{\y}_{\S'}\|^2\\
            &\leq F_{\S}(\pi_{\S'}(\A_{\x}(\S')),\hat{\y}_{\S})-F_{\S'}(\pi_{\S'}(\A_{\x}(\S')),\hat{\y}_{\S})+F_{\S'}(\pi_{\S}(\A_{\x}(\S)),\hat{\y}_{\S'})-F_{\S}(\pi_{\S}(\A_{\x}(\S)),\hat{\y}_{\S'})
        \end{aligned}   
    \end{equation}
    where we define $\Theta_{\x(\S)}=\{\hat{\x}_{\S}|(\hat{\x}_{\S},\hat{\y}_{\S})\in\arg\min_{\x\in\xset}\max_{\y\in\yset} F_{\S}(\x,\y)\}$ and $\pi_{\S}(\x)\triangleq\arg\min_{\hat{\x}_{\S}\in\Theta_{\x(\S)}}\|\x-\hat{\x}_{\S}\|^2$.

    Continuing with inequality~\eqref{eq:71}, we can get:
    \begin{equation}\label{eq:72}
        \begin{aligned}
            &\quad\;F_{\S}(\pi_{\S'}(\A_{\x}(\S')),\hat{\y}_{\S})-F_{\S'}(\pi_{\S'}(\A_{\x}(\S')),\hat{\y}_{\S})+F_{\S'}(\pi_{\S}(\A_{\x}(\S)),\hat{\y}_{\S'})-F_{\S}(\pi_{\S}(\A_{\x}(\S)),\hat{\y}_{\S'})\\
            &=\frac{1}{m}\sum_{i=1}^m\frac{1}{n}\\
            &\quad\left(f_i(\pi_{\S'}(\A_{\x}(\S')),\hat{\y}_{\S};\xi_{i,n})\!\!-\!\!f_i(\pi_{\S'}(\A_{\x}(\S')),\hat{\y}_{\S};\xi'_{i,n})\!\!+\!\!f_i(\pi_{\S}(\A_{\x}(\S)),\hat{\y}_{\S'};\xi'_{i,n})\!\!-\!\!f_i(\pi_{\S}(\A_{\x}(\S)),\hat{\y}_{\S'};\xi_{i,n})\right)\\
            &=\frac{1}{m}\sum_{i=1}^m\frac{1}{n}\\
            &\quad\left(f_i(\pi_{\S'}(\A_{\x}(\S')),\hat{\y}_{\S};\xi_{i,n})\!\!-\!\!f_i(\pi_{\S}(\A_{\x}(\S)),\hat{\y}_{\S'};\xi_{i,n})\!\!+\!\!f_i(\pi_{\S}(\A_{\x}(\S)),\hat{\y}_{\S'};\xi'_{i,n})\!\!-\!\!f_i(\pi_{\S'}(\A_{\x}(\S')),\hat{\y}_{\S};\xi'_{i,n})\right)\\
            &\leq\frac{2G}{n}\left\|\left(\begin{array}{c}
                \pi_{\S}(\A_{\x}(\S))-\pi_{\S'}(\A_{\x}(\S'))   \\
                \hat{\y}_{\S'}-\hat{\y}_{\S} 
            \end{array}\right)\right\|\\
            &\leq\frac{2G}{n}\sqrt{\frac{1}{4\rho}+\frac{1}{\mu}}\sqrt{4\rho\|\pi_{\S}(A_{\x}(\S))-\pi_{\S'}(A_{\x}(\S'))\|^2+\mu\|\hat{\y}_{\S}-\hat{\y}_{\S'}\|^2}.
        \end{aligned}
    \end{equation}

    Combining inequality~\eqref{eq:71} and \eqref{eq:72}, we can acquire:
    \begin{equation}
        \begin{aligned}
            2\sqrt{\rho}\|\pi_{\S}(A_{\x}(\S))-\pi_{\S'}(A_{\x}(\S'))\|_2&\leq\sqrt{4\rho\|\pi_{\S}(A_{\x}(\S))-\pi_{\S'}(A_{\x}(\S'))\|^2+\mu\|\hat{\y}_{\S}-\hat{\y}_{\S'}\|^2}\\
        &\leq\frac{2G}{n}\sqrt{\frac{1}{4\rho}+\frac{1}{\mu}}.
        \end{aligned}
    \end{equation}

    While on the other hand, PL condition implies quadratic growth with the same constant $\rho$ as shown in \cite{karimi2016linear}, then we can obtain the following for any given dataset $\S$:
    \begin{equation}
        F_{\S}(\x,\y)-F_{\S}(\pi_{\S}(\x),\y)\leq\frac{\rho}{2}\|\x-\pi_{\S}(\x)\|^2, \forall \y\in\yset.
    \end{equation}

    Therefore there holds that, for any given dataset $\S$, the divergence between $\A_{\x}(\S)$ to the nearest optimal solution $\pi_{\S}(\x)$ can be bounded by the corresponding excess primal empirical risk: $\E_{\A}\|\A_{\x}(\S)-\pi_{\S}(\A_{\x}(\S))\|\leq\sqrt{\E_{\A}\|\x^T-\pi_{\S}(\x^T)\|^2}\leq\sqrt{\frac{2}{\rho}\E_{\A}[R_{\S}(\x^T)-R^*_{\S}]} $. 

    And the primal stability can be bounded by:
    \begin{equation*}
        \begin{aligned}
            &\quad\;\E_{\A}\|\A_{\x}(\S)-\A_{\x}(\S')\|\\
            &\leq\E_{\A}\|\A_{\x}(\S)-\pi_{\S}(\A_{\x}(\S))\|+\E_{\A}\|\pi_{\S}(\A_{\x}(\S))-\pi_{\S'}(\A_{\x}(\S'))\|+\E_{\A}\|\A_{\x}(\S')-\pi_{\S'}(\A_{\x}(\S'))\|\\
            &\leq\E_{\A}\|\pi_{\S}(\A_{\x}(\S))-\pi_{\S'}(\A_{\x}(\S'))\|+\sqrt{\frac{2}{\rho}\Delta^{\mathrm{ep}}_{\S}(\A_{\x}(\S))}+\sqrt{\frac{2}{\rho}\Delta^{\mathrm{ep}}_{\S'}(\A_{\x}(\S'))}\\
            &\leq\frac{2G}{n}\sqrt{\frac{1}{4\rho}+\frac{1}{\mu}}+2\sqrt{\frac{2}{\rho}\Delta^{\mathrm{rp}}_{\S}}.
        \end{aligned}
    \end{equation*}
\end{proof}

\subsection{Proof of Population Risk}

\subsubsection{Proof of Optimization Error}

\begin{thm}[\textbf{Excess Primal Empirical Risk}]\label{thm:ep:empirical}
    When each $f_i$ satisfies $\rho$PL-$\mu$SC condition, under Assumption~\ref{as:mixing}, \ref{as:Lip:con}, \ref{as:Lip:smmoth}, choosing learning rates as $\eta^t_k=\frac{4}{\mu(k+1)^\alpha t}$, where $1/2<\alpha<1$, letting $\mathbf{b}=\frac{2\mu^3L^2}{\mu^4-32L^2}\leq\frac{(\rho-\mu/4)\mu^3}{16L^2(L+L^2/\mu)}$, and requiring $\eta^t_k(L+L^2/\mu)\leq1$, we have the excess primal empirical risk is bounded by:
    \begin{equation*}
        \begin{aligned}
            &\quad\;\E_{\A}[R_{\S}(\x^T)-R^*_{\S}]\\
            &\leq\frac{1}{T}\Big\{\frac{16G}{\mu^2}(\frac{4B_{\y}\mathbf{b} L}{1-\mu L}+2G)\bigg(\frac{K^{\frac{3}{2}}+\frac{3}{2}-\alpha}{K^{1-\alpha}}\frac{1-\alpha}{\frac{3}{2}-\alpha}\sqrt{\frac{\frac{2\alpha}{2\alpha-1}}{1-\lambdabm}}+\frac{K^{1-\alpha}+1-\alpha}{K^{1-\alpha}}\sqrt{K}\sqrt{(C_{\lambdabm^2}+\frac{C_{\lambdabm}}{1-\lambdabm})\frac{2\alpha}{2\alpha-1}}\bigg)\\
            &+\frac{64G^2\mathbf{b}}{\mu^4}\frac{(1-\alpha)K^2}{K^{1-\alpha}}\bigg(\frac{\frac{2\alpha}{2\alpha-1}}{1-\lambdabm}+(C_{\lambdabm^2}+\frac{C_{\lambdabm}}{1-\lambdabm})\frac{2\alpha}{2\alpha-1}\bigg)+\frac{1}{K^{1-\alpha}}\frac{2\alpha(1-\alpha)}{2\alpha-1}\frac{16G^2}{\mu^2}(\frac{\rho}{1-\mu L}+2(L+L^2/\mu))\Big\}.
        \end{aligned}
    \end{equation*}
\end{thm}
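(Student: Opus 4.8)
The plan is to treat the primal objective $R_\S(\x)=\sup_{\y'}\FS(\x,\y')$ as the function we actually descend, exploiting that by Lemma~\ref{le:RSlipschitz} it is $(L+\kappa L)$-smooth with $\nabla R_\S(\x)=\dx\FS(\x,\hat{\y}_\S(\x))$ and that $\hat{\y}_\S(\cdot)$ is $\kappa$-Lipschitz ($\kappa=L/\mu$), and that by Lemma~\ref{le:RSPL} it inherits the $\rho$-PL condition $\tfrac12\|\nabla R_\S(\x)\|^2\ge\rho(R_\S(\x)-R^*_\S)$. The obstruction to a direct PL descent argument is that the algorithm updates $\x$ using $\dx f_i(\x^t_{i,k},\y^t_{i,k};\xi)$, i.e. with the dual iterate $\bar{\y}^t_k$ rather than the true maximizer $\hat{\y}_\S(\bar{\x}^t_k)$, and with local models rather than the average. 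First I would introduce the dual-tracking error $e^t_k\triangleq\bar{\y}^t_k-\hat{\y}_\S(\bar{\x}^t_k)$, recall the consensus term $\Delta^t_k$, and define the Lyapunov potential
$$V^t_k\triangleq\E_\A[R_\S(\bar{\x}^t_k)-R^*_\S]+\mathbf{b}\,\E_\A\|e^t_k\|^2,$$
with $\mathbf{b}$ the constant in the statement; the whole proof reduces to a one-step contraction for $V^t_k$ followed by an unrolling.

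Second, for the primal descent I would apply $(L+\kappa L)$-smoothness of $R_\S$ along the averaged update $\bar{\x}^t_{k+1}=\bar{\x}^t_k-\eta^t_k\,\tfrac1m\sum_i\dx f_i(\x^t_{i,k},\y^t_{i,k};\xi_{i,j^t_k(i)})$. Pairing the step against $\nabla R_\S(\bar{\x}^t_k)=\dx\FS(\bar{\x}^t_k,\hat{\y}_\S(\bar{\x}^t_k))$, the descent direction differs from the true primal gradient by (i) the dual gap, controlled by smoothness as $L\|e^t_k\|$, (ii) the consensus gap, controlled as $L\Delta^t_k$, and (iii) zero-mean stochastic noise whose second moment is bounded via Assumption~\ref{as:Lip:con} by $G^2$. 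After taking expectations over the sampling (mirroring the proof of Lemma~\ref{le:iteration}) and invoking the PL inequality to turn $-\eta^t_k\|\nabla R_\S\|^2$ into $-2\rho\eta^t_k(R_\S(\bar{\x}^t_k)-R^*_\S)$, this yields a recursion of the shape
$$\E_\A[R_\S(\bar{\x}^t_{k+1})-R^*_\S]\le(1-\rho\eta^t_k)\E_\A[R_\S(\bar{\x}^t_k)-R^*_\S]+c_1\eta^t_k\E_\A\|e^t_k\|^2+c_2\eta^t_k L\,\E_\A[\Delta^t_k]+c_3(\eta^t_k)^2G^2,$$
in which the leaked $\|e^t_k\|^2$ term is precisely what the dual potential must absorb.

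Third, for the dual tracking I would expand $\|e^t_{k+1}\|^2$, splitting the change into the ascent step on $\bar{\y}$—which, by $\mu$-strong concavity of $\FS(\bar{\x}^t_k,\cdot)$, contracts $\bar{\y}^t_k$ toward $\hat{\y}_\S(\bar{\x}^t_k)$ by a factor $(1-c\mu\eta^t_k)$—and the drift of the target $\hat{\y}_\S$ induced by the primal move, bounded through $\kappa$-Lipschitzness by $\kappa\|\bar{\x}^t_{k+1}-\bar{\x}^t_k\|\le\kappa\eta^t_k G$, together with an $L\Delta^t_k$ consensus contribution. This gives $\E_\A\|e^t_{k+1}\|^2\le(1-c\mu\eta^t_k)\E_\A\|e^t_k\|^2+(\eta^t_k)^2(\cdots)+\eta^t_k L\Delta^t_k(\cdots)$. \textbf{The hard part is here}: choosing $\mathbf{b}=\frac{2\mu^3L^2}{\mu^4-32L^2}$, subject to the stated bound $\mathbf{b}\le\frac{(\rho-\mu/4)\mu^3}{16L^2(L+L^2/\mu)}$ and $\eta^t_k(L+L^2/\mu)\le1$, so that $\mathbf{b}$ times the dual contraction cancels the $c_1\eta^t_k\|e^t_k\|^2$ leak while leaving a strictly positive net rate, i.e. the combined potential obeys a clean $V^t_{k+1}\le(1-c'\eta^t_k)V^t_k+c_4\eta^t_k L\,\E_\A[\Delta^t_k]+c_5(\eta^t_k)^2$. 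Closing this cross-coupling between primal suboptimality and dual error with a positive rate is the genuine difficulty; the prescribed $\mathbf{b}$ is exactly the weight that makes the two recursions compatible.

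Finally, I would unroll the potential recursion over all $(t,k)$ with $\eta^t_k=\frac{4}{\mu(k+1)^\alpha t}$, producing telescoping products $\prod(1-c'\eta^t_{k'})$ and weighted sums of $\E_\A[\Delta^t_k]$ and $(\eta^t_k)^2$. Substituting the decaying-rate consensus estimate of Lemma~\ref{le:delta:localdecentra}\ref{decaying} (with $\beta=1$) and evaluating the sums over $k$ and $t$—the same $\sum_k(k+1)^{-\alpha}$, $\sum_k(k+1)^{-(\alpha-1/2)}$, and geometric-in-$t$ bounds already developed in the proof of Theorem~\ref{thm:local:sc-sc-stability}—yields the $K^{3/2}$, $K^2$, $K^{1-\alpha}$ factors, the $\sqrt{\tfrac{2\alpha}{2\alpha-1}}$ and $\sqrt{C_{\lambdabm^2}+\tfrac{C_{\lambdabm}}{1-\lambdabm}}$ spectral factors, and the overall $\tfrac1T$ prefactor, matching the claimed bound. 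This closing step is routine bookkeeping once the summation estimates are in hand, so the real obstacle remains the calibration of $\mathbf{b}$ in the coupled two-level recursion rather than the final arithmetic.
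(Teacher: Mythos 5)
Your overall architecture --- the potential $A^t_k+\mathbf{b}\,B^t_k$ with $A^t_k=\E_{\A}[R_{\S}(\bar{\x}^t_k)-R^*_{\S}]$ and $B^t_k=\E_{\A}\|e^t_k\|^2$, $e^t_k=\bar{\y}^t_k-\hat{\y}_{\S}(\bar{\x}^t_k)$, a PL-based primal descent recursion, a strong-concavity-based dual tracking recursion, and an unrolling under $\eta^t_k=\frac{4}{\mu(k+1)^\alpha t}$ --- is exactly the paper's. The genuine gap is in your dual-tracking step. You bound the drift of the target by $\|\hat{\y}_{\S}(\bar{\x}^t_{k+1})-\hat{\y}_{\S}(\bar{\x}^t_k)\|\leq\kappa\|\bar{\x}^t_{k+1}-\bar{\x}^t_k\|\leq\kappa\eta^t_kG$ and then claim a recursion whose residuals are $(\eta^t_k)^2$ and $\eta^t_kL\,\E_{\A}[\Delta^t_k]$. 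That cannot work: the Young split of $\|e^t_{k+1}\|^2$ must put weight $(1+\mathbf{a})$ on the drift and $(1+1/\mathbf{a})$ on the ascent step, and to keep the ascent factor contractive you are forced to take $\mathbf{a}$ of order $\frac{1}{\mu\eta^t_k}$ (the paper takes $\mathbf{a}=\frac{1-\mu\eta^t_k}{\mu\eta^t_k/2}$). With your crude drift bound the leaked term is then $(1+\mathbf{a})\kappa^2(\eta^t_k)^2G^2\sim\frac{2\kappa^2G^2}{\mu}\eta^t_k$ --- first order in $\eta^t_k$, not second. Balancing this against the dual contraction rate $1-O(\mu\eta^t_k)$ leaves a non-vanishing tracking floor $B^t_k\gtrsim\kappa^2G^2/\mu^2$, which feeds the first-order term $\frac{\eta^t_kL^2}{2}B^t_k$ into the primal recursion and yields a bound on $A^T_K$ that is constant in $T$. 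The theorem's $O(1/T)$ rate is therefore unreachable by this route, and no choice of $\mathbf{b}$ can repair it, because the offending residual contains neither $A^t_k$ nor $B^t_k$ and so cannot be cancelled by any Lyapunov weighting.

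The fix is the paper's inequality \eqref{eq:58}: instead of $\|\bar{\x}^t_{k+1}-\bar{\x}^t_k\|\leq\eta^t_kG$, expand the displacement around $\dx F_{\S}(\bar{\x}^t_k,\bar{\y}^t_k)$, then use $\nabla R_{\S}(\bar{\x})=\dx F_{\S}(\bar{\x},\hat{\y}_{\S}(\bar{\x}))$ (Lemma~\ref{le:RSlipschitz}) and the self-bounding property of the $(L+L^2/\mu)$-smooth function $R_{\S}$ to get $\|\dx F_{\S}(\bar{\x}^t_k,\bar{\y}^t_k)\|^2\leq 2L^2\|e^t_k\|^2+4(L+L^2/\mu)A^t_k$. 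After the Young weighting, the drift then contributes $O(\eta^t_k)\|e^t_k\|^2$, $O(\eta^t_k)A^t_k$, and $O(\eta^t_k)(\E_{\A}[\Delta^t_k])^2$ --- terms that are re-absorbed into the coupled contraction instead of leaking as constants. This is precisely why $\mathbf{b}$ must satisfy the second condition $\mathbf{b}\leq\frac{(\rho-\mu/4)\mu^3}{16L^2(L+L^2/\mu)}$: it ensures the cross term $\mathbf{b}(1+\mathbf{a})\frac{8L^2(L+L^2/\mu)}{\mu^2}(\eta^t_k)^2A^t_k$ is dominated by the primal contraction margin $(\rho-\mu/4)\eta^t_kA^t_k$. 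Your proposal never generates this cross term --- the telltale sign the drift was bounded too crudely --- and it likewise loses the $(\E_{\A}[\Delta^t_k])^2$ residual that produces the $K^2$ term in the stated bound.
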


\begin{proof}[\textbf{Proof of Theorem~\ref{thm:ep:empirical}}]
    Applying the Lipschitz smoothness property on $R_{\S}(\x)$ with constant $L+L^2/\mu$ as mentioned in Lemma~\ref{le:RSlipschitz}, we have:
    \begin{equation*}
        R_{\S}(\bar{\x}^t_{k+1})\leq R_{\S}(\bar{\x}^t_k)+<\nabla R_{\S}(\bar{\x}^t_k),\bar{\x}^t_{k+1}-\bar{\x}^t_k>+\frac{L+L^2/\mu}{2}\|\bar{\x}^t_{k+1}-\bar{\x}^t_k\|^2.
    \end{equation*}

    Subtracting $R_{\S}^*$ from both sides of above inequality and utilizing the update rule of $\bar{\x}^t_{k+1}$, we can obtain:
    \begin{equation}\label{eq:51}
        \begin{aligned}
            &\quad\; R_{\S}(\bar{\x}^t_{k+1})-R_{\S}^*\\
            &\leq R_{\S}(\bar{\x}^t_k)-R_{\S}^*-\eta^t_k\!<\!\nabla R_{\S}(\bar{\x}^t_k),\frac{1}{m}\sum_{i=1}^m\dx f_i(\x^t_{i,k},\y^t_{i,k};\xi_{i,j^t_k(i)})\!>\\
            &\quad+\frac{L+L^2/\mu}{2}\eta^{t\,2}_k\|\frac{1}{m}\sum_{i=1}^m\dx f_i(\x^t_{i,k},\y^t_{i,k};\xi_{i,j^t_k(i)})\|^2\\
            &\leq R_{\S}(\bar{\x}^t_k)\!\!-\!\!R_{\S}^*\!\!-\!\!\eta^t_k\!<\!\nabla R_{\S}(\bar{\x}^t_k),\dx F_{\S}(\bar{\x}^t_k,\bar{\y}^t_k)\!\!>\!+\eta^t_k\!<\!\nabla R_{\S}(\bar{\x}^t_k),\dx F_{\S}(\bar{\x}^t_k,\bar{\y}^t_k)\!-\!\frac{1}{m}\!\sum_{i=1}^m\!\dx f_i(\x^t_{i,k},\y^t_{i,k};\xi_{i,j^t_k(i)})\!\!>\!\\
            &\quad+\frac{L+L^2/\mu}{2}\eta^{t\,2}_k\|\dx F_{\S}(\bar{\x}^t_k,\bar{\y}^t_k)\|^2+\frac{L+L^2/\mu}{2}\eta^{t\,2}_k\|\frac{1}{m}\sum_{i=1}^m\!\dx f_i(\x^t_{i,k},\y^t_{i,k};\xi_{i,j^t_k(i)})-\dx F_{\S}(\bar{\x}^t_k,\bar{\y}^t_k)\|^2\\
            &\quad+(L+L^2/\mu)\eta^{t\,2}_k<\dx F_{\S}(\bar{\x}^t_k,\bar{\y}^t_k),\frac{1}{m}\sum_{i=1}^m\!\dx f_i(\x^t_{i,k},\y^t_{i,k};\xi_{i,j^t_k(i)})-\dx F_{\S}(\bar{\x}^t_k,\bar{\y}^t_k)>.
        \end{aligned}
    \end{equation}

    Taking expectation on the randomness of the algorithm, we can derive for the second cross term:
    \begin{equation}\label{eq:52}
        \begin{aligned}
            &\quad\;\E_{\A}\!<\!\nabla R_{\S}(\bar{\x}^t_k),\dx F_{\S}(\bar{\x}^t_k,\bar{\y}^t_k)\!-\!\frac{1}{m}\!\sum_{i=1}^m\!\dx f_i(\x^t_{i,k},\y^t_{i,k};\xi_{i,j^t_k(i)})\!\!>\!\\
            &=\underbrace{\E_{\A}\!<\!\nabla R_{\S}(\bar{\x}^t_k),\frac{1}{m}\sum_{i=1}^m\frac{1}{n}\sum_{l_i=1}^n\dx f_i(\bar{\x}^t_k,\bar{\y}^t_k;\xi_{i,l_i})\!-\!\frac{1}{m}\!\sum_{i=1}^m\!\dx f_i(\bar{\x}^t_k,\bar{\y}^t_k;\xi_{i,j^t_k(i)})\!\!>\!}_{=0}\\
            &\quad+\E_{\A}\!<\!\nabla R_{\S}(\bar{\x}^t_k),\frac{1}{m}\!\sum_{i=1}^m\!\dx f_i(\bar{\x}^t_k,\bar{\y}^t_k;\xi_{i,j^t_k(i)})\!-\!\frac{1}{m}\!\sum_{i=1}^m\!\dx f_i(\x^t_{i,k},\y^t_{i,k};\xi_{i,j^t_k(i)})\!\!>\!.
        \end{aligned}
    \end{equation}

    Analogously, we can get similar results for the last cross term:
    \begin{equation}\label{eq:53}
        \begin{aligned}
            &\quad\;\E_{\A}\!<\!\dx F_{\S}(\bar{\x}^t_k,\bar{\y}^t_k),\!\frac{1}{m}\!\sum_{i=1}^m\!\dx f_i(\x^t_{i,k},\y^t_{i,k};\xi_{i,j^t_k(i)})-\!\dx F_{\S}(\bar{\x}^t_k,\bar{\y}^t_k)\!\!>\!\\
            &=\E_{\A}\!<\!-\dx F_{\S}(\bar{\x}^t_k,\bar{\y}^t_k),\frac{1}{m}\!\sum_{i=1}^m\!\dx f_i(\bar{\x}^t_k,\bar{\y}^t_k;\xi_{i,j^t_k(i)})\!-\!\frac{1}{m}\!\sum_{i=1}^m\!\dx f_i(\x^t_{i,k},\y^t_{i,k};\xi_{i,j^t_k(i)})\!\!>\!.
        \end{aligned}
    \end{equation}

    Combining equations \eqref{eq:52} and \eqref{eq:53} with their coefficients under the requirement that $\eta^t_k(L+L^2/\mu)\leq1$, we can get:
    \begin{equation}\label{eq:54}
        \begin{aligned}
            &\quad\;\eta^t_k\eqref{eq:52}+(L+L^2/\mu)\eta^{t\,2}_k\eqref{eq:53}\\
            &\leq\eta^t_k\E_{\A}\!<\!\nabla R_{\S}(\bar{\x}^t_k)-\dx F_{\S}(\bar{\x}^t_k,\bar{\y}^t_k),\frac{1}{m}\!\sum_{i=1}^m\!\dx f_i(\bar{\x}^t_k,\bar{\y}^t_k;\xi_{i,j^t_k(i)})\!-\!\frac{1}{m}\!\sum_{i=1}^m\!\dx f_i(\x^t_{i,k},\y^t_{i,k};\xi_{i,j^t_k(i)})\!>\!\\
            &\leq2G\eta^t_k\E_{\A}[\Delta^t_k].
        \end{aligned}
    \end{equation}

    In a similar way as we have operated in inequality~\eqref{eq:23}, we can conclude that:
    \begin{equation}\label{eq:55}
        \E_{\A}\|\frac{1}{m}\sum_{i=1}^m\!\dx f_i(\x^t_{i,k},\y^t_{i,k};\xi_{i,j^t_k(i)})-\dx F_{\S}(\bar{\x}^t_k,\bar{\y}^t_k)\|^2\leq4G^2.
    \end{equation}

    Besides, under the requirement that $\eta^t_k(L+L^2/\mu)\leq1$, we have the following recombination:
    \begin{equation}\label{eq:56}
        \begin{aligned}
            &\quad\;-\eta^t_k\E_{\A}\!<\!\nabla R_{\S}(\bar{\x}^t_k),\dx F_{\S}(\bar{\x}^t_k,\bar{\y}^t_k)\!\!>\!+\frac{L+L^2/\mu}{2}\eta^{t\,2}_k\E_{\A}\|\dx F_{\S}(\bar{\x}^t_k,\bar{\y}^t_k)\|^2\\
            &=\frac{\eta^t_k}{2}\E_{\A}\|\nabla R_{\S}(\bar{\x}^t_k)-\dx F_{\S}(\bar{\x}^t_k,\bar{\y}^t_k)\|^2-\frac{\eta^t_k}{2}\E_{\A}\|\nabla R_{\S}(\bar{\x}^t_k)\|^2\\
            &\leq\frac{\eta^t_k}{2}L^2\E_{\A}\|\bar{\y}^t_k-\hat{\y}_{\S}(\bar{\x}^t_k)\|^2-\rho\eta^t_k(R_{\S}(\bar{\x}^t_k)-R^*_{\S})
        \end{aligned}
    \end{equation}
    where the last inequality results from Lemma~\ref{le:RSlipschitz} that $\nabla R_{\S}(\bar{\x}^t_k)=\dx F_{\S}(\bar{\x}^t_k,\hat{\y}_{\S}(\bar{\x}^t_k))$ combined with the Lipschitz smoothness and PL condition of $R_{\S}$, as noted in Lemma~\ref{le:RSPL}.

    Overall, plugging equations~\eqref{eq:54},\eqref{eq:55},\eqref{eq:56} into the inequality~\eqref{eq:51} with expectation on the algorithm, we can obtain:
    \begin{equation}\label{eq:60}
        \E_{\A}[R_{\S}(\bar{\x}^t_{k+1})-R_{\S}^*]\leq(1-\rho\eta^t_k)\E_{\A}[R_{\S}(\bar{\x}^t_k)-R_{\S}^*]+\frac{\eta^t_kL^2}{2}\E_{\A}\|\bar{\y}^t_k-\hat{\y}_{\S}(\bar{\x}^t_k)\|^2+2G\eta^t_k\E_{\A}[\Delta^t_k]+2(L+L^2/\mu)\eta^{t\,2}_kG^2.
    \end{equation}

    Next, we will focus on the $\E_{\A}\|\bar{\y}^t_k-\hat{\y}_{\S}(\bar{\x}^t_k)\|^2$ term, for a given factor $\mathbf{a}$, we proceed with the following decomposition:
    \begin{equation}\label{eq:57}
        \E_{\A}\|\bar{\y}^t_{k+1}-\hat{\y}_{\S}(\bar{\x}^t_{k+1})\|^2\leq(1+\mathbf{a})\E_{\A}\|\hat{\y}_{\S}(\bar{\x}^t_{k+1})-\hat{\y}_{\S}(\bar{\x}^t_k)\|^2+(1+\frac{1}{\mathbf{a}})\E_{\A}\|\bar{\y}^t_{k+1}-\hat{\y}_{\S}(\bar{\x}^t_k)\|^2.
    \end{equation}

    For the first component, according to Lemma~\ref{le:RSlipschitz} and \ref{le:RSPL}, we have:
    \begin{equation}\label{eq:58}
        \begin{aligned}
            &\quad\E_{\A}\|\hat{\y}_{\S}(\bar{\x}^t_{k+1})-\hat{\y}_{\S}(\bar{\x}^t_k)\|^2\\
            &\leq\frac{L^2}{\mu^2}\E_{\A}\|\bar{\x}^t_{k+1}-\bar{\x}^t_k\|^2\\
            &\leq\frac{L^2}{\mu^2}\eta^{t\,2}_k\E_{\A}\|\frac{1}{m}\sum_{i=1}^m\dx f_i(\x^t_{i,k},\y^t_{i,k};\xi_{i,j^t_k(i)})\|^2\\
            &\leq\frac{2L^2}{\mu^2}\eta^{t\,2}_k(\E_{\A}[\Delta^t_k])^2+\frac{2L^2}{\mu^2}\eta^{t\,2}_k\E_{\A}\|\dx F_{\S}(\bar{\x}^t_k,\bar{\y}^t_k)\|^2\\
            &\leq\frac{2}{\mu}\eta^t_k(\E_{\A}[\Delta^t_k])^2\!\!+\!\!\frac{4L^2}{\mu^2}\eta^{t\,2}_k\E_{\A}\|\nabla R_{\S}(\bar{\x}^t_k)\!\!-\!\!\dx F_{\S}(\bar{\x}^t_k,\bar{\y}^t_k)\|^2\!\!+\!\!\frac{4L^2}{\mu^2}\eta^{t\,2}_k\E_{\A}\|\nabla R_{\S}(\bar{\x}^t_k)\|^2\\
            &\leq\frac{2}{\mu}\eta^t_k(\E_{\A}[\Delta^t_k])^2\!\!+\!\!\frac{4L^4}{\mu^2}\eta^{t\,2}_k\E_{\A}\|\hat{\y}_{\S}(\bar{\x}^t_k)\!\!-\!\!\bar{\y}^t_k\|^2\!\!+\!\!\frac{8L^2(L\!+\!L^2/\mu)}{\mu^2}\eta^{t\,2}_k\E_{\A}[R_{\S}(\bar{\x}^t_k)-R^*_{\S}].
        \end{aligned}
    \end{equation}

    For the second component, applying the update rule we can derive:
    \begin{small}
        \begin{equation}\label{eq:59}
        \begin{aligned}
            &\quad\;\E_{\A}\|\bar{\y}^t_{k+1}-\hat{\y}_{\S}(\bar{\x}^t_k)\|^2\\
            &=\E_{\!\A}\|\bar{\y}^t_k\!-\!\hat{\y}_{\S}(\bar{\x}^t_k)\|^2\!+\!2\eta^t_k\E_{\!\A}\!\!<\!\!\bar{\y}^t_k\!-\!\hat{\y}_{\S}(\bar{\x}^t_k),\frac{1}{m}\!\sum_{i=1}^m\!\dy f_i(\x^t_{i,k},\y^t_{i,k};\xi_{i,j^t_k(i)})\!\!>\!\!+\eta^{t\,2}_k\E_{\!\A}\|\frac{1}{m}\sum_{i=1}^m\dy f_i(\x^t_{i,k},\y^t_{i,k};\xi_{i,j^t_k(i)})\|^2\\
            &\leq\!\E_{\!\A}\|\bar{\y}^t_k\!\!-\!\hat{\y}_{\S}(\bar{\x}^t_k)\|^2\!\!\!+\eta^{t\,2}_kG^2\\
            &\quad+\!2\eta^t_k\!\!\left(\!\!\E_{\!\A}\!\!<\!\!\bar{\y}^t_k\!\!-\!\hat{\y}_{\S}\!(\bar{\x}^t_k),\!\dy F_{\S}(\bar{\x}^t_k,\bar{\y}^t_k)\!\!>\!\!+\E_{\!\A}\!\!<\!\!\bar{\y}^t_k\!\!-\!\hat{\y}_{\S}\!(\bar{\x}^t_k),\!\frac{1}{m}\!\!\sum_{i=1}^m\!\dy f_i(\x^t_{i,k},\y^t_{i,k};\xi_{i,j^t_k(i)})\!\!-\!\!\dy F_{\S}(\bar{\x}^t_k,\bar{\y}^t_k)\!\!>\!\!\right)\\
            &\overset{(a)}{\leq}\!\E_{\!\A}\|\bar{\y}^t_k\!\!-\!\hat{\y}_{\S}(\bar{\x}^t_k)\|^2\!\!\!+\eta^{t\,2}_kG^2\\
            &\quad+2\eta^t_k\!\!\left(\!-\frac{1}{2L}\E_{\A}\|\dy F_{\S}(\bar{\x}^t_k,\bar{\y}^t_k)-\dy F_{\S}(\bar{\x}^t_k,\hat{\y}_{\S}(\bar{\x}^t_k))\|^2-\!\frac{\mu}{2}\E_{\A}\|\bar{\y}^t_k\!-\!\hat{\y}_{\S}(\bar{\x}^t_k)\|^2\!\!+\!2B_{\y}L\E_{\A}[\Delta^t_k]\right)\\
            &\overset{(b)}{\leq}\!(1\!-\!\mu\eta^t_k)\E_{\!\A}\|\bar{\y}^t_k\!\!-\!\hat{\y}_{\S}(\bar{\x}^t_k)\|^2\!\!-\!\frac{\eta^t_k}{L}\E_{\A}\|\dy F_{\S}(\bar{\x}^t_k,\bar{\y}^t_k)\|^2\!+\!4\eta^t_kB_{\y}L\E_{\A}[\Delta^t_k]\!+\!\eta^{t\,2}_kG^2
        \end{aligned}
    \end{equation}
    \end{small}
    
    where in inequality $(a)$, we first make use of the property of $\mu$-strong concavity on $F_{\S}(\x,\cdot)$ that: $<\bar{\y}^t_k-\hat{\y}_{\S}(\bar{\x}^t_k),\dy F_{\S}(\bar{\x}^t_k,\bar{\y}^t_k)>\leq F_{\S}(\bar{\x}^t_k,\bar{\y}^t_k)-F_{\S}(\bar{\x}^t_k,\hat{\y}_{\S}(\bar{\x}^t_k))-\frac{\mu}{2}\|\bar{\y}^t_k-\hat{\y}_{\S}(\bar{\x}^t_k)\|^2$. And then we can conclude that: $F_{\S}(\bar{\x}^t_k,\bar{\y}^t_k)-F_{\S}(\bar{\x}^t_k,\hat{\y}_{\S}(\bar{\x}^t_k))\leq<\dx F_{\S}(\bar{\x}^t_k,\hat{\y}_{\S}(\bar{\x}^t_k)),\bar{\y}^t_k-\hat{\y}_{\S}(\bar{\x}^t_k)>-\frac{1}{2L}\|\dy F_{\S}(\bar{\x}^t_k,\bar{\y}^t_k)-\dy F_{\S}(\bar{\x}^t_k,\hat{\y}_{\S}(\bar{\x}^t_k))\|^2$ according to Property~\ref{pro:smooth+}. Since $\hat{\y}_{\S}(\bar{\x}^t_k)$ maximizes $F_{\S}(\bar{\x}^t_k,\cdot)$, consequently $<\dx F_{\S}(\bar{\x}^t_k,\hat{\y}_{\S}(\bar{\x}^t_k)),\y-\hat{\y}_{\S}(\bar{\x}^t_k)>\leq0, \forall\y\in\yset$. We exploit the optimality of $\hat{\y}_{\S}(\bar{\x}^t_k)$ in inequality $(b)$.

    As a whole, substituting the inequality~\eqref{eq:58} and \eqref{eq:59} into the decomposition~\eqref{eq:57}, we can acquire the following:
    \begin{equation}\label{eq:61}
        \begin{aligned}
            &\quad\;\E_{\A}\|\bar{\y}^t_{k+1}\!-\hat{\y}_{\S}(\bar{\x}^t_{k+1})\|^2\\
            &\leq((1+\frac{1}{\mathbf{a}})(1-\mu\eta^t_k)+(1+\mathbf{a})\frac{4L^4}{\mu^2}\eta^{t\,2}_k)\E_{\!\A}\|\bar{\y}^t_k\!-\hat{\y}_{\S}(\bar{\x}^t_k)\|^2+(1+\frac{1}{\mathbf{a}})(4\eta^t_kB_{\y}L\E_{\A}[\Delta^t_k]+\eta^{t\,2}_kG^2)\\
            &\quad+(1+\mathbf{a})\frac{8L^2(L+L^2/\mu)}{\mu^2}\eta^{t\,2}_k\E_{\A}[R_{\S}(\bar{\x}^t_k)-R^*_{\S}]+(1+\mathbf{a})\frac{2}{\mu}\eta^t_k(\E_{\A}[\Delta^t_k])^2.
        \end{aligned}
    \end{equation}

    Sum up inequalities \eqref{eq:60} and \eqref{eq:61} in the following way, in which we denote $A^t_k=\E_{\A}[R_{\S}(\bar{\x}^t_k)-R^*_{\S}]$ and $B^t_k=\E_{\A}\|\bar{\y}^t_k-\hat{\y}_{\S}(\bar{\x}^t_k)\|^2$:
    \begin{equation*}
        \begin{aligned}
            &\quad\;A^t_{k+1}+\mathbf{b} B^t_{k+1}\\
            &\leq\left(1-\rho\eta^t_k+\mathbf{b}(1+\mathbf{a})\frac{8L^2(L+L^2/\mu)}{\mu^2}\eta^{t\,2}_k\right)A^t_k+\left(\frac{\eta^t_kL^2}{2}+\mathbf{b}((1+\frac{1}{\mathbf{a}})(1-\mu\eta^t_k)+(1+\mathbf{a})\frac{4L^2}{\mu^2}\eta^{t\,2}_k)\right)B^t_k\\
            &\quad+\mathbf{b}(1+\frac{1}{\mathbf{a}})(4B_{\y}L\eta^t_k\E_{\A}[\Delta^t_k]+\eta^{t\,2}_kG^2)+\mathbf{b}(1+\mathbf{a})\frac{2}{\mu}\eta^t_k(\E_{\A}[\Delta^t_k])^2+2G\eta^t_k\E_{\A}[\Delta^t_k]+2(L+L^2/\mu)\eta^{t\,2}_kG^2.
        \end{aligned}
    \end{equation*}

    Letting $\mathbf{a}=\frac{1-\mu\eta^t_k}{\frac{\mu}{2}\eta^t_k}$ and $\mathbf{b}=\frac{2\mu^3L^2}{\mu^4-32L^2}\leq\frac{(\rho-\mu/4)\mu^3}{16L^2(L+L^2/\mu)}$, then $1+\frac{1}{\mathbf{a}}=\frac{1-\frac{\mu}{2}\eta^t_k}{1-\mu\eta^t_k}\leq\frac{1}{1-\mu/L}$ and $1+\mathbf{a}=\frac{1-\frac{\mu}{2}\eta^t_k}{\frac{\mu}{2}\eta^t_k}\leq\frac{1}{\frac{\mu}{2}\eta^t_k}$ and the coefficients of $A^t_k$ and $B^t_k$ can be scaled by $1-\frac{\mu}{4}\eta^t_k$ and $\mathbf{b}(1-\frac{\mu}{4}\eta^t_k)$ respectively, and we can deduce:
    \begin{equation*}
        \begin{aligned}
            &\quad\;A^t_{k+1}+\mathbf{b} B^t_{k+1}\\
            &\leq(1-\frac{\mu}{4}\eta^t_k)(A^t_k+\mathbf{b} B^t_k)\!+\!\frac{\mathbf{b}}{1-\mu/ L}(4B_{\y}L\eta^t_k\E_{\A}[\Delta^t_k]\!+\!\eta^{t\,2}_kG^2)\!+\!\frac{4\mathbf{b}}{\mu^2}(\E_{\A}[\Delta^t_k])^2\\
            &\quad+2G\eta^t_k\E_{\A}[\Delta^t_k]\!+\!2(L\!+\!L^2/\mu)\eta^{t\,2}_kG^2.
        \end{aligned}
    \end{equation*}

    Repeating the process from $t=T,k=K-1$ to $t=1,k=0$, representing the remaining term $\frac{\mathbf{b}}{1-\mu/ L}(4B_{\y}L\eta^t_k\E_{\A}[\Delta^t_k]\!+\!\eta^{t\,2}_kG^2)\!+\!\frac{4\mathbf{b}}{\mu^2}(\E_{\A}[\Delta^t_k])^2\!+\!2G\eta^t_k\E_{\A}[\Delta^t_k]\!+\!2(L\!+\!L^2/\mu)\eta^{t\,2}_kG^2$ as $\Psi^t_k$ for brevity, we can derive the following:
    \begin{equation}\label{eq:62}
        \begin{aligned}
            A^T_K+\mathbf{b} B^T_K&\leq(1-\frac{\mu}{4}\eta^T_{K-1})(A^T_{K-1}+\mathbf{b} B^T_{K-1})+\Psi^T_{K-1}\\
            &\leq\cdots\\
            &\leq\prod_{k=0}^{K\!-\!1}\!(1\!-\!\frac{\mu}{4}\eta^T_k)(A^T_0\!+\!\mathbf{b} B^T_0)\!+\!\sum_{k=0}^{K\!-\!1}\Psi^T_k\prod_{k'=k+1}^{K-1}(1-\frac{\mu}{4}\eta^T_{k'})\\
            &\overset{(a)}{=}\prod_{k=0}^{K\!-\!1}\!(1\!-\!\frac{\mu}{4}\eta^T_k)(A^{T-1}_K\!+\!\mathbf{b} B^{T-1}_K)\!+\!\sum_{k=0}^{K\!-\!1}\Psi^T_k\prod_{k'=k+1}^{K-1}(1-\frac{\mu}{4}\eta^T_{k'})\\
            &\leq\cdots\\
            &\leq\prod_{t=1}^T\prod_{k=0}^{K\!-\!1}(1\!-\!\frac{\mu}{4}\eta^t_k)(A^1_0\!+\!\mathbf{b} B^1_0)\!+\sum_{t=1}^T\prod_{t'=t+1}^T\prod_{k_1=0}^{K-1}(1-\frac{\mu}{4}\eta^{t'}_{k_1})\sum_{k=0}^{K-1}\Psi^t_k\prod_{k'=k+1}^{K-1}(1-\frac{\mu}{4}\eta^t_{k'})\\
            &\overset{(b)}{=}\sum_{t=1}^T\prod_{t'=t+1}^T\prod_{k_1=0}^{K-1}(1-\frac{\mu}{4}\eta^{t'}_{k_1})\sum_{k=0}^{K-1}\Psi^t_k\prod_{k'=k+1}^{K-1}(1-\frac{\mu}{4}\eta^t_{k'})
        \end{aligned}
    \end{equation}
    where the equation $(a)$ results from the fact that $A^t_0+\mathbf{b} B^t_0=A^{t-1}_K+\mathbf{b} B^{t-1}_K\forall t,k$. While in inequality $(b)$, we adopt decaying learning rates that $\eta^t_k=\frac{4}{\mu(k+1)^\alpha t^\beta}$ so that $1-\frac{\mu}{4}\eta^1_0=0$.

    When learning rates are configured as $\eta^t_k=\frac{4}{\mu(k+1)^\alpha t}$ (where we require $1/2<\alpha<1$ and $\beta=1$), the consensus term is bounded by the following according to Lemma~\ref{le:delta:localdecentra}:
    \begin{equation*}
        \E_{\A}[\Delta^t_k]\leq\frac{4G}{\mu t}\left(\sqrt{\frac{\frac{2\alpha}{2\alpha-1}k}{1-\lambdabm}}\!+\!\sqrt{(C_{\lambdabm^2}\!+\!\frac{C_{\lambdabm}}{1-\lambdabm})\frac{2\alpha}{2\alpha-1}K}\right).
    \end{equation*}

    The successive-product term can be evaluated as:
    \begin{equation*}
        \prod_{t'=t+1}^T\prod_{k_1=0}^{K-1}(1-\frac{1}{(k_1+1)^\alpha t'})\leq\prod_{t'=t+1}^T\prod_{k_1=0}^{K-1}e^{-\frac{1}{(k_1+1)^\alpha t'}}
                =e^{-\sum_{t'=t+1}^T\sum_{k_1=0}^{K-1}\frac{1}{(k_1+1)^\alpha t'}}
                \leq(\frac{t}{T})^{1+\frac{K^{1-\alpha}}{1-\alpha}}.
    \end{equation*}

    Continuing with inequality~\eqref{eq:62}, we can obtain:
    \begin{equation*}
        \begin{aligned}
            &\quad\;A^T_K+\mathbf{b} B^T_K\\
            &\leq\sum_{t=1}^T(\frac{t}{T})^{1+\frac{K^{1-\alpha}}{1-\alpha}}\sum_{k=0}^{K-1}\\
            &\quad\left(\frac{\mathbf{b}}{1-\mu/ L}(4B_{\y}L\eta^t_k\E_{\A}[\Delta^t_k]\!+\!\eta^{t\,2}_kG^2)\!+\!\frac{4\mathbf{b}}{\mu^2}(\E_{\A}[\Delta^t_k])^2\!+\!2G\eta^t_k\E_{\A}[\Delta^t_k]\!+\!2(L\!+\!L^2/\mu)\eta^{t\,2}_kG^2\right)\\
            &=\frac{1}{T^{1+\frac{K^{1-\alpha}}{1-\alpha}}}\sum_{t=1}^T\sum_{k=0}^{K-1}\bigg(\frac{16G}{\mu^2}(\frac{4B_{\y}\mathbf{b} L}{1-\mu L}+2G)\frac{t^{\frac{K^{1-\alpha}}{1-\alpha}-1}}{(k+1)^\alpha}\Big(\sqrt{\frac{\frac{2\alpha}{2\alpha-1}}{1-\lambdabm}k}\!+\!\sqrt{(C_{\lambdabm^2}\!+\!\frac{C_{\lambdabm}}{1-\lambdabm})\frac{2\alpha}{2\alpha-1}K}\Big)\\
            &\quad\quad\quad\quad\quad\quad\quad\quad+\frac{64G^2\mathbf{b}}{\mu^4}t^{\frac{K^{1-\alpha}}{1-\alpha}-1}\Big(\frac{\frac{2\alpha}{2\alpha-1}}{1-\lambdabm}k\!+\!(C_{\lambdabm^2}\!+\!\frac{C_{\lambdabm}}{1-\lambdabm})\frac{2\alpha}{2\alpha-1}K\Big)\\
            &\quad\quad\quad\quad\quad\quad\quad\quad+\frac{16G^2}{\mu^2}(\frac{\mathbf{b}}{1-\mu L}+2(L+L^2/\mu))\frac{t^{\frac{K^{1-\alpha}}{1-\alpha}-1}}{(k+1)^{2\alpha}}\bigg)\\
            &\leq\frac{1}{T}\Big\{\frac{16G}{\mu^2}(\frac{4B_{\y}\mathbf{b} L}{1-\mu L}+2G)\bigg(\frac{K^{\frac{3}{2}}+\frac{3}{2}-\alpha}{K^{1-\alpha}}\frac{1-\alpha}{\frac{3}{2}-\alpha}\sqrt{\frac{\frac{2\alpha}{2\alpha-1}}{1-\lambdabm}}+\frac{K^{1-\alpha}+1-\alpha}{K^{1-\alpha}}\sqrt{K}\sqrt{(C_{\lambdabm^2}+\frac{C_{\lambdabm}}{1-\lambdabm})\frac{2\alpha}{2\alpha-1}}\bigg)\\
            &+\frac{64G^2\mathbf{b}}{\mu^4}\frac{(1-\alpha)K^2}{K^{1-\alpha}}\bigg(\frac{\frac{2\alpha}{2\alpha-1}}{1-\lambdabm}+(C_{\lambdabm^2}+\frac{C_{\lambdabm}}{1-\lambdabm})\frac{2\alpha}{2\alpha-1}\bigg)+\frac{1}{K^{1-\alpha}}\frac{2\alpha(1-\alpha)}{2\alpha-1}\frac{16G^2}{\mu^2}(\frac{\rho}{1-\mu L}+2(L+L^2/\mu))\Big\}.
        \end{aligned}
    \end{equation*}

    Since $A^T_K+\mathbf{b} B^T_K=\E_{\A}[R_{\S}(\bar{\x}^T_K)-R^*_{\S}]+\mathbf{b}\E_{\A}\|\bar{\y}^T_K-\hat{\y}_{\S}(\bar{\x}^T_K)\|^2\leq\E_{\A}[R_{\S}(\bar{\x}^T_K)-R^*_{\S}]=\E_{\A}[R_{\S}(\x^T)-R^*_{\S}]$, we can conclude the excess primal empirical risk as following:
    \begin{equation*}
        \begin{aligned}
            &\quad\;\E_{\A}[R_{\S}(\x^T)-R^*_{\S}]\\
            &\leq\frac{1}{T}\Big\{\frac{16G}{\mu^2}(\frac{4B_{\y}\mathbf{b} L}{1-\mu L}+2G)\bigg(\frac{K^{\frac{3}{2}}+\frac{3}{2}-\alpha}{K^{1-\alpha}}\frac{1-\alpha}{\frac{3}{2}-\alpha}\sqrt{\frac{\frac{2\alpha}{2\alpha-1}}{1-\lambdabm}}+\frac{K^{1-\alpha}+1-\alpha}{K^{1-\alpha}}\sqrt{K}\sqrt{(C_{\lambdabm^2}+\frac{C_{\lambdabm}}{1-\lambdabm})\frac{2\alpha}{2\alpha-1}}\bigg)\\
            &+\frac{64G^2\mathbf{b}}{\mu^4}\frac{(1-\alpha)K^2}{K^{1-\alpha}}\bigg(\frac{\frac{2\alpha}{2\alpha-1}}{1-\lambdabm}+(C_{\lambdabm^2}+\frac{C_{\lambdabm}}{1-\lambdabm})\frac{2\alpha}{2\alpha-1}\bigg)+\frac{1}{K^{1-\alpha}}\frac{2\alpha(1-\alpha)}{2\alpha-1}\frac{16G^2}{\mu^2}(\frac{\rho}{1-\mu L}+2(L+L^2/\mu))\Big\}.
        \end{aligned}
    \end{equation*}
\end{proof}

\begin{proof}[\textbf{Proof of Theorem~\ref{thm:ep:population}}]
    The excess primal population risk can be decomposed into excess primal generalization gap and the corresponding excess primal empirical risk as follows:
    \begin{equation*}
        \Delta^{\!\mathrm{ep}}({\x}^T)=\underbrace{\Delta^{\!\mathrm{ep}}({\x}^T)-\Delta^{\!\mathrm{w}}_{\S}({\x}^T)}_{\mathrm{excess\; primal\; generalization\; gap}}+\underbrace{\Delta^{\!\mathrm{ep}}_{\S}({\x}^T)}_{\mathrm{excess\; primal\; empirical\; risk}}.
    \end{equation*}

    According to Theorem \ref{thm:primal} combined with Theorem \ref{thm:connection} $(ii)$, we can conduct the excess primal generalization gap $\zeta^{\mathrm{rp}}_{\mathrm{gen}}(\A,\S)$ when each local function $f_i$ satisfies $\rho$PL-$\mu$SC condition and further requiring function $F(\x,\cdot)$ is $\mu$-SC. Then the excess primal empirical risk can be derived from Theorem \ref{thm:ep:empirical} when choosing decaying learning rates $\eta^t_k=\frac{4}{\mu(k+1)^\alpha t}$ with $\frac{1}{2}<\alpha<1$. It is noteworthy that we adopt the last iterate instead of the averaged one. And it is interesting to find out that generalization gap plays a dominant role in excess primal population risk since $\zeta^{\!\mathrm{ep}}_{\mathrm{gen}}$ is upper bounded by $\sqrt{\Delta^{\!\mathrm{ep}}_{\S}}$ as shown in Theorem \ref{thm:connection}.

    And we can conclude the excess primal population risk:
    \begin{align*}
        &\quad\Delta^{\!\mathrm{ep}}(\A_{\x}(\S))\\
        &\leq\frac{2G}{\sqrt{T}}\sqrt{\frac{2}{\rho}\!+\!\frac{2L^2}{\rho\mu^2}}\\
        &\quad\bigg(\sqrt{\frac{16G}{\mu^2}(\frac{4B_{\y}\mathbf{b} L}{1-\mu L}\!+\!2G)\bigg(\frac{K^{\frac{3}{2}}\!+\!\frac{3}{2}\!-\!\alpha}{K^{1-\alpha}}\frac{1\!-\!\alpha}{\frac{3}{2}\!-\!\alpha}\sqrt{\frac{\frac{2\alpha}{2\alpha-1}}{1-\lambdabm}}\!+\!\frac{K^{1-\alpha}\!+\!1\!-\!\alpha}{K^{1-\alpha}}\sqrt{K}\sqrt{(C_{\lambdabm^2}\!+\!\frac{C_{\lambdabm}}{1-\lambdabm})\frac{2\alpha}{2\alpha-1}}\bigg)}\\
        &+\!\sqrt{\frac{64G^2\mathbf{b}}{\mu^4}\frac{(1\!-\!\alpha)K^2}{K^{1-\alpha}}\!\bigg(\!\frac{\frac{2\alpha}{2\alpha-1}}{1-\lambdabm}\!+\!(C_{\lambdabm^2}\!+\!\frac{C_{\lambdabm}}{1\!-\!\lambdabm})\frac{2\alpha}{2\alpha\!-\!1}\bigg)\!\!+\!\!\frac{1}{K^{1\!-\!\alpha}}\frac{2\alpha(1-\alpha)}{2\alpha-1}\frac{16G^2}{\mu^2}(\frac{\rho}{1\!-\!\mu L}\!+\!2(L\!+\!L^2/\mu))}\bigg)\\
        &+\frac{2G^2}{n}\sqrt{1+\frac{L^2}{\mu^2}}\sqrt{\frac{1}{4\rho}+\frac{1}{\mu}}+\frac{4G^2}{\mu mn}\\
        &\leq\!\frac{2\sqrt{2}G^2}{\sqrt{T}}\!\sqrt{\frac{1}{\rho}\!+\!\kappa^2}\bigg(\!\sqrt{\frac{64B_{\y}\kappa^3}{\mu^2}}\sqrt{\mathcal{O}(\!\sqrt{\!\lambdabm_1\!})\!+\!\mathcal{O}(\!\sqrt{\!\lambdabm_2})}K^{\frac{1}{4}}\!\!+\!(\mathcal{O}(\!\sqrt{\!\lambdabm_1}\!)\!+\!\mathcal{O}(\!\sqrt{\!\lambdabm_2}))\!\sqrt{\!\frac{128\kappa^2}{\mu^3}}\!K^{\frac{1\!+\!\alpha}{2}}\!\!+\!\!\sqrt{\frac{32\rho\kappa^2}{\mu^2}}\frac{1}{K^{\frac{1\!-\!\alpha}{2}}}\bigg)\\
        &\quad+\frac{2G^2}{n}\sqrt{1+\frac{L^2}{\mu^2}}\sqrt{\frac{1}{4\rho}+\frac{1}{\mu}}+\frac{4G^2}{\mu mn}
    \end{align*}
    where we denote $\kappa=\frac{L}{\mu}$.
\end{proof}
\section{Proof in Nonconvex-Nonconcave Case}\label{sec:NC-NC}
We still adopt the denotations that $(\x^t_{i,k},\y^t_{i,k})$ and $(\xdot^t_{i,k},\ydot^t_{i,k})$ represent the model for the distributed algorithm $\A$ training on the neighboring dataset $\S$ and $\S'$ on the $t-k$ round for agent $i$, respectively. Without loss of generality, assume each local dataset $\S_i$ and $\S'_i$ differs at the last sample, i.e., $\S_i=\{\xi_{i,1},...,\xi_{i,n-1},\xi_{i,n}\}$ and $\S'_i=\{\xi_{i,1},...,\xi_{i,n-1},\xi'_{i,n}\}$ respectively.

In this section, we employ a technique that assuming before $t_0-k_0$-th iteration won't they meet the different sample. Using $\delta^t_k$ to denote $\left\|\left(\begin{array}{c}
   \bar{\x}^t_k-\bar{\xdot}^t_k \\
   \bar{\y}^t_k-\bar{\ydot}^t_k
\end{array}\right)\right\|$, so that $\delta^{t_0}_{k_0}=0$.

\begin{proof}[\textbf{Proof of Theorem~\ref{thm:weak}}]
    According to Lemma \ref{le:iteration}, and substituting condition $(ii)$ with condition $(i)$ in Lemma \ref{le:nonexpan}, we can acquire:
    \begin{equation*}
        \begin{aligned}
            &\quad\;\E_{\A}[\left\|\left(\begin{array}{c}
                \bar{\x}^t_{k+1}-\bar{\xdot}^t_{k+1}   \\
                \bar{\y}^t_{k+1}-\bar{\ydot}^t_{k+1} 
            \end{array}\right)\right\||\delta^{t_0}_{k_0}=0]\\
            &\leq(1+\eta^t_kL)\E_{\A}[\left\|\left(\begin{array}{c}
                \bar{\x}^t_k-\bar{\xdot}^t_k   \\
                \bar{\y}^t_k-\bar{\ydot}^t_k 
            \end{array}\right)\right\||\delta^{t_0}_{k_0}=0]+2\eta^t_kL\E_{\A}[\Delta^t_k]+\frac{2\eta^t_kG}{n}
        \end{aligned}
    \end{equation*}

    Performing above process from $t=T,k=K-1$ to $t=t_0,k=K-1$, we can acquire:
    \begin{equation*}
        \begin{aligned}
            &\quad\;\E_{\A}[\left\|\left(\begin{array}{c}
                \bar{\x}^T_K-\bar{\xdot}^T_K   \\
                \bar{\y}^T_K-\bar{\ydot}^T_K 
            \end{array}\right)\right\||\delta^{t_0}_{k_0}=0]\\
            &\leq\prod_{t=t_0}^T\prod_{k=k_0}^{K-1}(1+\eta^{t_0}_{k_0}L)\E_{\A}[\left\|\left(\begin{array}{c}
                \bar{\x}^{t_0}_{k_0}-\bar{\xdot}^{t_0}_{k_0}\\
                \bar{\y}^{t_0}_{k_0}-\bar{\ydot}^{t_0}_{k_0}
            \end{array}\right)\right\||\delta^{t_0}_{k_0}=0]\\
            &\quad\;+2\sum_{t=t_0}^T\prod_{t'=t+1}^T\prod_{k_1=0}^{K-1}(1+\eta^{t'}_{k_1}L)\sum_{k=k_0}^{K-1}\eta^t_k(L\E_{\A}[\Delta^t_k]+\frac{G}{n})\prod_{k'=k+1}^{K-1}(1+\eta^t_{k'}L)\\
            &=2\sum_{t=t_0}^T\prod_{t'=t+1}^T\prod_{k_1=0}^{K-1}(1+\eta^{t'}_{k_1}L)\sum_{k=k_0}^{K-1}\eta^t_k(L\E_{\A}[\Delta^t_k]+\frac{G}{n})\prod_{k'=k+1}^{K-1}(1+\eta^t_{k'}L)
        \end{aligned}
    \end{equation*}

    Then we make use of the Lemma 5 in \citet{zhu2023stability} that:
    \begin{equation}\label{eq:81}
        \begin{aligned}
            &\quad\;\E_{\A}[\frac{1}{m}\sum_{i=1}^m\big(f_i(\bar{\x}^t_k,\y';\xi_{i,l_i})-f_i(\bar{\xdot}^t_k,\y';\xi_{i,l_i})+f_i(\x',\bar{\y}^t_k;\xi_{i,l_i})-f_i(\x',\bar{\ydot}^t_k;\xi_{i,l_i})\big)]\\          &\leq\sqrt{2}G\E_{\A}[\delta^t_k|\delta^{t_0}_{k_0}=0]+\frac{Mm(Kt_0+k_0))}{n}
        \end{aligned}
    \end{equation}
    where we require each local function is bounded by $M$, i.e., $|f_i(\x,\y;\xi_{i,l})|\leq M, \forall \x\in\xset,\y\in\yset$.

    For fixed learning rates (see condition \ref{fixed} in Lemma \ref{le:delta:localdecentra}), we have:
    \begin{equation*}
        \E_{\A}[\delta^T_K|\delta^{t_0}_{k_0}=0]\leq(2\eta^2KGL\sqrt{\frac{1}{1-\lambdabm}+\frac{2\lambdabm}{(1-\lambdabm)(1-\lambdabm^2)}}+\frac{2\eta G}{n})\frac{(1+\eta L)^{K(T-t_0+1)}-1}{(1+\eta L)^K-1}\frac{(1+\eta L)^{K-k_0}-1}{\eta L }.
    \end{equation*}

    Combining with inequality~\eqref{eq:81}, we can conduct that:
    \begin{equation*}
        \begin{aligned}
            &\quad\;\sup_{\xi_{i,l_i}}\big[\sup_{\y'\in\yset}\E_{\A}[\frac{1}{m}\sum_{i=1}^m\big(f_i(\x^T,\y';\xi_{i,l_i})-f_i(\xdot^T,\y';\xi_{i,l_i})+f_i(\x',\y^T;\xi_{i,l_i})-f_i(\x',\ydot^T;\xi_{i,l_i})\big)]\big]\\
            &\leq2\sqrt{2}\eta G^2(\eta KL\sqrt{\frac{1}{1-\lambdabm}+\frac{2\lambdabm}{(1-\lambdabm)(1-\lambdabm^2)}}+\frac{1}{n})(K-k_0)(T-t_0+1)+\frac{Mm(Kt_0+k_0))}{n}.
        \end{aligned}
    \end{equation*}

    It can reach optimal when $k_0=0$ and $t_0=0$, and the result comes out to be $2\sqrt{2}\eta G^2(\eta KL\sqrt{\frac{1}{1-\lambdabm}+\frac{2\lambdabm}{(1-\lambdabm)(1-\lambdabm^2)}}+\frac{1}{n})K(T+1)$.

    For decaying learning rates $\eta^t_k=\frac{1}{L(k+1)^\alpha t^\beta}$ (see condition \ref{decaying} in Lemma \ref{le:delta:localdecentra}), we choose $\alpha=\beta=1$ and obtain:
    \begin{equation*}
        \prod_{t'=t+1}^T\prod_{k_1=0}^{K-1}(1+\eta^{t'}_{k_1}L)=\prod_{t'=t+1}^T\prod_{k_1=0}^{K-1}(1+\frac{1}{(k_1+1)t'})\leq\prod_{t'=t+1}^T\prod_{k_1=0}^{K-1}e^{\frac{1}{(k_1+1)t'}}\leq(\frac{T}{t})^{1+\ln{K}}.
    \end{equation*}
    \begin{equation*}
        \prod_{k'=k+1}^{K-1}(1+\eta^t_{k'}L)=\prod_{k'=k+1}^{K-1}(1+\frac{1}{(k'+1)t})\leq(\frac{K}{k+1})^{\frac{1}{t}}\leq\frac{K}{k+1}.
    \end{equation*}
    Recalling Lemma \ref{le:delta:localdecentra}, the consensus term is bounded by:
    \begin{equation*}
        \E_{\A}[\Delta^t_k]\leq\frac{G}{Lt}\sqrt{\frac{1}{1-\lambdabm}\frac{2\alpha}{2\alpha-1}k}+\frac{G}{Lt}\sqrt{(C_{\lambdabm^2}+\frac{C_{\lambdabm}}{1-\lambdabm})\frac{2\alpha}{2\alpha-1}K}.
    \end{equation*}

    Therefore, there holds that:
    \begin{equation*}
        \begin{aligned}
            \E_{\A}[\delta^T_K|\delta^{t_0}_{k_0}=0]&\leq\frac{2G}{n}\frac{T^{1+\ln{K}}K}{L(1+\ln{K})}\frac{1}{k_0t_0^{1+\ln{K}}}+\frac{4G}{L}\sqrt{\frac{1}{1-\lambdabm}\frac{2\alpha}{2\alpha-1}}\frac{T^{1+\ln{K}}K}{2+\ln{K}}\frac{1}{k_0^{\frac{1}{2}}t_0^{2+\ln{K}}}\\
            &\quad+\frac{2G}{L}\sqrt{(C_{\lambdabm^2}+\frac{C_{\lambdabm}}{1-\lambdabm})\frac{2\alpha}{2\alpha-1}}\frac{T^{1+\ln{K}}K^{\frac{3}{2}}}{2+\ln{K}}\frac{1}{k_0t_0^{2+\ln{K}}}.
        \end{aligned}
    \end{equation*}

    Combining with inequality \eqref{eq:81}, we can acquire:
    \begin{equation*}
        \begin{aligned}
            &\quad\;\sup_{\xi_{i,l_i}}\big[\sup_{\y'\in\yset}\E_{\A}[\frac{1}{m}\sum_{i=1}^m\big(f_i(\x^T,\y';\xi_{i,l_i})-f_i(\xdot^T,\y';\xi_{i,l_i})+f_i(\x',\y^T;\xi_{i,l_i})-f_i(\x',\ydot^T;\xi_{i,l_i})\big)]\big]\\
            &\leq\sqrt{2}G\Bigg(\frac{2G}{n}\frac{T^{1+\ln{K}}K}{L(1+\ln{K})}\frac{1}{k_0t_0^{1+\ln{K}}}+\frac{4G}{L}\sqrt{\frac{1}{1-\lambdabm}\frac{2\alpha}{2\alpha-1}}\frac{T^{1+\ln{K}}K}{2+\ln{K}}\frac{1}{k_0^{\frac{1}{2}}t_0^{2+\ln{K}}}\\
            &\quad+\frac{2G}{L}\sqrt{(C_{\lambdabm^2}+\frac{C_{\lambdabm}}{1-\lambdabm})\frac{2\alpha}{2\alpha-1}}\frac{T^{1+\ln{K}}K^{\frac{3}{2}}}{2+\ln{K}}\frac{1}{k_0t_0^{2+\ln{K}}}\Bigg)+\frac{Mm(Kt_0+k_0)}{n}.
        \end{aligned}
    \end{equation*}
    When we choose the argument as $k_0^{\frac{3}{2}}t_0^{\frac{5}{2}+\ln{K}}=\frac{nG}{Mm\sqrt{2K}}\bigg(\frac{2G}{n}\frac{T^{1+\ln{K}}K}{L(1+\ln{K})}+\frac{4G}{L}\sqrt{\frac{1}{1-\lambdabm}\frac{2\alpha}{2\alpha-1}}\frac{T^{1+\ln{K}}K}{2+\ln{K}}+\frac{2G}{L}\sqrt{(C_{\lambdabm^2}+\frac{C_{\lambdabm}}{1-\lambdabm})\frac{2\alpha}{2\alpha-1}}\frac{T^{1+\ln{K}}K^{\frac{3}{2}}}{2+\ln{K}}\bigg)$, the weak stability can attain optimal of $2\bigg(\frac{2G}{n}\frac{T^{1+\ln{K}}K}{L(1+\ln{K})}+\frac{4G}{L}\sqrt{\frac{1}{1-\lambdabm}\frac{2\alpha}{2\alpha-1}}\frac{T^{1+\ln{K}}K}{2+\ln{K}}+\frac{2G}{L}\sqrt{(C_{\lambdabm^2}+\frac{C_{\lambdabm}}{1-\lambdabm})\frac{2\alpha}{2\alpha-1}}\frac{T^{1+\ln{K}}K^{\frac{3}{2}}}{2+\ln{K}}\bigg)^{\frac{1}{5}}\frac{(2Mm)^{\frac{4}{5}}K^{\frac{2}{5}}}{n^{\frac{4}{5}}}$.
\end{proof}


\begin{thebibliography}{54}
\providecommand{\natexlab}[1]{#1}
\providecommand{\url}[1]{\texttt{#1}}
\expandafter\ifx\csname urlstyle\endcsname\relax
  \providecommand{\doi}[1]{doi: #1}\else
  \providecommand{\doi}{doi: \begingroup \urlstyle{rm}\Url}\fi

\bibitem[Arjovsky et~al.(2017)Arjovsky, Chintala, and Bottou]{arjovsky2017wasserstein}
Martin Arjovsky, Soumith Chintala, and L{\'e}on Bottou.
\newblock Wasserstein generative adversarial networks.
\newblock In \emph{International conference on machine learning}, pages 214--223. PMLR, 2017.

\bibitem[Beznosikov et~al.()Beznosikov, Rogozin, and Kovalev]{beznosikovnear}
Aleksandr Beznosikov, Alexander Rogozin, and Dmitry Kovalev.
\newblock Near-optimal decentralized algorithms for saddle point problems over time-varying networks.
\newblock \emph{Optimization and Applications}, page 246.

\bibitem[Beznosikov et~al.(2022{\natexlab{a}})Beznosikov, Dvurechenskii, Koloskova, Samokhin, Stich, and Gasnikov]{beznosikov2022decentralized}
Aleksandr Beznosikov, Pavel Dvurechenskii, Anastasiia Koloskova, Valentin Samokhin, Sebastian~U Stich, and Alexander Gasnikov.
\newblock Decentralized local stochastic extra-gradient for variational inequalities.
\newblock \emph{Advances in Neural Information Processing Systems}, 35:\penalty0 38116--38133, 2022{\natexlab{a}}.

\bibitem[Beznosikov et~al.(2022{\natexlab{b}})Beznosikov, Richt{\'a}rik, Diskin, Ryabinin, and Gasnikov]{beznosikov2022distributed}
Aleksandr Beznosikov, Peter Richt{\'a}rik, Michael Diskin, Max Ryabinin, and Alexander Gasnikov.
\newblock Distributed methods with compressed communication for solving variational inequalities, with theoretical guarantees.
\newblock \emph{Advances in Neural Information Processing Systems}, 35:\penalty0 14013--14029, 2022{\natexlab{b}}.

\bibitem[Beznosikov et~al.(2023)Beznosikov, Gasnikov, Zainullina, Maslovskii, and Pasechnyuk]{beznosikov2023unified}
Aleksandr~N Beznosikov, Alexander~Vladimirovich Gasnikov, Karina~E Zainullina, A~Yu Maslovskii, and Dmitry~Arkad'evich Pasechnyuk.
\newblock A unified analysis of variational inequality methods: variance reduction, sampling, quantization, and coordinate descent.
\newblock \emph{Computational Mathematics and Mathematical Physics}, 63\penalty0 (2):\penalty0 147--174, 2023.

\bibitem[Bousquet and Elisseeff(2002)]{bousquet2002stability}
Olivier Bousquet and Andr{\'e} Elisseeff.
\newblock Stability and generalization.
\newblock \emph{The Journal of Machine Learning Research}, 2:\penalty0 499--526, 2002.

\bibitem[Cao et~al.(2002)Cao, Shen, Milito, and Wirth]{cao2002internet}
Xi-Ren Cao, Hong-Xia Shen, Rodolfo Milito, and Patricia Wirth.
\newblock Internet pricing with a game theoretical approach: concepts and examples.
\newblock \emph{IEEE/ACM transactions on networking}, 10\penalty0 (2):\penalty0 208--216, 2002.

\bibitem[Chen et~al.(2021)Chen, Zhou, Xu, and Liang]{chen2021proximal}
Ziyi Chen, Yi~Zhou, Tengyu Xu, and Yingbin Liang.
\newblock Proximal gradient descent-ascent: Variable convergence under k $\{$$\backslash$L$\}$ geometry.
\newblock \emph{arXiv preprint arXiv:2102.04653}, 2021.

\bibitem[Cheng et~al.(2023)Cheng, Shen, Zhu, Guo, Fang, Liu, Du, and Tao]{cheng2023prescribed}
Zhihao Cheng, Li~Shen, Miaoxi Zhu, Jiaxian Guo, Meng Fang, Liu Liu, Bo~Du, and Dacheng Tao.
\newblock Prescribed safety performance imitation learning from a single expert dataset.
\newblock \emph{IEEE transactions on pattern analysis and machine intelligence}, 2023.

\bibitem[Cohen et~al.(2019)Cohen, Rosenfeld, and Kolter]{cohen2019certified}
Jeremy Cohen, Elan Rosenfeld, and Zico Kolter.
\newblock Certified adversarial robustness via randomized smoothing.
\newblock In \emph{international conference on machine learning}, pages 1310--1320. PMLR, 2019.

\bibitem[Deng and Mahdavi(2021)]{deng2021local}
Yuyang Deng and Mehrdad Mahdavi.
\newblock Local stochastic gradient descent ascent: Convergence analysis and communication efficiency.
\newblock In \emph{International Conference on Artificial Intelligence and Statistics}, pages 1387--1395. PMLR, 2021.

\bibitem[Deng et~al.(2020)Deng, Kamani, and Mahdavi]{deng2020distributionally}
Yuyang Deng, Mohammad~Mahdi Kamani, and Mehrdad Mahdavi.
\newblock Distributionally robust federated averaging.
\newblock \emph{Advances in neural information processing systems}, 33:\penalty0 15111--15122, 2020.

\bibitem[Elisseeff et~al.(2005)Elisseeff, Evgeniou, Pontil, and Kaelbing]{elisseeff2005stability}
Andre Elisseeff, Theodoros Evgeniou, Massimiliano Pontil, and Leslie~Pack Kaelbing.
\newblock Stability of randomized learning algorithms.
\newblock \emph{Journal of Machine Learning Research}, 6\penalty0 (1), 2005.

\bibitem[Farnia and Ozdaglar(2021)]{farnia2021train}
Farzan Farnia and Asuman Ozdaglar.
\newblock Train simultaneously, generalize better: Stability of gradient-based minimax learners.
\newblock In \emph{International Conference on Machine Learning}, pages 3174--3185. PMLR, 2021.

\bibitem[Goodfellow et~al.(2014)Goodfellow, Pouget-Abadie, Mirza, Xu, Warde-Farley, Ozair, Courville, and Bengio]{goodfellow2014generative}
Ian Goodfellow, Jean Pouget-Abadie, Mehdi Mirza, Bing Xu, David Warde-Farley, Sherjil Ozair, Aaron Courville, and Yoshua Bengio.
\newblock Generative adversarial nets.
\newblock \emph{Advances in neural information processing systems}, 27, 2014.

\bibitem[Hardt et~al.(2016)Hardt, Recht, and Singer]{hardt2016train}
Moritz Hardt, Ben Recht, and Yoram Singer.
\newblock Train faster, generalize better: Stability of stochastic gradient descent.
\newblock In \emph{International conference on machine learning}, pages 1225--1234. PMLR, 2016.

\bibitem[Ho and Ermon(2016)]{ho2016generative}
Jonathan Ho and Stefano Ermon.
\newblock Generative adversarial imitation learning.
\newblock \emph{Advances in neural information processing systems}, 29, 2016.

\bibitem[Horn and Johnson(2012)]{horn2012matrix}
Roger~A Horn and Charles~R Johnson.
\newblock \emph{Matrix analysis}.
\newblock Cambridge university press, 2012.

\bibitem[Hou et~al.(2021)Hou, Thekumparampil, Fanti, and Oh]{hou2021efficient}
Charlie Hou, Kiran~K Thekumparampil, Giulia Fanti, and Sewoong Oh.
\newblock Efficient algorithms for federated saddle point optimization.
\newblock \emph{arXiv preprint arXiv:2102.06333}, 2021.

\bibitem[Karimi et~al.(2016)Karimi, Nutini, and Schmidt]{karimi2016linear}
Hamed Karimi, Julie Nutini, and Mark Schmidt.
\newblock Linear convergence of gradient and proximal-gradient methods under the polyak-lojasiewicz condition.
\newblock In \emph{Machine Learning and Knowledge Discovery in Databases: European Conference, ECML PKDD 2016, Riva del Garda, Italy, September 19-23, 2016, Proceedings, Part I 16}, pages 795--811. Springer, 2016.

\bibitem[Lei et~al.(2021)Lei, Yang, Yang, and Ying]{lei2021stability}
Yunwen Lei, Zhenhuan Yang, Tianbao Yang, and Yiming Ying.
\newblock Stability and generalization of stochastic gradient methods for minimax problems.
\newblock In \emph{International Conference on Machine Learning}, pages 6175--6186. PMLR, 2021.

\bibitem[Lei et~al.(2023)Lei, Sun, and Liu]{lei2023stability}
Yunwen Lei, Tao Sun, and Mingrui Liu.
\newblock Stability and generalization for minibatch sgd and local sgd.
\newblock \emph{arXiv preprint arXiv:2310.01139}, 2023.

\bibitem[Liao et~al.(2022)Liao, Shen, Duan, Kolar, and Tao]{liao2022local}
Luofeng Liao, Li~Shen, Jia Duan, Mladen Kolar, and Dacheng Tao.
\newblock Local adagrad-type algorithm for stochastic convex-concave optimization.
\newblock \emph{Machine Learning}, pages 1--20, 2022.

\bibitem[Lin et~al.(2020)Lin, Jin, and Jordan]{lin2020gradient}
Tianyi Lin, Chi Jin, and Michael Jordan.
\newblock On gradient descent ascent for nonconvex-concave minimax problems.
\newblock In \emph{International Conference on Machine Learning}, pages 6083--6093. PMLR, 2020.

\bibitem[Liu et~al.(2020)Liu, Zhang, Mroueh, Cui, Ross, Yang, and Das]{liu2020decentralized}
Mingrui Liu, Wei Zhang, Youssef Mroueh, Xiaodong Cui, Jarret Ross, Tianbao Yang, and Payel Das.
\newblock A decentralized parallel algorithm for training generative adversarial nets.
\newblock \emph{Advances in Neural Information Processing Systems}, 33:\penalty0 11056--11070, 2020.

\bibitem[Lojasiewicz(1963)]{lojasiewicz1963topological}
Stanislaw Lojasiewicz.
\newblock A topological property of real analytic subsets.
\newblock \emph{Coll. du CNRS, Les {\'e}quations aux d{\'e}riv{\'e}es partielles}, 117\penalty0 (87-89):\penalty0 2, 1963.

\bibitem[Madry et~al.(2018)Madry, Makelov, Schmidt, Tsipras, and Vladu]{madry2018towards}
Aleksander Madry, Aleksandar Makelov, Ludwig Schmidt, Dimitris Tsipras, and Adrian Vladu.
\newblock Towards deep learning models resistant to adversarial attacks.
\newblock In \emph{International Conference on Learning Representations}, 2018.

\bibitem[Mohri et~al.(2019)Mohri, Sivek, and Suresh]{mohri2019agnostic}
Mehryar Mohri, Gary Sivek, and Ananda~Theertha Suresh.
\newblock Agnostic federated learning.
\newblock In \emph{International Conference on Machine Learning}, pages 4615--4625. PMLR, 2019.

\bibitem[Nedi{\'c} et~al.(2018)Nedi{\'c}, Olshevsky, and Rabbat]{nedic2018network}
Angelia Nedi{\'c}, Alex Olshevsky, and Michael~G Rabbat.
\newblock Network topology and communication-computation tradeoffs in decentralized optimization.
\newblock \emph{Proceedings of the IEEE}, 106\penalty0 (5):\penalty0 953--976, 2018.

\bibitem[Ozdaglar et~al.(2022)Ozdaglar, Pattathil, Zhang, and Zhang]{ozdaglar2022good}
Asuman Ozdaglar, Sarath Pattathil, Jiawei Zhang, and Kaiqing Zhang.
\newblock What is a good metric to study generalization of minimax learners?
\newblock \emph{Advances in Neural Information Processing Systems}, 35:\penalty0 38190--38203, 2022.

\bibitem[Polyak et~al.(1963)]{polyak1963gradient}
Boris~Teodorovich Polyak et~al.
\newblock Gradient methods for minimizing functionals.
\newblock \emph{Zhurnal vychislitel’noi matematiki i matematicheskoi fiziki}, 3\penalty0 (4):\penalty0 643--653, 1963.

\bibitem[Rakhlin et~al.(2005)Rakhlin, Mukherjee, and Poggio]{rakhlin2005stability}
Alexander Rakhlin, Sayan Mukherjee, and Tomaso Poggio.
\newblock Stability results in learning theory.
\newblock \emph{Analysis and Applications}, 3\penalty0 (04):\penalty0 397--417, 2005.

\bibitem[Razaviyayn et~al.(2020)Razaviyayn, Huang, Lu, Nouiehed, Sanjabi, and Hong]{razaviyayn2020nonconvex}
Meisam Razaviyayn, Tianjian Huang, Songtao Lu, Maher Nouiehed, Maziar Sanjabi, and Mingyi Hong.
\newblock Nonconvex min-max optimization: Applications, challenges, and recent theoretical advances.
\newblock \emph{IEEE Signal Processing Magazine}, 37\penalty0 (5):\penalty0 55--66, 2020.

\bibitem[Reisizadeh et~al.(2020)Reisizadeh, Farnia, Pedarsani, and Jadbabaie]{reisizadeh2020robust}
Amirhossein Reisizadeh, Farzan Farnia, Ramtin Pedarsani, and Ali Jadbabaie.
\newblock Robust federated learning: The case of affine distribution shifts.
\newblock \emph{Advances in Neural Information Processing Systems}, 33:\penalty0 21554--21565, 2020.

\bibitem[Shalev-Shwartz et~al.(2010)Shalev-Shwartz, Shamir, Srebro, and Sridharan]{shalev2010learnability}
Shai Shalev-Shwartz, Ohad Shamir, Nathan Srebro, and Karthik Sridharan.
\newblock Learnability, stability and uniform convergence.
\newblock \emph{The Journal of Machine Learning Research}, 11:\penalty0 2635--2670, 2010.

\bibitem[Sharma et~al.(2022)Sharma, Panda, Joshi, and Varshney]{sharma2022federated}
Pranay Sharma, Rohan Panda, Gauri Joshi, and Pramod Varshney.
\newblock Federated minimax optimization: Improved convergence analyses and algorithms.
\newblock In \emph{International Conference on Machine Learning}, pages 19683--19730. PMLR, 2022.

\bibitem[Sharma et~al.(2023)Sharma, Panda, and Joshi]{sharma2023federated}
Pranay Sharma, Rohan Panda, and Gauri Joshi.
\newblock Federated minimax optimization with client heterogeneity.
\newblock \emph{Transactions on machine learning research}, 2023.

\bibitem[Shen et~al.(2023)Shen, Huang, Zhang, and Shen]{shen2023stochastic}
Wei Shen, Minhui Huang, Jiawei Zhang, and Cong Shen.
\newblock Stochastic smoothed gradient descent ascent for federated minimax optimization.
\newblock \emph{arXiv preprint arXiv:2311.00944}, 2023.

\bibitem[Song et~al.(2019)Song, Kalluri, Grover, Zhao, and Ermon]{song2019learning}
Jiaming Song, Pratyusha Kalluri, Aditya Grover, Shengjia Zhao, and Stefano Ermon.
\newblock Learning controllable fair representations.
\newblock In \emph{The 22nd International Conference on Artificial Intelligence and Statistics}, pages 2164--2173. PMLR, 2019.

\bibitem[Sun and Wei(2022)]{sun2022communication}
Zhenyu Sun and Ermin Wei.
\newblock A communication-efficient algorithm with linear convergence for federated minimax learning.
\newblock \emph{Advances in Neural Information Processing Systems}, 35:\penalty0 6060--6073, 2022.

\bibitem[Tarzanagh et~al.(2022)Tarzanagh, Li, Thrampoulidis, and Oymak]{tarzanagh2022fednest}
Davoud~Ataee Tarzanagh, Mingchen Li, Christos Thrampoulidis, and Samet Oymak.
\newblock Fednest: Federated bilevel, minimax, and compositional optimization.
\newblock In \emph{International Conference on Machine Learning}, pages 21146--21179. PMLR, 2022.

\bibitem[Wang and Joshi(2021)]{wang2021cooperative}
Jianyu Wang and Gauri Joshi.
\newblock Cooperative sgd: A unified framework for the design and analysis of local-update sgd algorithms.
\newblock \emph{Journal of Machine Learning Research}, 22\penalty0 (213):\penalty0 1--50, 2021.

\bibitem[Wu et~al.(2023)Wu, Sun, Hu, Zhang, and Huang]{wu2023solving}
Xidong Wu, Jianhui Sun, Zhengmian Hu, Aidong Zhang, and Heng Huang.
\newblock Solving a class of non-convex minimax optimization in federated learning.
\newblock In \emph{Thirty-seventh Conference on Neural Information Processing Systems}, 2023.

\bibitem[Xian et~al.(2021)Xian, Huang, Zhang, and Huang]{xian2021faster}
Wenhan Xian, Feihu Huang, Yanfu Zhang, and Heng Huang.
\newblock A faster decentralized algorithm for nonconvex minimax problems.
\newblock \emph{Advances in Neural Information Processing Systems}, 34:\penalty0 25865--25877, 2021.

\bibitem[Xie et~al.(2023)Xie, Zhang, Shen, Liu, and Qian]{xie2023cdma}
Jiahao Xie, Chao Zhang, Zebang Shen, Weijie Liu, and Hui Qian.
\newblock Cdma: a practical cross-device federated learning algorithm for general minimax problems.
\newblock In \emph{Proceedings of the AAAI Conference on Artificial Intelligence}, volume~37, pages 10481--10489, 2023.

\bibitem[Xing et~al.(2021)Xing, Song, and Cheng]{xing2021algorithmic}
Yue Xing, Qifan Song, and Guang Cheng.
\newblock On the algorithmic stability of adversarial training.
\newblock \emph{Advances in neural information processing systems}, 34:\penalty0 26523--26535, 2021.

\bibitem[Yang et~al.(2022{\natexlab{a}})Yang, Liu, Zhang, and Liu]{yang2022sagda}
Haibo Yang, Zhuqing Liu, Xin Zhang, and Jia Liu.
\newblock Sagda: Achieving $\mathcal{O}(\epsilon^{-2})$ communication complexity in federated min-max learning.
\newblock \emph{Advances in Neural Information Processing Systems}, 35:\penalty0 7142--7154, 2022{\natexlab{a}}.

\bibitem[Yang et~al.(2022{\natexlab{b}})Yang, Hu, Lei, Vashney, Lyu, and Ying]{yang2022differentially}
Zhenhuan Yang, Shu Hu, Yunwen Lei, Kush~R Vashney, Siwei Lyu, and Yiming Ying.
\newblock Differentially private sgda for minimax problems.
\newblock In \emph{Uncertainty in Artificial Intelligence}, pages 2192--2202. PMLR, 2022{\natexlab{b}}.

\bibitem[Ying et~al.(2021)Ying, Yuan, Chen, Hu, Pan, and Yin]{ying2021exponential}
Bicheng Ying, Kun Yuan, Yiming Chen, Hanbin Hu, Pan Pan, and Wotao Yin.
\newblock Exponential graph is provably efficient for decentralized deep training.
\newblock \emph{Advances in Neural Information Processing Systems}, 34:\penalty0 13975--13987, 2021.

\bibitem[Zhang et~al.(2021)Zhang, Hong, Wang, and Zhang]{zhang2021generalization}
Junyu Zhang, Mingyi Hong, Mengdi Wang, and Shuzhong Zhang.
\newblock Generalization bounds for stochastic saddle point problems.
\newblock In \emph{International Conference on Artificial Intelligence and Statistics}, pages 568--576. PMLR, 2021.

\bibitem[Zhang et~al.(2024)Zhang, Zhang, Yang, Souvenir, and Gao]{zhang2024federated}
Xinwen Zhang, Yihan Zhang, Tianbao Yang, Richard Souvenir, and Hongchang Gao.
\newblock Federated compositional deep auc maximization.
\newblock \emph{Advances in Neural Information Processing Systems}, 36, 2024.

\bibitem[Zhang et~al.(2023)Zhang, Qiu, and Gao]{zhang2023communication}
Yihan Zhang, Meikang Qiu, and Hongchang Gao.
\newblock Communication-efficient stochastic gradient descent ascent with momentum algorithms.
\newblock In \emph{IJCAI}, 2023.

\bibitem[Zhao et~al.(2018)Zhao, Zhang, Wu, Moura, Costeira, and Gordon]{zhao2018adversarial}
Han Zhao, Shanghang Zhang, Guanhang Wu, Jos{\'e}~MF Moura, Joao~P Costeira, and Geoffrey~J Gordon.
\newblock Adversarial multiple source domain adaptation.
\newblock \emph{Advances in neural information processing systems}, 31, 2018.

\bibitem[Zhu et~al.(2023)Zhu, Shen, Du, and Tao]{zhu2023stability}
Miaoxi Zhu, Li~Shen, Bo~Du, and Dacheng Tao.
\newblock Stability and generalization of the decentralized stochastic gradient descent ascent algorithm.
\newblock In \emph{Thirty-seventh Conference on Neural Information Processing Systems}, 2023.

\end{thebibliography}
\end{document}